\newcommand{\todoy}[2][]{\todo[size=\scriptsize,color=red!20!white,#1]{Yiwen: #2}}
\newcommand{\todoz}[2][]{\todo[size=\scriptsize,color=blue!20!white,#1]{Zixiang: #2}}
\newcommand{\todoj}[2][]{\todo[size=\scriptsize,color=gray!20!white,#1]{Junkai: #2}}
\newcommand{\la}{\langle}
\newcommand{\ra}{\rangle}
\theoremstyle{plain}
\newtheorem{theorem}{Theorem}[section]
\newtheorem{proposition}[theorem]{Proposition}
\newtheorem{lemma}[theorem]{Lemma}
\theoremstyle{definition}
\newtheorem{definition}[theorem]{Definition}
\newtheorem{condition}[theorem]{Condition}
\theoremstyle{remark}
\renewcommand{\zeta}{\overline{\rho}}
\renewcommand{\omega}{\underline{\rho}}
\titlespacing*{\section}{0pt}{1pt}{1pt}
\titlespacing*{\subsection}{0pt}{1pt}{1pt}
\titlespacing*{\subsubsection}{0pt}{1pt}{1pt}
\title{Why Does Sharpness-Aware Minimization Generalize Better Than SGD?}
\author{
Zixiang Chen\thanks{Equal contribution.}~~~Junkai Zhang$^*$~~Yiwen Kou~~Xiangning Chen~~Cho-Jui Hsieh~~Quanquan Gu\\
Department of Computer Science\\ University of California, Los Angeles \\
Los Angeles, CA 90095 \\
\texttt{\{chenzx19,zhang,evankou,xiangning,chohsieh,qgu\}@cs.ucla.edu} 
}
\titlespacing*{\section}{0pt}{*0.7}{*0.7}
\titlespacing*{\subsection}{0pt}{*0.7}{*0.7}
\titlespacing*{\subsubsection}{0pt}{*0.7}{*0.7}
\begin{document}

\maketitle


\begin{abstract}
The challenge of overfitting, in which the model memorizes the training data and fails to generalize to test data, has become increasingly significant in the training of large neural networks. To tackle this challenge, Sharpness-Aware Minimization (SAM) has emerged as a promising training method, which can improve the generalization of neural networks even in the presence of label noise. However, a deep understanding of how SAM works, especially in the setting of nonlinear neural networks and classification tasks, remains largely missing. This paper fills this gap by demonstrating why SAM generalizes better than Stochastic Gradient Descent (SGD) for a certain data model and two-layer convolutional ReLU networks. The loss landscape of our studied problem is nonsmooth, thus current explanations for the success of SAM based on the Hessian information are insufficient. Our result explains the benefits of SAM, particularly its ability to prevent noise learning in the early stages, thereby facilitating more effective learning of features. Experiments on both synthetic and real data corroborate our theory. 
\end{abstract}

\section{Introduction}
The remarkable performance of deep neural networks has sparked considerable interest in creating ever-larger deep learning models, while the training process continues to be a critical bottleneck affecting overall model performance. The training of large models is unstable and difficult due to the sharpness, non-convexity, and non-smoothness of its loss landscape. In addition, as the number of model parameters is much larger than the training sample size, the model has the ability to memorize even randomly labeled data \citep{zhang2021understanding}, which leads to overfitting. Therefore, although traditional gradient-based methods like gradient descent (GD) and stochastic gradient descent (SGD) can achieve generalizable models under certain conditions, these methods may suffer from unstable training and harmful overfitting in general.

To overcome the above challenge, \textit{Sharpness-Aware Minimization} (SAM) \citep{foret2020sharpness}, an innovative training paradigm, has exhibited significant improvement in model generalization and has become widely adopted in many applications. In contrast to traditional gradient-based methods that primarily focus on finding a point in the parameter space with a minimal gradient norm, SAM also pursues a solution with reduced sharpness, characterized by how rapidly the loss function changes locally. Despite the empirical success of SAM across numerous tasks \citep{bahri2021sharpness, behdin2022improved, chen2021vision, liu2022looksam}, the theoretical understanding of this method remains limited.

\citet{foret2020sharpness} provided a PAC-Bayes bound on the generalization error of SAM to show that it will generalize well, while the bound only holds for the infeasible average-direction perturbation instead of practically used ascend-direction perturbation. \citet{andriushchenko2022towards} investigated the implicit bias of SAM for diagonal linear networks under global convergence assumption. The oscillations in the trajectory of SAM were explored by \citet{bartlett2022dynamics}, leading to a convergence result for the convex quadratic loss. A concurrent work \citep{wen2022does} demonstrated that SAM can locally regularize the eigenvalues of the Hessian of the loss. In the context of least-squares linear regression, \citet{behdin2023sharpness} found that SAM exhibits lower bias and higher variance compared to gradient descent.  However, all the above analyses of SAM utilize the Hessian information of the loss and require the smoothness property of the loss implicitly. The study for non-smooth neural networks, particularly for the classification task, remains open. 

In this paper, our goal is to provide a theoretical basis demonstrating when SAM outperforms SGD. 
In particular, we consider a data distribution mainly characterized by the signal $\bmu$ and input data dimension $d$, and prove the following separation in terms of test error between SGD and SAM.

\begin{theorem}[Informal statement of Theorems~\ref{thm:sgd_main} and \ref{thm:positive}]
Let $p$ be the strength of the label flipping noise. For any $\epsilon > 0$, under certain regularity conditions, with high probability, there exists $0 \leq t \leq T$ such that the training loss converges, i.e., $L_S(\Wb^{(t)}) \leq \epsilon$. Besides, 
\begin{enumerate}[leftmargin = *]
\item \textbf{For SGD}, when the signal strength $\|\bmu\|_{2} \geq \Omega(d^{1/4})$, we have
    $ L_{\cD}^{0-1} (\Wb^{(t)}) \leq p + \epsilon$. When the signal strength $\|\bmu\|_{2} \leq O(d^{1/4})$, we have $ L_{\cD}^{0-1} (\Wb^{(t)}) \geq p+0.1$.
\item  \textbf{For SAM}, provided the signal strength $\|\bmu\|_{2} \geq \tilde{\Omega}(1)$, we have $ L_{\cD}^{0-1} (\Wb^{(t)}) \leq p+\epsilon$.
\end{enumerate}
\end{theorem}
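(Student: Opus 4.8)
The plan follows the feature-learning / signal--noise decomposition methodology. Write each convolutional filter as
\[ \wb_{j,r}^{(t)}=\wb_{j,r}^{(0)}+j\cdot\gamma_{j,r}^{(t)}\,\|\bmu\|_2^{-2}\bmu+\sum_{i=1}^{n}\rho_{j,r,i}^{(t)}\,\|\bxi_i\|_2^{-2}\bxi_i, \]
so that $\gamma_{j,r}^{(t)}$ records how much filter $(j,r)$ has learned the signal $\bmu$ and $\rho_{j,r,i}^{(t)}$ how much it has memorized the noise patch $\bxi_i$ (the actual proof further splits $\rho_{j,r,i}^{(t)}$ into its positive and negative parts $\zeta_{j,r,i}^{(t)},\omega_{j,r,i}^{(t)}$). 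Plugging the (S)GD or SAM update into this ansatz yields coupled scalar recursions: with $\ell_i^{(t)}:=-\ell'\big(\tilde y_i f(\Wb^{(t)},\xb_i)\big)\in(0,1)$, the increment of $\gamma_{j,r}^{(t)}$ is of order $\tfrac{\eta}{n}\|\bmu\|_2^2\sum_i\ell_i^{(t)}$ over the active matching examples --- i.e.\ \emph{every} matching example reinforces the \emph{same} signal direction --- whereas the increment of $\rho_{j,r,i}^{(t)}$ is of order $\tfrac{\eta}{n}\ell_i^{(t)}\|\bxi_i\|_2^2=\Theta\big(\tfrac{\eta}{n}\ell_i^{(t)}\sigma_p^2 d\big)$, fed by example $i$ alone; both are gated by the corresponding ReLU indicators. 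I would first record the routine initialization/concentration estimates --- on $\la\wb_{j,r}^{(0)},\bmu\ra$, $\la\wb_{j,r}^{(0)},\bxi_i\ra$, $\|\bxi_i\|_2$, the near-orthogonality $|\la\bxi_i,\bxi_{i'}\ra|$, and $|\la\bxi,\bxi_i\ra|$ for a fresh test-noise $\bxi$ --- which are used throughout to bound the error terms in these recursions.

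For the \textbf{SGD} statements I would run a two-stage argument. In Stage~1 (while $\max_i\ell_i^{(t)}=\Theta(1)$) I track the growth of $\max_{j,r}\gamma_{j,r}^{(t)}$ against $\max_{j,r,i}\rho_{j,r,i}^{(t)}$ by a tensor-power-method-style induction built on the increments above; together with the Stage~2 analysis --- where I show the training loss hits $\epsilon$ for some $t\le T$ via a standard potential bound on $\sum_t\eta L_S(\Wb^{(t)})$ --- this pins down the orders of $\gamma$ and $\rho$ at the time the loss becomes small. I then evaluate $f(\Wb^{(t)},\xb)$ on a fresh example $(\xb,y)$: its signal part is of order $\sum_r\gamma_{y,r}^{(t)}$ and carries the correct sign (the $-y$ filters barely fire on $y\bmu$), whereas the cross-terms with the fresh noise $\bxi$ contribute a zero-mean fluctuation of order $\big(\sum_{j,r,i}(\rho_{j,r,i}^{(t)})^2/d\big)^{1/2}$. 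Whether the sign of $yf(\Wb^{(t)},\xb)$ is fixed by the signal or swamped by this fluctuation reduces, after substituting the Stage-1/2 orders, to comparing $n\|\bmu\|_2^4$ with $\sigma_p^4 d$ --- equivalently, under the regularity conditions, $\|\bmu\|_2$ with $d^{1/4}$. Since $L_{\cD}^{0-1}(\Wb^{(t)})=p+(1-2p)\,\Pr_{(\xb,y)}\!\big[yf(\Wb^{(t)},\xb)<0\big]$, the large-signal case gives $\Pr[yf<0]\le\epsilon$, hence $L_{\cD}^{0-1}\le p+\epsilon$, and the small-signal case gives $\Pr[yf<0]\ge\Omega(1)$, hence $L_{\cD}^{0-1}\ge p+0.1$ (using $p$ bounded away from $1/2$).

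For \textbf{SAM} the only structural change is that the descent gradient is evaluated at the ascent-perturbed iterate $\widehat\Wb^{(t)}:=\Wb^{(t)}+\tau\,\nabla L_S(\Wb^{(t)})/\|\nabla L_S(\Wb^{(t)})\|_F$, with perturbation radius $\tau$. The crux is an early-phase deactivation lemma: for every example $i$ with $\ell_i^{(t)}$ still $\Theta(1)$ and every filter $(j,r)$ with $\la\wb_{j,r}^{(t)},\bxi_i\ra>0$, the $\bxi_i$-component of $\widehat\wb_{j,r}^{(t)}-\wb_{j,r}^{(t)}$ is negative with magnitude exceeding $\rho_{j,r,i}^{(t)}$ (it scales like $\tau\|\bxi_i\|_2^2/\|\nabla L_S(\Wb^{(t)})\|_F=\Theta(\tau\sigma_p^2 d/\|\nabla L_S(\Wb^{(t)})\|_F)$), so $\sigma'\big(\la\widehat\wb_{j,r}^{(t)},\bxi_i\ra\big)=0$ and the ensuing descent step does not grow $\rho_{j,r,i}^{(t)}$ --- noise memorization is frozen. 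Simultaneously, because $\|\bmu\|_2$ is small, the $\bmu$-component of $\widehat\wb_{j,r}^{(t)}-\wb_{j,r}^{(t)}$ (scaling like $\tau\|\bmu\|_2^2/\|\nabla L_S(\Wb^{(t)})\|_F$) stays below $\la\wb_{j,r}^{(t)},\bmu\ra$ for the signal-aligned filters, so their ReLUs stay active and $\gamma$ keeps growing at the usual rate. Propagating this invariant by induction, $\gamma$ reaches constant order while every $\rho_{j,r,i}^{(t)}$ remains $o(1)$ for \emph{any} $\|\bmu\|_2\ge\tilde\Omega(1)$; because there is now no noise-saturation bottleneck, the loss-convergence and test-error steps run exactly as in the large-signal SGD case (with $\rho$ tiny, the test fluctuation is negligible), giving $L_{\cD}^{0-1}(\Wb^{(t)})\le p+\epsilon$.

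The main obstacle is this SAM deactivation lemma and, above all, its maintenance over all $t\le T$. Because the loss is nonsmooth, one cannot linearize the perturbed gradient around $\Wb^{(t)}$ as Hessian-based SAM analyses do; one must instead control each $\la\widehat\wb_{j,r}^{(t)},\bxi_i\ra$ and $\la\widehat\wb_{j,r}^{(t)},\bmu\ra$ directly, which needs a two-sided handle on $\|\nabla L_S(\Wb^{(t)})\|_F$ (in the early phase it is dominated by the noise-memorizing directions of the hard, label-flipped examples) and a choice of $\tau$ large enough to extinguish \emph{every} noise activation yet small enough to spare \emph{every} signal activation --- a window that stays open precisely within the claimed signal-strength regime. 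Coupling this with the $\gamma$/$\rho$ recursions and the near-orthogonality error terms, and keeping the whole induction self-consistent (the activation pattern controls the recursions, which in turn control the activation pattern), is the delicate part; the SGD dynamics and the final test-error concentration are comparatively standard.
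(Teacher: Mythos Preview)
Your proposal captures the right skeleton --- signal--noise decomposition, two-stage SGD analysis, and the SAM deactivation mechanism --- and correctly identifies the nonsmooth activation-pattern difficulty as the crux. Two points, however, diverge from the paper and one of them is a genuine gap.

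\textbf{The gap.} You write that under SAM ``$\gamma$ reaches constant order while every $\rho_{j,r,i}^{(t)}$ remains $o(1)$'' and that at the test-error step ``$\rho$ tiny'' makes the fluctuation negligible. This cannot coexist with training-loss convergence in the presence of label-flipping noise. For a flipped example ($y_i=-\hat y_i$), the signal patch $\hat y_i\bmu$ points the wrong way for the $y_i$-filters, so $\ell(y_if(\Wb,\xb_i))\le\epsilon$ forces the \emph{noise} term $(P-1)\sigma(\la\wb_{y_i,r},\bxi_i\ra)$ --- hence $\zeta_{y_i,r,i}$ --- to be large. Thus the deactivation invariant cannot be maintained all the way to $L_S\le\epsilon$. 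The paper resolves this by a two-phase scheme: run SAM only until $\gamma_{j,r}=\Omega(1)$ (during which the deactivation lemma holds because all $|\ell_i'|\in[0.3,0.7]$ and coefficients stay $\le 1/12$), then \emph{switch to SGD} to drive the loss down. The test-error bound does \emph{not} come from $\rho$ being tiny; it comes from the SAM phase having planted $\la\wb_{\hat y,r},\hat y\bmu\ra=\Omega(1)$ into the weights, which survives the SGD phase and dominates the noise fluctuation $\sigma_p\|\wb_{-\hat y,r}\|_2$ (bounded via $\sigma_0=\tilde\Theta(P^{-1}\sigma_p^{-1}d^{-1/2})$ and $\zeta\le\alpha$). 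Your plan needs this switch or an equivalent device.

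\textbf{A mechanism difference.} You argue signal activations survive the SAM perturbation because the $\bmu$-component of $\hat\bepsilon_{j,r}$ is small. The paper does not establish (or need) this: it handles both cases $\la\wb_{j,r}+\hat\bepsilon_{j,r},\bmu\ra\gtrless 0$ uniformly via a good-batch counting argument (enough batches have $|S_+\cap S_y\cap\cI_{t,b}|\in[B/4,3B/4]$), so $\gamma$ grows regardless of which signal neurons the perturbation deactivates. Your route might be made to work, but it requires a two-sided bound on $\|\nabla L\|_F$ and a careful $\tau$-window calculation you have not yet carried out; the paper's counting argument sidesteps this. Similarly, your SGD sketch omits the mini-batch-specific machinery the paper develops (the thresholded activation sets $S_i^{(t,0)}$ with monotonicity in $t$, and the cross-batch logit-ratio bound $\ell_i'^{(t,b_1)}/\ell_k'^{(t,b_2)}\le C_2$), which is what makes the coefficient balancing go through when not all samples are seen every step.
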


\begin{figure}[t]
\begin{center}
\includegraphics[width=0.8\textwidth]{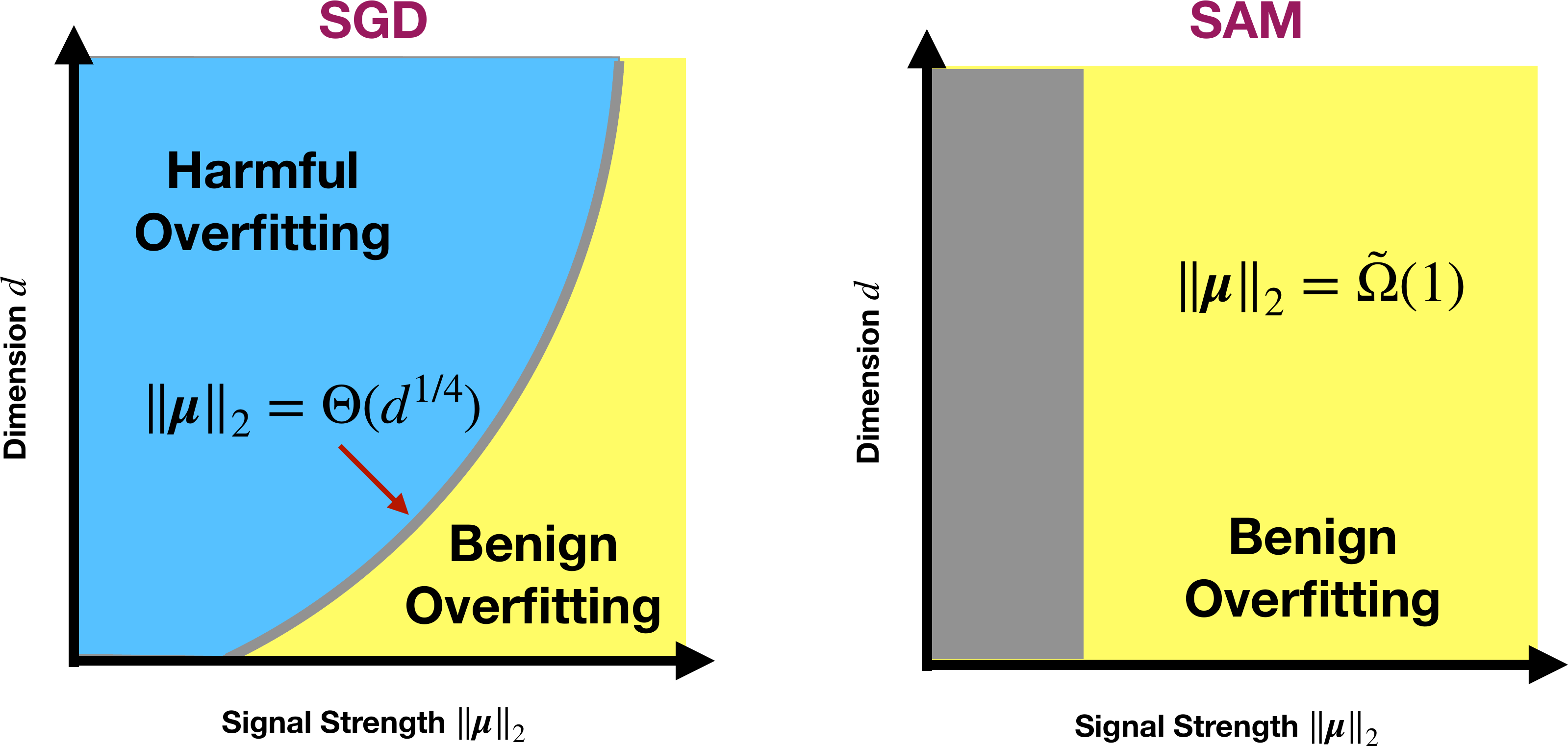}
\end{center}
\vskip -12pt
\caption{Illustration of the phase transition between benign overfitting and harmful overfitting. The yellow region represents the regime under which the overfitted CNN trained by SGD is guaranteed to have a small excess risk, and the blue region represents the regime under which the excess risk is guaranteed to be a constant order (e.g., greater than $0.1$). The gray region is the regime where the excess risk is not characterized.} 
\label{fig:illu}
\end{figure}

Our contributions are summarized as follows:
\begin{itemize}[leftmargin=*]
\item We discuss how the loss landscape of two-layer convolutional ReLU networks is different from the smooth loss landscape and thus the current explanation for the success of SAM based on the Hessian information is insufficient for neural networks. 

\item To understand the limit of SGD, we precisely characterize the conditions under which benign overfitting can occur in training two-layer convolutional ReLU networks with SGD. To the best of our knowledge, this is the first benign overfitting result for neural network trained with mini-batch SGD. We also prove a phase transition phenomenon for SGD, which is illustrated in Figure~\ref{fig:illu}.

\item Under the conditions when SGD leads to harmful overfitting, we formally prove that SAM can achieve benign overfitting. Consequently, we establish a rigorous theoretical distinction between SAM and SGD, demonstrating that SAM strictly outperforms SGD in terms of generalization error. Specifically, we show that SAM effectively mitigates noise learning in the early stages of training, enabling neural networks to learn features more efficiently.
\end{itemize}

\noindent\textbf{Notation.} We use lower case letters, lower case bold face letters, and upper case bold face letters to denote scalars, vectors, and matrices respectively.  For a vector $\vb=(v_1,\cdots,v_d)^{\top}$, we denote by $\|\vb\|_2:=\big(\sum_{j=1}^{d}v_j^2\big)^{1/2}$ its $2$-norm. For two sequence $\{a_k\}$ and $\{b_k\}$, we denote $a_k=O(b_k)$ if $|a_k|\leq C|b_k|$ for some absolute constant $C$, denote $a_k=\Omega(b_k)$ if $b_k=O(a_k)$, and denote $a_k=\Theta(b_k)$ if $a_k=O(b_k)$ and $a_k=\Omega(b_k)$. We also denote $a_k=o(b_k)$ if $\lim|a_k/b_k|=0$. Finally, we use $\tilde{O}(\cdot)$ and $\tilde{\Omega}(\cdot)$ to omit logarithmic terms in the notation. We denote the set $\{1,\cdots,n\}$ by $[n]$, and the set $\{0,\cdots,n-1\}$ by $\overline{[n]}$, respectively. The carnality of a set $S$ is denoted by $|S|$.

\section{Preliminaries}\label{sec:prob}
\subsection{Data distribution}
Our focus is on binary classification with label $y \in \{\pm 1\}$. We consider the following data model, which can be seen as a special case of sparse coding model \citep{olshausen1997sparse,allen2022feature,ahn2022learning}. 


\begin{definition}\label{def:data}
Let $\bmu\in \RR^d$ be a fixed vector representing the signal contained in each data point. Each data point $(\xb,y)$ with input $\xb = [\xb^{(1)\top}, \xb^{(2)\top}, \ldots, \xb^{(P)\top}]^{\top}\in\RR^{P\times d}, \xb^{(1)}, \xb^{(2)}, \ldots, \xb^{(P)} \in \RR^{d}$ and label $y\in\{-1,1\}$ is generated from a distribution $\cD$ specified as follows: 
\begin{enumerate}[leftmargin=*,nosep]
    \item The true label $\hat{y}$ is generated as a Rademacher random variable, i.e., $\PP[\hat{y}=1] = \PP[\hat{y}=-1]= 1/2$. The observed label $y$ is then generated by flipping $\hat{y}$ with probability $p$ where $p<1/2$, i.e., $\PP[y=\hat{y}]=1-p$ and $\PP[y=-\hat{y}]=p$. 
    \item A noise vector $\bxi$ is generated from the Gaussian distribution $\cN(\mathbf{0}, \sigma_p^2 \Ib)$, where $\sigma_{p}^{2}$ is the variance.
    \item  One of $\xb^{(1)}, \xb^{(2)}, \ldots, \xb^{(P)}$ is randomly selected and then assigned as $y\cdot\bmu$, which represents the signal, while the others are given by $\bxi$, which represents noises.
\end{enumerate}
\end{definition}
The data distribution in Definition~\ref{def:data} has also been extensively employed in several previous works \citep{allen2020towards, jelassi2022towards, shen2022data, cao2022benign, kou2023benign}. When $P = 2$, this data distribution aligns with the one analyzed in \citet{kou2023benign}. This distribution is inspired by image data, where the input is composed of different patches, with only a few patches being relevant to the label. The model has two key vectors: the feature vector and the noise vector. For any input vector $\xb=[\xb^{(1)}, \ldots , \xb^{(P)}]$, there is exactly one $\xb^{(j)} = y\bmu$, and the others are random Gaussian vectors. For example, the input vector $\xb$ can be $[y\bmu, \bxi, \ldots, \bxi],\ldots,[\bxi,\ldots, y\bmu, \bxi]$ or $[\bxi, \ldots, \bxi, y\bmu]$: the signal patch $y\bmu$ can appear at any position. To avoid harmful overfitting, the model must learn the feature vector rather than the noise vector.

\subsection{Neural Network and Training Loss}
To effectively learn the distribution as per Definition~\ref{def:data}, it is advantageous to utilize a shared weights structure, given that the specific signal patch is not known beforehand. When $P > n$, shared weights become indispensable as the location of the signal patch in the test can differ from the location of the signal patch in the training data.


We consider a two-layer convolutional neural network whose filters are applied to the $P$ patches $\xb_1,\cdots,\xb_P$ separately, and the second layer parameters of the network are fixed as $+1/m$ and $-1/m$ respectively, where $m$ is the number of convolutional filters. Then the network can be written as $f(\Wb,\xb)=F_{+1}(\Wb_{+1},\xb)-F_{-1}(\Wb_{-1},\xb)$, where $F_{+1}(\Wb_{+1},\xb)$ and $F_{-1}(\Wb_{-1},\xb)$ are defined as
\begin{align}
F_{j}(\Wb_j, \xb) = m^{-1}\textstyle{\sum_{r=1}^{m}\sum_{p=1}^{P}}\sigma(\la \wb_{j,r},\xb^{(p)}\ra).  \label{eq:def}
\end{align}
Here we consider ReLU activation function $\sigma(z) = \ind(z\geq 0)z$, $\wb_{j,r}\in\RR^{d}$ denotes the weight for the $r$-th filter, and $\Wb_j$ is the collection of model weights associated with $F_j$ for $j=\pm 1$. We use $\Wb$ to denote the collection of all model weights. Denote the training data set by $\cS = \{(\xb_{i},y_{i})\}_{i\in [n]}$. We train the above CNN model by minimizing the empirical cross-entropy loss function
\begin{align*}
L_{\cS}(\Wb) = n^{-1}\textstyle{\sum_{i\in [n]}}\ell(y_i f(\Wb, \xb_i)),
\end{align*}
where $\ell(z) = \log(1 + \exp(-z))$.

\subsection{Training Algorithm}
\noindent\textbf{Minibatch Stochastic Gradient Descent.}
For epoch $t$, the training data set $S$ is randomly divided into $H:= n/B$ mini batches $\cI_{t,b}$ with batch size $B \geq 2$. The empirical loss for batch $\cI_{t,b}$ is defined as $L_{\cI_{t,b}}(\Wb) = (1/B)\sum_{i\in \cI_{t,b}}\ell(y_i f(\Wb, \xb_i))$. Then the gradient descent update of the filters in the CNN can be written as
\begin{align}
\wb^{(t,b+1)} &= \wb^{(t,b)} - \eta \cdot \nabla_{\Wb}L_{\cI_{t,b}}(\Wb^{(t,b)}),\label{eq:gdupdate}    
\end{align}
where the gradient of the empirical loss $\nabla_{\Wb}L_{\cI_{t,b}}$ is the collection of $\nabla_{\wb_{j,r}}L_{\cI_{t,b}}$ as follows 
\begin{align}
 \nabla_{\wb_{j,r}}L_{\cI_{t,b}}(\Wb^{(t,b)}) &=\frac{(P-1)}{Bm} \sum_{i\in \cI_{t,b}} \ell_i'^{(t,b)} \cdot  \sigma'(\la\wb_{j,r}^{(t,b)}, \bxi_{i}\ra)\cdot j y_{i}\bxi_{i} \notag\\
    &\qquad+ \frac{1}{Bm} \sum_{i\in \cI_{t,b}} \ell_i'^{(t,b)} \cdot \sigma'(\la\wb_{j,r}^{(t,b)}, \hat{y}_{i} \bmu\ra)\cdot \hat{y}_{i}y_i j\bmu, \label{eq:gdupdate2}
\end{align}
for all $j \in \{\pm 1\}$ and $r \in [m]$. Here we introduce a shorthand notation $\ell_i'^{(t,b)} = \ell'[ y_i \cdot f(\Wb^{(t,b)},\xb_i) ] $ and assume the gradient of the ReLU activation function 
at $0$ to be 
$\sigma'(0) = 1$
without loss of generality. We use $(t,b)$ to denote epoch index $t$ with mini-batch index $b$ and use $(t)$ as the shorthand of $(t,0)$. We initialize SGD by random Gaussian, where all entries of $\Wb^{(0)}$ are sampled from i.i.d. Gaussian distributions $\cN(0, \sigma_0^2)$, with $\sigma_0^2$ being the variance. From \eqref{eq:gdupdate2}, we can infer that the loss landscape of the empirical loss is highly non-smooth because the ReLU function is not differentiable at zero. In particular, when $\la\wb_{j,r}^{(t,b)}, \bxi\ra$ is close to zero, even a very small perturbation can greatly change the activation pattern $\sigma'(\la\wb_{j,r}^{(t,b)}, \bxi\ra)$ and thus change the direction of $ \nabla_{\wb_{j,r}}L_{\cI_{t,b}}(\Wb^{(t,b)})$. This observation prevents the analysis technique based on the Taylor expansion with the Hessian matrix, and calls for a more sophisticated activation pattern analysis.

\noindent\textbf{Sharpness Aware Minimization.} Given an empirical loss function $L_{S}(\Wb)$ with trainable parameter $\Wb$, the idea of SAM is to minimize a perturbed empirical loss at 
the worst point in the neighborhood ball of $\Wb$ to ensure a uniformly low training loss value. In particular, it aims to solve the following optimization problem
\begin{align}
 \min_{\Wb}L_{S}^{\text{SAM}}(\Wb), \quad \text{where} \quad L_{S}^{\text{SAM}}(\Wb) := \max_{\|\bepsilon\|_{2}\leq \tau}L_{S}(\Wb + \bepsilon),   \label{eq:supdate}
\end{align}
where the hyperparameter $\tau$ is called the perturbation radius. However, directly optimizing $L_{S}^{\text{SAM}}(\Wb)$ is computationally expensive. In practice, people use the following sharpness-aware minimization (SAM) algorithm \citep{foret2020sharpness, zheng2021regularizing} to minimize $L_{S}^{\text{SAM}}(\Wb)$ efficiently,
\begin{align}
\Wb^{(t+1)} = \Wb^{(t)} - \eta \nabla_{\Wb} L_{S}\big(\Wb + \hat{\bepsilon}\big),  \quad \text{where} \quad \hat{\bepsilon} =\tau\cdot \frac{\nabla_{\Wb} L_{S}(\Wb)}{\|\nabla_{\Wb} L_{S}(\Wb)\|_{F}}.  \label{eq:supdate2}
\end{align}

When applied to SGD in \eqref{eq:gdupdate}, the gradient $\nabla_{\Wb} L_{S}$ in \eqref{eq:supdate2} is further replaced by stochastic gradient $\nabla_{\Wb} L_{\cI_{t,b}}$ \citep{foret2020sharpness}. The detailed algorithm description of SAM in shown in Algorithm~\ref{alg:main}.

\begin{algorithm}
\caption{Minibatch Sharpness Aware Minimization}
\begin{algorithmic}\label{alg:main}
\STATE \textbf{Input:} Training set $\cS = \cup_{i=1}^{n}\{(\xb_i, \yb_i)\}$, Batch size $B$, step size $\eta >0$, neighborhood size $\tau >0$.
\STATE Initialize weights $\Wb^{(0)}$.
\FOR{$t=0,1,\ldots, T -1 $}
\STATE Randomly divide the training data set into $H$ mini batches $\{\cI_{t,b}\}_{b=0}^{H-1}$.

\FOR{$b = 0, 1, \ldots, H -1 $}
\STATE We calculate the perturbation $\hat{\bepsilon}^{(t,b)} = \tau \frac{\nabla_{\Wb}L_{\cI_{t,b}}(\Wb^{(t,b)})}{\|\nabla_{\Wb}L_{\cI_{t,b}}(\Wb^{(t,b)})\|_{F}}$.
\STATE Update model parameters: $\Wb^{(t, b+1)} = \Wb^{(t, b)} - \eta \nabla_{\Wb}L_{\cI_{t,b}}(\Wb)|_{\Wb = \Wb^{(t, b)} + \hat{\bepsilon}^{(t,b)}}$.
\ENDFOR
\STATE Update model parameters: $\Wb^{(t+1, 0)} = \Wb^{(t, H)}$
\ENDFOR
\end{algorithmic}
\end{algorithm}

\section{Result for SGD}
In this section, we present our main theoretical results for the CNN trained with SGD. Our results are based on the following conditions on the dimension $d$, sample size $n$, neural
network width $m$, initialization scale $\sigma_0$ and learning rate $\eta$. 
\begin{condition}\label{assm: assm0}
Suppose there exists a sufficiently large constant $C$, such that the following hold: 
\begin{enumerate}[leftmargin=*]
    \item Dimension $d$ is sufficiently large:  $d \geq \tilde{\Omega}\Big( \max\{n P^{-2}\sigma_p^{-2}\|\bmu\|_{2}^{2}, n^{2}, P^{-2}\sigma_p^{-2} B m\} \Big)$.
    \item Training sample size $n$ and neural network width satisfy $m, n \geq \tilde{\Omega}(1)$.
    \item The $2$-norm of the signal satisfies $\|\bmu\|_2 \geq \tilde{\Omega}(P\sigma_p) $.
    \item The noise rate $p$ satisfies $p\leq 1/C$.
    \item The standard deviation of Gaussian initialization $\sigma_0$ is appropriately chosen such that $\sigma_0\leq \tilde{O}\Big(\big(\max\big\{P\sigma_{p}d/\sqrt{n},\|\bmu\|_{2}\big\}\big)^{-1}\Big)$.
    \item The learning rate $\eta$ satisfies $\eta\leq \tilde{O}\Big(\big(\max\big\{P^{2}\sigma_p^2 d^{3/2}/(B m), P^{2}\sigma_p^2 d/B, n \|\mu\|_2 /(\sigma_0 B\sqrt{d} m),\\ nP\sigma_p\|\bmu\|_2/(B^2 m\epsilon)\big\}\big)^{-1}\Big)$.
\end{enumerate}
\end{condition}
The conditions imposed on the data dimensions $d$, network width $m$, and the number of samples $n$ ensure adequate overparameterization of the network. Additionally, the condition on the learning rate $\eta$ facilitates efficient learning by our model. By concentration inequality, with high probability, the $\ell_2$ norm of the noise patch is of order $\Theta(d\sigma_p^{2})$. Therefore, the quantity $d\sigma_p^{2}$ can be viewed as the strength of the noise. Comparable conditions have been established in \citet{chatterji2021finite,cao2022benign,frei2022benign,kou2023benign}. Based on the above condition, we first present a set of results on benign/harmful overfitting for SGD in the following theorem.

\begin{theorem}[Benign/harmful overfitting of SGD in training CNNs]\label{thm:sgd_main}
For any $\epsilon > 0$, under Condition~\ref{assm: assm0}, with probability at least $ 1 - \delta$ there exists $t=  \tilde{O}(\eta^{-1}\epsilon^{-1}mnd^{-1}P^{-2}\sigma_p^{-2})$ such that:
\begin{enumerate}[leftmargin=*]
    \item The training loss converges, i.e., $L_S(\Wb^{(t)}) \leq \epsilon$.
    \item When $n\|\bmu\|_2^{4}\geq C_1dP^4\sigma_p^4 $, the test error $L_{\cD}^{0-1}(\Wb^{(t)}) \leq p + \epsilon$.
    \item When $n\|\bmu\|_{2}^{4} \leq C_3 dP^4\sigma_{p}^{4}$, the test error $ L_{\cD}^{0-1} (\Wb^{(t)}) \geq p+0.1$. 
\end{enumerate}
\end{theorem}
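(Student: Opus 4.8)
The plan is to adapt the signal--noise decomposition technique of \citet{cao2022benign,kou2023benign} to $P$ patches and to mini-batch SGD. I would first introduce scalar coefficients $\gamma_{j,r}^{(t)}$, $\overline{\rho}_{j,r,i}^{(t)}\ge 0$ and $\underline{\rho}_{j,r,i}^{(t)}\le 0$ via the decomposition $\wb_{j,r}^{(t)}=\wb_{j,r}^{(0)}+\gamma_{j,r}^{(t)}\|\bmu\|_2^{-2}\bmu+\sum_{i\in[n]}\big(\overline{\rho}_{j,r,i}^{(t)}+\underline{\rho}_{j,r,i}^{(t)}\big)\|\bxi_i\|_2^{-2}\bxi_i$, and read off from \eqref{eq:gdupdate2} the one-step recursions for these coefficients: each increment is $-\eta/(Bm)$ (times $P-1$ for the noise coefficients) multiplied by $\ell_i'^{(t,b)}$ and the ReLU indicator $\sigma'(\la\wb_{j,r}^{(t,b)},\cdot\ra)$, summed over the unique mini-batch $\cI_{t,b}$ containing $i$. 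Next I would record a block of standard concentration estimates, valid on an event of probability at least $1-\delta$, that pin down the initialization geometry: $\|\bxi_i\|_2^2=\Theta(\sigma_p^2 d)$, near-orthogonality $|\la\bxi_i,\bxi_{i'}\ra|=\tilde{O}(\sigma_p^2\sqrt d)$ for $i\ne i'$, small correlations $|\la\wb_{j,r}^{(0)},\bmu\ra|=\tilde{O}(\sigma_0\|\bmu\|_2)$ and $|\la\wb_{j,r}^{(0)},\bxi_i\ra|=\tilde{O}(\sigma_0\sigma_p\sqrt d)$, and near-balanced label counts (with $\Theta(pn)$ flipped examples). These reduce everything to the low-dimensional coefficient dynamics, since $\la\wb_{j,r}^{(t)},\bmu\ra$ and $\la\wb_{j,r}^{(t)},\bxi_i\ra$ --- hence every activation and the network outputs on training and test points --- are then expressible through the coefficients up to lower-order error.

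The heart of the argument is a two-stage analysis carried out inside a global induction over $0\le t\le T^\ast$ (with $T^\ast$ polynomial in the problem parameters, as permitted by Condition~\ref{assm: assm0}) that maintains two-sided bounds $0\le\overline{\rho}_{j,r,i}^{(t)}\le\tilde{O}(1)$, $|\underline{\rho}_{j,r,i}^{(t)}|\le\tilde{O}(1)$, and an appropriate ceiling on $|\gamma_{j,r}^{(t)}|$. In Stage~1 (the early phase), while $\max_i|y_if(\Wb^{(t)},\xb_i)|=O(1)$ one has $-\ell_i'^{(t,b)}=\Theta(1)$, so the recursions are essentially linear and one can track how fast the signal part of the network output (driven by $\gamma$, at a rate governed by $\|\bmu\|_2^2$) grows relative to the aggregate noise part (driven by the $\overline{\rho}_{j,r,i}$, at a rate governed by $(P-1)\sigma_p^2 d$), remembering that each training point pits one signal patch against $P-1$ copies of its own noise vector and that all $\Theta(pn)$ noisy examples must eventually be interpolated. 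Propagating the $P$-patch structure and the sample size through this competition produces precisely the quartic threshold: if $n\|\bmu\|_2^4\ge C_1 dP^4\sigma_p^4$ then some $|\gamma_{j,r}|$ reaches $\Theta(1)$ (feature learning) while every $\overline{\rho}_{j,r,i},\underline{\rho}_{j,r,i}$ stays $\tilde{O}(1)$ and negligible beside it; if $n\|\bmu\|_2^4\le C_3 dP^4\sigma_p^4$, the reverse holds (noise memorization dominates). In Stage~2 I would establish convergence of the training loss: using convexity of $\ell$ together with the coefficient bounds, compare $\Wb^{(t)}$ to a scaled reference $\Wb^\ast$ aligned with whichever of the signal/noise directions has been learned (with scale $\Theta(\log(1/\epsilon))$ so that it yields a large margin on every example), and telescope the one-step descent inequality to get $\sum_{s<t}L_S(\Wb^{(s)})=O\big(\|\Wb^{(0)}-\Wb^\ast\|_F^2/\eta\big)+\eta\sum_{s<t}\|\nabla L_{\cI_{s,b}}(\Wb^{(s,b)})\|_F^2$; controlling the gradient-norm term with the coefficient bounds yields $\min_{s\le t}L_S(\Wb^{(s)})\le\epsilon$ at $t=\tilde{O}(\eta^{-1}\epsilon^{-1}mnd^{-1}P^{-2}\sigma_p^{-2})$, which is then verified to lie below $T^\ast$, closing the induction and giving part~1.

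For the generalization claims I would evaluate $f(\Wb^{(t)},\xb)$ at the above $t$ on a fresh $(\xb,y)\sim\cD$, splitting it into the single signal-patch term and the $P-1$ noise-patch terms. In the benign regime $n\|\bmu\|_2^4\ge C_1dP^4\sigma_p^4$, the learned coefficients make the signal term a quantity of magnitude $\Theta(1)$ that carries the label information, whereas each noise patch is a fresh Gaussian nearly orthogonal to all training $\bxi_i$, so the total noise term is $o(1)$ by the concentration bounds; hence $\mathrm{sgn}\,f(\Wb^{(t)},\xb)$ coincides with the sign dictated by the signal, and the only errors come from the $p$-fraction of flipped test labels, giving $L_\cD^{0-1}(\Wb^{(t)})\le p+\epsilon$ (part~2). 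In the harmful regime $n\|\bmu\|_2^4\le C_3dP^4\sigma_p^4$, the signal term is only a small fraction of the standard deviation of the fluctuation of $f(\Wb^{(t)},\xb)$ induced by the fresh noise patch --- which, conditioned on the training-determined weights, is mean-zero and sub-Gaussian --- and it is exactly the assumed inequality (with $C_3$ a sufficiently small constant) that forces this domination. A (one-sided) anti-concentration estimate for this near-Gaussian fluctuation then gives $\PP[\mathrm{sgn}\,f(\Wb^{(t)},\xb)\ne\hat y]\ge\Omega(1)$, and since this misprediction event is essentially independent of the (independent) label flip while $p$ is a small constant (Condition~\ref{assm: assm0}), we obtain $L_\cD^{0-1}(\Wb^{(t)})\ge p+0.1$ (part~3).

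I expect the main obstacle to be the activation-pattern control that underpins the induction, compounded by the mini-batch randomness. Because each $\sigma'(\la\wb_{j,r}^{(t,b)},\cdot\ra)$ is piecewise constant and can flip whenever the corresponding inner product crosses zero, the ``linear'' Stage-1 recursions are legitimate only while one \emph{simultaneously} maintains tight two-sided control of every $\gamma_{j,r}^{(t)},\overline{\rho}_{j,r,i}^{(t)},\underline{\rho}_{j,r,i}^{(t)}$; this control must be propagated as a joint induction over the epoch index and the mini-batch index, verifying that an example sitting in a different random batch each epoch is still updated often enough, that the within-epoch drift of $\Wb$ across mini-batches is of lower order, and that no inner product $\la\wb_{j,r}^{(t,b)},\bxi_i\ra$ is pushed across zero in a way that would stall noise memorization or reverse a learned sign. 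This is where the non-smoothness emphasized in the introduction genuinely bites, and where the existing full-batch analyses do not transfer without new work.
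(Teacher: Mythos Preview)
Your high-level architecture (signal--noise decomposition, concentration at initialization, two-stage analysis with an outer induction, comparison to a reference $\Wb^*$ for loss convergence, and a signal-vs.-noise split for the test error) matches the paper. However, there is a genuine conceptual gap in how you describe Stage~1. You posit that the quartic threshold $n\|\bmu\|_2^4 \gtrless dP^4\sigma_p^4$ causes a \emph{bifurcation in the training dynamics}: in the benign regime ``some $|\gamma_{j,r}|$ reaches $\Theta(1)$ while every $\overline{\rho}_{j,r,i}$ stays negligible,'' and in the harmful regime ``the reverse holds.'' For plain ReLU this bifurcation does not occur. The paper shows that the dynamics are \emph{regime-independent}: one always has $\gamma_{j,r}^{(t)}=\Theta(\hat\gamma)$ with $\hat\gamma=n\cdot\mathrm{SNR}^2\le 1$ under Condition~\ref{assm: assm0}, while $\max_r\zeta_{j,r,i}^{(t)}=\Theta(1)$, and in fact (Lemma~\ref{lm: zeta,gamma ratio}) the ratio $\sum_i\zeta_{j,r,i}^{(t)}/\gamma_{j',r'}^{(t)}=\Theta(\mathrm{SNR}^{-2})$ holds uniformly throughout Stage~2. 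Training loss always converges by noise memorization, and accordingly the reference $\Wb^*$ is built from the noise directions $\bxi_i$ alone, not ``whichever direction has been learned.'' The quartic condition enters only at test time, where the signal contribution $\Theta(\gamma_{\hat y,r}^{(t)})$ is compared to the scale $(P-1)\sigma_p\|\wb_{-\hat y,r}^{(t)}\|_2$ of the fresh-noise fluctuation via Lipschitz concentration; the squared ratio is exactly $\Theta\big(n\|\bmu\|_2^4/(dP^4\sigma_p^4)\big)$. The winner-take-all picture you sketch is what happens for $\mathrm{ReLU}^q$, $q>2$ \citep{cao2022benign}, where the nonlinearity amplifies whichever side gets ahead, but it does not transfer to $q=1$.

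On the SGD-specific difficulty you correctly flag as the main obstacle, you are missing the concrete mechanism that closes the induction. The key is a \emph{cross-batch balanced-logit} statement $\ell_i'^{(t,b_1)}/\ell_k'^{(t,b_2)}\le C_2$ for all $i,k$ and all $b_1,b_2$ in the same epoch (Lemma~\ref{lm: balanced logit}), proved by controlling $\sum_r[\zeta_{y_i,r,i}^{(t,0)}-\zeta_{y_k,r,k}^{(t,0)}]\le\kappa$ at epoch boundaries and bounding the within-epoch drift separately. This in turn rests on two ingredients your outline does not supply: (i) a \emph{thresholded} activation set $S_i^{(t,0)}=\{r:\la\wb_{y_i,r}^{(t,0)},\bxi_i\ra>\sigma_0\sigma_p\sqrt d/\sqrt 2\}$ that is monotone non-decreasing in $t$ and lower-bounds the true activation set $\tilde S_i^{(t,b)}$ at every batch of the epoch (so that each $\bxi_i$ is guaranteed a nonempty set of learners despite within-epoch fluctuations), and (ii) a probabilistic ``good batch'' argument (Lemma~\ref{lm: good_batch}) guaranteeing that a constant fraction of epochs contain a constant fraction of batches with $\Theta(B)$ clean samples of each sign, which is what lets $\gamma_{j,r}$ grow despite noisy labels possibly dominating individual batches. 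Without these, sample $i$ and sample $k$ sit in different batches, see different $\ell'$ values, and the full-batch comparison that drives the self-bounding of $\sum_r\zeta_{y_i,r,i}-\sum_r\zeta_{y_k,r,k}$ is unavailable; the coefficient induction then does not close.
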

Theorem~\ref{thm:sgd_main} reveals a sharp phase transition between benign and harmful overfitting for CNN trained with SGD. This transition is determined by the relative scale of the signal strength and the data dimension. Specifically, if the signal is relatively large such that $n\|\bmu\|_2^4\geq C_1d(P-1)^4\sigma_p^4$, the model can efficiently learn the signal. As a result, the test error decreases, approaching the Bayes risk $p$, although the presence of label flipping noise prevents the test error from reaching zero. Conversely, when the condition $n\|\bmu\|_{2}^{4} \leq C_3 d(P-1)^4\sigma_{p}^{4}$ holds, the test error fails to approach the Bayes risk. This phase transition is empirically illustrated in Figure~\ref{fig:synthetic}. In both scenarios, the model is capable of fitting the training data thoroughly, even for examples with flipped labels. This finding aligns with longstanding empirical observations.

The negative result of SGD, which encompasses the third point of Theorem~\ref{thm:sgd_main} and the high test error observed in Figure~\ref{fig:synthetic}, suggests that the signal strength needs to scale with the data dimension to enable benign overfitting. This constraint substantially undermines the efficiency of SGD, particularly when dealing with high-dimensional data. A significant part of this limitation stems from the fact that SGD does not inhibit the model from learning noise, leading to a comparable rate of signal and noise learning during iterative model parameter updates. This inherent limitation of SGD is effectively addressed by SAM, as we will discuss later in Section~\ref{sec:SAM}.


\subsection{Analysis of Mini-Batch SGD}
In contrast to GD, SGD does not utilize all the training data at each iteration. Consequently, different samples may contribute to parameters differently, leading to possible unbalancing in parameters. To analyze SGD, we extend the signal-noise decomposition technique developed by \citet{kou2023benign,cao2022benign} for GD, which in our case is formally defined as:
\begin{lemma}\label{lemma:w_decomposition}
Let $\wb_{j,r}^{(t,b)}$ for $j\in \{\pm 1\}$, $r \in [m]$ be the convolution filters of the CNN at the $b$-th batch of $t$-th epoch of gradient descent. Then there exist unique coefficients $ \gamma_{j,r}^{(t,b)} $ and $\rho_{j,r,i}^{(t,b)}$ such that
\begin{align}
    \wb_{j,r}^{(t,b)} = \wb_{j,r}^{(0,0)} + j \cdot \gamma_{j,r}^{(t,b)} \cdot \| \bmu \|_2^{-2} \cdot \bmu + \frac{1}{P-1}\sum_{ i = 1}^n \rho_{j,r,i}^{(t,b) }\cdot \| \bxi_i \|_2^{-2} \cdot \bxi_{i}.\label{eq:w_decompositionpre}
\end{align}
Further denote $\zeta_{j,r,i}^{(t,b)} := \rho_{j,r,i}^{(t,b)}\ind(\rho_{j,r,i}^{(t,b)} \geq 0)$, $\omega_{j,r,i}^{(t,b)} := \rho_{j,r,i}^{(t,b)}\ind(\rho_{j,r,i}^{(t,b)} \leq 0)$. Then 
    \begin{align} 
        \wb_{j,r}^{(t,b)} &= \wb_{j,r}^{(0,0)} + j \gamma_{j,r}^{(t,b)} \| \bmu \|_2^{-2} \bmu + \frac{1}{P-1}\sum_{ i = 1}^n \zeta_{j,r,i}^{(t,b) }\| \bxi_i \|_2^{-2} \bxi_{i} + \frac{1}{P-1}\sum_{i = 1}^n \omega_{j,r,i}^{(t,b) }\| \bxi_i \|_2^{-2} \bxi_{i} .\label{eq:w_decomposition}
    \end{align}
\end{lemma}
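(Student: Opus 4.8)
The plan is to establish existence and uniqueness of the decomposition~\eqref{eq:w_decompositionpre} by induction on the iteration index $(t,b)$, tracking how each stochastic gradient step reallocates mass among the directions $\bmu,\bxi_1,\dots,\bxi_n$. The starting observation is that, by the gradient formula~\eqref{eq:gdupdate2}, every $\nabla_{\wb_{j,r}}L_{\cI_{t,b}}(\Wb^{(t,b)})$ is a linear combination of the signal $\bmu$ and the noise patches $\{\bxi_i\}_{i\in\cI_{t,b}}$. Hence the update~\eqref{eq:gdupdate}, initialized at $\wb_{j,r}^{(0,0)}$, keeps every iterate inside the affine subspace $\wb_{j,r}^{(0,0)}+\mathrm{span}\{\bmu,\bxi_1,\dots,\bxi_n\}$, which already guarantees that \emph{some} coefficients $\gamma_{j,r}^{(t,b)},\rho_{j,r,i}^{(t,b)}$ realizing~\eqref{eq:w_decompositionpre} exist.

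To pin the coefficients down, I would define them explicitly by the recursions obtained by matching components in~\eqref{eq:gdupdate2}: set $\gamma_{j,r}^{(0,0)}=0$, $\rho_{j,r,i}^{(0,0)}=0$, and
\begin{align*}
\gamma_{j,r}^{(t,b+1)} &= \gamma_{j,r}^{(t,b)} - \frac{\eta\|\bmu\|_2^2}{Bm}\sum_{i\in\cI_{t,b}}\ell_i'^{(t,b)}\,\sigma'\big(\la\wb_{j,r}^{(t,b)},\hat{y}_i\bmu\ra\big)\hat{y}_i y_i,\\
\rho_{j,r,i}^{(t,b+1)} &= \rho_{j,r,i}^{(t,b)} - \frac{\eta(P-1)^2}{Bm}\ell_i'^{(t,b)}\,\sigma'\big(\la\wb_{j,r}^{(t,b)},\bxi_i\ra\big)\,j y_i\|\bxi_i\|_2^2\cdot\ind(i\in\cI_{t,b}).
\end{align*}
An induction on $(t,b)$ then verifies~\eqref{eq:w_decompositionpre} for this choice: assuming it holds at step $(t,b)$, substitute the decomposition into~\eqref{eq:gdupdate}, collect the coefficient of $\bmu$ and of each $\bxi_i$ separately, and invoke the recursions; the factor $j$ on the signal term (together with $j^2=1$) and the $1/(P-1)$ normalization on the noise terms are exactly what produce the constants $\|\bmu\|_2^2$ and $(P-1)^2$ above, while coordinates $i\notin\cI_{t,b}$ are untouched.

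For uniqueness, observe that the residual $\wb_{j,r}^{(t,b)}-\wb_{j,r}^{(0,0)}$ lies in $\mathrm{span}\{\bmu,\bxi_1,\dots,\bxi_n\}$, so it suffices to show that these $n+1$ vectors are linearly independent; then the coefficients are uniquely determined, regardless of any overlap between $\wb_{j,r}^{(0,0)}$ and that span, since the initialization term is held fixed. This is where the failure probability $\delta$ enters: since $\bxi_1,\dots,\bxi_n\sim\cN(\mathbf{0},\sigma_p^2\Ib)$ are independent and $d\geq\tilde{\Omega}(n^2)\gg n+1$, a routine Gaussian concentration argument (for instance, non-degeneracy of the associated Gram matrix, in the spirit of the norm and inner-product estimates used throughout the paper) shows $\{\bmu,\bxi_1,\dots,\bxi_n\}$ is linearly independent with probability at least $1-\delta$. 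Finally,~\eqref{eq:w_decomposition} is immediate: by definition $\rho_{j,r,i}^{(t,b)}=\zeta_{j,r,i}^{(t,b)}+\omega_{j,r,i}^{(t,b)}$, the split of $\rho$ into its nonnegative and nonpositive parts.

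The main obstacle is really just the bookkeeping in the induction step---ensuring the gradient contribution is split between the $\bmu$- and $\bxi_i$-directions with precisely the right scalings and that unused coordinates are left alone---rather than any conceptual difficulty; the only genuinely probabilistic ingredient, linear independence of the signal and noise vectors, is standard in the overparameterized regime $d\gg n$.
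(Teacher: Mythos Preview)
Your proposal is correct and follows essentially the same route as the paper: both arguments observe that each SGD step moves $\wb_{j,r}$ only along $\bmu$ and the $\bxi_i$'s, define $\gamma_{j,r}^{(t,b)}$ and $\rho_{j,r,i}^{(t,b)}$ via the resulting recursions (the paper simply unrolls rather than inducts), and obtain uniqueness from linear independence of $\{\bmu,\bxi_1,\dots,\bxi_n\}$. The one place you overcomplicate is uniqueness---the paper notes that continuous Gaussian vectors together with a fixed $\bmu$ are linearly independent with probability~$1$, so no concentration or $1-\delta$ argument is needed.
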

Note that \eqref{eq:w_decomposition} is a variant of~\eqref{eq:w_decompositionpre}: by decomposing the coefficient $\rho_{j,r,i}^{(t,b)}$ into $\overline{\rho}_{j,r,i}^{(t,b)}$ and $\underline{\rho}_{j,r,i}^{(t,b)}$, we can streamline our proof process. The normalization terms $\frac{1}{P-1}$, $\| \bmu \|_2^{-2}$, and $\| \bxi_i \|_2^{-2}$ ensure that $ \gamma_{j,r}^{(t,b)} \approx \la\wb_{j,r}^{(t,b)},\mu  \ra$ and $\rho_{j,r}^{(t,b)} \approx (P-1) \la\wb_{j,r}^{(t,b)},\bxi_i  \ra$. Through signal-noise decomposition, we characterize the learning progress of signal $\bmu$ using $\gamma_{j,r}^{(t,b)}$, and the learning progress of noise using $\rho_{j,r}^{(t,b)}$.  This decomposition turns the analysis of SGD updates into the analysis of signal noise coefficients. \citet{kou2023benign} extend this technique to the ReLU activation function as well as in the presence of label flipping noise. However, mini-batch SGD updates amplify the complications introduced by 
label flipping noise, making it more difficult to ensure learning. We have developed advanced methods for coefficient balancing and activation pattern analysis. These techniques will be thoroughly discussed in the sequel. The progress of signal learning is characterized by $\gamma_{j,r}^{(t,b)}$, whose update rule is as follows:
\begin{align}
\gamma_{j,r}^{(t,b+1)} &= \gamma_{j,r}^{(t,b)} - \frac{\eta}{Bm} \cdot \bigg[ \sum_{i\in \cI_{t,b} \cap S_+} \ell_{i}'^{(t,b)}\sigma'(\la\wb_{j,r}^{(t,b)}, \hat{y}_{i} \cdot \bmu\ra) 
 \nonumber \\
 &\qquad - \sum_{i\in \cI_{t,b} \cap S_-} \ell_{i}'^{(t,b)} \sigma'(\la\wb_{j,r}^{(t,b)}, \hat{y}_{i} \cdot \bmu\ra) \bigg]  \cdot \| \bmu \|_2^2. \label{eq:gamma_update_main}
\end{align}
 Here, $\cI_{t,b}$ represents the indices of samples in batch $b$ of epoch $t$, $S_+$ denotes the set of clean samples where $y_i = \hat y_i$, and $S_-$ represents the set of noisy samples where $y_i = -\hat y_i$. The updates of $\gamma_{j,r}^{(t,b)}$ comprise an increment arising from sample learning, counterbalanced by a decrement due to noisy sample learning. Both empirical and theoretical analyses have demonstrated that overparametrization allows the model to fit even random labels. This occurs when the negative term $\sum_{i\in \cI_{t,b} \cap S_-} \ell_{i}'^{(t,b)} \sigma'(\la\wb_{j,r}^{(t,b)}, y_{i} \cdot \bmu\ra)$ primarily drives model learning. Such unfavorable scenarios can be attributed to two possible factors. Firstly, the gradient of the loss $\ell_{i}'^{(t,b)}$ might be significantly larger for noisy samples compared to clean samples. Secondly, during certain epochs, the majority of samples may be noisy, meaning that $\cI_{t,b} \cap S_-$ significantly outnumbers $\cI_{t,b} \cap S_+$.

To deal with the first factor, we have to control the ratio of the loss gradient with regard to different samples, as depicted in \eqref{eq:logit_ratio}. Given that noisy samples may overwhelm a single batch, we impose an additional requirement: the ratio of the loss gradient must be controllable across different batches within a single epoch, i.e.,
\begin{align}\label{eq:logit_ratio}
    \ell_i'^{(t,b_1)}/\ell_k'^{(t,b_2)}\leq C_2.
\end{align}
As $\ell'(z_1)/\ell'(z_2) \approx \exp(z_2-z_1)$, we can upper bound $\ell_i'^{(t,b_1)}/\ell_k'^{(t,b_2)}$ by $y_i\cdot f(\Wb^{(t,b_1)},\xb_i)-y_k\cdot f(\Wb^{(t,b_2)},\xb_k)$, which can be further upper bounded by $\sum_{r}\zeta_{y_i,r,i}^{(t,b_1)} - \sum_{r}\zeta_{y_i,r,k}^{(t,b_2)}$ with a small error. Therefore, the proof of \eqref{eq:logit_ratio} can be reduced to proving a uniform bound on the difference among $\zeta_{y_i,r,i}^{(t,b)}$, i.e., $\sum_{r=1}^{m}\zeta_{y_i,r,i}^{(t,b_1)}-\sum_{r=1}^{m}\zeta_{y_k,r,k}^{(t,b_2)}\leq\kappa,~\forall~i,k$.

However, achieving this uniform upper bound turns out to be challenging, since the updates of $\zeta_{j,r,i}^{(t,b)}$'s are not evenly distributed across different batches within an epoch. Each mini-batch update utilizes only a portion of the samples, meaning that some $\zeta_{y_i,r,i}^{(t,b)}$ can increase or decrease much more than the others. Therefore, the uniformly bounded difference can only be achieved after the entire epoch is processed. Consequently, we have to first bound the difference among $\zeta_{y_i,r,i}^{(t,b)}$'s after each entire epoch, and then control the maximal difference within one epoch. The full batch (epoch) update rule is established as follows:
\begin{align}
\sum_{r=1}^{m}\big[\zeta_{y_i,r,i}^{(t+1,0)}-\zeta_{y_k,r,k}^{(t+1,0)}\big]&=\sum_{r=1}^{m}\big[\zeta_{y_i,r,i}^{(t,0)}-\zeta_{y_k,r,k}^{(t,0)}\big]-\frac{\eta(P-1)^2}{Bm}\cdot\Big(|\tilde{S}_{i}^{(t,b^{(t)}_i)}|\ell_i'^{(t,b^{(t)}_i)}\cdot\|\bxi_i\|_2^2 \nonumber \\ & \qquad-|\tilde{S}_{k}^{(t,b^{(t)}_k)}|\ell_k'^{(t,b^{(t)}_k)}\cdot\|\bxi_k\|_2^2 \Big). \label{eq:sum_rho_update}        
\end{align}
Here, $b^{(t)}_i$ denotes the batch to which sample $i$ belongs in epoch $t$, and $\tilde{S}_{i}^{(t,b^{(t)}_i)}$ represents the parameters that learn $\bxi_i$ at epoch $t$ defined as
\begin{align}\label{eq:activation_sets_tilde}
\tilde{S}_{i}^{(t,b)} :=  \{r: \la \wb_{y_i,r}^{(t,b)}, \bxi_i \ra >  0  \}.
\end{align}
Therefore, the update of $\sum_{r=1}^{m}\big[\zeta_{y_i,r,i}^{(t,0)}-\zeta_{y_k,r,k}^{(t,0)}\big]$ is indeed characterized by the activation pattern of parameters, which serves as the key technique for analyzing the full batch update of $\sum_{r=1}^{m}\big[\zeta_{y_i,r,i}^{(t,0)}-\zeta_{y_k,r,k}^{(t,0)}\big]$. However, analyzing the pattern of $S_{i}^{(t,b)}$ directly is challenging since $\la \wb_{y_i,r}^{(t,b)}, \bxi_i \ra$  fluctuates in mini-batches without sample $i$. Therefore, we introduce the set series $S_{i}^{(t,b)}$ as the activation pattern with certain threshold as follows:
\begin{align}
S_{i}^{(t,b)} :=  \{r: \la \wb_{y_i,r}^{(t,b)}, \bxi_i \ra >  \sigma_0\sigma_p\sqrt{d}/\sqrt{2}  \}.\quad  \label{eq:activation_sets}
\end{align}
The following lemma suggests that the set of activated parameters $S_{i}^{(t,0)}$ is a non-decreasing sequence with regards to $t$, and the set of plain activated parameters $\tilde{S}_{i}^{(t,b)}$ always include $S_{i}^{(t,0)}$. Consequently, $S_{i}^{(0,0)}$ is always included in $\tilde{S}_{i}^{(t,b)}$, guaranteeing that $\bxi_i$ can always be learned by some parameter. And this further makes sure the difference among $\zeta_{y_i,r,i}^{(t,b)}$ is bounded, as well as $\ell_i'^{(t,b_1)}/\ell_k'^{(t,b_2)}\leq C_2$. In the proof for SGD, we consider the learning period $0 \leq t \leq T^{*}$, where $T^{*} = \eta^{-1}\mathrm{poly}(\epsilon^{-1}, d, n, m)$ is the maximum number of admissible iterations.
\begin{lemma}[Informal Statement of Lemma~\ref{lm: balanced logit}]\label{lm:sets_series} For all $ t \in [0,T^*]$ and $ b \in \overline{[H]} $, we have
\begin{align}
    S_{i}^{(t-1,0)} \subseteq S_{i}^{(t,0)} \subseteq \tilde{S}_{i}^{(t,b)}.
\end{align}
\end{lemma}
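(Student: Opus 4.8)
The plan is to prove both inclusions simultaneously by induction on the epoch index $t$ (up to $t=T^{*}$), as one component of a larger bundle of inductive hypotheses that also carries the magnitude bounds on $\gamma_{j,r}^{(t,b)},\zeta_{j,r,i}^{(t,b)},\omega_{j,r,i}^{(t,b)}$, the uniform bound $\sum_{r}\zeta_{y_i,r,i}^{(t,b)}-\sum_{r}\zeta_{y_k,r,k}^{(t,b)}\le\kappa$, and the loss-derivative ratio~\eqref{eq:logit_ratio}; these statements are genuinely interdependent, so they must be propagated together. The central identity comes from pairing the signal--noise decomposition~\eqref{eq:w_decomposition} with $\bxi_i$:
\[
\langle\wb_{y_i,r}^{(t,b)},\bxi_i\rangle=\langle\wb_{y_i,r}^{(0,0)},\bxi_i\rangle+\frac{1}{P-1}\rho_{y_i,r,i}^{(t,b)}+E_{i,r}^{(t,b)},
\]
where $E_{i,r}^{(t,b)}:=y_i\gamma_{y_i,r}^{(t,b)}\langle\bmu,\bxi_i\rangle/\|\bmu\|_2^{2}+(P-1)^{-1}\sum_{i'\ne i}\rho_{y_i,r,i'}^{(t,b)}\langle\bxi_{i'},\bxi_i\rangle/\|\bxi_{i'}\|_2^{2}$ collects the signal and other-noise cross-correlations. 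Two facts drive the argument. First, from~\eqref{eq:gdupdate2} the self-coefficient $\rho_{y_i,r,i}^{(t,b)}$ is nondecreasing in $(t,b)$: it changes only when $i\in\cI_{t,b}$, and then by $-\eta(P-1)^{2}(Bm)^{-1}\ell_i'^{(t,b)}\sigma'(\langle\wb_{y_i,r}^{(t,b)},\bxi_i\rangle)\|\bxi_i\|_2^{2}\ge 0$ since $\ell_i'^{(t,b)}<0$; in particular $\rho_{y_i,r,i}^{(t,b)}=\zeta_{y_i,r,i}^{(t,b)}\ge 0$, so the middle term never decreases. Second, using the initialization concentration estimates ($\|\bxi_{i'}\|_2^{2}=\Theta(\sigma_p^{2}d)$, $|\langle\bxi_{i'},\bxi_i\rangle|=\tilde{O}(\sigma_p^{2}\sqrt{d})$, $|\langle\bmu,\bxi_i\rangle|=\tilde{O}(\|\bmu\|_2\sigma_p)$) together with the inductive coefficient bounds, the change of $E_{i,r}^{(t,b)}$ across one whole epoch is at most of order $\eta(P-1)n(Bm)^{-1}\max_{k}|\ell_k'|\,\sigma_p^{2}\sqrt{d\log(\cdot)}$, obtained by summing the single per-epoch update of each $\rho_{y_i,r,i'}$ ($i'\ne i$) against its near-orthogonal correlation with $\bxi_i$, plus a comparably small signal contribution.

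To establish $S_{i}^{(t,0)}\subseteq\tilde{S}_{i}^{(t,b)}$, I would argue: if $r\in S_{i}^{(t,0)}$ then $\langle\wb_{y_i,r}^{(t,0)},\bxi_i\rangle>\sigma_0\sigma_p\sqrt{d}/\sqrt{2}$, and as we pass from $(t,0)$ to $(t,b)$ the middle term only grows while $E_{i,r}$ moves by at most the one-epoch drift bound above, which the constraints on $\eta$, $d$, $\sigma_0$ in Condition~\ref{assm: assm0} keep strictly below $\sigma_0\sigma_p\sqrt{d}/\sqrt{2}$; hence $\langle\wb_{y_i,r}^{(t,b)},\bxi_i\rangle>0$, i.e.\ $r\in\tilde{S}_{i}^{(t,b)}$. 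To establish $S_{i}^{(t-1,0)}\subseteq S_{i}^{(t,0)}$, take $r\in S_{i}^{(t-1,0)}$; by the inclusion just established at epoch $t-1$ (already part of the induction hypothesis), $r\in\tilde{S}_{i}^{(t-1,b)}$ for every $b$, so in the unique batch of epoch $t-1$ containing $i$ we have $\sigma'(\langle\wb_{y_i,r}^{(t-1,b)},\bxi_i\rangle)=1$ and $(P-1)^{-1}\rho_{y_i,r,i}$ increases by at least $\Omega(\eta(P-1)(Bm)^{-1}|\ell_i'|\sigma_p^{2}d)$; dividing this by the one-epoch drift bound for $E_{i,r}$ and invoking~\eqref{eq:logit_ratio} to replace $\max_k|\ell_k'|$ by $O(|\ell_i'|)$ produces a ratio of order $n\sqrt{\log(\cdot)/d}=o(1)$ because $d\ge\tilde{\Omega}(n^{2})$, so $\langle\wb_{y_i,r}^{(t,0)},\bxi_i\rangle\ge\langle\wb_{y_i,r}^{(t-1,0)},\bxi_i\rangle>\sigma_0\sigma_p\sqrt{d}/\sqrt{2}$, i.e.\ $r\in S_{i}^{(t,0)}$. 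The base case $t=0$ (reading $S_{i}^{(-1,0)}:=S_{i}^{(0,0)}$) is the first of these two arguments applied at epoch $0$, with the initialization concentration bounds supplying the coefficient estimates directly.

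The step I expect to be the main obstacle is controlling the within-epoch fluctuation of $\langle\wb_{y_i,r}^{(t,b)},\bxi_i\rangle$: during the $H-1$ batches of an epoch that do not contain sample $i$, the monotone self-term $\rho_{y_i,r,i}$ is frozen while all the cross-correlation terms $\rho_{y_i,r,i'}$ ($i'\ne i$) keep being updated with both signs, and one must show that this accumulated drift cannot carry the inner product from above $\sigma_0\sigma_p\sqrt{d}/\sqrt{2}$ down through $0$ before the next epoch boundary, where the self-term resumes dominating. This is exactly why the thresholded set $S_{i}^{(t,b)}$ is introduced in~\eqref{eq:activation_sets} alongside $\tilde{S}_{i}^{(t,b)}$, and it is also why neither inclusion can be decoupled from the loss-derivative ratio~\eqref{eq:logit_ratio} and the remaining coefficient bookkeeping — which is what forces the single unified induction described above.
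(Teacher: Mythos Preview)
Your proposal is correct and follows essentially the same approach as the paper. The paper also runs a joint induction bundling the coefficient bounds, the $\zeta$-balance, the loss-ratio~\eqref{eq:logit_ratio}, and the set inclusions (its Lemma~\ref{lm: balanced logit}); for $S_{i}^{(t,0)}\subseteq\tilde{S}_{i}^{(t,b)}$ it bounds the within-epoch negative drift of $\langle\wb_{y_i,r}^{(t,b)},\bxi_i\rangle$ by $\sigma_0\sigma_p\sqrt{d}/\sqrt{2}$ exactly as you describe (its Lemma~\ref{lm:in_epoch}), and for $S_{i}^{(t-1,0)}\subseteq S_{i}^{(t,0)}$ it writes the full-epoch update as a self-term $\mathrm{I}_4$ versus cross-terms $\mathrm{I}_5,\mathrm{I}_6$, uses the already-established inclusion at epoch $t-1$ to activate the self-term, and invokes the loss-ratio to compare $\max_k|\ell_k'|$ against $|\ell_i'|$ so that $d\ge\tilde{\Omega}(n^2)$ makes the self-term dominate---precisely your ratio argument. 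Your diagnosis of the main obstacle (the within-epoch drift while the self-term is frozen) is exactly the reason the paper introduces the thresholded set $S_i^{(t,0)}$ alongside $\tilde{S}_i^{(t,b)}$.
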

As we have mentioned above, if noisy samples outnumber clean samples, $\gamma_{j,r}^{(t,b)}$ may also decrease. To deal with such a scenario, we establish a two-stage analysis of $\gamma_{j,r}^{(t,b)}$ progress. In the first stage, when $-\ell_i'$ is lower bound by a positive constant, we prove that there are enough batches containing sufficient clear samples. This is characterized by the following high-probability event.
\begin{lemma}
[Infomal Statement of Lemma~\ref{lm: good_batch}] With high probability, for all $T \in [\tilde{O}(1), T^*]$, there exist at least $c_1 \cdot T$ epochs among $[0,T]$, such that at least $c_2\cdot H$ batches in each of these epochs satisfying the following condition:
\begin{align} \label{eq:cap_main}
    |S_{+}\cap S_{y}\cap \cI_{t,b}|\in[0.25B,0.75B]. 
\end{align}
\end{lemma}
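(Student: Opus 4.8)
The plan is to peel off all of the algorithmic randomness except the one piece that matters here — the i.i.d.\ uniformly random partitions of $[n]$ into $H$ batches, one per epoch, drawn independently across epochs — and then argue in three stages: a constant lower bound on the probability that a single batch is ``balanced'', a constant lower bound on the probability that a whole epoch is ``good'', and a Chernoff bound across the independent epochs, made uniform in $T$ by a covering argument. First I would condition on the training set $\cS$, which fixes the sets $S_+$ and $S_y$ (I do this for a fixed $y\in\{\pm1\}$; the other sign is symmetric and costs one more union bound). Since $|S_+\cap S_y|$ is a sum of $n$ independent $\{0,1\}$ variables with mean $(1-p)/2$, a Chernoff bound together with $n\geq\tilde\Omega(1)$ and $p\leq 1/C$ gives, with probability at least $1-\delta/3$, that $|S_+\cap S_y|\in[\alpha_1 n,\alpha_2 n]$ for absolute constants $0.35<\alpha_1\leq\alpha_2<0.65$; condition on this event. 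Crucially, conditioning on $\cS$ does not disturb the partitions: the $\{\cI_{t,b}\}_{b}$ are still uniformly random and mutually independent across $t$.

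Fix an epoch $t$ and a batch $b$. Marginally $\cI_{t,b}$ is a uniform $B$-subset of $[n]$, so $X_{t,b}:=|S_+\cap S_y\cap\cI_{t,b}|$ is hypergeometric with mean $\mathbb{E}[X_{t,b}]=B|S_+\cap S_y|/n\in[\alpha_1 B,\alpha_2 B]$ and variance at most $B/4$. For $B$ above an absolute constant, Chebyshev gives $\PP\big(|X_{t,b}-\mathbb{E}[X_{t,b}]|\leq 0.1B\big)\geq 1/2$, and since $[\mathbb{E}[X_{t,b}]-0.1B,\,\mathbb{E}[X_{t,b}]+0.1B]\subseteq[0.25B,0.75B]$ this yields $\PP(X_{t,b}\in[0.25B,0.75B])\geq 1/2$; for the finitely many values $2\leq B\leq O(1)$ the same conclusion follows from unimodality of the hypergeometric around its mean. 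Hence there is an absolute constant $q_0>0$ with $\PP(X_{t,b}\in[0.25B,0.75B])\geq q_0$ for every $(t,b)$, so the expected number of balanced batches in epoch $t$ is at least $q_0 H$. Now the one delicate point: the $H$ batches within an epoch are \emph{not} independent — they partition $[n]$ — so I cannot simply Chernoff the count of balanced batches inside an epoch. Instead I apply Markov's inequality to the number $Y_t$ of \emph{non}-balanced batches, which satisfies $\mathbb{E}[Y_t]\leq(1-q_0)H$: then $\PP\big(Y_t\geq(1-q_0/2)H\big)\leq(1-q_0)/(1-q_0/2)$, so epoch $t$ is ``good'' — meaning it contains at least $c_2 H$ balanced batches with $c_2:=q_0/2$ — with probability at least $q_1:=\frac{q_0/2}{1-q_0/2}>0$, an absolute constant.

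Finally, since the partitions are drawn independently across epochs, the number of good epochs in $\{0,\dots,\lfloor T\rfloor\}$ stochastically dominates a $\mathrm{Binomial}(\lfloor T\rfloor+1,q_1)$ variable, and a multiplicative Chernoff bound gives $\PP\big(\#\{\text{good epochs in }[0,T]\}<(q_1/2)T\big)\leq\exp(-\Omega(q_1 T))$. To make this hold for \emph{all} $T\in[\tilde O(1),T^*]$ at once, I use that $T\mapsto\#\{\text{good epochs in }[0,T]\}$ is nondecreasing: it suffices to control a geometric grid $T_0<T_1<\cdots<T_K=T^*$ with $T_0=\tilde\Theta(1)$ and $T_{k+1}\leq 2T_k$, and since $T^*=\eta^{-1}\mathrm{poly}(\epsilon^{-1},d,n,m)$ there are only $K=\tilde O(1)$ of them; any $T\in[T_k,T_{k+1}]$ inherits $\#\{\text{good in }[0,T]\}\geq\#\{\text{good in }[0,T_k]\}\geq(q_1/2)T_k\geq(q_1/4)T$. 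A union bound over the grid (costing a factor $\tilde O(1)$, absorbed because taking $T_0\geq\tilde\Omega(1)$ also swallows $\log(1/\delta)$) then gives the claim with $c_1:=q_1/4$ and $c_2:=q_0/2$, after combining the three failure events (the conditioning event and the grid Chernoff terms) by a union bound. I expect the within-epoch dependence of the batch counts to be the main obstacle; the Markov-on-the-complement device above is precisely what lets the proof go through with only cross-epoch independence, avoiding any concentration statement for a single epoch.
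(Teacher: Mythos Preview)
Your proof is correct and shares the paper's three-layer skeleton (single batch $\to$ single epoch $\to$ across epochs), but the middle layer is handled by a genuinely different device. The paper copes with the dependence among batches in one epoch by restricting to the first $c_1H$ batches for a small constant $c_1$: after removing at most $c_1n$ samples the residual pool is still approximately balanced, so each successive batch has a \emph{conditional} probability at least a constant of being balanced, and a direct sequential-conditioning/combinatorial bound then controls the chance of seeing few balanced batches among those $c_1H$. Your Markov-on-the-complement trick sidesteps the dependence entirely: it uses only the marginal law of each $\cI_{t,b}$ (a uniform $B$-subset) together with linearity of expectation, and never needs a conditional lower bound. This is cleaner and more robust, at the cost of looser constants. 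For the uniformity in $T$, the paper simply union-bounds over all integer $T\le T^*$, which is why its threshold scales like $\log(T^*/\delta)$; your geometric grid plus monotonicity of the good-epoch count reduces this to $O(\log T^*)$ grid points, which is an equally valid (slightly tighter) way to arrive at the same $\tilde O(1)$ threshold.
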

After the first stage of $T = \Theta(\eta^{-1}m(P-1)^{-2}\sigma_p^{-2}d^{-1})$ epochs, we have $\gamma_{j,r}^{(T,0)} = \Omega\Big(n \frac{\|\bmu\|_2^2}{(P-1)^2\sigma_p^2 d}\Big)$. The scale of $\gamma_{j,r}^{(T,0)}$ guarantees that $\la \wb_{j,r}^{(t,b)}, \bmu \ra$ remains resistant to intra-epoch fluctuations. Consequently, this implies the sign of $\la \wb_{j,r}^{(t,b)}, \bmu \ra$ will persist unchanged throughout the entire epoch. Without loss of generality, we suppose that $\la \wb_{j,r}^{(t,b)}, \bmu  \ra > 0$, then the update of $\gamma_{j,r}^{(t,b)}$ can be written  as follows:
\begin{align}
\gamma_{j,r}^{(t+1,0)} = \gamma_{j,r}^{(t,0)} + \frac{\eta}{Bm} \cdot \bigg[ \min_{ i \in \cI_{t,b},b} |\ell_{i}'^{(t,b)}| |S_+ \cap S_1| -  \max_{ i \in \cI_{t,b},b} |\ell_{i}'^{(t,b)}| |S_- \cap S_{-1}|\bigg]  \cdot \| \bmu \|_2^2. \label{eq:gamma_update_revised}
\end{align}
As we have proved the balancing of logits $\ell_{i}'^{(t,b)}$ across batches, the progress analysis of $\gamma_{j,r}^{(t+1,0)}$ is established to characterize the signal learning of SGD.

\section{Result for SAM}\label{sec:SAM}
In this section, we present the positive results for SAM in the following theorem.

\begin{theorem}\label{thm:positive}
For any $\epsilon > 0$, under Condition~\ref{assm: assm0} with  $\sigma_{0} = \tilde{\Theta}(P^{-1}\sigma_{p}^{-1}d^{-1/2})$, choose $\tau =  \Theta\Big(\frac{m\sqrt{B}}{P\sigma_{p}\sqrt{d}}\Big)$. With probability at least $ 1 - \delta$, neural networks first trained with SAM with $O\Big(\eta^{-1}\epsilon^{-1}n^{-1}mB\|\bmu\|_{2}^{-2}\Big)$ iterations, then trained with SGD with $\tilde{O}\Big(\eta^{-1}\epsilon^{-1}mnd^{-1}P^{-2}\sigma_p^{-2}\Big)$ iterations  can find $\Wb^{(t)}$ such that,
\begin{enumerate}[leftmargin=*]
    \item The training loss satisfies $L_S(\Wb^{(t)}) \leq \epsilon$.
    \item The test error $ L_{\cD}^{0-1} (\Wb^{(t)}) \leq p + \epsilon$. 
\end{enumerate}
\end{theorem}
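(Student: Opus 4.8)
I analyze the two phases of the algorithm in order. In the SAM phase, which I take to last $T_1=\Theta(\eta^{-1}\epsilon^{-1}n^{-1}mB\|\bmu\|_2^{-2})$ epochs, I show that the ascent perturbation $\hat{\bepsilon}^{(t,b)}$ deactivates precisely the ReLU units that would otherwise memorize noise, so the noise coefficients $\rho_{j,r,i}^{(t,b)}$ stay at their initialization scale $\tilde{O}(\sigma_0\sigma_p\sqrt d)$, while the signal coefficients $\gamma_{j,r}^{(t,b)}$ follow essentially the clean-data dynamics and grow to $\Theta(\log(1/\epsilon))$. In the subsequent SGD phase, of $T_2=\tilde{O}(\eta^{-1}\epsilon^{-1}mnd^{-1}P^{-2}\sigma_p^{-2})$ epochs, the clean examples are already fit, so the dynamics is spent memorizing the $\approx pn$ noisy examples down to training loss $\epsilon$; the point is that the signal now has such a large head start that the (unavoidable) noise memorization can no longer overwhelm it at test time. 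The generalization bound then follows from a signal-versus-noise comparison on a fresh example.

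\textbf{SAM phase.} First I extend the signal--noise decomposition of Lemma~\ref{lemma:w_decomposition} to SAM: since the update uses $\nabla_{\Wb}L_{\cI_{t,b}}$ evaluated at $\Wb^{(t,b)}+\hat{\bepsilon}^{(t,b)}$, the recursions for $\gamma_{j,r}^{(t,b)}$ and $\rho_{j,r,i}^{(t,b)}$ take the same form as \eqref{eq:gamma_update_main} and its noise counterpart but with every activation indicator $\sigma'(\langle\wb_{j,r}^{(t,b)},\cdot\rangle)$ replaced by $\sigma'(\langle\wb_{j,r}^{(t,b)}+\hat{\bepsilon}_{j,r}^{(t,b)},\cdot\rangle)$. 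Writing $\hat{\bepsilon}_{j,r}^{(t,b)}=\tau\,\nabla_{\wb_{j,r}}L_{\cI_{t,b}}(\Wb^{(t,b)})/\|\nabla_{\Wb}L_{\cI_{t,b}}(\Wb^{(t,b)})\|_F$ and using the near-orthogonality of $\{\bxi_i\}$ together with $d\ge\tilde{\Omega}(n^2)$ to discard cross terms, a direct computation gives, for every $k\in\cI_{t,b}$ with $j=y_k$ whose unit is currently active on $\bxi_k$, $\langle\hat{\bepsilon}_{j,r}^{(t,b)},\bxi_k\rangle\approx-\Theta(\sqrt m/P)$, while $|\langle\hat{\bepsilon}_{j,r}^{(t,b)},\bmu\rangle|$ differs from this by a factor of order $B\|\bmu\|_2^2/((P-1)d\sigma_p^2)$. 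With the stated $\tau=\Theta(m\sqrt B/(P\sigma_p\sqrt d))$ and $\sigma_0=\tilde{\Theta}(P^{-1}\sigma_p^{-1}d^{-1/2})$, the noise-direction perturbation $\Theta(\sqrt m/P)$ dominates the running size of $\langle\wb_{j,r}^{(t,b)},\bxi_k\rangle$, so after the ascent step the unit becomes deactivated on $\bxi_k$ and contributes nothing to the noise part of the perturbed gradient; the signal-direction perturbation is correspondingly too small to deactivate the signal-aligned units. The heart of the phase is an induction over $(t,b)$ establishing simultaneously: (i) $|\rho_{j,r,i}^{(t,b)}|=\tilde{O}(\sigma_0\sigma_p\sqrt d)$ for all $i$, which is exactly what keeps $\langle\wb_{j,r}^{(t,b)},\bxi_i\rangle$ small enough for the deactivation at the next step to stay valid; (ii) no signal-aligned unit is ever deactivated, so $\gamma_{j,r}^{(t,b)}$ obeys the clean-data increment; and (iii) a cross-batch balancing of the loss derivatives in the spirit of \eqref{eq:logit_ratio}, which is what controls $\|\nabla_{\Wb}L_{\cI_{t,b}}\|_F$, hence the effective perturbation radius, uniformly over the phase. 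Integrating the $\gamma$-recursion --- it advances by $\Theta(\eta\|\bmu\|_2^2/m)$ per minibatch while the clean examples are underfit and slows logarithmically afterward --- over $T_1$ epochs yields $\gamma_{j,r}^{(T_1,0)}=\Theta(\log(1/\epsilon))$ for all $j,r$, with the noise coefficients still $\tilde{O}(\sigma_0\sigma_p\sqrt d)$.

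\textbf{SGD phase and test error.} Starting from $\Wb^{(T_1,0)}$, I re-run the mini-batch SGD analysis behind Theorem~\ref{thm:sgd_main}: the activation-set monotonicity and inclusion $S_i^{(t-1,0)}\subseteq S_i^{(t,0)}\subseteq\tilde{S}_i^{(t,b)}$ (Lemma~\ref{lm:sets_series}), the good-batch event, and the cross-batch logit balancing all carry over. The only change in the accounting is that the clean examples start with $\sum_r\gamma_{y_i,r}=\Theta(m\log(1/\epsilon))$ and hence loss at most $\epsilon$ already, so the phase is spent growing the noise coefficients $\zeta_{y_i,r,i}^{(t,b)}$ of the noisy examples until $\sum_r\zeta_{y_i,r,i}\ge\sum_r\gamma_{y_i,r}+\Omega(m\log(1/\epsilon))$, which occurs within $T_2$ epochs exactly as in Theorem~\ref{thm:sgd_main}, giving $L_S(\Wb^{(t)})\le\epsilon$; moreover I show $\sum_r\gamma_{j,r}$ cannot drop below $\Omega(m\log(1/\epsilon))$, since the downward pull on $\gamma$ coming from underfit noisy examples is active only during the comparatively short noise-fitting window, over which Condition~\ref{assm: assm0} makes the total decrease $o(m\log(1/\epsilon))$. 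For the test error, on a fresh $(\xb,\hat y)\sim\cD$ the signal patch contributes $\hat y\cdot m^{-1}\sum_r\gamma_{\hat y,r}=\Theta(\hat y\log(1/\epsilon))$ to $f(\Wb^{(t)},\xb)$, while the $P-1$ fresh noise patches contribute a mean-zero term that, because each of the $\approx pn$ memorized $\bxi_i$ overlaps a fresh Gaussian by only $\Theta(\sqrt d\sigma_p^2)$ and the $P-1$ patch contributions add incoherently, has standard deviation of order $\sqrt{pn}\,\log(1/\epsilon)/\sqrt{(P-1)d}$. Since $d\ge\tilde{\Omega}(n^2)$ this standard deviation is $o(\log(1/\epsilon))$, so with probability at least $1-\epsilon$ over the test point $\mathrm{sign}(f(\Wb^{(t)},\xb))=\hat y$, whence $L_{\cD}^{0-1}(\Wb^{(t)})\le\PP[y\ne\hat y]+\epsilon=p+\epsilon$.

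\textbf{Main obstacle.} The crux is the SAM-phase induction. The perturbation $\hat{\bepsilon}^{(t,b)}$ is a batch-stochastic function of the \emph{entire} gradient, so its components along $\bmu$ and along each $\bxi_i$ are coupled through the single normalizer $\|\nabla_{\Wb}L_{\cI_{t,b}}\|_F$, and because the ReLU loss is non-smooth one cannot invoke a Taylor/Hessian expansion --- the deactivation must be tracked through an explicit, self-referential activation-pattern induction, further complicated because distinct minibatches touch distinct examples, so the coefficient bounds and logit balancing can only be closed at the epoch scale. Choosing the absolute constant hidden in $\tau$ so that it lies above the noise-deactivation threshold and below the signal-deactivation threshold simultaneously, uniformly over all $T_1$ epochs and over every parameter regime allowed by Condition~\ref{assm: assm0}, is the delicate quantitative core of the argument.
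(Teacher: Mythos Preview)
Your high-level picture is right --- the ascent perturbation deactivates noise-memorizing units and gives the signal a head start --- but your quantitative accounting of the two phases diverges from the paper's in a way that breaks the argument.

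The central issue is your claim that $\gamma_{j,r}^{(T_1,0)}=\Theta(\log(1/\epsilon))$ at the end of the SAM phase. The paper's SAM induction (Proposition~\ref{prop:sam}) establishes only $\gamma_{j,r}^{(T_1,0)}=\Theta(1)$, bounded above by $1/12$. That upper bound is not incidental: it is what guarantees $F_j(\Wb_j^{(t,b)},\xb_i)\le 0.5$ uniformly, hence $|\ell_i'^{(t,b)}|\in[0.3,0.7]$, which in turn is what controls both the gradient-norm normalizer $\|\nabla_{\Wb}L_{\cI_{t,b}}\|_F$ and the magnitude of $\langle\hat{\bepsilon}_{j,r}^{(t,b)},\bxi_k\rangle$ in the deactivation lemma (Lemma~\ref{lem: sam key}). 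If you push SAM further so that $\gamma$ grows to $\log(1/\epsilon)$, the clean-sample derivatives collapse, the gradient becomes dominated by the $\approx pn$ noisy samples, and the careful balance needed for $\langle\wb_{j,r}^{(t,b)}+\hat{\bepsilon}_{j,r}^{(t,b)},\bxi_k\rangle<0$ has to be re-proved from scratch in that new regime --- your proposal does not address this. Consequently your SGD-phase picture is also off: clean examples are \emph{not} already fit to loss $\epsilon$ when SGD begins, and during SGD the logit-balancing (Lemma~\ref{lm: balanced logit}) forces \emph{all} noise coefficients $\zeta_{y_i,r,i}$ to grow at comparable rates, not just those of the noisy examples. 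The paper instead re-runs the full SGD two-stage analysis from the SAM endpoint (using Lemma~\ref{lm:T1} to verify the initialization-type conditions), obtaining $\sum_i\zeta_{j,r,i}^{(t)}/\gamma_{j',r'}^{(t)}=\Theta(\mathrm{SNR}^{-2})$ as before; the generalization gain comes from the fact that $\langle\wb_{\hat y,r}^{(0)},\hat y\bmu\rangle=\Omega(1)$ at the SGD start, and the specific choice $\sigma_0=\tilde{\Theta}(P^{-1}\sigma_p^{-1}d^{-1/2})$ then makes the ratio $\sum_r\sigma(\langle\wb_{\hat y,r}^{(t)},\hat y\bmu\rangle)/\big((P-1)\sigma_p\sum_r\|\wb_{-\hat y,r}^{(t)}\|_2\big)\ge 1$ in the Lipschitz-concentration bound on $g(\bxi)=\sum_r\sigma(\langle\wb_{-\hat y,r}^{(t)},\bxi\rangle)$. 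Your test-error sketch, which counts only $\approx pn$ memorized noises and uses $d\ge\tilde{\Omega}(n^2)$ for a variance bound, does not capture this mechanism and would not close without the $\sigma_0$ calibration.
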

In contrast to Theorem~\ref{thm:sgd_main}, Theorem~\ref{thm:positive} demonstrates that CNNs trained by SAM  exhibit benign overfitting under much milder conditions. 
This condition is almost dimension-free, as opposed to the threshold of $\|\bmu\|_{2}^{4} \geq \tilde{\Omega}((d/n)P^{4}\sigma_p^4)$ for CNNs trained by SGD. The discrepancy in the thresholds can be observed in Figure~\ref{fig:illu}. This difference is because SAM introduces a perturbation during the model parameter update process, which effectively prevents the early-stage memorization of noise by deactivating the corresponding neurons. 


\subsection{Noise Memorization Prevention}
In this subsection, we will show how SAM can prevent noise memorization by changing the activation pattern of the neurons. For SAM, we have the following update rule of decomposition coefficients $\gamma_{j,r}^{(t,b)},\zeta_{j,r,i}^{(t,b)},\omega_{j,r,i}^{(t,b)}$.
\begin{lemma}
The coefficients $\gamma_{j,r}^{(t,b)}, \zeta_{j,r,i}^{(t,b)}, \omega_{j,r,i}^{(t,b)}$ defined in Lemma~\ref{lemma:w_decomposition} satisfy the following iterative equations for all $r\in[m]$, $j\in\{\pm1\}$ and $i\in[n]$:
    \begin{align*}
    &\gamma_{j,r}^{(0,0)}, \zeta_{j,r,i}^{(0,0)}, \omega_{j,r,i}^{(0,0)} = 0,\\
    &\gamma_{j,r}^{(t,b+1)} = \gamma_{j,r}^{(t,b)} - \frac{\eta}{Bm} \cdot \bigg[ \sum_{i\in \cI_{t,b} \cap S_+} \ell_{i}'^{(t,b)} \sigma'(\la\wb_{j,r}^{(t,b)}+\hat{\bepsilon}_{j,r}^{(t,b)}, \hat{y}_{i} \cdot \bmu\ra)\\
    &\qquad\qquad\qquad\qquad\qquad - \sum_{i\in \cI_{t,b} \cap S_-} \ell_{i}'^{(t,b)} \sigma'(\la\wb_{j,r}^{(t,b)}+\hat{\bepsilon}_{j,r}^{(t,b)}, \hat{y}_{i} \cdot \bmu\ra) \bigg]  \cdot \| \bmu \|_2^2, \\
    &\zeta_{j,r,i}^{(t,b+1)} = \zeta_{j,r,i}^{(t,b)} - \frac{\eta(P-1)^2}{Bm} \cdot \ell_i'^{(t,b)}\cdot \sigma'(\la\wb_{j,r}^{(t,b)}+\hat{\bepsilon}_{j,r}^{(t,b)}, \bxi_{i}\ra) \cdot \| \bxi_i \|_2^2 \cdot \ind(y_{i} = j)\ind(i\in \cI_{t,b}), \\
    &\omega_{j,r,i}^{(t,b+1)} = \omega_{j,r,i}^{(t,b)} + \frac{\eta(P-1)^2}{Bm} \cdot \ell_i'^{(t,b)}\cdot \sigma'(\la\wb_{j,r}^{(t,b)}+\hat{\bepsilon}_{j,r}^{(t,b)}, \bxi_{i}\ra) \cdot \| \bxi_i \|_2^2 \cdot \ind(y_{i} = -j)\ind(i\in \cI_{t,b}),
\end{align*}
where $\cI_{t,b}$ denotes the sample index set of the $b$-th batch in the $t$-th epoch.
\end{lemma}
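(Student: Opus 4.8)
The plan is to mirror the derivation of the signal--noise decomposition used for SGD (Lemma~\ref{lemma:w_decomposition}), now propagating the SAM perturbation $\hat\bepsilon^{(t,b)}$ through the chain rule, and to read all three recursions off a single update identity. First I would write out the exact per-filter gradient at the perturbed iterate. Since $f$ has the form \eqref{eq:def} and each $\xb_i$ consists of one signal patch $\hat y_i\bmu$ and $P-1$ identical noise patches $\bxi_i$, the SAM update $\wb_{j,r}^{(t,b+1)}=\wb_{j,r}^{(t,b)}-\eta\,\nabla_{\wb_{j,r}}L_{\cI_{t,b}}(\Wb)\big|_{\Wb=\Wb^{(t,b)}+\hat\bepsilon^{(t,b)}}$ expands, exactly as in \eqref{eq:gdupdate2} but with the perturbed activation arguments, to
\begin{align*}
\nabla_{\wb_{j,r}}L_{\cI_{t,b}}(\Wb^{(t,b)}+\hat\bepsilon^{(t,b)})
&=\frac{P-1}{Bm}\sum_{i\in\cI_{t,b}}\ell_i'^{(t,b)}\,\sigma'\big(\la\wb_{j,r}^{(t,b)}+\hat\bepsilon_{j,r}^{(t,b)},\bxi_i\ra\big)\,jy_i\bxi_i\\
&\quad+\frac{1}{Bm}\sum_{i\in\cI_{t,b}}\ell_i'^{(t,b)}\,\sigma'\big(\la\wb_{j,r}^{(t,b)}+\hat\bepsilon_{j,r}^{(t,b)},\hat y_i\bmu\ra\big)\,\hat y_iy_ij\bmu ,
\end{align*}
where I would first fix the convention that $\ell_i'^{(t,b)}$ denotes the loss derivative evaluated at the SAM iterate, and recall $\ell'<0$. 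The structural point is that this vector --- and, by the same computation at the unperturbed point, $\hat\bepsilon_{j,r}^{(t,b)}=\tau\,\nabla_{\wb_{j,r}}L_{\cI_{t,b}}(\Wb^{(t,b)})/\|\nabla_{\Wb}L_{\cI_{t,b}}(\Wb^{(t,b)})\|_F$ itself --- lies in $\mathrm{span}\{\bmu\}+\mathrm{span}\{\bxi_i:i\in\cI_{t,b}\}$.

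Next I would run an induction on the global step $(t,b)$: assuming $\wb_{j,r}^{(t,b)}$ has the representation \eqref{eq:w_decompositionpre}, the display above shows that $\wb_{j,r}^{(t,b+1)}-\wb_{j,r}^{(0,0)}$ is again a linear combination of $\bmu$ and $\{\bxi_i\}_{i\in[n]}$, so the representation persists, while the base case $\wb_{j,r}^{(0,0)}$ forces $\gamma_{j,r}^{(0,0)}=\zeta_{j,r,i}^{(0,0)}=\omega_{j,r,i}^{(0,0)}=0$. Uniqueness of the coefficients holds because, under Condition~\ref{assm: assm0} (in particular $d\ge\tilde\Omega(n^2)$), with probability at least $1-\delta$ the vectors $\bmu,\bxi_1,\dots,\bxi_n$ are linearly independent, so matching the components of $\wb_{j,r}^{(t,b+1)}-\wb_{j,r}^{(t,b)}$ along $\bmu$ and along each $\bxi_i$ with the gradient expansion above is well-defined. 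Doing so, and using $\hat y_iy_i=+1$ on $S_+$, $\hat y_iy_i=-1$ on $S_-$, gives the claimed $\gamma$-recursion; matching the $\bxi_i$-component gives
\[
\rho_{j,r,i}^{(t,b+1)}=\rho_{j,r,i}^{(t,b)}-\frac{\eta(P-1)^2}{Bm}\,\ell_i'^{(t,b)}\,\sigma'\big(\la\wb_{j,r}^{(t,b)}+\hat\bepsilon_{j,r}^{(t,b)},\bxi_i\ra\big)\,\|\bxi_i\|_2^2\,jy_i\,\ind(i\in\cI_{t,b}).
\]

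Finally I would peel off $\zeta$ and $\omega$. Since $-\ell_i'^{(t,b)}>0$, $\sigma'\ge0$ and $\|\bxi_i\|_2^2>0$, the increment of $\rho_{j,r,i}^{(t,b)}$ always has the sign of $jy_i$: it is nonnegative whenever $y_i=j$ and nonpositive whenever $y_i=-j$. Since $\rho_{j,r,i}^{(0,0)}=0$, this yields $\rho_{j,r,i}^{(t,b)}\ge0$ for $y_i=j$ and $\rho_{j,r,i}^{(t,b)}\le0$ for $y_i=-j$, hence $\zeta_{j,r,i}^{(t,b)}=\rho_{j,r,i}^{(t,b)}\ind(y_i=j)$ and $\omega_{j,r,i}^{(t,b)}=\rho_{j,r,i}^{(t,b)}\ind(y_i=-j)$; substituting $jy_i=1$ on $\{y_i=j\}$ and $jy_i=-1$ on $\{y_i=-j\}$ into the $\rho$-recursion reproduces exactly the stated updates for $\zeta_{j,r,i}^{(t,b)}$ and $\omega_{j,r,i}^{(t,b)}$. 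The argument is essentially bookkeeping that parallels the SGD decomposition, so I do not expect a deep obstacle; the points that require care are (i) verifying in the induction that the normalized ascent step $\hat\bepsilon^{(t,b)}$ leaks no mass outside $\mathrm{span}\{\bmu,\bxi_i\}$, so the decomposition is genuinely preserved at \emph{every} sub-step of an epoch, and (ii) pinning down the convention for $\ell_i'^{(t,b)}$ under the SAM update and checking that the sign pattern used for the $\zeta/\omega$ split is consistent across all mini-batches within each epoch.
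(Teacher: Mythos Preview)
Your proposal is correct and follows the same route the paper takes for the SGD analogue (Lemma~\ref{lm: coefficient iterative}); the paper does not give a separate proof for the SAM version, so your bookkeeping argument is exactly what is intended. One small remark: your concern~(i) about $\hat\bepsilon^{(t,b)}$ ``leaking mass'' is unnecessary, because $\nabla_{\wb_{j,r}}L_{\cI_{t,b}}(\Wb)$ evaluated at \emph{any} point $\Wb$ is already a linear combination of $\bmu$ and the $\bxi_i$'s---the perturbation only shifts the scalar arguments of $\sigma'$ and $\ell'$---so the decomposition is preserved regardless of where $\hat\bepsilon^{(t,b)}$ lives.
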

The primary distinction between SGD and SAM lies in how neuron activation is determined. In SAM, the activation is based on the perturbed weight $\wb_{j,r}^{(t,b)}+\hat{\bepsilon}_{j,r}^{(t,b)}$, whereas in SGD, it is determined by the unperturbed weight $\wb_{j,r}^{(t,b)}$. This perturbation to the weight update process at each iteration gives SAM an intriguing denoising property. Specifically, if a neuron is activated by the SGD update, it will subsequently become deactivated after the perturbation, as stated in the following lemma.
\begin{lemma}[Informal Statement of Lemma~\ref{lem: sam key}]\label{lem: sam key1} Suppose the Condition~\ref{assm: assm0} holds with parameter choices in Theorem~\ref{thm:positive}, if $\la \wb_{j,r}^{(t,b)}, \bxi_k \ra \geq 0$, $k \in \cI_{t,b}$ and $j = y_k$, then $\la \wb_{j,r}^{(t,b)} + \hat{\bepsilon}^{(t,b)}_{j,r}, \bxi_k \ra < 0$.
\end{lemma}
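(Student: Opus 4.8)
The plan is to expand $\la\wb_{j,r}^{(t,b)} + \hat{\bepsilon}^{(t,b)}_{j,r},\bxi_k\ra$ into the pre-perturbation correlation plus the $\bxi_k$-component of the sharpness perturbation, and to show the latter is a large negative quantity that overwhelms the former. Since $\hat{\bepsilon}^{(t,b)}_{j,r} = \tau\,\nabla_{\wb_{j,r}}L_{\cI_{t,b}}(\Wb^{(t,b)})/\|\nabla_{\Wb}L_{\cI_{t,b}}(\Wb^{(t,b)})\|_F$ (note the normalization is by the \emph{full} Frobenius norm, not the block norm), we have
\begin{align*}
\la\wb_{j,r}^{(t,b)} + \hat{\bepsilon}^{(t,b)}_{j,r},\bxi_k\ra = \la\wb_{j,r}^{(t,b)},\bxi_k\ra + \tau\cdot\frac{\la\nabla_{\wb_{j,r}}L_{\cI_{t,b}}(\Wb^{(t,b)}),\bxi_k\ra}{\|\nabla_{\Wb}L_{\cI_{t,b}}(\Wb^{(t,b)})\|_F},
\end{align*}
so it suffices to (i) show $\la\nabla_{\wb_{j,r}}L_{\cI_{t,b}}(\Wb^{(t,b)}),\bxi_k\ra$ is negative with a matching lower bound on its magnitude, (ii) upper bound $\|\nabla_{\Wb}L_{\cI_{t,b}}(\Wb^{(t,b)})\|_F$, and (iii) upper bound $\la\wb_{j,r}^{(t,b)},\bxi_k\ra$.

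For (i), substitute the gradient formula \eqref{eq:gdupdate2} and take the inner product with $\bxi_k$. Because $k\in\cI_{t,b}$ and $j=y_k$, the $i=k$ noise term equals $\frac{P-1}{Bm}\ell_k'^{(t,b)}\sigma'(\la\wb_{j,r}^{(t,b)},\bxi_k\ra)\,jy_k\|\bxi_k\|_2^2$; the hypothesis $\la\wb_{j,r}^{(t,b)},\bxi_k\ra\ge0$ together with $\sigma'(0)=1$ makes the activation indicator equal to $1$, while $jy_k=1$, $\ell_k'^{(t,b)}<0$, and $\|\bxi_k\|_2^2=\Theta(\sigma_p^2 d)$ by the standard initialization concentration bound, so this term is at most $-\Omega\big((P-1)|\ell_k'^{(t,b)}|\sigma_p^2 d/(Bm)\big)$. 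All remaining terms — the $i\ne k$ noise terms carrying $\la\bxi_i,\bxi_k\ra$ and the signal terms carrying $\la\bmu,\bxi_k\ra$ — are controlled using $|\la\bxi_i,\bxi_k\ra|\le\tilde{O}(\sigma_p^2\sqrt d)$, $|\la\bmu,\bxi_k\ra|\le\tilde{O}(\sigma_p\|\bmu\|_2)$ and the SAM-phase logit balance $|\ell_i'^{(t,b)}|\le O(|\ell_k'^{(t,b)}|)$ for all $i$ (an analogue of \eqref{eq:logit_ratio}); under Condition~\ref{assm: assm0} — in particular $d\ge\tilde{\Omega}(n^2)$, $d\ge\tilde{\Omega}(nP^{-2}\sigma_p^{-2}\|\bmu\|_2^2)$, and $B\le n$ — these sum to a negligible fraction of the main term, yielding $\la\nabla_{\wb_{j,r}}L_{\cI_{t,b}}(\Wb^{(t,b)}),\bxi_k\ra\le-\Omega\big((P-1)|\ell_k'^{(t,b)}|\sigma_p^2 d/(Bm)\big)<0$. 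For (ii), the near-orthogonality of $\{\bxi_i\}$ gives, for each block, $\|\nabla_{\wb_{j,r}}L_{\cI_{t,b}}(\Wb^{(t,b)})\|_2\le O\big((P-1)\sigma_p\sqrt d/(Bm)\big)\sqrt{\sum_{i\in\cI_{t,b}}|\ell_i'^{(t,b)}|^2}$ (the signal contribution is lower order since $\|\bmu\|_2=O((P-1)\sigma_p\sqrt d/\sqrt B)$); summing over the $2m$ blocks and using the logit balance once more gives $\|\nabla_{\Wb}L_{\cI_{t,b}}(\Wb^{(t,b)})\|_F\le O\big((P-1)\sigma_p\sqrt d/\sqrt{mB}\big)\,|\ell_k'^{(t,b)}|$.

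Combining (i) and (ii), the factors $|\ell_k'^{(t,b)}|$ cancel and the perturbation term satisfies $\tau\cdot\la\nabla_{\wb_{j,r}}L_{\cI_{t,b}}(\Wb^{(t,b)}),\bxi_k\ra/\|\nabla_{\Wb}L_{\cI_{t,b}}(\Wb^{(t,b)})\|_F\le-\tau\cdot\Omega\big(\sigma_p\sqrt d/\sqrt{mB}\big)=-\Omega(\sqrt m/P)$ by the choice $\tau=\Theta\big(m\sqrt B/(P\sigma_p\sqrt d)\big)$, uniformly over $(t,b)$. For (iii), the signal–noise decomposition \eqref{eq:w_decomposition}, the SAM-phase invariants carried through the global induction (the noise coefficients $\rho_{j,r,i}^{(t,b)}$ for the relevant $(j,r)$ remain essentially at their zero initialization precisely because the perturbation keeps these neurons inactive on noise patches — which is the very statement being established — while $\gamma_{j,r}^{(t,b)}=\tilde{O}(1)$ during this phase), together with $|\la\wb_{j,r}^{(0,0)},\bxi_k\ra|\le\tilde{O}(\sigma_0\sigma_p\sqrt d)$ and the cross-term bounds above, give $\la\wb_{j,r}^{(t,b)},\bxi_k\ra\le\tilde{O}(\sigma_0\sigma_p\sqrt d)=\tilde{O}(1/P)$ for the choice $\sigma_0=\tilde{\Theta}(P^{-1}\sigma_p^{-1}d^{-1/2})$. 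Since $m\ge\tilde{\Omega}(1)$ is taken large enough that $\sqrt m$ exceeds the polylogarithmic factor hidden in this $\tilde{O}(\cdot)$, we conclude $\la\wb_{j,r}^{(t,b)}+\hat{\bepsilon}^{(t,b)}_{j,r},\bxi_k\ra\le\tilde{O}(1/P)-\Omega(\sqrt m/P)<0$, as claimed.

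The main obstacle is step (iii): the bound $\la\wb_{j,r}^{(t,b)},\bxi_k\ra\le\tilde{O}(1/P)$ is not available in isolation but must be established jointly with this lemma by induction on $(t,b)$ over the SAM phase, since $\wb_{j,r}^{(t,b)}$ already reflects all earlier perturbed updates; the inductive step invokes the lemma at earlier iterations to deduce $\sigma'(\la\wb_{j,r}^{(t',b')}+\hat{\bepsilon}^{(t',b')}_{j,r},\bxi_k\ra)=0$ whenever $\la\wb_{j,r}^{(t',b')},\bxi_k\ra\ge0$, which freezes $\zeta_{j,r,k}^{(t',b')}$ at $0$ and keeps the correlation small. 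A secondary technical point is verifying that every error term in (i) and (ii) is genuinely lower order; this is exactly where the various lower bounds on $d$ (relative to $n$, $B$, $m$, and $\|\bmu\|_2/\sigma_p$) in Condition~\ref{assm: assm0} enter, as do the standard initialization concentration estimates for $\|\bxi_i\|_2$, $\la\bxi_i,\bxi_k\ra$, $\la\bmu,\bxi_k\ra$, and $\la\wb_{j,r}^{(0,0)},\bxi_k\ra$.
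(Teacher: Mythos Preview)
Your proposal is correct and follows essentially the same three-step decomposition as the paper: upper-bound $\|\nabla_{\Wb}L_{\cI_{t,b}}\|_F$, show the $\|\bxi_k\|_2^2$ term forces $\la\nabla_{\wb_{j,r}}L_{\cI_{t,b}},\bxi_k\ra$ to be sufficiently negative, and use the signal--noise decomposition together with the SAM-phase coefficient bounds (maintained by the joint induction you correctly identify) to control $\la\wb_{j,r}^{(t,b)},\bxi_k\ra$. The only simplification in the paper is that, instead of your logit-balance argument, it uses the cruder bound $|\ell_i'^{(t,b)}|\in[0.3,0.7]$ for all $i$ during the first SAM stage (which follows from $|F_j(\Wb_j^{(t,b)},\xb_i)|\le 0.5$ once all coefficients are $O(1)$), so there is no need to track and cancel the factor $|\ell_k'^{(t,b)}|$.
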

By leveraging this intriguing property, we can derive a constant upper bound for the noise coefficients $\zeta_{j,r,i}^{(t,b)}$ by considering the following cases:
\begin{enumerate}[leftmargin=*]
  \item If $\bxi_i$ is not in the current batch, then $\zeta_{j,r,i}^{(t,b)}$ will not be updated in the current iteration.
  
  \item If $\bxi_i$ is in the current batch, we discuss two cases:
    \begin{enumerate}[leftmargin=*]
      \item If $\la \wb_{j,r}^{(t,b)}, \bxi_i \ra \geq 0$, then by Lemma~\ref{lem: sam key1}, one can know that $\sigma'(\la \wb_{j,r}^{(t,b)}+\hat{\bepsilon}^{(t,b)}_{j,r}, \bxi_i \ra)=0$ and thus $\zeta_{j,r,i}^{(t,b)}$ will not be updated in the current iteration.
      \item If $\la \wb_{j,r}^{(t,b)}, \bxi_i \ra\leq 0$, then given that $\la \wb_{j,r}^{(t,b)}, \bxi_i \ra \approx \zeta_{j,r,i}^{(t,b)}$ and $\zeta_{j,r,i}^{(t,b+1)} \leq \zeta_{j,r,i}^{(t,b)}+\frac{\eta(P-1)^2\|\bxi_i\|_2^2}{Bm}$, we can assert that, provided $\eta$ is sufficiently small, the term $\zeta_{j,r,i}^{(t,b)}$ can be upper bounded by a small constant.
    \end{enumerate}
\end{enumerate}
In contrast to the analysis of SGD, which provides an upper bound for $\zeta_{j,r,i}^{(t,b)}$ of order $O(\log d)$, the noise memorization prevention property described in Lemma~\ref{lem: sam key1} allows us to obtain an upper bound for $\zeta_{j,r,i}^{(t,b)}$ of order $O(1)$ throughout $[0,T_1]$. This indicates that SAM memorizes less noise compared to SGD. On the other hand, the signal coefficient $\gamma_{j,r,i}^{(t)}$ also increases to $\Omega(1)$  for SAM, following the same argument as in SGD. This property ensures that training with SAM does not exhibit harmful overfitting for the same signal-to-noise ratio at which training with SGD suffers from harmful overfitting.





\section{Experiments}\label{sec:exp}
In this section, we conduct synthetic experiments to validate our theory. Additional experiments on real data sets can be found in Appendix~\ref{App:additional}.

We set training data size $n = 20$ without label-flipping noise. Since the learning problem is rotation-invariant, without loss of generality, we set $\bmu = \| \bmu \|_2 \cdot [1,0,\ldots,0]^\top $. We then generate the noise vector $\bxi$ from the Gaussian distribution $\cN (\mathbf{0}, \sigma_p^2 \Ib)$ with fixed standard deviation $\sigma_{p} = 1$. We train a two-layer CNN model defined in Section~\ref{sec:prob} with the ReLU activation function. The number of filters is set as $m=10$. We use the default initialization method in PyTorch to initialize the CNN parameters and train the CNN with full-batch gradient descent with a learning rate of $0.01$ for $100$ iterations. We consider different dimensions $d$ ranging from $1000$ to $20000$, and different signal strengths $\|\bmu\|_{2}$ ranging from $0$ to $10$. Based on our results, for any dimension $d$ and signal strength $\mu$ setting we consider, our training setup can guarantee a training loss smaller than $0.05$. 
After training, we estimate the test error for each case using $1000$ test data points. We report the test error heat map with average results over $10$ runs in Figure~\ref{fig:synthetic}.
\begin{figure}[t]
\begin{center}
\subfigure[Original SGD Test Error Heatmap]{\includegraphics[width=0.46\textwidth]{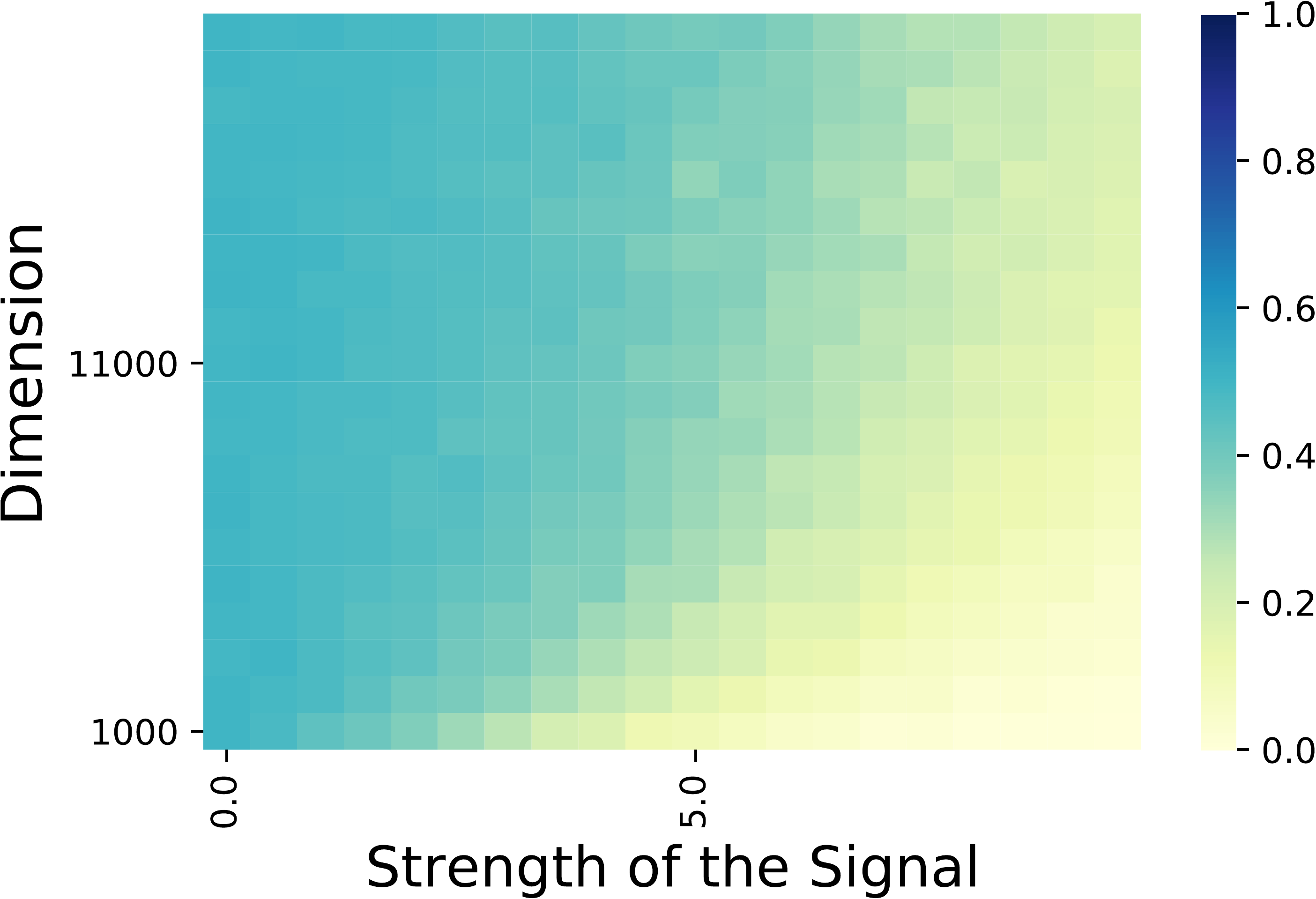}}
\hspace{0.2in}
\subfigure[SAM Test Error Heatmap]{\includegraphics[width=0.46\textwidth]{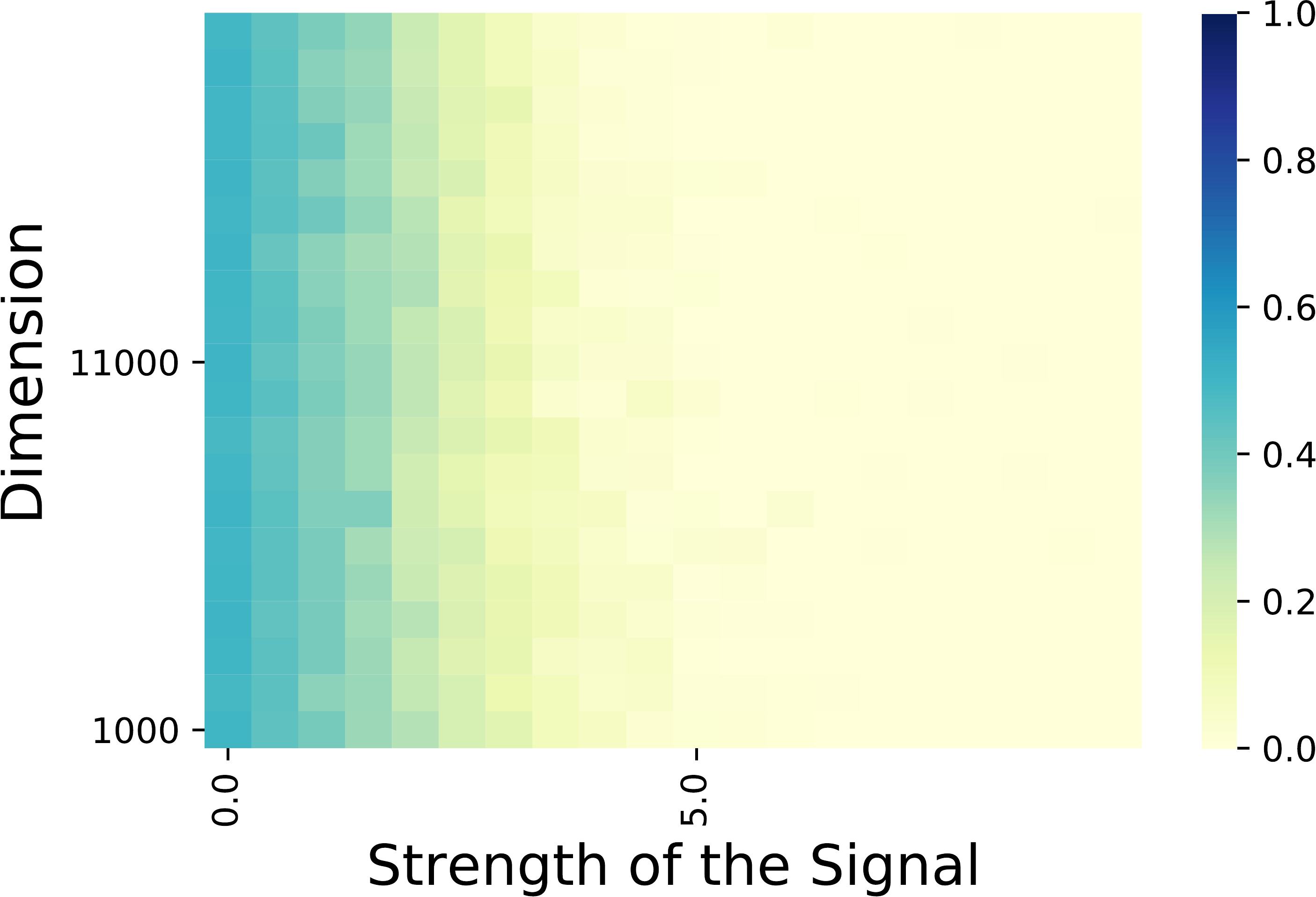}}
\end{center}
\vskip -12pt
\caption{(a) is a heatmap illustrating test error on synthetic data for various dimensions $d$ and signal strengths $\bmu$ when trained using Vanilla Gradient Descent. High test errors are represented in blue, while low test errors are shown in yellow. (b) displays a heatmap of test errors on the synthetic data under the same conditions as in (a), but trained using SAM instead with $\tau = 0.03$. The y-axis represents a normal scale with a range of $1000\sim 21000$.}
\label{fig:synthetic}
\end{figure}

\section{Related Work}
\noindent\textbf{Sharpness Aware Minimization.} \citet{foret2020sharpness}, and \citet{zheng2021regularizing} concurrently introduced methods to enhance generalization by minimizing the loss in the worst direction, perturbed from the current parameter. \citet{kwon2021asam} introduced ASAM, a variant of SAM, designed to address parameter re-scaling. Subsequently, \citet{liu2022towards} presented LookSAM, a more computationally efficient alternative. \citet{zhuang2022surrogate} highlighted that SAM did not consistently favor the flat minima and proposed GSAM to improve generalization by minimizing the surrogate. Recently, \citet{zhao2022penalizing} showed that the SAM algorithm is related to the gradient regularization (GR) method when the loss is smooth and proposed an algorithm that can be viewed as a generalization of the SAM algorithm. \citet{meng2023per} further studied the mechanism of Per-Example Gradient Regularization (PEGR) on the CNN training and revealed that PEGR penalizes the variance of pattern learning.

\noindent\textbf{Benign Overfitting in Neural Networks.} Since the pioneering work by \citet{bartlett2020benign} on benign overfitting in linear regression, there has been a surge of research studying benign overfitting in linear models, kernel methods, and neural networks.  
\citet{li2021towards, montanari2022interpolation} examined benign overfitting in random feature or neural tangent kernel models defined in two-layer neural networks. \citet{chatterji2022deep} studied the excess risk of interpolating deep linear networks trained by gradient flow. Understanding benign overfitting in neural networks beyond the linear/kernel regime is much more challenging because of the non-convexity of the problem. Recently, \citet{frei2022benign} studied benign overfitting in fully-connected two-layer neural networks with smoothed leaky ReLU activation. \citet{cao2022benign} provided an analysis for learning two-layer convolutional neural networks (CNNs) with polynomial ReLU activation function (ReLU$^q$, $q > 2$). \citet{kou2023benign} further investigates the phenomenon of benign overfitting in learning two-layer ReLU CNNs. \citet{kou2023benign} is most related to our paper. However, our work studied SGD rather than GD, which requires advanced techniques to control the update of coefficients at both batch-level and epoch-level. We also provide a novel analysis for SAM, which differs from the analysis of GD/SGD.

\section{Conclusion}

In this work, we rigorously analyze the training behavior of two-layer convolutional ReLU networks for both SGD and SAM. In particular, we precisely outlined the conditions under which benign overfitting can occur during SGD training, marking the first such finding for neural networks trained with mini-batch SGD. We also proved that SAM can lead to benign overfitting under circumstances that prompt harmful overfitting via SGD, which demonstrates the clear theoretical superiority of SAM over SGD.  Our results provide a deeper comprehension of SAM, particularly when it comes to its utilization with non-smooth neural networks. An interesting future work is to consider other modern deep learning techniques, such as weight normalization, momentum, and weight decay, in our analysis.

\section*{Acknowledgements}

We thank the anonymous reviewers for their helpful comments. ZC, JZ, YK, and QG are supported in part by the National Science Foundation CAREER Award 1906169 and IIS-2008981, and the Sloan Research Fellowship. XC and CJH are supported in part by NSF under IIS-2008173, IIS-2048280, CISCO and Sony. 
The views and conclusions contained in this paper are those of the authors and should not be interpreted as representing any funding agencies. 

\bibliography{deeplearningreference}
\bibliographystyle{ims.bst}


\newpage
\appendix


\section{Additional Experiments}\label{App:additional}
\subsection{Real Experiments on CIFAR}
In this section, we provide the experiments on real data sets.

\noindent\textbf{Varying different starting points for SAM.} In Section~\ref{sec:SAM}, we show that the SAM algorithm can effectively prevent noise memorization and thus improve feature learning in the early stage of training. Is SAM also effective if we add the algorithm in the middle of the training process? We conduct experiments on the ImageNet dataset with ResNet50. We choose the batch size as $1024$, and the model is trained for $90$ epochs with the best learning rate in grid search $\{0.01, 0.03, 0.1, 0.3\}$. The learning rate schedule is $10$k steps linear warmup, then cosine decay. As shown in Table~\ref{table:tbl1}, the earlier SAM is introduced, the more pronounced its effectiveness becomes.

\noindent\textbf{SAM with additive noises.} Here, we conduct experiments on the CIFAR dataset with WRN-16-8. We add Gaussian random noises to the image data with variance $\{0.1, 0.3, 1\}$. We choose the batch size as $128$ and train the model over $200$ epochs using a learning rate of $0.1$, a momentum of $0.9$, and a weight decay of $5e-4$. The SAM hyperparameter is chosen as $\tau = 2.0$. As we can see from Table~\ref{tab:noise}, SAM can consistently prevent noise learning and get better performance, compared to the SGD, which varies from different additive noise levels.

\begin{table}
    \caption{Top-$1$ accuracy (\%) of ResNet-50 on the ImageNet dataset when we vary the starting point of using the SAM update rule, baseline result is 76.4\%.
    }
    \label{table:tbl1}
    \centering
    \resizebox{.5\textwidth}{!}{
    \begin{tabular}{lccccc}
    \toprule
    \textbf{$\tau$} & \textbf{10\%} & \textbf{30\%} & \textbf{50\%} & \textbf{70\%} & \textbf{90\%} \\ \midrule
    0.01 & 76.9 & 76.9 & 76.9 & 76.7 & 76.7 \\ 
    0.02 & 77.1 & 77.0 & 76.9 & 76.8 & 76.6 \\
    0.05 & 76.2 & 76.4 & 76.3 & 76.3 & 76.2 \\
    \bottomrule
    \end{tabular}}
    \vspace{10pt}
\end{table}

\begin{table}[t]
    \caption{Top-$1$ accuracy (\%) of wide ResNet on the CIFAR datasets when adding different levels of Gaussian noise.}
    \label{tab:noise}
    \centering
    \resizebox{.8\textwidth}{!}{
    \begin{tabular}{lccccc}
    \toprule
    \textbf{Model} & \textbf{Noise} & \textbf{Dataset} & \textbf{Optimizer} & \textbf{Accuracy} & \\ \midrule
    WRN-16-8 & - & CIFAR-10 & SGD & 96.69 \\
    WRN-16-8 & - & CIFAR-10 & SAM & 97.19 \\ \midrule
    
    WRN-16-8 & $\mathcal{N}(0, 0.1)$ & CIFAR-10 & SGD & 95.87 \\ 
    WRN-16-8 & $\mathcal{N}(0, 0.1)$ & CIFAR-10 & SAM & 96.57 \\ \midrule
    
    WRN-16-8 & $\mathcal{N}(0, 0.3)$ & CIFAR-10 & SGD & 92.40 \\ 
    WRN-16-8 & $\mathcal{N}(0, 0.3)$ & CIFAR-10 & SAM & 93.37 \\ \midrule
    
    WRN-16-8 & $\mathcal{N}(0, 1)$ & CIFAR-10 & SGD & 79.50 \\ 
    WRN-16-8 & $\mathcal{N}(0, 1)$ & CIFAR-10 & SAM & 80.37 \\ \midrule
    
    WRN-16-8 & - & CIFAR-100 & SGD & 81.93 \\
    WRN-16-8 & - & CIFAR-100 & SAM & 83.68 \\
    
    \bottomrule
    \end{tabular}}
\end{table}

\subsection{Discussion on the Stochastic Gradient Descent}

\begin{figure}[t]
\begin{center}
\subfigure[SGD Test Error Heatmap with minibatch $10$]{\includegraphics[width=0.43\textwidth]{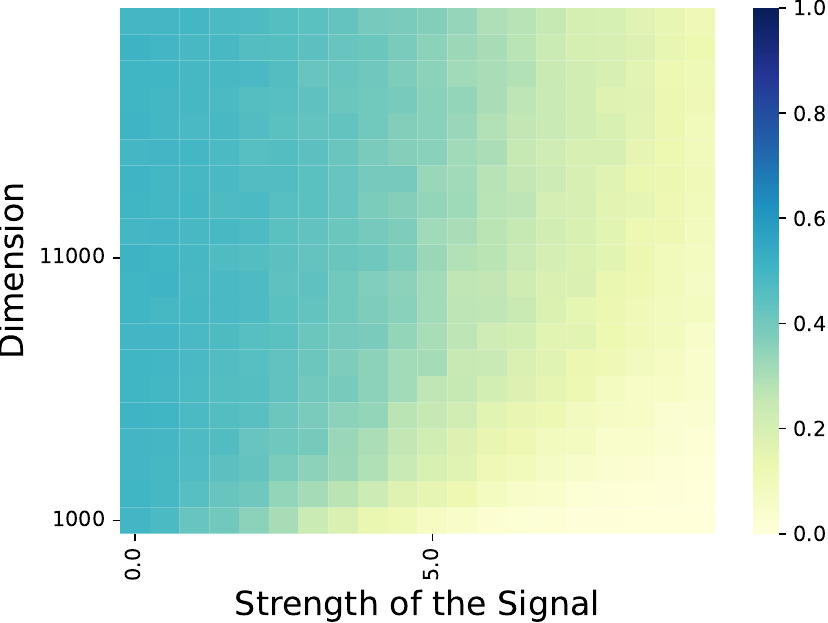}}
\hspace{0.2in}
\subfigure[SAM Test Error Heatmap with minibatch $10$]{\includegraphics[width=0.43\textwidth]{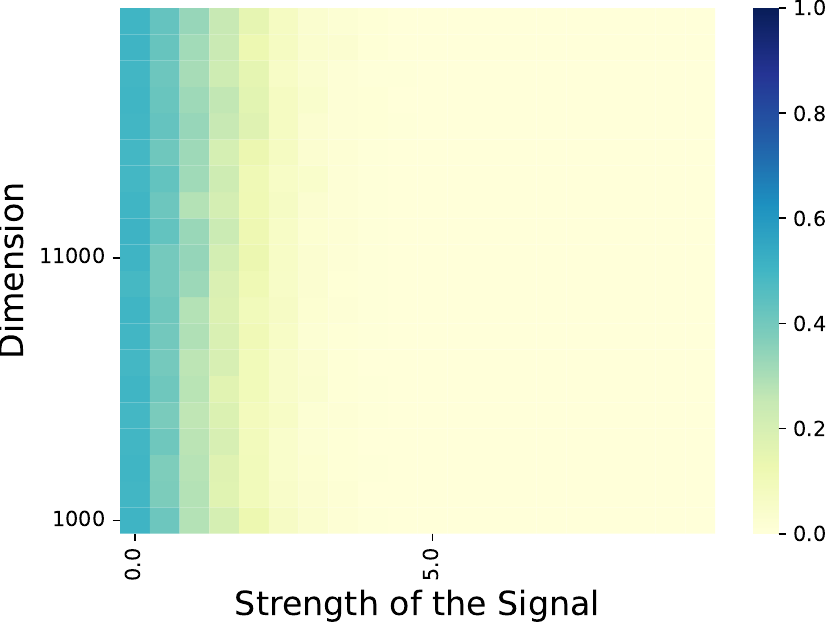}}
\end{center}
\vskip -12pt
\caption{(a) is a heatmap illustrating test error on synthetic data for various dimensions $d$ and signal strengths $\bmu$ when trained using stochastic gradient descent. High test errors are represented in blue, while low test errors are shown in yellow. (b) displays a heatmap of test errors on the synthetic data under the same conditions as in (a), but trained using SAM instead. The y-axis represents a normal scale with a range of 1000-21000.}
\label{fig:syntheticsgd}
\end{figure}

\begin{figure}[t]
    \centering
    \subfigure[Full-batch GD Test Error Heatmap with learning rate  $\eta = 0.001$]{\includegraphics[width=0.45\textwidth]{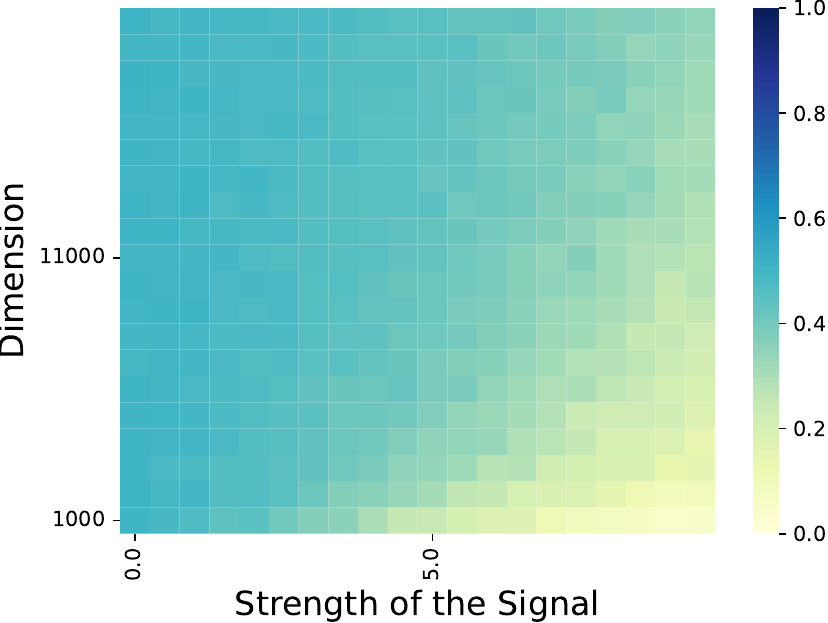}\label{subfig:a}}
    \hspace{0.2in}
    \subfigure[Full-batch GD Test Error Heatmap with learning rate  $\eta = 0.01$]{\includegraphics[width=0.45\textwidth]{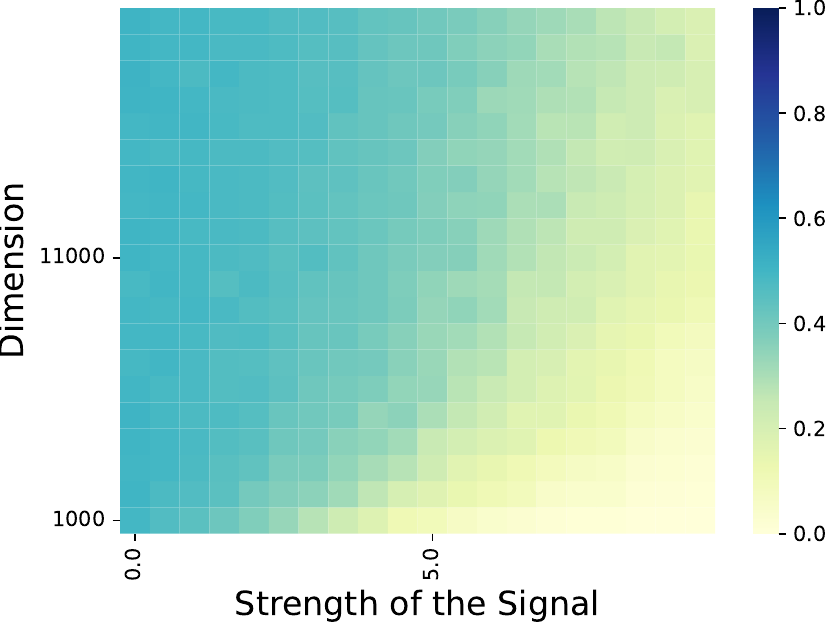}\label{subfig:b}}
    \vskip 0.2in
    \subfigure[Full-batch GD Test Error Heatmap with learning rate  $\eta = 0.1$]{\includegraphics[width=0.45\textwidth]{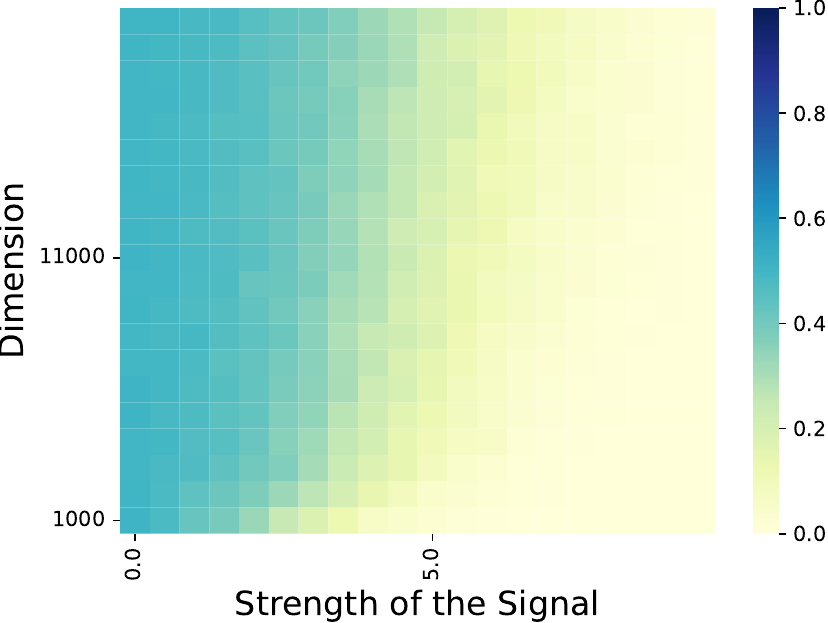}\label{subfig:c}}
    \hspace{0.2in}
    \subfigure[Full-batch GD Test Error Heatmap with learning rate $\eta = 1$]{\includegraphics[width=0.45\textwidth]{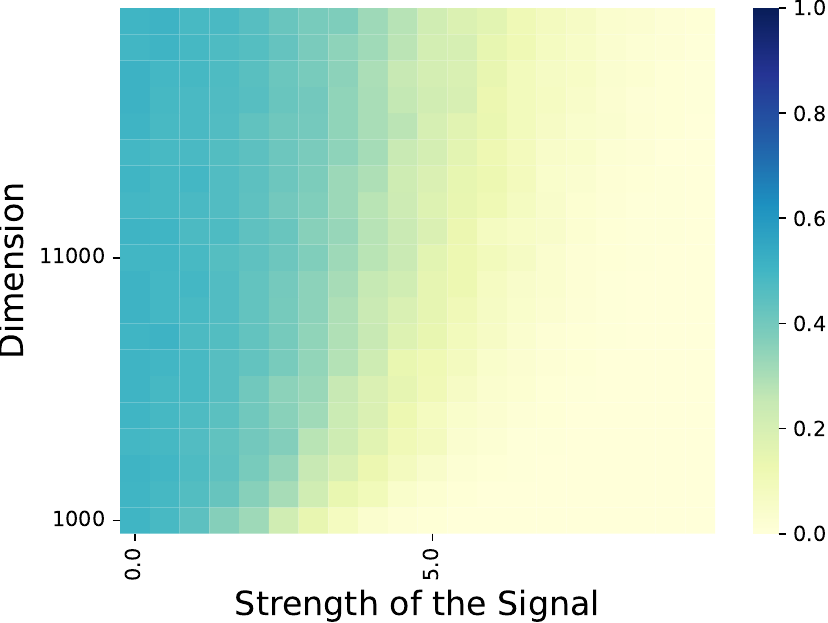}\label{subfig:d}}
    
    \caption{(a) is a heatmap illustrating test error on synthetic data for various dimensions $d$ and signal strengths $\bmu$ when trained using learning rate $0.001$. High test errors are represented in blue, while low test errors are shown in yellow. (b)(c)(d) displays a heatmap of test errors on the synthetic data under the same conditions as in (a), but trained using GD with different learning rates $0.01, 0.1, 1$. The y-axis represents a normal scale with a range of 1000-21000.}
    \label{fig:synthetic_combined}
\end{figure}

Here, we include additional empirical results to show how the batch size and step size influence the transition phase of the benign/harmful regimes. 

Our synthetic experiment is only performed with gradient descent instead of SGD in Figure~\ref{fig:synthetic}. We add an experiment of SGD with a mini-batch size of 10 on the synthetic data. If we compare Figure~\ref{fig:syntheticsgd} and Figure~\ref{fig:synthetic}, the comparison results should remain the same, and both support our main Theorems~\ref{thm:sgd_main} and \ref{thm:positive}.

In Figure~\ref{fig:synthetic_combined}, we also conducted an extended study on the learning rate to check whether tuning the learning rate can get better generalization performance and achieve SAM's performance. Specifically, we experimented with learning rates of $0.001, 0.01, 0.1$, and $1$ under the same conditions described in Section~\ref{sec:exp}. The results show that for all learning rates, the harmful and benign overfitting patterns are quite similar and consistent with our Theorem~\ref{thm:sgd_main}. 

We observed that larger learning rates, such as $0.1$ and $1$, can improve SGD's generalization performance. The benign overfitting region is enlarged for learning rates of $0.1$ and $1$ when contrasted with $0.01$. This trend resonates with recent findings that large LR helps SGD find better
minima \citep{li2019towards, li2021happens, ahn2022learning, andriushchenko2023sgd}. Importantly, even with this expansion, the benign overfitting region remains smaller than what is empirically observed with SAM. Our conclusion is that while SGD, with a more considerable learning rate, exhibits improved generalization, it still falls short of matching SAM's performance.

\section{Preliminary Lemmas}\label{sec:pre}
\begin{lemma}[Lemma B.4 in \citet{kou2023benign}]\label{lm: data inner products}
Suppose that $\delta > 0$ and $d = \Omega( \log(6n / \delta) ) $. Then with probability at least $1 - \delta$, 
\begin{align*}
    &\sigma_p^2 d / 2\leq \| \bxi_i \|_2^2 \leq 3\sigma_p^2 d / 2,\\
    & |\la \bxi_i, \bxi_{i'} \ra| \leq 2\sigma_p^2 \cdot \sqrt{d \log(6n^2 / \delta)}, \\
    & |\la\bxi_i,\bmu\ra|\leq \|\bmu\|_2\sigma_p\cdot\sqrt{2\log(6n/\delta)}
\end{align*}
for all $i,i'\in [n]$.
\end{lemma}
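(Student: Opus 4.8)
The plan is to recognize all three inequalities as standard Gaussian concentration statements and glue them together with a single union bound, allocating failure probability $\delta/3$ to each of the three displays. For the norm bounds I would use that $\|\bxi_i\|_2^2/\sigma_p^2$ has a $\chi^2_d$ distribution and invoke the Laurent--Massart tail bound $\PP\big(\big|\|\bxi_i\|_2^2/\sigma_p^2 - d\big| \ge 2\sqrt{dx}+2x\big)\le 2e^{-x}$. Taking $x = \log(6n/\delta)$ makes the per-sample failure probability $\delta/(3n)$, and the hypothesis $d = \Omega(\log(6n/\delta))$ (with a sufficiently large hidden constant) forces $2\sqrt{dx}+2x \le d/2$, which is exactly what delivers the stated constants $1/2$ and $3/2$; a union bound over $i\in[n]$ then gives the first line up to probability $\delta/3$.

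Next, for the cross terms $\la\bxi_i,\bxi_{i'}\ra$ with $i\ne i'$, I would condition on $\bxi_{i'}$: the conditional law is $\cN(0,\sigma_p^2\|\bxi_{i'}\|_2^2)$, and on the already-established norm event $\sigma_p^2\|\bxi_{i'}\|_2^2 \le \tfrac32\sigma_p^4 d$, so the scalar Gaussian tail bound with deviation $t = 2\sigma_p^2\sqrt{d\log(6n^2/\delta)}$ gives conditional failure probability at most $2\exp(-\tfrac43\log(6n^2/\delta)) \le \delta/(3n^2)$ (using $\delta<1\le 6n^2$ and $\sqrt{3/2}\cdot\sqrt{2}=\sqrt3<2$ to absorb constants); a union bound over the at most $n^2$ pairs yields the second line up to probability $\delta/3$. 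Similarly, because $\bmu$ is deterministic, $\la\bxi_i,\bmu\ra\sim\cN(0,\sigma_p^2\|\bmu\|_2^2)$, and the Gaussian tail bound with $t=\|\bmu\|_2\sigma_p\sqrt{2\log(6n/\delta)}$ gives per-sample failure probability $\delta/(3n)$, so a union bound over $i\in[n]$ gives the third line up to probability $\delta/3$. A final union bound over the three events completes the proof with probability at least $1-\delta$.

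This is essentially bookkeeping rather than a hard argument, so there is no real obstacle; the only two points that require care are (i) carrying out the second step by conditioning on $\bxi_{i'}$ so that the randomness reduces to a scalar Gaussian with a \emph{deterministic} variance proxy supplied by the norm bound of the first step, and (ii) checking that the dimension assumption is calibrated precisely so that the $\chi^2$ deviation $2\sqrt{dx}+2x$ is at most a $(1/2)$-fraction of $d$, which is what produces the clean constants in the first line. One should also keep in mind that the middle inequality is intended for distinct indices $i\ne i'$, since the diagonal case $i=i'$ is instead covered by the norm bound.
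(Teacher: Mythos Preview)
Your proposal is correct. The paper does not actually prove this lemma itself; it is quoted verbatim as Lemma~B.4 from \citet{kou2023benign} and used as a black box throughout the appendix. Your argument---Laurent--Massart for the $\chi^2$ norm concentration, a conditional scalar Gaussian tail for the cross terms on the norm event, and a direct Gaussian tail for $\la\bxi_i,\bmu\ra$, all stitched together by a $\delta/3$--$\delta/3$--$\delta/3$ union bound---is precisely the standard route for such statements and matches what the cited reference does. Your two caveats (conditioning on $\bxi_{i'}$ so the cross-term variance is deterministic, and using $d=\Omega(\log(6n/\delta))$ to convert the $2\sqrt{dx}+2x$ deviation into the clean $d/2$ constant) are exactly the right points of care, and your observation that the middle line is meant for $i\ne i'$ is also correct.
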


\begin{lemma}[Lemma B.5 in \citet{kou2023benign}]\label{lm: initialization inner products} Suppose that $d = \Omega(\log(mn/\delta))$, $ m = \Omega(\log(1 / \delta))$. Then with probability at least $1 - \delta$, 
\begin{align*}
    & \sigma_0^2 d / 2 \leq \| \wb_{j, r}^{(0,0)} \|_2^2 \leq 3 \sigma_0^2 d / 2, \\
    &|\la \wb_{j,r}^{(0,0)}, \bmu \ra | \leq \sqrt{2 \log(12 m/\delta)} \cdot \sigma_0 \| \bmu \|_2,\\
    &| \la \wb_{j,r}^{(0,0)}, \bxi_i \ra | \leq 2\sqrt{ \log(12 mn/\delta)}\cdot \sigma_0 \sigma_p \sqrt{d} 
\end{align*}
for all $r\in [m]$,  $j\in \{\pm 1\}$ and $i\in [n]$. Moreover, 
\begin{align*}
    &\sigma_0 \| \bmu \|_2 / 2 \leq \max_{r\in[m]} j\cdot \la \wb_{j,r}^{(0,0)}, \bmu \ra \leq \sqrt{2 \log(12 m/\delta)} \cdot \sigma_0 \| \bmu \|_2,\\
    &\sigma_0 \sigma_p \sqrt{d} / 4 \leq \max_{r\in[m]} j\cdot \la \wb_{j,r}^{(0,0)}, \bxi_i \ra \leq 2\sqrt{ \log(12 mn/\delta)} \cdot \sigma_0 \sigma_p \sqrt{d}
\end{align*}
for all $j\in \{\pm 1\}$ and $i\in [n]$.
\end{lemma}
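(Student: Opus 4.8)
Since every entry of $\Wb^{(0,0)}$ is an i.i.d.\ $\cN(0,\sigma_0^2)$ random variable and $\Wb^{(0,0)}$ is independent of the training data $\{(\xb_i,y_i)\}_{i\in[n]}$, all of the displayed inequalities are purely statements about Gaussian concentration and anti-concentration. The plan is to establish each one on a high-probability event and then take a union bound over the at most $2mn$ relevant indices, rescaling $\delta$ by an absolute constant at the end. Throughout I condition on the data and, whenever the noise patches appear, on the event of Lemma~\ref{lm: data inner products} that controls $\|\bxi_i\|_2^2$.

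For the norm bound, write $\|\wb_{j,r}^{(0,0)}\|_2^2 = \sigma_0^2\sum_{\ell=1}^d g_\ell^2$ with i.i.d.\ standard Gaussians $g_\ell$, and apply a $\chi^2$ tail inequality, $\PP(|\chi^2_d-d|\ge d/2)\le 2e^{-cd}$ for an absolute constant $c>0$; since $d=\Omega(\log(mn/\delta))$ with a large enough hidden constant, this is at most $\delta/(6m)$, and a union bound over the $2m$ filters yields $\sigma_0^2 d/2\le\|\wb_{j,r}^{(0,0)}\|_2^2\le 3\sigma_0^2 d/2$ for all $(j,r)$. For the upper bounds on the inner products, note that conditioned on $\bmu$ we have $\la\wb_{j,r}^{(0,0)},\bmu\ra\sim\cN(0,\sigma_0^2\|\bmu\|_2^2)$, so the Gaussian tail bound with threshold $\sqrt{2\log(12m/\delta)}\,\sigma_0\|\bmu\|_2$ has failure probability at most $\delta/(6m)$; likewise, conditioned on $\bxi_i$ we have $\la\wb_{j,r}^{(0,0)},\bxi_i\ra\sim\cN(0,\sigma_0^2\|\bxi_i\|_2^2)$, and on the event $\|\bxi_i\|_2^2\le 3\sigma_p^2 d/2$ the threshold $2\sqrt{\log(12mn/\delta)}\,\sigma_0\sigma_p\sqrt d$ makes the exponent at least $(4/3)\log(12mn/\delta)$, hence failure probability at most $\delta/(6mn)$ per triple $(j,r,i)$. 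Union bounds over the $2m$ and $2mn$ indices contribute another $O(\delta)$.

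For the ``moreover'' part, the upper bounds are exactly the per-filter upper bounds just shown. For the lower bound in the signal direction, the standardized variables $j\cdot\la\wb_{j,r}^{(0,0)},\bmu\ra/(\sigma_0\|\bmu\|_2)$, $r\in[m]$, are i.i.d.\ $\cN(0,1)$, so each exceeds $1/2$ with probability at least an absolute constant $c_0>0$, and therefore $\PP\big(\max_{r\in[m]} j\cdot\la\wb_{j,r}^{(0,0)},\bmu\ra<\sigma_0\|\bmu\|_2/2\big)\le(1-c_0)^m\le\delta$ once $m=\Omega(\log(1/\delta))$; a factor of $2$ handles $j\in\{\pm1\}$. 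For the noise direction, on the event $\|\bxi_i\|_2^2\ge\sigma_p^2 d/2$ we have $\sigma_0\sigma_p\sqrt d/4\le(\sigma_0\|\bxi_i\|_2)/(2\sqrt 2)$, so the standardized variable $j\cdot\la\wb_{j,r}^{(0,0)},\bxi_i\ra/(\sigma_0\|\bxi_i\|_2)$ exceeds $1/(2\sqrt 2)$ with probability at least an absolute constant, and the same ``at least one of $m$'' argument plus a union bound over the $2n$ pairs $(j,i)$ gives the stated lower bound provided $m=\tilde\Omega(1)$.

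Collecting all these events and rescaling $\delta$ by an absolute constant yields the lemma. I expect no genuine analytic obstacle here; the only care needed is bookkeeping — coordinating the union bounds over $2m$, $2mn$, and $2n$ indices so that they are simultaneously absorbed by the stated conditions $d=\Omega(\log(mn/\delta))$ and $m=\Omega(\log(1/\delta))$, and remembering to condition on the norm-control event of Lemma~\ref{lm: data inner products} before invoking Gaussianity for the noise-patch inner products (which is legitimate precisely because $\Wb^{(0,0)}$ is independent of the data). Alternatively, since the statement is quoted verbatim, one may simply invoke Lemma~B.5 of \citet{kou2023benign}.
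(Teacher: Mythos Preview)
Your proposal is correct and complete: the norm bound via a $\chi^2$ tail, the inner-product upper bounds via Gaussian tails with union bounds over $2m$ and $2mn$ indices, the lower bounds on the maxima via the ``at least one of $m$ i.i.d.\ Gaussians exceeds a constant'' argument, and the conditioning on the event of Lemma~\ref{lm: data inner products} to control $\|\bxi_i\|_2$ are exactly the right ingredients. The paper itself does not prove this lemma at all --- it simply cites Lemma~B.5 of \citet{kou2023benign} verbatim, which is precisely the alternative you mention in your last sentence. So your write-up actually goes beyond what the paper does here; if anything, you could shorten it to the bare citation, but the sketch you give is the standard argument underlying that cited result.
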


\begin{lemma}\label{lm: number of initial activated neurons}
Let $S_{i}^{(t,b)}$ denote $ \{r: \la \wb_{y_i,r}^{(t,b)}, \bxi_i \ra >  \sigma_0\sigma_p\sqrt{d}/\sqrt{2} \ra  \}$. Suppose that $\delta>0$ and $m\geq50\log(2n/\delta)$. Then with probability at least $1-\delta$, 
\begin{align*}
    |S_{i}^{(0,0)}|\geq 0.8\Phi(-1)m,\, \forall i\in[n].
\end{align*}
\end{lemma}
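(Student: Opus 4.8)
The plan is to fix a sample index $i \in [n]$, condition on the training set $\cS$, and exploit the independence of the Gaussian initialization across the $m$ filters: conditionally on $\cS$, the $m$ events $\{r \in S_i^{(0,0)}\}$ become independent with a per-filter probability bounded below by $\Phi(-1)$, after which a Chernoff lower-tail bound on their sum followed by a union bound over $i \in [n]$ finishes the argument.

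First apply Lemma~\ref{lm: data inner products} to obtain an event $\mathcal{E}$ of probability at least $1 - \delta/2$ on which $\|\bxi_i\|_2^2 \geq \sigma_p^2 d / 2$ holds simultaneously for all $i \in [n]$, and work on $\mathcal{E}$ from now on. Since the Gaussian initialization $\Wb^{(0)}$ is drawn independently of $\cS$, conditionally on $\cS$ each filter $\wb_{y_i,r}^{(0,0)} \sim \cN(\mathbf{0}, \sigma_0^2 \Ib)$ (the label $y_i$ merely selects one of the two independent Gaussian filter families), and these vectors are mutually independent over $r \in [m]$. Hence the scalars $\la \wb_{y_i,r}^{(0,0)}, \bxi_i \ra$, $r \in [m]$, are conditionally i.i.d. $\cN(0, \sigma_0^2 \|\bxi_i\|_2^2)$, so that
\begin{align*}
\PP\big[r \in S_i^{(0,0)} \,\big|\, \cS\big]
= \PP\bigg[\cN(0,1) > \frac{\sigma_0 \sigma_p \sqrt{d}/\sqrt{2}}{\sigma_0 \|\bxi_i\|_2}\bigg]
= \Phi\bigg(-\frac{\sigma_p \sqrt{d}}{\sqrt{2}\,\|\bxi_i\|_2}\bigg)
\geq \Phi(-1),
\end{align*}
where the last inequality uses $\|\bxi_i\|_2 \geq \sigma_p \sqrt{d}/\sqrt{2}$ on $\mathcal{E}$ together with the monotonicity of $\Phi$.

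It remains to lower bound $|S_i^{(0,0)}| = \sum_{r=1}^m \ind(r \in S_i^{(0,0)})$, which conditionally on $\cS$ is a sum of $m$ independent Bernoulli variables each with mean at least $\Phi(-1)$, hence conditional mean at least $\Phi(-1) m$. A multiplicative Chernoff lower-tail bound then gives $\PP\big[|S_i^{(0,0)}| < 0.8\,\Phi(-1) m \mid \cS\big] \leq \exp(-c\, m)$ for an absolute constant $c > 0$; since the exponent is linear in $m$, the hypothesis $m \geq 50\log(2n/\delta)$ (with the constant large enough that $cm \geq \log(2n/\delta)$) makes the right-hand side at most $\delta/(2n)$. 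Taking expectation over $\cS$, union bounding over $i \in [n]$, and intersecting with $\mathcal{E}$ yields $|S_i^{(0,0)}| \geq 0.8\,\Phi(-1) m$ for all $i \in [n]$ with probability at least $1 - \delta$. This is purely a concentration estimate and presents no genuine obstacle; the only point requiring care is the order of conditioning — one conditions on $\cS$ (hence on the $\bxi_i$'s) first, so that for a fixed $i$ the $m$ activation indicators are genuinely independent because they involve distinct, independently drawn filters $\wb_{y_i,r}^{(0,0)}$, and one uses the lower bound $\|\bxi_i\|_2 \geq \sigma_p\sqrt{d}/\sqrt{2}$ to keep each per-filter activation probability bounded away from zero (namely at least $\Phi(-1)$).
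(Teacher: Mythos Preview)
Your proposal is correct and follows essentially the same approach as the paper: bound each per-filter activation probability below by $\Phi(-1)$ using $\|\bxi_i\|_2 \ge \sigma_p\sqrt{d}/\sqrt{2}$, apply a concentration inequality to the sum of $m$ independent indicators, and union bound over $i\in[n]$. The only cosmetic differences are that the paper uses Hoeffding's inequality rather than a multiplicative Chernoff bound and is less explicit about conditioning on the high-probability event where the noise norms are controlled; your handling of that conditioning is in fact slightly more careful.
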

\begin{proof}[Proof of Lemma \ref{lm: number of initial activated neurons}]
Since $\la\wb_{y_i,r}^{(0,0)},\bxi_i\ra \sim \cN (0, \sigma_0^2 \| \bxi_i \|_2^2)$, we have
\begin{align*}
    P(\la\wb_{y_i,r}^{(0,0)},\bxi_i\ra >  \sigma_0\sigma_p\sqrt{d}/\sqrt{2}) & \ge P(\la\wb_{y_i,r}^{(0,0)},\bxi_i\ra >  \sigma_0 \| \bxi_i \|_2 ) = \Phi(-1),
\end{align*}
where $\Phi(\cdot)$ \todoz{Better change to $\Phi(\cdot)$} is CDF of the standard normal distribution.
Note that $|S_{i}^{(0,0)}|=\sum_{r=1}^{m}\ind[\la\wb_{y_i,r}^{(0,0)},\bxi_i\ra> \sigma_0\sigma_p\sqrt{d}/\sqrt{2}]$ and $P\big(\la\wb_{y_i,r}^{(0,0)},\bxi_i\ra> \sigma_0\sigma_p\sqrt{d}/\sqrt{2}\big) \ge \Phi(-1)$, then by Hoeffding’s inequality, with probability at least $1-\delta/n$, we have
\begin{align*}
     \frac{|S_{i}^{(0,0)}|} {m} \ge  \Phi(-1) -  \sqrt{\frac{\log(2n/\delta)}{2m}}.
\end{align*}
Therefore, as long as $0.2 \sqrt{m} \Phi(-1) \ge \sqrt{\frac{\log(2n/\delta)}{2}}$, by applying union bound, with probability at least $1-\delta$, we have
\begin{align*}
    |S_{i}^{(0)}| \geq 0.8 \Phi(-1) m,\, \forall i\in[n].
\end{align*}
\end{proof}

\begin{lemma}\label{lm: number of initial activated neurons 2}
Let $S_{j,r}^{(t,b)}$ denote $ \{i\in[n]:y_i=j,~ \la \wb_{y_i,r}^{(t,b)}, \bxi_i \ra >  \sigma_0\sigma_p\sqrt{d}/\sqrt{2}   \}$. Suppose that $\delta>0$ and $n\geq32\log(4m/\delta)$. Then with probability at least $1-\delta$,
\begin{align*}
    |S_{j,r}^{(0)}|\geq n\Phi(-1)/4,\,\forall j\in\{\pm 1\}, r\in[m].
\end{align*}
\end{lemma}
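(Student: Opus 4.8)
The plan is to read Lemma~\ref{lm: number of initial activated neurons 2} as the ``sample-side'' analogue of Lemma~\ref{lm: number of initial activated neurons}: in the latter one fixes a data point $i$ and applies Hoeffding's inequality to $|S_i^{(0,0)}|=\sum_{r=1}^m\ind[\la\wb_{y_i,r}^{(0,0)},\bxi_i\ra>\sigma_0\sigma_p\sqrt d/\sqrt2]$, exploiting independence across the $m$ filters; here one fixes a filter $(j,r)$ and concentrates $|S_{j,r}^{(0)}|=\sum_{i=1}^n\ind[y_i=j]\,\ind[\la\wb_{j,r}^{(0,0)},\bxi_i\ra>\sigma_0\sigma_p\sqrt d/\sqrt2]$ over the $n$ data points. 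First I would condition on the high-probability event $\mathcal G$ from Lemma~\ref{lm: initialization inner products} that $\|\wb_{j,r}^{(0,0)}\|_2^2\ge\sigma_0^2 d/2$ for every $j\in\{\pm1\}$ and $r\in[m]$; invoking that lemma with confidence $\delta/2$ (valid since Condition~\ref{assm: assm0} makes $d$ large enough) gives $\PP(\mathcal G)\ge 1-\delta/2$.

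Next, fix $(j,r)$ and argue conditionally on the single vector $\wb:=\wb_{j,r}^{(0,0)}$, in the case $\|\wb\|_2^2\ge\sigma_0^2 d/2$. By Definition~\ref{def:data} the pairs $\{(y_i,\bxi_i)\}_{i\in[n]}$ are i.i.d.\ and independent of the initialization, within each pair $y_i$ is independent of $\bxi_i$, and $\PP(y_i=j)=\frac12$. Hence the variables $Z_i:=\ind[y_i=j]\,\ind[\la\wb,\bxi_i\ra>\sigma_0\sigma_p\sqrt d/\sqrt2]$ are, conditionally on $\wb$, i.i.d.\ Bernoulli with parameter $\frac12\,\PP_{\bxi\sim\cN(\mathbf 0,\sigma_p^2\Ib)}(\la\wb,\bxi\ra>\sigma_0\sigma_p\sqrt d/\sqrt2)$. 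Since $\la\wb,\bxi\ra\sim\cN(0,\sigma_p^2\|\wb\|_2^2)$, this Gaussian probability equals $\Phi(-\sigma_0\sqrt d/(\sqrt2\,\|\wb\|_2))$, whose argument lies in $[-1,0)$ on the event $\|\wb\|_2\ge\sigma_0\sqrt d/\sqrt2$; by monotonicity of $\Phi$ it is therefore at least $\Phi(-1)$. Thus $|S_{j,r}^{(0)}|=\sum_{i=1}^n Z_i$ is, conditionally on $\wb$ with the norm bound, a sum of $n$ i.i.d.\ $\mathrm{Bernoulli}(q)$ variables with $q\ge\Phi(-1)/2$, so its conditional mean is at least $n\Phi(-1)/2$.

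The final step is a multiplicative Chernoff bound plus two union bounds. For $X\sim\mathrm{Binomial}(n,q)$ with $q\ge\Phi(-1)/2$ one has $\PP(X<nq/2)\le\exp(-\Omega(n))$ and $nq/2\ge n\Phi(-1)/4$; unioning over the $2m$ pairs $(j,r)$ and then with the complement of $\mathcal G$ yields $\PP(\exists\, j,r:\ |S_{j,r}^{(0)}|<n\Phi(-1)/4)\le 2m\exp(-\Omega(n))+\delta/2$, which is at most $\delta$ once $n$ exceeds a suitable absolute-constant multiple of $\log(4m/\delta)$ --- in particular under the stated hypothesis $n\ge 32\log(4m/\delta)$. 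This gives $|S_{j,r}^{(0)}|\ge n\Phi(-1)/4$ simultaneously for all $j\in\{\pm1\}$ and $r\in[m]$, as claimed.

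The computation is otherwise routine; the one point that requires care --- and the reason this is not literally Lemma~\ref{lm: number of initial activated neurons} --- is the source of independence. All $n$ indicators forming $|S_{j,r}^{(0)}|$ share the \emph{single} random weight vector $\wb_{j,r}^{(0,0)}$, so they are not unconditionally independent; the fix is exactly the order of conditioning above, namely to condition on $\wb_{j,r}^{(0,0)}$ first (which simultaneously fixes $\|\wb_{j,r}^{(0,0)}\|_2$ and hence, via the Gaussian $\Phi$-identity, the per-sample activation probability), and only then exploit that the data $\{(y_i,\bxi_i)\}_{i\in[n]}$ are i.i.d.\ and independent of the weights. I do not expect any genuine obstacle beyond this bookkeeping and keeping the two probability budgets --- $\delta/2$ for $\mathcal G$ and $\delta/2$ for the Chernoff union bound --- separate.
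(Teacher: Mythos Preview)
Your argument is correct and follows the same skeleton as the paper --- lower-bound the per-sample success probability, concentrate the sum, union-bound over the $2m$ filters --- but you differ in the direction of conditioning, and this is actually an improvement. The paper computes the per-sample probability by writing $\la\wb_{j,r}^{(0,0)},\bxi_i\ra\sim\cN(0,\sigma_0^2\|\bxi_i\|_2^2)$ (conditioning on $\bxi_i$), obtains $\PP\big(y_i=j,\ \la\wb_{j,r}^{(0,0)},\bxi_i\ra>\sigma_0\sigma_p\sqrt d/\sqrt2\big)\ge\Phi(-1)/2$, and then applies Hoeffding directly to $\sum_{i=1}^n Z_i$. But those summands all share the \emph{same} random vector $\wb_{j,r}^{(0,0)}$ and are therefore not unconditionally independent, so that Hoeffding step is not fully justified as written. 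Your choice to condition instead on $\wb_{j,r}^{(0,0)}$ (on the norm event $\mathcal G$ from Lemma~\ref{lm: initialization inner products}) and then exploit the i.i.d.\ structure of $\{(y_i,\bxi_i)\}_{i\in[n]}$ is exactly what closes that gap, at the modest cost of an extra $\delta/2$ for $\mathcal G$. So the two proofs are morally the same, but yours handles the independence issue the paper glosses over.

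One minor caveat: the explicit constant $32$ in the hypothesis $n\ge32\log(4m/\delta)$ is not quite enough for either your Chernoff bound or the paper's Hoeffding bound --- both require roughly $n\gtrsim \Phi(-1)^{-2}\log(4m/\delta)$, an absolute-constant multiple closer to a few hundred. This is a numerical slip already present in the paper's statement, not a defect of your reasoning; your phrasing ``once $n$ exceeds a suitable absolute-constant multiple of $\log(4m/\delta)$'' is the right hedge.
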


\begin{proof}[Proof of Lemma \ref{lm: number of initial activated neurons 2}]

Since $\la\wb_{j,r}^{(0,0)},\bxi_i\ra \sim \cN (0, \sigma_0^2 \| \bxi_i \|_2^2)$, we have
\begin{align*}
    P(\la\wb_{j,r}^{(0,0)},\bxi_i\ra >  \sigma_0\sigma_p\sqrt{d}/\sqrt{2}) & \ge P(\la\wb_{j,r}^{(0,0)},\bxi_i\ra >  \sigma_0 \| \bxi_i \|_2 ) = \Phi(-1),
\end{align*}
where $\Phi(\cdot)$ is CDF of the standard normal distribution.

Note that $|S_{j,r}^{(0,0)}|=\sum_{i=1}^{n}\ind[y_i=j]\ind[\la\wb_{j,r}^{(0)},\bxi_i\ra>\sigma_0\sigma_p\sqrt{d}/\sqrt{2}]$ and $\mathbb{P}(y_i=j,\la\wb_{j,r}^{(0)},\bxi_i\ra>\sigma_0\sigma_p\sqrt{d}/\sqrt{2})\ge\Phi(-1)/2$, then by Hoeffding's inequality, with probability at least $1-\delta/2m$, we have
\begin{align*}
    \frac{|S_{j,r}^{(0)}|}{n}   \geq \Phi(-1)/2 +\sqrt{\frac{\log(4m/\delta)}{2n}}.
\end{align*}
Therefore, as long as $\Phi(-1)/4 \geq \sqrt{\frac{\log(4m/\delta)}{2n}}$, by applying union bound, we have with probability at least $1 - \delta$, 
\begin{align*}
    |S_{j,r}^{(0)}|\geq n\Phi(-1)/4,\,\forall j\in\{\pm 1\}, r\in[m].
\end{align*}
\end{proof}

\begin{lemma}[Lemma B.3 in \citet{kou2023benign}]\label{lm: estimate S cap S}
For $|S_{+}\cap S_{y}|$ and $|S_{-}\cap S_{y}|$ where $y\in\{\pm 1\}$, it holds with probability at least $1-\delta(\delta>0)$ that
\begin{equation*}
    \Big||S_{+}\cap S_{y}|-\frac{(1-p)n}{2}\Big|\leq\sqrt{\frac{n}{2}\log\Big(\frac{8}{\delta}\Big)}, \Big||S_{-}\cap S_{y}|-\frac{pn}{2}\Big|\leq\sqrt{\frac{n}{2}\log\Big(\frac{8}{\delta}\Big)}, \forall\, y\in\{\pm 1\}. 
\end{equation*}
\end{lemma}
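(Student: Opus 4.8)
The plan is to realize each of the four counts $|S_{\pm}\cap S_{y}|$, $y\in\{\pm1\}$, as a sum of $n$ independent Bernoulli indicators, and then apply Hoeffding's inequality followed by a union bound over the four combinations.

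First I would unpack the joint law of $(\hat y_i, y_i)$ from Definition~\ref{def:data}. Since $\hat y_i$ is Rademacher and, conditionally on $\hat y_i$, we have $y_i = \hat y_i$ with probability $1-p$ and $y_i=-\hat y_i$ with probability $p$, it follows that $\PP[\hat y_i = y_i = y] = (1-p)/2$ and $\PP[\hat y_i = -y_i,\ y_i = y] = p/2$ for each $y\in\{\pm1\}$. Recalling that $S_+ = \{i : y_i = \hat y_i\}$, $S_- = \{i : y_i = -\hat y_i\}$ and $S_y = \{i : y_i = y\}$, the indicator $\ind[i\in S_+\cap S_y] = \ind[\hat y_i = y_i = y]$ is therefore Bernoulli$((1-p)/2)$, and $\ind[i\in S_-\cap S_y] = \ind[\hat y_i = -y_i,\ y_i = y]$ is Bernoulli$(p/2)$. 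Because the pairs $(\hat y_i,y_i)$ are i.i.d.\ across $i\in[n]$, these indicators are mutually independent within each of the four families, so $|S_+\cap S_y| = \sum_{i=1}^n \ind[i\in S_+\cap S_y]$ has mean $(1-p)n/2$ and $|S_-\cap S_y|$ has mean $pn/2$.

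Next, for a fixed sign and fixed $y$, Hoeffding's inequality for a sum of $n$ independent $[0,1]$-valued random variables gives $\PP\big[\, \big| |S_+\cap S_y| - (1-p)n/2 \big| \ge t \,\big] \le 2\exp(-2t^2/n)$, and the identical bound holds for $|S_-\cap S_y|$ around its mean $pn/2$. Choosing $t = \sqrt{(n/2)\log(8/\delta)}$ makes each of these tail probabilities at most $\delta/4$. A union bound over the four events indexed by $(\text{sign},y)\in\{+,-\}\times\{\pm1\}$ then yields that, with probability at least $1-\delta$, all four deviation bounds hold simultaneously, which is exactly the claim.

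There is no substantive obstacle in this argument; it is a direct concentration-plus-union-bound estimate. The only points requiring care are extracting the correct Bernoulli success probabilities $(1-p)/2$ and $p/2$ from the label-flipping mechanism, verifying the independence of the summands, and correctly tracking the constants (the factor $2$ from the two-sided Hoeffding tail and the factor $4$ from the union bound) so that they combine into the $\log(8/\delta)$ appearing inside the square root.
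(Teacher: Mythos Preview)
Your argument is correct and is precisely the standard Hoeffding-plus-union-bound computation that underlies this lemma; the paper does not supply its own proof here but simply cites the result from \citet{kou2023benign}, where the same reasoning is used.
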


\begin{lemma}\label{lm: good_batch}
It holds with probability at least $1-\delta$, for all $T \in [\frac{\log(2T^*/\delta)}{c_3^2},T^*]$ and $y\in\{\pm 1\}$, there exist at least $c_3\cdot T$ epochs among $[0,T]$, such that at least $c_4\cdot H$ batches in these epochs, satisfy
\begin{equation} \label{eq:cap}
    |S_{+}\cap S_{y}\cap \cI_{t,b}|\in\bigg[\frac{B}{4},\frac{3B}{4}\bigg]. 
\end{equation}
\end{lemma}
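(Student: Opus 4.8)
The plan is to reduce the claim to a two-level concentration argument: first, at the level of a single epoch, lower bound the number of "good" batches (those satisfying \eqref{eq:cap}); second, at the level of the whole horizon $[0,T]$, lower bound the number of "good" epochs (those with many good batches). The randomness we exploit is solely the random partition of $S$ into $H$ mini-batches at each epoch, which is independent across epochs; crucially, this randomness is independent of the weight trajectory, since the partition at epoch $t$ is drawn before the updates of that epoch are computed. First I would fix $y\in\{\pm1\}$ and condition on the high-probability event of Lemma~\ref{lm: estimate S cap S}, so that $|S_+\cap S_y| = \Theta(n)$, say $|S_+\cap S_y|\in[(1-p)n/2 - o(n),\,(1-p)n/2 + o(n)]$; combined with $p\leq 1/C$, this gives a constant fraction $q := |S_+\cap S_y|/n$ bounded away from $0$ and from $1/2$ — in fact $q$ is close to $(1-p)/2$, hence $q\in(\tfrac14,\tfrac12)$ after adjusting constants in Condition~\ref{assm: assm0} (this is where $p$ being small matters: we need $q$ comfortably inside an interval so that the batch count $|S_+\cap S_y\cap\cI_{t,b}|$ concentrates inside $[B/4,3B/4]$).

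The single-batch step: for a uniformly random batch $\cI_{t,b}$ of size $B$ drawn without replacement from $[n]$, the count $N_{t,b} := |S_+\cap S_y\cap\cI_{t,b}|$ is hypergeometric with mean $qB$. Since $qB$ sits strictly inside $[B/4,3B/4]$ (with a constant-order margin because $q$ is bounded away from $1/4$ and $3/4$), Hoeffding's inequality for sampling without replacement gives $\PP(N_{t,b}\notin[B/4,3B/4]) \leq 2\exp(-c B)$ for an absolute constant $c>0$; since $B\geq 2$ this is at most some constant $\rho_0 < 1$. (If $B$ is merely a constant we do not get a vanishing bound, but we only need $\rho_0$ strictly below $1$, and by choosing the constants in the interval generously — good batches defined by, say, membership in $[B/8, 7B/8]$ if one wants slack — one ensures $\rho_0$ is an absolute constant $<1$; then $c_4$ is taken below $1-\rho_0$.) The batches within one epoch are not independent (they partition $S$), so to count good batches per epoch I would use a bounded-differences / Markov argument: the expected number of good batches in an epoch is at least $(1-\rho_0)H$ by linearity of expectation, and by Markov applied to the number of \emph{bad} batches, with probability at least a constant $\beta>0$ an epoch has at least $c_4 H$ good batches, where $c_4 := (1-\rho_0)/2$. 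Call such an epoch "good."

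The across-epochs step: goodness of distinct epochs is independent (fresh partition each epoch, independent of weights), each good with probability $\geq\beta$, so for any $T$ the number of good epochs in $[0,T]$ stochastically dominates a $\mathrm{Binomial}(T,\beta)$. By a standard multiplicative Chernoff bound, $\PP(\#\text{good epochs} < (\beta/2) T)\leq \exp(-\beta T/8)$; setting $c_3 := \beta/2$, this tail is at most $\delta/(2T^*)$ precisely when $T\geq \tfrac{\log(2T^*/\delta)}{c_3^2}$ (up to absolute-constant adjustment of $c_3$), which is exactly the stated lower bound on $T$. Finally I would take a union bound over $y\in\{\pm1\}$, over the (at most $T^*$) relevant values of $T$, and over the conditioning event of Lemma~\ref{lm: estimate S cap S}, absorbing all the constant factors into $\delta$; this yields the claim with the constants $c_3, c_4$ as defined. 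The main obstacle is the middle step: because the mini-batches within an epoch are dependent, one cannot directly Chernoff-bound the per-epoch count of good batches, and one must be content with a constant success probability $\beta$ per epoch via Markov (or, if one wants a sharper $\beta$, via a negative-association property of sampling without replacement — hypergeometric / permutation statistics are negatively associated, which would let one apply Chernoff even within an epoch). Getting the interval constants to line up so that the hypergeometric mean $qB$ is safely interior to the "good" window for \emph{every} admissible $p\leq 1/C$ is the quantitative crux, and it is exactly what forces the constant $C$ in Condition~\ref{assm: assm0} to be large enough.
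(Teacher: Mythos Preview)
Your proposal is correct and mirrors the paper's two-level structure: first show each epoch is ``good'' (has a constant fraction of batches satisfying \eqref{eq:cap}) with constant probability, then use independence across epochs together with Hoeffding to lower bound the number of good epochs uniformly over $T$, finishing with a union bound over $y$ and over $T\le T^*$. The only substantive difference is in the within-epoch step. The paper handles the dependence among batches by restricting to the first $c_1 H$ batches of an epoch, so that conditionally on earlier batches at least a constant fraction of $S_+\cap S_y$ (and of its complement) remain in the pool; this gives a uniform \emph{conditional} lower bound $\PP(\cE_{1,t,h}\mid\text{history})\ge 2c_2$ and hence stochastic domination of the good-batch count by a binomial, whose lower tail is then bounded. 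You instead exploit exchangeability of the random partition (each $\cI_{t,b}$ is marginally a uniform size-$B$ subset, so $|S_+\cap S_y\cap\cI_{t,b}|$ is hypergeometric), apply Hoeffding for sampling without replacement to get $\PP(\text{bad batch})\le\rho_0<1$, and then use linearity of expectation plus Markov on the bad-batch count to obtain the per-epoch constant $\beta$. Both routes are valid; yours is slightly cleaner and avoids the auxiliary cutoff $c_1$, while the paper's conditional-binomial argument yields a sharper per-epoch constant. Your remark about negative association would upgrade the Markov step to a Chernoff-type bound within an epoch and recover essentially the paper's constants.
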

\begin{proof}
Let
\begin{align*}
 \cE_{1,t} &:= \{\text{In epoch $t$, there are at least $c_2\cdot \frac{n}{B}$ batches such that \ref{eq:cap} holds for $y=1$}\}, \\
 \cE_{1,t,b} &:= \{\text{In epoch $t$ natch $b$, \ref{eq:cap} holds for $y=1$}\}.
\end{align*}
First let $n$ big enough, then we have $S_+\cap S_y \in \big[ \frac{3(1-p)n}{8}, \frac{5(1-p)n}{8}\big]$. We consider the first $c_1H$ batches. At the time we are starting to sample $h$-th batch in the first $c_1 H$ batches, suppose there are $n_1$ samples that belong to  $S_+\cap S_y$, and there are $n_2$ samples that don't belong to $S_+\cap S_y$. Then $n_1\ge \frac{3(1-p)n}{8} - c_1 n \ge \frac{5(1-p)n}{16}$ and $n_2 \ge \frac{3(1-p)n}{8} - c_1 n \ge \frac{5(1-p)n}{16}$. 


\begin{align*}
    \PP(\cE_{1,t,h}) & = \frac{\sum_{l=B/4}^{3B/4} C_B^l C_{n_1}^l C_{n_2}^{B-l}}{C_{n}^{B}}\\
    &  \ge \frac{\sum_{l=1B/4}^{3B/4} C_B^l C_{\frac{5(1-p)n}{16}}^l C_{\frac{5(1-p)n}{16}}^{B-l}}{C_{n}^{B}} \\
  & \ge \frac{\frac{B}{2} C_B^{B/4} (\frac{9(1-p)n}{32})^B /B!}{n^B/B!} \\ & =  \frac{B}{2} C_B^{B/4} \left( \frac{9(1-p)}{32} \right)^B := 2c_2.
\end{align*}
Then, the probability that there are less than $c_1 c_2 H$ batches in first $c_1H$ batches such that \ref{eq:cap} holds is:
\begin{align*}
    &\sum_{i=0}^{c_1c_2H-1}  \sum_{ \sum l_h = i} \PP[\ind(\cE_{1,t,0})=l_0]\PP[\ind(\cE_{1,t,1})=l_1|\ind(\cE_{1,t,0})=l_0] \cdots \\ 
    &\qquad \qquad \PP[\ind(\cE_{1,t,c_1 H-1})=l_{c_1 H-1} |\ind(\cE_{1,t,0})=l_0, \cdots ,\ind(\cE_{1,t,c_1H-2})=l_{c_1H-2} ] \\
    & \le \sum_{i=0}^{c_1c_2H} C_{c_1 H}^{i} (1-2c_2)^{c_1H-i} \\
    & \le c_1c_2H \cdot (2c_2)^{c_1 c_2 H} (1-2c_2)^{c_1 H - c_1 c_2 H}.
\end{align*}
Choose $H_0$ such that $c_1c_2H_0 \cdot (2c_2)^{c_1 c_2 H_0} (1-2c_2)^{c_1 H_0 - c_1 c_2 H_0}  = 1 - 2 c_3$, then as long as $H\ge H_0$, with probability $c_3$, there are at least $c_1 c_2 H$ batches in first $c_1H$ batches such that \ref{eq:cap} holds. Then $\PP[\cE_{1,t}] \ge 2c_3$. Therefore,
\begin{align*}
    \PP ( \sum_{t} \ind(\cE_t) - 2T c_3 \le -t) \le \exp(- \frac{2t^2}{T}).
\end{align*}
Let $T \ge \frac{\log(2 T^*/\delta)}{2 c_3^2}$. Then, with probability at least $1-\delta/(2T^*)$,
\begin{align*}
     \sum_{t} \ind(\cE_{1,t}) \ge c_3T.
\end{align*}

Let $c_4 = c_1c_2$. Thus, there are at least $c_3T^*$ epochs, such that they have at least $c_4H$ batches satisfying Equation\ref{lm: good_batch}. This also holds for $y=-1$. Taking a union bound to get the result.
\end{proof}

\section{Proofs for SGD}
In this section, we prove the results for SGD. We first define some notations. Define $H = n/B$ as the number of batches within an epoch. For any $t_1,t_2$ and $b_1,b_2 \in \overline{[H]}$, we write $  (t_1,b_1) \le (t,b) \le (t_2,b_2)$ to denote all iterations from $t_1$-th epoch's $b_1$-th batch (included) to $t_2$-th epoch's $b_2$-th batch (included). And the meanings change accordingly if we replace $\le$ with $<$.

\subsection{Signal-noise Decomposition Coefficient Analysis}\label{subsev:DCA}
This part is dedicated to analyzing the update rule of Signal-noise Decomposition Coefficients. It is worth noting that 
\begin{align*}
F_{j}(\Wb, \Xb) = \frac{1}{m}\sum_{r=1}^{m}\sum_{p=1}^{P}\sigma(\la \wb_{j,r},\xb_{p}\ra) &= \frac{1}{m}\sum_{r=1}^{m}\sigma(\la \wb_{j,r}, \hat{y}\bmu\ra) + (P-1)\sigma(\la \wb_{j,r}, \bxi\ra).
\end{align*}
Let $\cI_{t,b}$ denote the set of indices of randomly chosen samples at epoch $t$ batch $b$, and $|\cI_{t,b}| = B$, then the update rule is: 
\begin{align}
  \text{for $b\in\overline{[H]}$}   \qquad \wb_{j,r}^{(t,b+1)} &= \wb_{j,r}^{(t,b)} - \eta \cdot \nabla_{\wb_{j,r}} L_{\cI_{t,b}}(\Wb^{(t,b)}) \notag\\
    &= \wb_{j,r}^{(t,b)} - \frac{\eta(P-1)}{Bm} \sum_{i\in \cI_{t,b}} \ell_i'^{(t,b)} \cdot  \sigma'(\la\wb_{j,r}^{(t,b)}, \bxi_{i}\ra)\cdot j y_{i}\bxi_{i}  \notag\\
    & \qquad - \frac{\eta}{Bm} \sum_{i\in \cI_{t,b}} \ell_i'^{(t,b)} \cdot \sigma'(\la\wb_{j,r}^{(t,b)}, \hat{y}_{i} \bmu\ra)\cdot j y_i \hat y_i\bmu, \notag\\
   \text{and} \qquad \wb_{j,r}^{(t+1,0)} & = \wb_{j,r}^{(t,H)} .
    \label{eq:SGDupdate}
\end{align}



\subsubsection{Iterative Expression for Decomposition Coefficient Analysis}
\begin{lemma}\label{lm: coefficient iterative}
The coefficients $\gamma_{j,r}^{(t,b)},\zeta_{j,r,i}^{(t,b)},\omega_{j,r,i}^{(t,b)}$ defined in Lemma~\ref{lemma:w_decomposition} satisfy the following iterative equations:
\begin{align}
    &\gamma_{j,r}^{(0,0)}, \zeta_{j,r,i}^{(0,0)}, \omega_{j,r,i}^{(0,0)} = 0,\\
    &\gamma_{j,r}^{(t,b+1)} = \gamma_{j,r}^{(t,b)} - \frac{\eta}{Bm} \cdot \bigg[ \sum_{i\in \cI_{t,b} \cap S_+} \ell_{i}'^{(t,b)} \sigma'(\la\wb_{j,r}^{(t,b)}, \hat{y}_{i} \cdot \bmu\ra) \nonumber
  \\
  & \qquad \qquad \qquad \qquad \qquad \qquad - \sum_{i\in \cI_{t,b} \cap S_-} \ell_{i}'^{(t,b)} \sigma'(\la\wb_{j,r}^{(t,b)}, \hat{y}_{i} \cdot \bmu\ra) \bigg]  \cdot \| \bmu \|_2^2, \label{eq:gamma_update}\\
    &\zeta_{j,r,i}^{(t,b+1)} = \zeta_{j,r,i}^{(t,b)} - \frac{\eta(P-1)^2}{Bm} \cdot \ell_i'^{(t,b)}\cdot \sigma'(\la\wb_{j,r}^{(t,b)}, \bxi_{i}\ra) \cdot \| \bxi_i \|_2^2 \cdot \ind(y_{i} = j)\ind(i\in \cI_{t,b}), \label{eq:zeta_update}\\
    &\omega_{j,r,i}^{(t,b+1)} = \omega_{j,r,i}^{(t,b)} + \frac{\eta(P-1)^2}{Bm} \cdot \ell_i'^{(t,b)}\cdot \sigma'(\la\wb_{j,r}^{(t,b)}, \bxi_{i}\ra) \cdot \| \bxi_i \|_2^2 \cdot \ind(y_{i} = -j)\ind(i\in \cI_{t,b}),\label{eq:omega_update}
\end{align}
for all $r\in [m]$,  $j\in \{\pm 1\}$ and $i\in [n]$.
\end{lemma}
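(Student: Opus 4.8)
The plan is to prove the iterative equations by induction on the total iteration count $(t,b)$, using the uniqueness of the signal-noise decomposition from Lemma~\ref{lemma:w_decomposition} together with the explicit SGD update rule~\eqref{eq:SGDupdate}. First I would fix the ordered decomposition~\eqref{eq:w_decompositionpre}, namely
\begin{align*}
\wb_{j,r}^{(t,b)} = \wb_{j,r}^{(0,0)} + j\cdot\gamma_{j,r}^{(t,b)}\|\bmu\|_2^{-2}\bmu + \frac{1}{P-1}\sum_{i=1}^n\rho_{j,r,i}^{(t,b)}\|\bxi_i\|_2^{-2}\bxi_i,
\end{align*}
and observe that, since $\bmu,\bxi_1,\dots,\bxi_n$ are linearly independent with high probability (guaranteed by Lemma~\ref{lm: data inner products} under Condition~\ref{assm: assm0}, in particular $d\geq\tilde\Omega(n^2)$), the coefficients $\gamma_{j,r}^{(t,b)}$ and $\rho_{j,r,i}^{(t,b)}$ are uniquely determined by $\wb_{j,r}^{(t,b)}$. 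Hence it suffices to show that the candidate coefficients produced by the claimed recursions, when substituted into the decomposition, reproduce exactly the weight vector given by~\eqref{eq:SGDupdate}.

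The base case $t=b=0$ is immediate: $\wb_{j,r}^{(0,0)}$ equals its own initialization, so all of $\gamma_{j,r}^{(0,0)},\rho_{j,r,i}^{(0,0)}$ (hence $\zeta,\omega$) vanish. For the inductive step within an epoch, I would take the difference $\wb_{j,r}^{(t,b+1)}-\wb_{j,r}^{(t,b)} = -\eta\nabla_{\wb_{j,r}}L_{\cI_{t,b}}(\Wb^{(t,b)})$ and read off~\eqref{eq:gdupdate2}: the term proportional to $\bmu$ is $-\frac{\eta}{Bm}\sum_{i\in\cI_{t,b}}\ell_i'^{(t,b)}\sigma'(\langle\wb_{j,r}^{(t,b)},\hat y_i\bmu\rangle)jy_i\hat y_i\bmu$, and the term proportional to $\bxi_i$ (for each fixed $i\in\cI_{t,b}$) is $-\frac{\eta(P-1)}{Bm}\ell_i'^{(t,b)}\sigma'(\langle\wb_{j,r}^{(t,b)},\bxi_i\rangle)jy_i\bxi_i$. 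Matching the $\bmu$-component against $j\cdot(\gamma_{j,r}^{(t,b+1)}-\gamma_{j,r}^{(t,b)})\|\bmu\|_2^{-2}\bmu$ gives, after multiplying by $j\|\bmu\|_2^2$ and using $j^2=y_i^2=1$ and $\hat y_i y_i=+1$ on $S_+$, $\hat y_i y_i=-1$ on $S_-$, exactly~\eqref{eq:gamma_update}. Matching the $\bxi_i$-component against $\frac{1}{P-1}(\rho_{j,r,i}^{(t,b+1)}-\rho_{j,r,i}^{(t,b)})\|\bxi_i\|_2^{-2}\bxi_i$ gives $\rho_{j,r,i}^{(t,b+1)}=\rho_{j,r,i}^{(t,b)}-\frac{\eta(P-1)^2}{Bm}\ell_i'^{(t,b)}\sigma'(\langle\wb_{j,r}^{(t,b)},\bxi_i\rangle)\|\bxi_i\|_2^2 jy_i\ind(i\in\cI_{t,b})$; splitting on the sign $jy_i=\pm1$, i.e.\ on $\ind(y_i=j)$ versus $\ind(y_i=-j)$, and recalling that $\ell_i'^{(t,b)}<0$ so the $\zeta$-increment is nonnegative and the $\omega$-increment nonpositive, recovers~\eqref{eq:zeta_update} and~\eqref{eq:omega_update}. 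For the epoch boundary, $\wb_{j,r}^{(t+1,0)}=\wb_{j,r}^{(t,H)}$ forces the coefficients to carry over unchanged, consistent with the recursions.

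The one genuine subtlety — and the step I expect to be the main obstacle — is the bookkeeping on the noise coefficients: one must verify that the decomposition of $\zeta_{j,r,i}^{(t,b)}=\rho_{j,r,i}^{(t,b)}\ind(\rho_{j,r,i}^{(t,b)}\geq0)$ and $\omega_{j,r,i}^{(t,b)}=\rho_{j,r,i}^{(t,b)}\ind(\rho_{j,r,i}^{(t,b)}\leq0)$ is consistent with the separately claimed recursions for $\zeta$ and $\omega$. This requires checking that for a given $i$, the increment at step $(t,b)$ is nonzero for at most one of $\zeta_{j,r,i}$ (when $y_i=j$) or $\omega_{j,r,i}$ (when $y_i=-j$), never both, so that $\rho_{j,r,i}^{(t,b)}=\zeta_{j,r,i}^{(t,b)}+\omega_{j,r,i}^{(t,b)}$ holds with one of the two summands identically zero and the other retaining its sign — which is exactly the content of the indicators $\ind(y_i=j)$ and $\ind(y_i=-j)$ being mutually exclusive. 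Once this sign consistency is in place, the induction closes and uniqueness from Lemma~\ref{lemma:w_decomposition} finishes the proof.
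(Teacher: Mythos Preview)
Your proposal is correct and follows essentially the same approach as the paper: both arguments match coefficients against the gradient update~\eqref{eq:gdupdate2} using uniqueness of the decomposition, which in turn rests on the linear independence of $\bmu,\bxi_1,\dots,\bxi_n$. The paper unrolls the recursion fully to time $(t,b)$ and then reads off the one-step form, whereas you work directly with the single-step increment via induction; these are equivalent reformulations. Your explicit attention to the $\zeta/\omega$ sign-consistency bookkeeping (that for fixed $(j,i)$ exactly one of the indicators $\ind(y_i=j),\ind(y_i=-j)$ is active, so $\rho_{j,r,i}$ never changes sign) is in fact more careful than the paper, which simply asserts the split without comment.
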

\begin{proof}
First, we iterate the gradient descent update rule $t$ epochs plus $b$ batches and get
\begin{align*}
    \wb_{j,r}^{(t,b)} &= \wb_{j,r}^{(0,0)} - \frac{\eta(P-1)}{Bm} \sum_{(t',b') < (t,b) } \sum_{i\in \cI_{t',b'}} \ell_i'^{(t',b')} \cdot  \sigma'(\la\wb_{j,r}^{(t',b')}, \bxi_{i}\ra)\cdot j y_{i}(P-1)\bxi_{i}  \\
    & \qquad - \frac{\eta}{Bm} \sum_{(t',b') < (t,b) }\sum_{i\in \cI_{s,k}} \ell_i'^{(t',b')} \cdot \sigma'(\la\wb_{j,r}^{(t',b')}, \hat{y}_{i} \bmu\ra)\cdot y_i \hat y_i j \bmu .
\end{align*}
According to the definition of $\gamma_{j,r}^{(t)}$ and $\rho_{j,r,i}^{(t)}$, 
\begin{align*}
    \wb_{j,r}^{(t,b)} = \wb_{j,r}^{(0,0)} + j \cdot \gamma_{j,r}^{(t,b)} \cdot \| \bmu \|_2^{-2} \cdot \bmu + \frac{1}{P-1}\sum_{i = 1}^n \rho_{j,r,i}^{(t,b) }\cdot \| \bxi_i \|_2^{-2} \cdot \bxi_{i}.
\end{align*}
Since $\bxi_i$ and $\bmu$ are linearly independent with probability $1$, we have the unique representation as follows:
\begin{align*}
    \rho_{j,r,i}^{(t,b)} & =  - \frac{\eta(P-1)^2}{Bm} \sum_{(t',b')<(t,b)}  \ell_i'^{(t',b')} \cdot  \sigma'(\la\wb_{j,r}^{(t',b')}, \bxi_{i}\ra)\cdot \| \bxi_i \|_2^2 \ind(i\in\cI_{s,k}) y_i j ,\\
    \gamma_{j,r}^{(t,b)} & = - \frac{\eta}{Bm} \sum_{(t',b')<(t,b)} \bigg[ \sum_{i\in \cI_{t',b'} \cap S_+} \ell_{i}'^{(t',b')} \sigma'(\la\wb_{j,r}^{(t',b')}, y_{i} \cdot \bmu\ra) \\
 & \qquad \qquad \qquad \qquad \qquad - \sum_{i\in \cI_{t',b'} \cap S_-} \ell_{i}'^{(t',b')} \sigma'(\la\wb_{j,r}^{(t',b')}, y_{i} \cdot \bmu\ra) \bigg] \| \bmu\|_2^2 .
\end{align*}
Since we define $\zeta_{j,r,i}^{(t,b)} := \rho_{j,r,i}^{(t,b)}\ind(\rho_{j,r,i}^{(t,b)} \geq 0)$, $\omega_{j,r,i}^{(t)} := \rho_{j,r,i}^{(t,b)}\ind(\rho_{j,r,i}^{(t,b)} \leq 0)$, we obtain
\begin{align*}
    \zeta_{j,r,i}^{(t,b)}& =  - \frac{\eta(P-1)^2}{Bm} \sum_{(t',b')<(t,b)} \ell_i'^{(t',b')} \cdot  \sigma'(\la\wb_{j,r}^{(t',b')}, \bxi_{i}\ra)\cdot \| \bxi_i \|_2^2 \ind(i\in\cI_{t',b'}) \ind(y_{i}=j ) ,\\
    \omega_{j,r,i}^{(t,b)}& =   \frac{\eta(P-1)^2}{Bm} \sum_{(t',b')<(t,b)} \ell_i'^{(t',b')} \cdot  \sigma'(\la\wb_{j,r}^{(t',b')}, \bxi_{i}\ra)\cdot \| \bxi_i \|_2^2 \ind(i\in\cI_{t',b'})  \ind( y_{i} = -j).
\end{align*}
And the iterative update equations \eqref{eq:gamma_update}, \eqref{eq:zeta_update}, and \eqref{eq:omega_update} follow directly.
\end{proof}
\subsubsection{Scale of Decomposition Coefficients}
We first define $T^* = \eta^{-1} \mathrm{poly}(\epsilon^{-1}, d, n,m)$ and 
\begin{align}
    & \alpha := 4 \log(T^*), \label{def: alpha} \\
    & \beta := 2 \max_{i, j, r} \{ | \la \wb_{j, r}^{(0,0)}, \bmu \ra |,  (P-1)| \la \wb_{j, r}^{(0,0)},\bxi_i \ra | \}, \label{def: beta}\\
    & \mathrm{SNR} := \frac{\|\bmu\|_2}{(P-1)\sigma_p\sqrt{d}}. \label{def: SNR}
\end{align}
By Lemma~\ref{lm: initialization inner products} and Condition~\ref{assm: assm0}, $\beta$ can be bounded as
\begin{align*}
    \beta&=2\max_{i,j,r}\{|\la\wb_{j,r}^{(0,0)},\bmu\ra|,(P-1)|\la\wb_{j,r}^{(0,0)},\bxi_i\ra|\}\\
    &\leq2\max\{\sqrt{2\log(12m/\delta)}\cdot\sigma_0\|\bmu\|_2,2\sqrt{\log(12mn/\delta)}\cdot\sigma_0(P-1)\sigma_p\sqrt{d}\}\\
    &=O\big(\sqrt{\log(mn/\delta)}\cdot\sigma_0(P-1)\sigma_p\sqrt{d}\big),
\end{align*}
Then, by Condition \ref{assm: assm0}, we have the following inequality:
\begin{equation}
    \max \bigg\{ \beta, \mathrm{SNR} \sqrt{\frac{32 \log(6n / \delta)}{d}} n \alpha, 5 \sqrt{ \frac{\log (6 n^2 / \delta)} {d}} n \alpha \bigg\} \leq \frac{1}{12}. \label{ineq: alpha beta upper bound}
\end{equation}

We first prove the following bounds for signal-noise decomposition coefficients.
\begin{proposition} \label{proposition: range of gamma,zeta,omega} 
Under Assumption \ref{assm: assm0}, for $(0,0)\leq (t,b)\leq (T^*,0)$, we have that
\begin{align}
&\gamma_{j,r}^{(0,0)},\zeta_{j,r,i}^{(0,0)},\omega_{j,r,i}^{(0,0)}=0\label{initial gamma,zeta,omega}\\
    &0\leq \zeta_{j,r,i}^{(t,b)}\leq\alpha,\label{ineq: range of zeta}\\
    &0\geq\omega_{j,r,i}^{(t,b)}\geq-\beta-10\sqrt{\frac{\log(6n^2/\delta)}{d}} n\alpha\geq-\alpha,\label{ineq: range of omega}
\end{align}
and there exists a positive constant $C'$ such that

\begin{align}
 -\frac{1}{12}\le \gamma_{j,r}^{(t,b)} &\leq C'\hat{\gamma}\alpha,\label{ineq: range of gamma} 
\end{align}

for all $r\in[m]$, $j\in\{\pm 1\}$ and $i\in[n]$, where $\hat{\gamma}:=n\cdot\mathrm{SNR}^2$.
\end{proposition}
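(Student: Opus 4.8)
\textbf{Proof proposal for Proposition~\ref{proposition: range of gamma,zeta,omega}.}

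\medskip

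The plan is to prove all four bounds simultaneously by induction on the iteration counter $(t,b)$, ordered lexicographically from $(0,0)$ up to $(T^*,0)$. The base case $(0,0)$ is exactly \eqref{initial gamma,zeta,omega}, which holds by the initialization stated in Lemma~\ref{lm: coefficient iterative}. For the inductive step, I would assume that \eqref{ineq: range of zeta}, \eqref{ineq: range of omega}, and \eqref{ineq: range of gamma} hold for all iterations up to and including $(t,b)$, and then establish them at $(t,b+1)$ (rolling over to $(t+1,0)$ when $b+1 = H$). The key auxiliary fact I will need throughout is a two-sided control on the cross terms: using the decomposition \eqref{eq:w_decomposition} together with Lemmas~\ref{lm: data inner products} and \ref{lm: initialization inner products}, one can show that $\la \wb_{j,r}^{(t,b)}, \bxi_i\ra$ is well-approximated by $\tfrac{1}{P-1}\rho_{j,r,i}^{(t,b)}$ up to an error controlled by $\beta$ plus a sum of off-diagonal noise correlations of size $O\big(\sqrt{\log(6n^2/\delta)/d}\,\big)\sum_{i'}|\rho_{j,r,i'}^{(t,b)}|$, and similarly $\la\wb_{j,r}^{(t,b)},\hat y\bmu\ra$ is close to $\gamma_{j,r}^{(t,b)}$ up to error $\beta$ plus an $\mathrm{SNR}$-weighted noise term. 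Under the inductive hypothesis each $|\rho_{j,r,i'}^{(t,b)}| \le \alpha$, so these error terms are bounded by the left-hand side of \eqref{ineq: alpha beta upper bound} and hence by $1/12$.

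\medskip

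For \eqref{ineq: range of zeta}: from the update \eqref{eq:zeta_update}, $\zeta_{j,r,i}^{(t,b)}$ is nondecreasing because $-\ell_i'^{(t,b)} > 0$, $\sigma' \ge 0$, and $\|\bxi_i\|_2^2 > 0$, and it only changes when $y_i = j$, $i \in \cI_{t,b}$, and the neuron is activated on $\bxi_i$; combined with the base case this gives $\zeta_{j,r,i}^{(t,b)} \ge 0$. For the upper bound $\alpha$, I would argue by contradiction: let $(t,b)$ be the first time some $\zeta_{j,r,i}$ exceeds, say, $\alpha/2$ (or more carefully, track the last time it was below a threshold). Once $\zeta_{j,r,i}^{(t,b)}$ is of constant order, the approximation above forces $y_i f(\Wb^{(t,b)},\xb_i)$ to be large, hence $-\ell_i'^{(t,b)} = \ell'(y_i f) $ is exponentially small — of order $\exp(-\Theta(\zeta_{j,r,i}^{(t,b)}))$ — so the per-step increment $\tfrac{\eta(P-1)^2}{Bm}|\ell_i'^{(t,b)}|\|\bxi_i\|_2^2 \le O(\eta (P-1)^2\sigma_p^2 d / (Bm))\cdot \exp(-\Theta(\zeta))$ is tiny. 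Summing a geometric-type series over at most $T^*$ steps and using the learning-rate bound in Condition~\ref{assm: assm0} and the definition $\alpha = 4\log T^*$, one checks the coefficient cannot climb past $\alpha$. The bound \eqref{ineq: range of omega} is easier: $\omega_{j,r,i}^{(t,b)} \le 0$ and nonincreasing by the sign structure of \eqref{eq:omega_update}, and for the lower bound one uses that $\la \wb_{j,r}^{(t,b)},\bxi_i\ra$ can only become significantly negative through the $\omega$ term itself plus the $O(\sqrt{\log/d}\, n\alpha)$ cross-term error; once $\la\wb_{j,r}^{(t,b)},\bxi_i\ra \le 0$ the ReLU derivative is (borderline) still $1$ by the $\sigma'(0)=1$ convention, but the self-correlation $\|\bxi_i\|_2^2$ drives $\la\wb_{j,r},\bxi_i\ra$ back toward $0$, so $\omega$ cannot run away; the stated bound $-\beta - 10\sqrt{\log(6n^2/\delta)/d}\,n\alpha \ge -\alpha$ follows from \eqref{ineq: alpha beta upper bound}.

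\medskip

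For \eqref{ineq: range of gamma}: the lower bound $\gamma_{j,r}^{(t,b)} \ge -1/12$ comes from noticing that the only negative contribution to \eqref{eq:gamma_update} is from noisy samples $i \in S_-$, whose combined gradient contribution is controlled by the same logit-ratio machinery (cf.\ \eqref{eq:logit_ratio}) and whose total count is $O(pn)$ by Lemma~\ref{lm: estimate S cap S}; with $p \le 1/C$ and $C$ large, together with the fact that $\sigma'(\la\wb_{j,r},\hat y\bmu\ra)$ is only nonzero when that inner product is nonnegative, one shows the cumulative decrease never exceeds $1/12$. For the upper bound $C'\hat\gamma\alpha$ with $\hat\gamma = n\cdot\mathrm{SNR}^2$, I would again use that once $\gamma_{j,r}^{(t,b)}$ is large the relevant $-\ell_i'^{(t,b)}$ are small, so the increments form a controllable series; the extra factor $\hat\gamma$ appears because each of the $n$ samples can contribute, and each signal step is scaled by $\|\bmu\|_2^2$ relative to the noise steps' $\|\bxi_i\|_2^2 \asymp \sigma_p^2 d$, giving the ratio $\mathrm{SNR}^2$.

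\medskip

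\textbf{Main obstacle.} The crux is the upper bound on $\zeta_{j,r,i}^{(t,b)}$ in \eqref{ineq: range of zeta}, and more precisely the coupling between $\zeta$, $\omega$, $\gamma$ and the loss derivative $\ell_i'^{(t,b)}$: to bound the increments one needs $-\ell_i'^{(t,b)}$ small, which needs $y_i f(\Wb^{(t,b)},\xb_i)$ large, which in turn needs exactly the coefficient bounds one is trying to prove, so the induction must be set up so that all four inequalities are carried together and the logit control \eqref{eq:logit_ratio} (the "balanced logits" lemma) is threaded in at the right granularity — per batch within an epoch for $\zeta$, but only per epoch for the $\gamma$ analysis. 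Handling the mini-batch fluctuations — the fact that $\la\wb_{j,r}^{(t,b)},\bxi_i\ra$ can move in batches that do not contain $i$, so the "first crossing time" argument has to be stated in terms of the thresholded activation sets $S_i^{(t,0)}$ of \eqref{eq:activation_sets} and Lemma~\ref{lm:sets_series} rather than the raw inner products — is where the SGD-specific difficulty over the GD analysis of \citet{kou2023benign} concentrates.
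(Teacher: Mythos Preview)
Your overall induction scheme, the approximation lemmas you sketch for $\la\wb_{j,r},\bxi_i\ra$ and $\la\wb_{j,r},\hat y\bmu\ra$, and your arguments for \eqref{ineq: range of zeta} and \eqref{ineq: range of omega} are essentially the paper's approach. In particular, the ``last time below $\alpha/2$'' splitting for $\zeta$ is exactly the paper's $I_7/I_8$ decomposition, and your $\omega$ argument via the ReLU deactivation threshold matches the two-case analysis in the paper.

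The genuine gap is in your proposed mechanism for the \emph{upper} bound $\gamma_{j,r}^{(t,b)}\le C'\hat\gamma\alpha$ in \eqref{ineq: range of gamma}. You write that ``once $\gamma_{j,r}^{(t,b)}$ is large the relevant $-\ell_i'^{(t,b)}$ are small.'' This does not work: in the regime the proposition covers, $\hat\gamma = n\,\mathrm{SNR}^2$ can be $O(1)$ or much smaller, so the target bound $C'\hat\gamma\alpha$ is itself tiny, and $\gamma$ never becomes large enough to drive the logit $y_i f(\Wb,\xb_i)$ up. What actually makes $-\ell_i'$ small is the noise coefficient $\zeta_{y_i,r,i}$ reaching order $\alpha$, not $\gamma$. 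Consequently you cannot bound the $\gamma$-increments by a self-damping series in $\gamma$ alone.

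The paper's fix is a \emph{ratio} argument that couples $\gamma$ to $\zeta$: pick any $i^*\in S_{j,r}^{(0,0)}$ (which exists and stays activated by the fifth statement of the balanced-logit lemma), so that $\zeta_{j,r,i^*}$ increases every epoch by $\tfrac{\eta(P-1)^2}{Bm}|\ell_{i^*}'|\,\|\bxi_{i^*}\|_2^2$. Using the balanced-logit bound $\ell_i'/\ell_k'\le C_2$ across batches, the per-epoch increment of $\gamma_{j,r}$ is at most $C_2 n|\ell_{i^*}'|\,\|\bmu\|_2^2\cdot\tfrac{\eta}{Bm}$, so
\[
\frac{\gamma_{j,r}^{(t,0)}}{\zeta_{j,r,i^*}^{(t,0)}}\ \le\ \max\Big\{\frac{\gamma_{j,r}^{(t-1,0)}}{\zeta_{j,r,i^*}^{(t-1,0)}},\ \frac{2C_2 n\|\bmu\|_2^2}{(P-1)^2\sigma_p^2 d}\Big\}\ =\ O(\hat\gamma),
\]
and then $\gamma_{j,r}\le C'\hat\gamma\,\zeta_{j,r,i^*}\le C'\hat\gamma\alpha$ follows from the already-proved $\zeta$-bound. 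Your last sentence about the $\|\bmu\|_2^2$ vs.\ $\|\bxi_i\|_2^2$ scaling is the right intuition for where $\hat\gamma$ comes from, but it needs to be implemented as this increment comparison against a specific always-activated $\zeta_{j,r,i^*}$, not as a self-referential smallness-of-$\ell'$ argument.
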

We will prove Proposition~\ref{proposition: range of gamma,zeta,omega} by induction. We first approximate the change of the inner product by corresponding decomposition coefficients when Proposition~\ref{proposition: range of gamma,zeta,omega} holds.
\begin{lemma}\label{lm: inner product range}
Under Assumption~\ref{assm: assm0}, suppose \eqref{ineq: range of zeta}, \eqref{ineq: range of omega} and \eqref{ineq: range of gamma} hold after $b$-th batch of $t$-th epoch. Then, for all $r\in[m]$, $j\in\{\pm1\}$ and $i\in[n]$, 
\begin{align}
    &\big|\la\wb_{j,r}^{(t,b)}-\wb_{j,r}^{(0,0)},\bmu\ra-j\cdot\gamma_{j,r}^{(t,b)}\big|\leq\mathrm{SNR}\sqrt{\frac{32\log(6n/\delta)}{d}}n\alpha,\label{ineq: feature product bound}\\
    &\big|\la\wb_{j,r}^{(t,b)}-\wb_{j,r}^{(0,0)},\bxi_{i}\ra-\frac{1}{P-1}\omega_{j,r,i}^{(t,b)}\big|\leq \frac{5}{P-1}\sqrt{\frac{\log(6n^2/\delta)}{d}}n\alpha,\,j\neq y_i,\label{ineq: noise product bound1}\\
    &\big|\la\wb_{j,r}^{(t,b)}-\wb_{j,r}^{(0,0)},\bxi_{i}\ra-\frac{1}{P-1}\zeta_{j,r,i}^{(t,b)}\big|\leq \frac{5}{P-1}\sqrt{\frac{\log(6n^2/\delta)}{d}}n\alpha,\,j= y_i. \label{ineq: noise product bound2}
\end{align}
\end{lemma}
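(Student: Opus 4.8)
The plan is to prove Lemma~\ref{lm: inner product range} by directly using the signal-noise decomposition from Lemma~\ref{lemma:w_decomposition} and then bounding the cross-terms with the off-diagonal inner product estimates from Lemma~\ref{lm: data inner products}. Starting from \eqref{eq:w_decomposition}, subtract $\wb_{j,r}^{(0,0)}$ and take the inner product with $\bmu$: the $\gamma$-term contributes exactly $j\gamma_{j,r}^{(t,b)}$ (since $\|\bmu\|_2^{-2}\la\bmu,\bmu\ra = 1$), so the error $\big|\la\wb_{j,r}^{(t,b)}-\wb_{j,r}^{(0,0)},\bmu\ra - j\gamma_{j,r}^{(t,b)}\big|$ equals $\big|\frac{1}{P-1}\sum_i \rho_{j,r,i}^{(t,b)}\|\bxi_i\|_2^{-2}\la\bxi_i,\bmu\ra\big|$. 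Then bound each summand using $|\rho_{j,r,i}^{(t,b)}| \le \max\{\zeta_{j,r,i}^{(t,b)}, -\omega_{j,r,i}^{(t,b)}\} \le \alpha$ from \eqref{ineq: range of zeta}--\eqref{ineq: range of omega}, together with $\|\bxi_i\|_2^2 \ge \sigma_p^2 d/2$ and $|\la\bxi_i,\bmu\ra| \le \|\bmu\|_2\sigma_p\sqrt{2\log(6n/\delta)}$ from Lemma~\ref{lm: data inner products}. Summing over $i\in[n]$ and simplifying gives a bound of order $\frac{1}{P-1}\cdot n\alpha \cdot \frac{\|\bmu\|_2\sigma_p\sqrt{2\log(6n/\delta)}}{\sigma_p^2 d/2}$, which after recognizing $\mathrm{SNR} = \frac{\|\bmu\|_2}{(P-1)\sigma_p\sqrt{d}}$ collapses to $\mathrm{SNR}\sqrt{32\log(6n/\delta)/d}\, n\alpha$, establishing \eqref{ineq: feature product bound}.

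For \eqref{ineq: noise product bound1} and \eqref{ineq: noise product bound2}, I would take the inner product of \eqref{eq:w_decomposition} (minus $\wb_{j,r}^{(0,0)}$) with a fixed $\bxi_k$. The diagonal term $i = k$ produces $\frac{1}{P-1}\rho_{j,r,k}^{(t,b)}\|\bxi_k\|_2^{-2}\|\bxi_k\|_2^2 = \frac{1}{P-1}\rho_{j,r,k}^{(t,b)}$; when $j = y_k$ the induction hypothesis (which one would carry along, and which matches the sign conventions in Lemma~\ref{lm:sets_series}) forces $\omega_{j,r,k}^{(t,b)} = 0$ so $\rho_{j,r,k}^{(t,b)} = \zeta_{j,r,k}^{(t,b)}$, and symmetrically $\zeta = 0$ when $j \ne y_k$. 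This is why the two cases reference $\zeta$ versus $\omega$. The remaining error terms are the $\gamma$-cross-term $j\gamma_{j,r}^{(t,b)}\|\bmu\|_2^{-2}\la\bmu,\bxi_k\ra$ and the off-diagonal noise terms $\frac{1}{P-1}\sum_{i\ne k}\rho_{j,r,i}^{(t,b)}\|\bxi_i\|_2^{-2}\la\bxi_i,\bxi_k\ra$. Using $|\gamma_{j,r}^{(t,b)}| \le C'\hat\gamma\alpha$ (or the cruder bound appearing in \eqref{ineq: range of gamma}), $|\rho_{j,r,i}^{(t,b)}| \le \alpha$, $|\la\bmu,\bxi_k\ra| \le \|\bmu\|_2\sigma_p\sqrt{2\log(6n/\delta)}$, and $|\la\bxi_i,\bxi_k\ra| \le 2\sigma_p^2\sqrt{d\log(6n^2/\delta)}$ from Lemma~\ref{lm: data inner products}, the sum over $i\ne k$ is bounded by $\frac{1}{P-1}\cdot n\alpha\cdot\frac{2\sigma_p^2\sqrt{d\log(6n^2/\delta)}}{\sigma_p^2 d/2} = \frac{4}{P-1}\sqrt{\log(6n^2/\delta)/d}\,n\alpha$, and the $\gamma$-term is of the smaller order $\mathrm{SNR}^2\sqrt{\log(6n/\delta)/d}$ (absorbable into the constant $5$), giving the claimed $\frac{5}{P-1}\sqrt{\log(6n^2/\delta)/d}\,n\alpha$.

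The only genuine subtlety — and the main obstacle — is bookkeeping the constants so that the various error contributions (the $\gamma$-cross-term, the off-diagonal noise sum, and any initialization slack) all fit under the stated coefficients $\mathrm{SNR}\sqrt{32\log(6n/\delta)/d}\,n\alpha$ and $\frac{5}{P-1}\sqrt{\log(6n^2/\delta)/d}\,n\alpha$; this is where Condition~\ref{assm: assm0}, in particular the requirement $d \ge \tilde\Omega(n^2)$ and the smallness inequality \eqref{ineq: alpha beta upper bound}, is invoked to guarantee the cross-terms are lower-order. I would also note explicitly that the bound $|\rho_{j,r,i}^{(t,b)}| \le \alpha$ follows from \eqref{ineq: range of zeta}--\eqref{ineq: range of omega} together with the fact that $\rho_{j,r,i}^{(t,b)}$ equals either $\zeta_{j,r,i}^{(t,b)}$ (if nonnegative) or $\omega_{j,r,i}^{(t,b)}$ (if nonpositive), never both being nonzero. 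Everything else is routine triangle-inequality estimation.
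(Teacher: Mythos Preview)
Your proposal is correct and follows essentially the same route as the paper: expand via the signal-noise decomposition, isolate the diagonal term, and bound the $\gamma$-cross-term and off-diagonal noise terms using Lemma~\ref{lm: data inner products} together with the coefficient bounds \eqref{ineq: range of zeta}--\eqref{ineq: range of gamma}. Two minor remarks: the vanishing of $\omega_{j,r,k}^{(t,b)}$ when $j=y_k$ (and of $\zeta_{j,r,k}^{(t,b)}$ when $j\neq y_k$) is not an induction hypothesis but an immediate consequence of the update rules in Lemma~\ref{lm: coefficient iterative}; and your stated order for the $\gamma$-cross-term should be $\frac{\mathrm{SNR}}{P-1}\sqrt{\log(6n/\delta)/d}\,n\alpha$ rather than $\mathrm{SNR}^2\sqrt{\log(6n/\delta)/d}$, though your conclusion that it is absorbed into the constant $5$ via the smallness of $\mathrm{SNR}$ is exactly what the paper does.
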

\begin{proof}[Proof of Lemma \ref{lm: inner product range}]
First,  for any time $(t,b)\geq (0,0)$, we have from the following decomposition by definitions,
\begin{align*}
    \la\wb_{j,r}^{(t,b)}-\wb_{j,r}^{(0,0)},\bmu\ra&=j\cdot\gamma_{j,r}^{(t,b)}+\frac{1}{P-1}\sum_{i'=1}^{n}\zeta_{j,r,i'}^{(t,b)}\|\bxi_{i'}\|_{2}^{-2}\cdot\la\bxi_{i'},\bmu\ra \\
    & \qquad +\frac{1}{P-1}\sum_{i'=1}^{n}\omega_{j,r,i'}^{(t,b)}\|\bxi_{i'}\|_{2}^{-2}\cdot\la\bxi_{i'},\bmu\ra .
\end{align*}
According to Lemma~\ref{lm: data inner products}, we have
\begin{align*}
    &\quad \Bigg| \frac{1}{P-1}\sum_{i'=1}^{n}\zeta_{j,r,i'}^{(t,b)}\|\bxi_{i'}\|_{2}^{-2}\cdot\la\bxi_{i'},\bmu\ra+\frac{1}{P-1}\sum_{i'=1}^{n}\omega_{j,r,i'}^{(t,b)}\|\bxi_{i'}\|_{2}^{-2}\cdot\la\bxi_{i'},\bmu\ra\Bigg|\\
&\leq\frac{1}{P-1}\sum_{i'=1}^{n}|\zeta_{j,r,i'}^{(t,b)}|\|\bxi_{i'}\|_{2}^{-2}\cdot|\la\bxi_{i'},\bmu\ra|+\frac{1}{P-1}\sum_{i'=1}^{n}|\omega_{j,r,i'}^{(t,b)}|\|\bxi_{i'}\|_{2}^{-2}\cdot|\la\bxi_{i'},\bmu\ra|\\
    &\leq\frac{2\|\bmu\|_2\sqrt{2\log(6n/\delta)}}{(P-1)\sigma_p d}\bigg(\sum_{i'=1}^{n}|\zeta_{j,r,i'}^{(t,b)}|+\sum_{i'=1}^{n}|\omega_{j,r,i'}^{(t,b)}|\bigg)\\
    &=\mathrm{SNR}\sqrt{\frac{8\log(6n/\delta)}{d}}\bigg(\sum_{i'=1}^{n}|\zeta_{j,r,i'}^{(t,b)}|+\sum_{i'=1}^{n}|\omega_{j,r,i'}^{(t,b)}|\bigg)\\
    &\leq\mathrm{SNR}\sqrt{\frac{32\log(6n/\delta)}{d}}n\alpha,
\end{align*}
where the first inequality is by triangle inequality, the second inequality is by Lemma 
\ref{lm: data inner products}, the equality is by $\mathrm{SNR}=\|\bmu\|_2/ ((P-1)\sigma_p \sqrt{d})$, and the last inequality is by \eqref{ineq: range of zeta}, \eqref{ineq: range of omega}. It follows that
\begin{equation*}
    \big|\la\wb_{j,r}^{(t,b)}-\wb_{j,r}^{(0,0)},\bmu\ra-j\cdot\gamma_{j,r}^{(t,b)}\big|\leq\mathrm{SNR}\sqrt{\frac{32\log(6n/\delta)}{d}}n\alpha.
\end{equation*}
Then, for $j\neq y_i$ and any $t\geq 0$, we have
\begin{align*}
    &\la\wb_{j,r}^{(t,b)}-\wb_{j,r}^{(0,0)},\bxi_{i}\ra\\
    &=j\cdot\gamma_{j,r}^{(t,b)}\|\bmu\|_2^{-2}\cdot\la\bmu,\bxi_{i}\ra+\frac{1}{P-1}\sum_{i'=1}^{n}\zeta_{j,r,i'}^{(t,b)}\|\bxi_{i'}\|_{2}^{-2}\cdot\la\bxi_{i'},\bxi_{i}\ra \\ 
    &  \qquad +\frac{1}{P-1}\sum_{i'=1}^{n}\omega_{j,r,i'}^{(t,b)}\|\bxi_{i'}\|_{2}^{-2}\cdot\la\bxi_{i'},\bxi_{i}\ra\\
    &=j\cdot\gamma_{j,r}^{(t,b)}\|\bmu\|_2^{-2}\cdot\la\bmu,\bxi_{i}\ra+\frac{1}{P-1}\sum_{i'=1}^{n}\omega_{j,r,i'}^{(t,b)}\|\bxi_{i'}\|_{2}^{-2}\cdot\la\bxi_{i'},\bxi_{i}\ra\\
    &=\frac{1}{P-1}\omega_{j,r,i}^{(t,b)}+j\cdot\gamma_{j,r}^{(t,b)}\|\bmu\|_2^{-2}\cdot\la\bmu,\bxi_{i}\ra+\frac{1}{P-1}\sum_{i'\neq i}\omega_{j,r,i'}^{(t,b)}\|\bxi_{i'}\|_{2}^{-2}\cdot\la\bxi_{i'},\bxi_{i}\ra,
\end{align*}
where the second equality is due to $\omega_{j,r,i}^{(t,b)}=0$ for $j\neq y_i$. Next, we have
\begin{align*}
    &\quad \bigg| j\cdot\gamma_{j,r}^{(t,b)}\|\bmu\|_2^{-2}\cdot\la\bmu,\bxi_{i}\ra+\frac{1}{P-1}\sum_{i'\neq i}\omega_{j,r,i'}^{(t,b)}\|\bxi_{i'}\|_{2}^{-2}\cdot\la\bxi_{i'},\bxi_{i}\ra\bigg|\\
    &\leq |\gamma_{j,r}^{(t,b)}|\|\bmu\|_2^{-2}\cdot|\la\bmu,\bxi_{i}\ra|+\frac{1}{P-1}\sum_{i'\neq i}|\omega_{j,r,i'}^{(t,b)}|\|\bxi_{i'}\|_{2}^{-2}\cdot|\la\bxi_{i'},\bxi_{i}\ra|\\
    &\leq |\gamma_{j,r}^{(t,b)}|\|\bmu\|_2^{-1}\sigma_p\sqrt{2\log(6n/\delta)}+\frac{4}{P-1}\sqrt{\frac{\log(6n^2/\delta)}{d}}\sum_{i'\neq i}|\omega_{j,r,i'}^{(t,b)}|\\
    &=\frac{\mathrm{SNR}^{-1}}{P-1}\sqrt{\frac{2\log(6n/\delta)}{d}}|\gamma_{j,r}^{(t,b)}|+\frac{4}{P-1}\sqrt{\frac{\log(6n^2/\delta)}{d}}\sum_{i'\neq i}|\omega_{j,r,i'}^{(t,b)}|\\
    &\leq \frac{\mathrm{SNR}}{P-1}\sqrt{\frac{8C^2\log(6n/\delta)}{d}}n\alpha+\frac{4}{P-1}\sqrt{\frac{\log(6n^2/\delta)}{d}}n\alpha\\
    &\leq \frac{5}{P-1}\sqrt{\frac{\log(6n^2/\delta)}{d}}n\alpha, 
\end{align*}

where the first inequality is by triangle inequality; the second inequality is by Lemma \ref{lm: data inner products}; the equality is by $\mathrm{SNR} = \|\bmu\|_2 / \sigma_p \sqrt{d}$; the third inequality is by \eqref{ineq: range of omega} and \eqref{ineq: range of gamma}; the forth inequality is by $\mathrm{SNR} \leq 1/\sqrt{8C'^{2}}$. Therefore, for $j\neq y_i$, we have
\begin{equation*}
    \big|\la\wb_{j,r}^{(t,b)}-\wb_{j,r}^{(0,0)},\bxi_{i}\ra-\frac{1}{P-1}\omega_{j,r,i}^{(t,b)}\big|\leq \frac{5}{P-1}\sqrt{\frac{\log(6n^2/\delta)}{d}}n\alpha.
\end{equation*}
Similarly,  we have for $y_i=j$ that
\begin{align*}
    & \la \wb_{j,r}^{(t,b)} - \wb_{j,r}^{(0,0)}, \bxi_{i} \ra \\
    &= j \cdot \gamma_{j,r}^{(t,b)} \| \bmu \|_2^{-2} \cdot \la \bmu, \bxi_{i} \ra + \frac{1}{P-1}\sum_{i' = 1}^{n} \zeta_{j,r,i'}^{(t,b)} \|\bxi_{i'}\|_{2}^{-2} \cdot \la \bxi_{i'}, \bxi_{i} \ra \\
    & \qquad + \frac{1}{P-1}\sum_{i' = 1}^{n} \omega_{j, r, i'}^{(t,b)} \| \bxi_{i'} \|_{2}^{-2} \cdot \la \bxi_{i'}, \bxi_{i} \ra \\
    &=j\cdot\gamma_{j,r}^{(t,b)}\|\bmu\|_2^{-2}\cdot\la\bmu,\bxi_{i}\ra+\frac{1}{P-1}\sum_{i'=1}^{n}\zeta_{j,r,i'}^{(t,b)}\|\bxi_{i'}\|_{2}^{-2}\cdot\la\bxi_{i'},\bxi_{i}\ra\\
    &=\frac{1}{P-1}\zeta_{j,r,i}^{(t,b)}+j\cdot\gamma_{j,r}^{(t,b)}\|\bmu\|_2^{-2}\cdot\la\bmu,\bxi_{i}\ra+\frac{1}{P-1}\sum_{i'\neq i}\zeta_{j,r,i'}^{(t,b)}\|\bxi_{i'}\|_{2}^{-2}\cdot\la\bxi_{i'},\bxi_{i}\ra,
\end{align*}
and
\begin{align*}
    &\bigg|j\cdot\gamma_{j,r}^{(t,b)}\|\bmu\|_2^{-2}\cdot\la\bmu,\bxi_{i}\ra+\sum_{i'\neq i}\zeta_{j,r,i'}^{(t,b)}\|\bxi_{i'}\|_{2}^{-2}\cdot\la\bxi_{i'},\bxi_{i}\ra\bigg|\\
    &\leq\frac{\mathrm{SNR}^{-1}}{P-1}\sqrt{\frac{2\log(6n/\delta)}{d}}|\gamma_{j,r}^{(t,b)}|+\frac{4}{P-1}\sqrt{\frac{\log(6n^2/\delta)}{d}}\sum_{i'\neq i}|\zeta_{j,r,i'}^{(t,b)}|\\
    &\leq \frac{\mathrm{SNR}}{P-1}\sqrt{\frac{8C^2\log(6n/\delta)}{d}}n\alpha+\frac{4}{P-1}\sqrt{\frac{\log(6n^2/\delta)}{d}}n\alpha\\
    &\leq \frac{5}{P-1}\sqrt{\frac{\log(6n^2/\delta)}{d}}n\alpha,
\end{align*}
where the second inequality is by \eqref{ineq: range of zeta} and \eqref{ineq: range of gamma}, and the third inequality is by $\mathrm{SNR}\leq 1/\sqrt{8C'^{2}}$. Therefore, for $j=y_i$, we have
\begin{equation*}
    \big|\la\wb_{j,r}^{(t,b)}-\wb_{j,r}^{(0,0)},\bxi_{i}\ra-\frac{1}{P-1}\zeta_{j,r,i}^{(t,b)}\big|\leq \frac{5}{P-1}\sqrt{\frac{\log(6n^2/\delta)}{d}}n\alpha.
\end{equation*}
\end{proof}
\begin{lemma}\label{lm: mismatch fj upper bound}
Under Condition~\ref{assm: assm0}, suppose \eqref{ineq: range of zeta}, \eqref{ineq: range of omega} and \eqref{ineq: range of gamma} hold after $b$-th batch of $t$-th epoch. Then, for all $j\neq y_i$, $j\in\{\pm 1\}$ and $i\in[n]$, $F_j(\Wb_j^{(t,b)},\xb_i) \leq 0.5$. 
\end{lemma}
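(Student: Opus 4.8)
Under the hypotheses of the lemma, Lemma~\ref{lm: inner product range} and the coefficient ranges \eqref{ineq: range of zeta}--\eqref{ineq: range of gamma} are available, so I would reduce everything to these together with the smallness bounds in \eqref{ineq: alpha beta upper bound} (and the fact that $C'\hat{\gamma}\alpha$ is below a constant, which follows from the dimension condition in Condition~\ref{assm: assm0}). Since exactly one patch of $\xb_i$ equals $\hat y_i\bmu$ and the remaining $P-1$ patches are copies of $\bxi_i$,
\begin{align*}
F_j(\Wb_j^{(t,b)},\xb_i) = \frac{1}{m}\sum_{r=1}^m \sigma\big(\la\wb_{j,r}^{(t,b)},\hat y_i\bmu\ra\big) + \frac{P-1}{m}\sum_{r=1}^m\sigma\big(\la\wb_{j,r}^{(t,b)},\bxi_i\ra\big),
\end{align*}
and I would bound the ``signal average'' and the ``noise average'' on the right by small absolute constants whose sum is below $1/2$.

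\textbf{Signal average.} For each $r$, \eqref{ineq: feature product bound} and the triangle inequality give $\big|\la\wb_{j,r}^{(t,b)},\bmu\ra\big|\le \big|\la\wb_{j,r}^{(0,0)},\bmu\ra\big| + \big|\gamma_{j,r}^{(t,b)}\big| + \mathrm{SNR}\sqrt{32\log(6n/\delta)/d}\,n\alpha$. Here the first term is $\le\beta/2$ by the definition of $\beta$; the second is $\le 1/12$ because $\gamma_{j,r}^{(t,b)}\in[-1/12,\,C'\hat{\gamma}\alpha]$ by \eqref{ineq: range of gamma} and $C'\hat{\gamma}\alpha\le 1/12$; the third is $\le 1/12$ by \eqref{ineq: alpha beta upper bound}. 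Combined with $\beta/2\le 1/24$, also from \eqref{ineq: alpha beta upper bound}, this gives $\big|\la\wb_{j,r}^{(t,b)},\hat y_i\bmu\ra\big| = \big|\la\wb_{j,r}^{(t,b)},\bmu\ra\big|\le 5/24$, hence $\sigma\big(\la\wb_{j,r}^{(t,b)},\hat y_i\bmu\ra\big)\le 5/24$ and the signal average is $\le 5/24$.

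\textbf{Noise average and conclusion.} Since $j\neq y_i$, I may apply \eqref{ineq: noise product bound1}, which yields $\la\wb_{j,r}^{(t,b)},\bxi_i\ra\le\la\wb_{j,r}^{(0,0)},\bxi_i\ra + \tfrac{1}{P-1}\omega_{j,r,i}^{(t,b)} + \tfrac{5}{P-1}\sqrt{\log(6n^2/\delta)/d}\,n\alpha$. Using $\omega_{j,r,i}^{(t,b)}\le 0$ from \eqref{ineq: range of omega} and $(P-1)\big|\la\wb_{j,r}^{(0,0)},\bxi_i\ra\big|\le\beta/2$ from the definition of $\beta$, multiplying through by $P-1$ gives $(P-1)\sigma\big(\la\wb_{j,r}^{(t,b)},\bxi_i\ra\big)\le \beta/2 + 5\sqrt{\log(6n^2/\delta)/d}\,n\alpha\le 1/24 + 1/12 = 1/8$, again by \eqref{ineq: alpha beta upper bound}. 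Averaging over $r$ and adding the signal bound, $F_j(\Wb_j^{(t,b)},\xb_i)\le 5/24 + 1/8 = 1/3\le 1/2$.

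\textbf{Main obstacle.} The delicate point is the signal average for a noisy sample: when $y_i=-\hat y_i$ we have $j=-y_i=\hat y_i$, so the patch $\hat y_i\bmu=j\bmu$ points exactly along the direction that filter $\wb_{j,r}$ accumulates through $\gamma_{j,r}^{(t,b)}\ge 0$. There the trivial lower bound $\gamma_{j,r}^{(t,b)}\ge -1/12$ is useless; one genuinely needs the a priori \emph{upper} bound $\gamma_{j,r}^{(t,b)}\le C'\hat{\gamma}\alpha = C'n\,\mathrm{SNR}^2\alpha$ from Proposition~\ref{proposition: range of gamma,zeta,omega} to be below a constant, which is exactly what the dimension lower bound $d\ge\tilde{\Omega}(nP^{-2}\sigma_p^{-2}\|\bmu\|_2^2)$ in Condition~\ref{assm: assm0} buys. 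Everything else is a routine assembly of the inner-product estimates already established.
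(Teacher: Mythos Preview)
Your proof is correct and takes essentially the same route as the paper's: both invoke Lemma~\ref{lm: inner product range} to decompose the signal and noise inner products, drop the nonpositive $\omega$-term, and bound the remaining pieces via \eqref{ineq: alpha beta upper bound} and the coefficient ranges of Proposition~\ref{proposition: range of gamma,zeta,omega}. The only cosmetic difference is that the paper packages everything as a $6\max\{\cdots\}$ bound and uses $j\ne y_i$ to control the $\gamma$-term via $-\gamma_{j,r}^{(t,b)}\le 1/12$, whereas you bound $|\gamma_{j,r}^{(t,b)}|\le 1/12$ directly using $C'\hat{\gamma}\alpha\le 1/12$ (which indeed follows from the dimension condition in Condition~\ref{assm: assm0}), yielding the slightly sharper constant $1/3$; your handling is also cleaner with respect to the $y_i$ versus $\hat y_i$ issue you flag in the obstacle discussion.
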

\begin{proof}[Proof of Lemma \ref{lm: mismatch fj upper bound}]
According to Lemma \ref{lm: inner product range}, we have
\begin{align*}
    F_j(\Wb_j^{(t,b)},\xb_i)&=\frac{1}{m}\sum_{r=1}^{m}[\sigma(\la\wb_{j,r}^{(t,b)},y_i\bmu\ra)+(P-1)\sigma(\la\wb_{j,r}^{(t,b)},\bxi_i\ra)]\\
    & \le 2 \max \{\la\wb_{j,r}^{(t,b)},y_i\bmu\ra,(P-1)\la\wb_{j,r}^{(t,b)},\bxi_i\ra ,0 \} \\
    &\leq 6 \max\bigg\{\la\wb_{j,r}^{(0)},y_i\bmu\ra,(P-1)\la\wb_{j,r}^{(0)},\bxi_i\ra,\mathrm{SNR}\sqrt{\frac{32\log(6n/\delta)}{d}}n\alpha , y_i j\gamma_{j,r}^{(t,b)}, \\
    & \qquad \qquad \qquad \qquad 5\sqrt{\frac{\log(6n^2/\delta)}{d}}n\alpha + \omega_{j,r,i}^{(t,b)}\bigg\}\\
    &\leq 6 \max\bigg\{\beta/2,\mathrm{SNR}\sqrt{\frac{32\log(6n/\delta)}{d}}n\alpha,-\gamma_{j,r}^{(t,b)},5\sqrt{\frac{\log(6n^2/\delta)}{d}}n\alpha\bigg\}\\
    & \le 0.5,
\end{align*}
where the second inequality is by \eqref{ineq: feature product bound}, \eqref{ineq: noise product bound1} and \eqref{ineq: noise product bound2}; the third inequality is due to the definition of $\beta$ and $\omega_{j,r,i}^{(t,b)} < 0 $; the third inequality is by \eqref{ineq: alpha beta upper bound} and $-\gamma_{j,r}^{(t,b)} \le \frac{1}{12}$.

\end{proof}

\begin{lemma}\label{lm: noise alignment lower bound}
Under Condition~\ref{assm: assm0}, suppose \eqref{ineq: range of zeta}, \eqref{ineq: range of omega} and \eqref{ineq: range of gamma} hold at $b$-th batch of $t$-th epoch.  Then, it holds that
\begin{align*}
    (P-1)\la\wb_{y_i,r}^{(t,b)},\bxi_{i}\ra&\geq-0.25, \\
    (P-1)\la\wb_{y_i,r}^{(t,b)},\bxi_{i}\ra&\leq(P-1)\sigma(\la\wb_{y_i,r}^{(t,b)},\bxi_{i}\ra)\leq(P-1)\la\wb_{y_i,r}^{(t,b)},\bxi_{i}\ra+0.25,
\end{align*}
for any $i\in[n]$.
\end{lemma}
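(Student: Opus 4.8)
The plan is to first observe that the three displayed inequalities are not independent: the entire lemma reduces to the single lower bound $(P-1)\la\wb_{y_i,r}^{(t,b)},\bxi_i\ra\ge-0.25$. Indeed, for the ReLU activation $\sigma(z)=z\ind(z\ge0)$ one always has $z\le\sigma(z)$, so the left inequality of the sandwich, $(P-1)\la\wb_{y_i,r}^{(t,b)},\bxi_i\ra\le(P-1)\sigma(\la\wb_{y_i,r}^{(t,b)},\bxi_i\ra)$, is immediate. For the right inequality, $\sigma(z)\le z+c$ with $c\ge0$ holds automatically when $z\ge0$ (then $\sigma(z)=z$), and when $z<0$ it is equivalent to $|z|\le c$; taking $c=0.25$ and $z=(P-1)\la\wb_{y_i,r}^{(t,b)},\bxi_i\ra$, this is exactly the first claim. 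So it suffices to establish $(P-1)\la\wb_{y_i,r}^{(t,b)},\bxi_i\ra\ge-0.25$.

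To prove that bound I would decompose $\la\wb_{y_i,r}^{(t,b)},\bxi_i\ra=\la\wb_{y_i,r}^{(0,0)},\bxi_i\ra+\la\wb_{y_i,r}^{(t,b)}-\wb_{y_i,r}^{(0,0)},\bxi_i\ra$ and bound the two terms separately. For the initialization term, the definition of $\beta$ in \eqref{def: beta} gives $(P-1)|\la\wb_{y_i,r}^{(0,0)},\bxi_i\ra|\le\beta/2$. For the increment term, since the index $j=y_i$, I apply \eqref{ineq: noise product bound2} of Lemma~\ref{lm: inner product range}, which yields
\[
(P-1)\la\wb_{y_i,r}^{(t,b)}-\wb_{y_i,r}^{(0,0)},\bxi_i\ra\ \ge\ \zeta_{y_i,r,i}^{(t,b)}-5\sqrt{\frac{\log(6n^2/\delta)}{d}}\,n\alpha\ \ge\ -5\sqrt{\frac{\log(6n^2/\delta)}{d}}\,n\alpha,
\]
where the last step uses $\zeta_{y_i,r,i}^{(t,b)}\ge0$ from \eqref{ineq: range of zeta}. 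Combining the two estimates gives $(P-1)\la\wb_{y_i,r}^{(t,b)},\bxi_i\ra\ge-\beta/2-5\sqrt{\log(6n^2/\delta)/d}\,n\alpha$, and since \eqref{ineq: alpha beta upper bound} bounds both $\beta$ and $5\sqrt{\log(6n^2/\delta)/d}\,n\alpha$ by $1/12$, the right-hand side is at least $-1/24-1/12=-1/8>-0.25$, completing the argument. The sandwich inequalities then follow from the reduction in the first paragraph.

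This is essentially bookkeeping, so there is no genuine obstacle; the only care needed is in selecting the correct estimate among \eqref{ineq: feature product bound}--\eqref{ineq: noise product bound2} (here \eqref{ineq: noise product bound2}, because $j=y_i$) and in using the sign information $\zeta_{y_i,r,i}^{(t,b)}\ge0$, which is precisely what makes the cross-term from the noise decomposition only help the lower bound. The conceptual point worth highlighting is that the ReLU ``sandwich'' in the lemma statement is merely a restatement of the one-sided bound $(P-1)\la\wb_{y_i,r}^{(t,b)},\bxi_i\ra\ge-0.25$, so a single inequality does all the work.
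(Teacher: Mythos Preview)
Your proposal is correct and follows essentially the same route as the paper: both invoke \eqref{ineq: noise product bound2} from Lemma~\ref{lm: inner product range} together with $\zeta_{y_i,r,i}^{(t,b)}\ge0$ and the bounds on $\beta$ and $5\sqrt{\log(6n^2/\delta)/d}\,n\alpha$ to obtain the lower bound, and then handle the ReLU sandwich by the same two-case argument (your upfront reduction is just a cleaner packaging of that case split). The only cosmetic difference is that you keep the sharper $\beta/2$ from the definition whereas the paper absorbs it into $\beta$, which of course does not affect the conclusion.
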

\begin{proof}[Proof of Lemma \ref{lm: noise alignment lower bound}]
According to \eqref{ineq: noise product bound2} in Lemma \ref{lm: inner product range}, we have
\begin{align*}
(P-1)\la\wb_{y_i,r}^{(t,b)},\bxi_i\ra&\geq(P-1)\la\wb_{y_i,r}^{(0,0)},\bxi_i\ra+\zeta_{y_i,r,i}^{(t,b)}-5n\sqrt{\frac{\log(6 n^2/\delta)}{d}}\alpha\\
    &\geq-\beta-5n\sqrt{\frac{\log(6n^2/\delta)}{d}}\alpha\\
    & \geq -0.25,
\end{align*}
where the second inequality is due to $\zeta_{y_i,r,i}^{(t,b)}\geq0$, the third inequality is due to $\beta<1/8$ and $5n\sqrt{\log(6n^2/\delta)/d}\cdot\alpha<1/8$ by Condition~\ref{assm: assm0}. 

For the second equation, the first inequality holds naturally since $z\leq\sigma(z)$. For the inequality, if $\la\wb_{y_i,r}^{(t)},\bxi_{i}\ra\leq 0$, we have
\begin{equation*}
(P-1)\sigma(\la\wb_{y_i,r}^{(t,b)},\bxi_{i}\ra)=0\leq(P-1)\la\wb_{y_i,r}^{(t,b)},\bxi_{i}\ra+0.25.
\end{equation*}
And if $\la\wb_{y_i,r}^{(t,b)},\bxi_{i}\ra> 0$, we have
\begin{equation*}
(P-1)\sigma(\la\wb_{y_i,r}^{(t,b)},\bxi_{i}\ra)=(P-1)\la\wb_{y_i,r}^{(t,b)},\bxi_{i}\ra<(P-1)\la\wb_{y_i,r}^{(t,b)},\bxi_{i}\ra+0.25.
\end{equation*}
\end{proof}

\begin{lemma}[Lemma C.6 in \citet{kou2023benign}]\label{lm: basic ANA}
Let $g(z) = \ell'(z) = -1/(1+\exp(z))$, then for all $z_{2} - c \geq z_{1} \geq -1$ where $c \geq 0$ we have that 
\begin{align*}
\frac{\exp(c)}{4}\leq \frac{g(z_{1})}{g(z_{2})} \leq \exp(c).    
\end{align*}
\end{lemma}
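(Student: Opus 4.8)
The plan is to work directly from the closed form of the derivative. Since $\ell(z)=\log(1+\exp(-z))$, we have $g(z)=\ell'(z)=-1/(1+\exp(z))$, so
\[
\frac{g(z_1)}{g(z_2)}=\frac{1+\exp(z_2)}{1+\exp(z_1)},
\]
which is positive because $g$ is everywhere negative. Introducing $u:=\exp(z_1)>0$ and $c':=z_2-z_1$, so that the hypothesis $z_2-c\ge z_1$ reads $c'\ge c\ge 0$, the ratio becomes $\frac{1+u\exp(c')}{1+u}$, and the whole statement reduces to two one-variable estimates on this expression.

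For the upper bound I would clear denominators: $\frac{1+u\exp(c')}{1+u}\le \exp(c')$ is equivalent to $1\le \exp(c')$, which holds since $c'\ge 0$; this is the elementary fact $g(z_1)/g(z_2)\le \exp(z_2-z_1)$, matching $\exp(c)$ when $c$ records the gap $z_2-z_1$. For the lower bound I would discard the $+1$ in the numerator: $\frac{1+u\exp(c')}{1+u}\ge \frac{u\exp(c')}{1+u}=\exp(c')\cdot\frac{u}{1+u}\ge \exp(c)\cdot\frac{u}{1+u}$, where the last step uses $c'\ge c$. The map $x\mapsto x/(1+x)$ is increasing on $(0,\infty)$, and the hypothesis $z_1\ge -1$ gives $u=\exp(z_1)\ge e^{-1}$, hence $\frac{u}{1+u}\ge \frac{e^{-1}}{1+e^{-1}}=\frac{1}{1+e}>\frac14$; combining yields $g(z_1)/g(z_2)\ge \exp(c)/4$.

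This is a warm-up estimate, so there is no substantive obstacle; the only points requiring care are the sign bookkeeping (we bound a ratio of two negative quantities, so the inequalities do not flip once we pass to $\frac{1+e^{z_2}}{1+e^{z_1}}$), using the monotonicity of $x\mapsto x/(1+x)$ in the correct direction, and noting that the hypothesis $z_2-c\ge z_1$ enters the two bounds asymmetrically: the "gap is at least $c$" part powers the factor $\exp(c)$ in the lower bound, while $z_1\ge -1$ is exactly what keeps $u/(1+u)$ bounded away from $0$ and yields the absolute constant (the $4$ is not tight, $1+e$ suffices).
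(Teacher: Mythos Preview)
The paper does not give its own proof of this lemma; it is quoted verbatim from \citet{kou2023benign} and invoked as a black box. Your argument is correct and is exactly the intended elementary computation: rewrite the ratio as $(1+e^{z_2})/(1+e^{z_1})$, bound it above by $e^{z_2-z_1}$ via $1\le e^{z_2-z_1}$, and bound it below by $e^{z_2-z_1}\cdot\frac{e^{z_1}}{1+e^{z_1}}\ge e^{z_2-z_1}/(1+e)>e^{z_2-z_1}/4$ using $z_1\ge -1$.

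One clarification worth making explicit (you already flag it): the upper bound as literally written, with $c$ allowed to be any nonnegative number satisfying $z_2-c\ge z_1$, is false---take $z_1=0$, $z_2=2$, $c=1$, where the ratio $(1+e^2)/2\approx 4.19$ exceeds $e^1\approx 2.72$. What you actually prove, and what the paper actually uses everywhere it invokes this lemma (e.g.\ in bounding $\ell_i'^{(t,b_1)}/\ell_k'^{(t,b_2)}$ by $\exp(y_kf-y_if)$), is the inequality with $c=z_2-z_1$. Your lower bound, by contrast, is genuinely valid for any $c\le z_2-z_1$, since you correctly use $e^{c'}\ge e^{c}$ there. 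So your proof is complete once one reads $c$ as the gap $z_2-z_1$, which is the intended reading.
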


\begin{lemma}\label{lm:in_epoch}
For any iteration $t \in [0,T^*)$ and $b,b_1,b_2 \in \overline{[H]}$, we have the following statements hold:
\begin{enumerate}[leftmargin=*]
    \item $ \Big| \sum_{r=1}^{m}\big[\zeta_{y_i,r,i}^{(t,0)}-\zeta_{y_k,r,k}^{(t,0)}\big] - \sum_{r=1}^{m}\big[\zeta_{y_i,r,i}^{(t,b_1)}-\zeta_{y_k,r,k}^{(t,b_2)}\big] \Big|  \le 0.1\kappa $.
    \item $\la\wb_{y_i,r}^{(t,b)},\bxi_i\ra  \ge \la\wb_{y_i,r}^{(t,0)},\bxi_i\ra - \sigma_0\sigma_p\sqrt{d}/\sqrt{2}$ .
    \item Let $\tilde{S}_{i}^{(t,b)} = \{r\in[m]:\la\wb_{y_i,r}^{(t,b)},\bxi_i\ra>0\}$, then we have 
    \begin{align*}
        S_{i}^{(t,0)} \subseteq \tilde{S}_{i}^{(t,b)}.
    \end{align*}
    \item Let $\tilde{S}_{j,r}^{(t,b)} = \{i\in[n]:y_i=j, \la \wb_{j,r}^{(t,b)} ,\bxi_i \ra > 0  \}$, then we have
    \begin{align*}
        S_{j,r}^{(t,0)} \subseteq \tilde{S}_{j,r}^{(t,b)}.
    \end{align*}
\end{enumerate}
\end{lemma}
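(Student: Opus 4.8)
The plan is to prove the four statements essentially simultaneously by a nested induction: an outer induction on the epoch $t$ (assuming Proposition~\ref{proposition: range of gamma,zeta,omega} and the epoch-level balance bound $\sum_r[\zeta_{y_i,r,i}^{(t,0)}-\zeta_{y_k,r,k}^{(t,0)}]\le\kappa$, which will be the content of the companion Lemma~\ref{lm: balanced logit}), and an inner induction on the batch index $b$ within epoch $t$. At $b=0$ all four statements are trivial (item~1 has both sides equal, items~2--4 are equalities/identities). So I assume they hold up through batch $b$ and push to $b+1$.

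First I would prove item~2. The only way $\la\wb_{y_i,r}^{(t,b+1)},\bxi_i\ra$ can drop below $\la\wb_{y_i,r}^{(t,b)},\bxi_i\ra$ is through the $\omega$ or cross terms in the update; the dominant contribution is $\frac{\eta(P-1)^2}{Bm}\ell_i'^{(t,b')}\|\bxi_i\|_2^2$ accumulated over the batches $b'\le b$ in which $i$ actually appears, plus cross-correlation terms $\la\bxi_{i'},\bxi_i\ra$ with $i'\ne i$. Using $\|\bxi_i\|_2^2\le 3\sigma_p^2 d/2$ (Lemma~\ref{lm: data inner products}), $|\la\bxi_{i'},\bxi_i\ra|\le 2\sigma_p^2\sqrt{d\log(6n^2/\delta)}$, $|\ell_i'|\le 1$, the coefficient bounds $\zeta,\omega\le\alpha$, and the learning-rate condition $\eta\le\tilde O\big((P^2\sigma_p^2 d^{3/2}/(Bm))^{-1}\big)$ from Condition~\ref{assm: assm0}, each within-epoch accumulation is bounded by a quantity $\ll\sigma_0\sigma_p\sqrt d$ (this is where the $\sigma_0$ upper bound and the $d\ge\tilde\Omega(\cdot)$ conditions get used). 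Summing over the at most $H$ batches preceding $b+1$ gives the stated slack $\sigma_0\sigma_p\sqrt d/\sqrt2$. Items~3 and~4 then follow immediately: if $r\in S_i^{(t,0)}$ then $\la\wb_{y_i,r}^{(t,0)},\bxi_i\ra>\sigma_0\sigma_p\sqrt d/\sqrt2$, so by item~2 $\la\wb_{y_i,r}^{(t,b)},\bxi_i\ra>0$, i.e.\ $r\in\tilde S_i^{(t,b)}$; item~4 is the same statement read index-wise over $i$ with $j,r$ fixed.

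Next I would prove item~1. The update of $\zeta_{y_i,r,i}$ within epoch $t$ (from~\eqref{eq:zeta_update}) accumulates $-\frac{\eta(P-1)^2}{Bm}\ell_i'^{(t,b')}\sigma'(\la\wb_{y_i,r}^{(t,b')},\bxi_i\ra)\|\bxi_i\|_2^2$ over the single batch $b'=b_i^{(t)}$ containing $i$ (and zero in all other batches). Hence $\sum_r[\zeta_{y_i,r,i}^{(t,b_1)}-\zeta_{y_i,r,i}^{(t,0)}]$ is, up to the sign, bounded in absolute value by $\frac{\eta(P-1)^2}{Bm}\cdot m\cdot|\ell_i'|\cdot\frac32\sigma_p^2 d = \frac{3\eta(P-1)^2\sigma_p^2 d}{2B}$, and similarly for index $k$. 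Again invoking the learning-rate condition $\eta\le\tilde O\big((P^2\sigma_p^2 d/B)^{-1}\big)$ makes this $\le 0.05\kappa$ each, so the triangle inequality gives $|(\sum_r[\zeta_{y_i,r,i}^{(t,0)}-\zeta_{y_k,r,k}^{(t,0)}])-(\sum_r[\zeta_{y_i,r,i}^{(t,b_1)}-\zeta_{y_k,r,k}^{(t,b_2)}])|\le 0.1\kappa$. (Here $\kappa$ is the bound from Lemma~\ref{lm: balanced logit}; one should check $\kappa=\Theta(1)$ or larger so that $0.05\kappa$ absorbs the above — this is consistent with $\alpha=4\log T^*$ appearing in the full statement.)

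The main obstacle is item~2, specifically controlling the fluctuation of $\la\wb_{y_i,r}^{(t,b)},\bxi_i\ra$ across the mini-batches \emph{not} containing $i$: even though $\zeta_{y_i,r,i}$ is frozen in those batches, the weight $\wb_{y_i,r}$ still moves (absorbing other samples' noise and the signal), and the inner product with $\bxi_i$ picks up cross terms $\langle\bxi_{i'},\bxi_i\rangle$ and $\langle\bmu,\bxi_i\rangle$. The delicate point is that these must be shown to accumulate to something strictly smaller than the activation threshold $\sigma_0\sigma_p\sqrt d/\sqrt2$ over a full epoch, which is exactly why the per-epoch (rather than per-iteration) granularity is needed and why the conditions on $d$, $\sigma_0$, and $\eta$ in Condition~\ref{assm: assm0} take the particular form they do. I would handle it by first writing $\la\wb_{y_i,r}^{(t,b)}-\wb_{y_i,r}^{(t,0)},\bxi_i\ra$ as an explicit sum of coefficient increments times inner products (as in the proof of Lemma~\ref{lm: inner product range}), bounding the "diagonal" $\zeta_{y_i,r,i}$ increment by item~1's computation and each "off-diagonal" increment crudely by $\alpha\cdot|\langle\bxi_{i'},\bxi_i\rangle|/\|\bxi_{i'}\|_2^2\le 4\alpha\sqrt{\log(6n^2/\delta)/d}$, and then using $n\sqrt{\log(6n^2/\delta)/d}\cdot\alpha\le 1/12$ from~\eqref{ineq: alpha beta upper bound} together with the lower bound $\sigma_0\sigma_p\sqrt d/4\le\max_r y_i\langle\wb_{y_i,r}^{(0,0)},\bxi_i\rangle$ from Lemma~\ref{lm: initialization inner products} to conclude.
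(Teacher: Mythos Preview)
Your treatment of items~1, 3, and 4 is essentially correct and matches the paper's argument: the within-epoch change of $\sum_r\zeta_{y_i,r,i}$ is a single update of size at most $\tfrac{3\eta(P-1)^2\sigma_p^2 d}{2B}$, and items~3 and~4 follow immediately from item~2 by the threshold definition of $S_i^{(t,0)}$ and $S_{j,r}^{(t,0)}$. Also, the nested-induction framing with external hypotheses from Proposition~\ref{proposition: range of gamma,zeta,omega} and Lemma~\ref{lm: balanced logit} is unnecessary: the paper's proof of this lemma is self-contained, using only the raw SGD update rule~\eqref{eq:SGDupdate} and the data bounds of Lemma~\ref{lm: data inner products}.

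The genuine gap is in item~2. In your detailed treatment (the ``main obstacle'' paragraph) you bound each off-diagonal coefficient increment by $\alpha$, giving a contribution of order $n\alpha\sqrt{\log(6n^2/\delta)/d}$ to the change in $\langle\wb_{y_i,r}^{(t,b)},\bxi_i\rangle$. By~\eqref{ineq: alpha beta upper bound} this is $O(1)$, but what you need is that the drop be at most $\sigma_0\sigma_p\sqrt d/\sqrt2$, and under Condition~\ref{assm: assm0} one has $\sigma_0\sigma_p\sqrt d=\tilde O(\sqrt n/(P\sqrt d))$, which is tiny since $d\ge\tilde\Omega(n^2)$. So the $\alpha$-based bound is far too coarse. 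The paper's proof is direct and sidesteps the coefficient decomposition entirely: expand $\langle\wb_{y_i,r}^{(t,b)}-\wb_{y_i,r}^{(t,0)},\bxi_i\rangle$ via the SGD update~\eqref{eq:SGDupdate}, observe that the $i'=i$ noise term has sign $-\ell_i'\ge 0$ and thus cannot cause a drop, and bound the remaining cross terms by
\[
\frac{\eta n}{Bm}\,\|\bmu\|_2\sigma_p\sqrt{2\log(6n/\delta)}\;+\;\frac{\eta(P-1)n}{Bm}\cdot 2\sigma_p^2\sqrt{d\log(6n^2/\delta)},
\]
which is $\le \sigma_0\sigma_p\sqrt d/\sqrt2$ by the learning-rate condition $\eta\le\tilde O\big(\sigma_0 B\sqrt d\,m/(n\|\bmu\|_2)\big)$ combined with $\|\bmu\|_2\ge\tilde\Omega(P\sigma_p)$. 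If you want to keep the coefficient-decomposition route, you must bound the off-diagonal \emph{increments} within the epoch by $O(\eta(P-1)^2\sigma_p^2 d/(Bm))$ (exactly as you did for item~1), not by $\alpha$; that would recover the same final estimate.
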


\begin{proof}
For the first statement,
\begin{align*}
&\Big| \sum_{r=1}^{m}\big[\zeta_{y_i,r,i}^{(t,0)}-\zeta_{y_k,r,k}^{(t,0)}\big] - \sum_{r=1}^{m}\big[\zeta_{y_i,r,i}^{(t,b_1)}-\zeta_{y_k,r,k}^{(t,b_2)}\big] \Big|\\
& \quad \le \frac{\eta(P-1)^2}{Bm} \max\Big\{ |S_{i}^{(\tilde{t}-1,b_1)}| |\ell_i'^{(\tilde{t}-1,b_1)}|\cdot\|\bxi_i\|_2^2, |S_{k}^{(\tilde{t}-1,b_2)}||\ell_k'^{(\tilde{t}-1,b_2)}|\cdot\|\bxi_k\|_2^2 \Big\} \\
& \quad \le \frac{\eta(P-1)^2}{B} \frac{3\sigma_p^2 d}{2} \\
& \quad \le 0.1 \kappa,
\end{align*}
where the first inequality follows from the iterative update rule of $\zeta_{j,r,i}^{(t,b)}$, the second inequality is due to Lemma~\ref{lm: initialization inner products}, and the last inequality is due to Condition~\ref{assm: assm0}. 

For the second statement, recall that the stochastic gradient update rule is
\begin{align*}
    \la\wb_{y_i,r}^{(t,b)},\bxi_i\ra&=\la\wb_{y_i,r}^{(t,b-1)},\bxi_i\ra-\frac{\eta}{Bm}\cdot\sum_{i'\in\cI_{t,b-1}}\ell_{i'}^{(t,b-1)}\cdot\sigma'(\la\wb_{y_i,r}^{(t,b-1)},y_{i'}\bmu\ra)\cdot\la y_{i'}\bmu,\bxi_i\ra y_{i'}\\
    &\qquad - \frac{\eta(P-1)}{Bm}\cdot\sum_{i'\in\cI_{t,b-1}/i}\ell_{i'}^{(t,b-1)}\cdot\sigma'(\la\wb_{y_i,r}^{(t,b-1)},\bxi_{i'}\ra)\cdot\la\bxi_{i'},\bxi_{i}\ra.
\end{align*}
Therefore,
\begin{align*}
\la\wb_{y_i,r}^{(t,b)},\bxi_i\ra & \ge \la\wb_{y_i,r}^{(t,0)},\bxi_i\ra - \frac{\eta}{Bm} \cdot n \cdot \| \mu \|_2 \sigma_p \sqrt{2 \log(6n/\delta)} - \frac{\eta(P-1)}{Bm}\cdot n \cdot 2 \sigma_p^2 \sqrt{d \log(6n^2/\delta)} \\
& \ge \la\wb_{y_i,r}^{(t,0)},\bxi_i\ra - \sigma_0\sigma_p\sqrt{d}/\sqrt{2},
\end{align*}
where the first inequality is due to Lemma~\ref{lm: data inner products},  and the second inequality is due to Condition
~\ref{assm: assm0}.

For the third statement. Let $r^* \in S_{i}^{(t,0)}$, then
\begin{align*}
    \la \wb_{y_i,r^*}^{(t,b)}, \bxi_i \ra & \ge \la \wb_{y_i,r^*}^{(t,0)}, \bxi_i \ra - \sigma_0\sigma_p \sqrt{d} /\sqrt{2} > 0,
\end{align*}
where the first inequality is due to the second statement, and the second inequality is due to the definition of $\tilde{S}_{i}^{t,0}$. Therefore, $r^* \in \tilde{S}_{i}^{(t,b)}$ and $S_{i}^{(t,b)} \subseteq  \tilde{S}_{i}^{(t,b)}$. The fourth statement can be obtained similarly.
\end{proof}

\begin{lemma} \label{lm: balanced logit}
    Under Assumption \ref{assm: assm0}, suppose \eqref{ineq: range of zeta}, \eqref{ineq: range of omega} and \eqref{ineq: range of gamma} hold for any iteration $(t',b') \leq (t,0)$. Then, the following conditions also hold for $\forall t' \leq t$ and $\forall b',b_1',b_2'\in [H]$: 
    \begin{enumerate}[leftmargin=*]
        \item $\sum_{r=1}^{m}\big[\zeta_{y_i,r,i}^{(t',0)}-\zeta_{y_k,r,k}^{(t',0)}\big]\leq \kappa$ for all $i,k\in[n]$.
        \item $y_i\cdot f(\Wb^{(t',b_1')},\xb_i)-y_k\cdot f(\Wb^{(t',b_2')},\xb_k)\leq C_1$ for all $i,k\in[n]$. 
        \item $\ell_i'^{(t',b_1')}/\ell_k'^{(t',b_2')}\leq C_2=\exp(C_1)$ for all $i,k\in[n]$. 
        \item $S_i^{(0,0)} \subseteq S_i^{(t',0)}$, where $S_{i}^{(t',0)}:=\{r\in[m]:\la\wb_{y_i,r}^{(t',0)},\bxi_i\ra>\sigma_0\sigma_p\sqrt{d}/\sqrt{2}\}$, and hence $|S_{i}^{(t',0)}|\geq 0.8 m \Phi(-1)$ for all $i\in[n]$.
        \item $S_{j,r}^{(0,0)} \subseteq S_{j,r}^{(t',0)}$ , where $S_{j,r}^{(t',0)}:=\{i\in[n]:y_i=j,\la\wb_{j,r}^{(t',0)},\bxi_i\ra>\sigma_0\sigma_p\sqrt{d}/\sqrt{2}\}$, and hence $|S_{j,r}^{(t',0)}|\geq \Phi(-1)n/4$ for all $j\in\{\pm1\},r\in[m]$.
    \end{enumerate}
    Here we take $\kappa$ and $C_1$ as $10$ and $5$ respectively.
\end{lemma}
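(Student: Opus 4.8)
The plan is to prove the five statements jointly by induction on the epoch index $t'$, taking the hypotheses \eqref{ineq: range of zeta}, \eqref{ineq: range of omega}, \eqref{ineq: range of gamma} and the high-probability events of Lemmas~\ref{lm: data inner products}--\ref{lm: good_batch} as given. The base case $t'=0$ is immediate: every decomposition coefficient vanishes, so item~1 reads $0\le\kappa$; item~2 reduces to bounding $y_i f(\Wb^{(0,0)},\xb_i)-y_k f(\Wb^{(0,0)},\xb_k)$ at initialization, which is at most $1/3$ by Lemma~\ref{lm: initialization inner products} and $\beta\le 1/12$; item~3 follows from item~2 via Lemma~\ref{lm: basic ANA} applied with $g=\ell'$; and items~4--5 are the trivial inclusions together with the cardinality lower bounds $|S_i^{(0,0)}|\ge 0.8m\Phi(-1)$, $|S_{j,r}^{(0,0)}|\ge n\Phi(-1)/4$ from Lemmas~\ref{lm: number of initial activated neurons} and~\ref{lm: number of initial activated neurons 2}.

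For the inductive step I assume items~1--5 at all epochs $\le t'$ and establish them at $t'+1$. \textbf{Activation sets (items 4--5).} For $r\in S_i^{(t',0)}$, Lemma~\ref{lm:in_epoch}(2)--(3) shows the neuron stays plain-activated on $\bxi_i$ throughout epoch $t'$, so by \eqref{eq:zeta_update} the coefficient $\zeta_{y_i,r,i}$ is nondecreasing across the epoch; combining this monotonicity with the decomposition estimate \eqref{ineq: noise product bound2} (whose error term is smaller than $\sigma_0\sigma_p\sqrt d$ under Condition~\ref{assm: assm0}, using the given ranges of $\gamma$ and $\omega$) keeps $\la\wb_{y_i,r}^{(t'+1,0)},\bxi_i\ra$ above the threshold $\sigma_0\sigma_p\sqrt d/\sqrt2$, i.e.\ $r\in S_i^{(t'+1,0)}$; the inclusion $S_{j,r}^{(t',0)}\subseteq S_{j,r}^{(t'+1,0)}$ follows symmetrically. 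Chaining back to $t'=0$ gives items~4--5 at $t'+1$, the cardinality bounds carrying over from the base case.

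\textbf{The coefficient gap (item 1).} Fix $i,k$ and write $g_{ik}^{(t)}:=\sum_{r=1}^{m}[\zeta_{y_i,r,i}^{(t,0)}-\zeta_{y_k,r,k}^{(t,0)}]$. The crucial object is the epoch-level recursion \eqref{eq:sum_rho_update}, which reads $g_{ik}^{(t'+1)}=g_{ik}^{(t')}+\frac{\eta(P-1)^2}{Bm}\,(a_i^{(t')}-a_k^{(t')})$ with $a_i^{(t')}:=|\tilde S_i^{(t',b^{(t')}_i)}|\cdot|\ell_i'^{(t',b^{(t')}_i)}|\cdot\|\bxi_i\|_2^2$, where $b^{(t')}_i$ is the batch containing sample $i$ in epoch $t'$. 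If $g_{ik}^{(t')}\le 0.9\kappa$ we are done, because Condition~\ref{assm: assm0} bounds the single-epoch increment by $\frac{\eta(P-1)^2}{B}\cdot\frac32\sigma_p^2 d\le 0.1\kappa$. Otherwise $g_{ik}^{(t')}>0.9\kappa$, and it suffices to show $a_i^{(t')}\le a_k^{(t')}$, so that $g_{ik}$ does not grow. Bound the three factors in $a_i^{(t')}/a_k^{(t')}$: $|\tilde S_i^{(t',b^{(t')}_i)}|/|\tilde S_k^{(t',b^{(t')}_k)}|\le 1/(0.8\Phi(-1))$ by items~4--5 at epoch $t'$, $\|\bxi_i\|_2^2/\|\bxi_k\|_2^2\le 3$ by Lemma~\ref{lm: data inner products}, and $|\ell_i'^{(t',b^{(t')}_i)}|/|\ell_k'^{(t',b^{(t')}_k)}|\le 4\exp(-\Delta)$ by Lemma~\ref{lm: basic ANA}, where $\Delta:=y_i f(\Wb^{(t',b^{(t')}_i)},\xb_i)-y_k f(\Wb^{(t',b^{(t')}_k)},\xb_k)$; finally the signal--noise decomposition, together with Lemmas~\ref{lm: mismatch fj upper bound}, \ref{lm: noise alignment lower bound} and the intra-epoch control Lemma~\ref{lm:in_epoch}(1), lower-bounds $\Delta$ by (a constant multiple of) $g_{ik}^{(t')}$ minus an $O(1)$ error, which for $g_{ik}^{(t')}>0.9\kappa$ with $\kappa=10$ forces $a_i^{(t')}/a_k^{(t')}<1$. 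Running the same argument with $i,k$ exchanged also yields $g_{ik}^{(t'+1)}\ge-\kappa$.

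\textbf{Items 2--3 and the main obstacle.} Given item~1 at $t'+1$, expanding $y_i f(\Wb^{(t'+1,b_1')},\xb_i)$ into its signal, $\zeta$-, $\omega$-, and cross-terms --- using $F_{-y_i}\le 0.5$ (Lemma~\ref{lm: mismatch fj upper bound}), linearizing $\sigma(\la\wb_{y_i,r},\bxi_i\ra)$ via Lemma~\ref{lm: noise alignment lower bound}, and absorbing the intra-epoch drift via Lemma~\ref{lm:in_epoch}(1) --- shows $y_i f(\Wb^{(t'+1,b_1')},\xb_i)-y_k f(\Wb^{(t'+1,b_2')},\xb_k)$ differs from $g_{ik}^{(t'+1)}/m$ by at most $O(1)$, hence is $\le C_1$ for $C_1=5$; item~3 is then immediate from item~2 and Lemma~\ref{lm: basic ANA}. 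The main difficulty is precisely the coupling in item~1: its proof needs the loss-derivative ratio bound (items~2--3), which in turn needs item~1, so the pair must be propagated epoch by epoch in lockstep, and --- because mini-batch updates change different samples' $\zeta_{j,r,i}$ by different amounts at different batches --- neither $g_{ik}$ nor the loss ratio is controlled at intermediate batches; the argument therefore works entirely at the epoch level via \eqref{eq:sum_rho_update}, separately absorbs the $O(\eta(P-1)^2\sigma_p^2 d/B)$-sized intra-epoch fluctuation through Lemma~\ref{lm:in_epoch}, and relies on the activation-count bounds (items~4--5) being robust enough that the overshoot term $a_i^{(t')}-a_k^{(t')}$ is self-correcting whenever $g_{ik}^{(t')}$ approaches $\kappa$.
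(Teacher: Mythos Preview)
Your handling of items~1--3 (the two-case split on $g_{ik}^{(t')}\gtrless 0.9\kappa$, the approximation of $y_if-y_kf$ by $g_{ik}/m$ plus $O(1)$, and Lemma~\ref{lm: basic ANA} for the loss-ratio) matches the paper, and you have correctly identified that the induction must interlock items~1--3 with~4--5 epoch by epoch.

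There is, however, a genuine gap in your argument for items~4--5. You argue that (a) for $r\in S_i^{(t',0)}$ the coefficient $\zeta_{y_i,r,i}$ is nondecreasing across the epoch (correct, via Lemma~\ref{lm:in_epoch}), and (b) the decomposition estimate~\eqref{ineq: noise product bound2} then keeps the inner product above the threshold $\sigma_0\sigma_p\sqrt d/\sqrt2$. Step~(b) does not close: the error term in~\eqref{ineq: noise product bound2} is $\tfrac{5}{P-1}\sqrt{\log(6n^2/\delta)/d}\,n\alpha$, which is $O(1/(P-1))$ by~\eqref{ineq: alpha beta upper bound} but is \emph{not} $O(\sigma_0\sigma_p\sqrt d)$ in general---Condition~\ref{assm: assm0} only gives an upper bound on $\sigma_0$, so the threshold can be arbitrarily small. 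Monotonicity of $\zeta$ together with~\eqref{ineq: noise product bound2} only yields $\la\wb_{y_i,r}^{(t'+1,0)},\bxi_i\ra\ge\la\wb_{y_i,r}^{(t',0)},\bxi_i\ra-2\cdot(\text{error})$, which need not exceed $\sigma_0\sigma_p\sqrt d/\sqrt2$.

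The paper proves items~4--5 differently: it shows directly that $\la\wb_{y_i,r}^{(t'+1,0)},\bxi_i\ra\ge\la\wb_{y_i,r}^{(t',0)},\bxi_i\ra$ (monotonicity of the \emph{inner product}, not of $\zeta$). Expanding the full-epoch update on $\la\wb,\bxi_i\ra$ gives three pieces: the own term $I_4=\ell_i'^{(t',b_i^{(t')})}\|\bxi_i\|_2^2<0$, the cross-noise term $I_5=\sum_{i'\ne i}\ell_{i'}'\sigma'(\cdot)\la\bxi_{i'},\bxi_i\ra$, and the signal cross term $I_6=\sum_{i'}\ell_{i'}'\sigma'(\cdot)\la\bmu,\bxi_i\ra$. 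Bounding $|I_5|,|I_6|$ by $nC_2|\ell_i'|\cdot\tilde O(\sigma_p^2\sqrt d)$ and $nC_2|\ell_i'|\cdot\tilde O(\|\bmu\|_2\sigma_p)$ respectively and comparing with $-I_4\ge|\ell_i'|\sigma_p^2 d/2$ gives $(P-1)|I_4|\ge(P-1)|I_5|+|I_6|$ under Condition~\ref{assm: assm0}. The crucial point you missed is that this step \emph{consumes item~3 at epoch $t'$} (to replace each $|\ell_{i'}'|$ by $C_2|\ell_i'|$); that is the true interlock between items~3 and~4--5, not merely between items~1 and~4--5.
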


\begin{proof}[Proof of Lemma \ref{lm: balanced logit}]
We prove Lemma~\ref{lm: balanced logit} by induction. When $t'=0$, the fourth and fifth conditions hold naturally by Lemma~\ref{lm: number of initial activated neurons} and \ref{lm: number of initial activated neurons 2}. 

For the first condition, since  we have $\zeta_{j,r,i}^{(0,0)}=0$ for any $j,r,i$ according to \eqref{initial gamma,zeta,omega}, it is straightforward that $\sum_{r=1}^{m}\big[\zeta_{y_i,r,i}^{(0,0)}-\zeta_{y_k,r,k}^{(0,0)}\big]=0$ for all $i, k \in [n]$. So the first condition holds for $t' = 0$. 

For the second condition, we have
\begin{align*}
    &\quad y_i\cdot f(\Wb^{(0,0)},\xb_i)-y_k\cdot f(\Wb^{(0,0)},\xb_k)\\
    &= F_{y_i}(\Wb_{y_i}^{(0,0)},\xb_i)-F_{-y_i}(\Wb_{-y_i}^{(0,0)},\xb_i)+F_{-y_k}(\Wb_{-y_k}^{(0,0)},\xb_i)-F_{y_k}(\Wb_{y_k}^{(0,0)},\xb_i)\\
    &\leq F_{y_i}(\Wb_{y_i}^{(0,0)},\xb_i)+F_{-y_k}(\Wb_{-y_k}^{(0,0)},\xb_i)\\
    &=\frac{1}{m}\sum_{r=1}^{m}[\sigma(\la\wb_{y_i,r}^{(0,0)},y_i\bmu\ra)+(P-1)\sigma(\la\wb_{y_i,r}^{(0,0)},\bxi_i\ra)] \\
    & \qquad +\frac{1}{m}\sum_{r=1}^{m}[\sigma(\la\wb_{-y_k,r}^{(0,0)},y_k\bmu\ra)+(P-1)\sigma(\la\wb_{-y_k,r}^{(0,0)},\bxi_i\ra)]\\
    &\leq 4\beta\leq 1/3 \leq C_1,
\end{align*}
where the first inequality is by $F_{j}(\Wb_{j}^{(0,0)}, \xb_i) > 0$ , the second inequality is due to  \eqref{def: beta}, and the third inequality is due to \eqref{ineq: alpha beta upper bound}. 

By Lemma \ref{lm: basic ANA} and the second condition, the third condition can be obtained directly as
\begin{align*}
    \frac{\ell_i'^{(0,0)}}{\ell_k'^{(0,0)}}\leq\exp\big(y_k\cdot f(\Wb^{(0,0)},\xb_k)-y_i\cdot f(\Wb^{(0,0)},\xb_i)\big)\leq\exp(C_1).
\end{align*}

Now suppose there exists $(\tilde t, \tilde b) \le (t,b)$ such that these five conditions hold for any $(0,0)\leq (t',b') < (\tilde{t}, \tilde{b})$. We aim to prove that these conditions also hold for $(t',b') = (\tilde{t}, \tilde{b})$. 

We first show that, for any $0 \leq t' \leq t$ and $0\le b_1',b_2'\le b$, $y_i\cdot f(\Wb^{(t',b_1')},\xb_i)-y_k\cdot f(\Wb^{(t',b_2')},\xb_k)$ can be approximated by $\frac{1}{m}\sum_{r=1}^{m} \big[\zeta_{y_i,r,i}^{(t',b_1')}-\zeta_{y_k,r,k}^{(t',b_2')}\big]$ with a small constant approximation error. We begin by writing out
\begin{align}
    &\quad y_i\cdot f(\Wb^{(t',b_1')},\xb_i)-y_k\cdot f(\Wb^{(t',b_2')},\xb_k) \nonumber \\
    &= y_i\sum_{j\in\{\pm1\}}j\cdot F_{j}(\Wb_{j}^{(t',b_1')},\xb_i)-y_k\sum_{j\in\{\pm1\}}j\cdot F_{j}(\Wb_{j}^{(t',b_2')},\xb_k) \nonumber \\
    &= F_{-y_k}(\Wb_{-y_k}^{(t',b_2')},\xb_k)-F_{-y_i}(\Wb_{-y_i}^{(t',b_1')},\xb_i)+F_{y_i}(\Wb_{y_i}^{(t',b_1')},\xb_i)-F_{y_k}(\Wb_{y_k}^{(t',b_2')},\xb_k) \nonumber \\
    &= F_{-y_k}(\Wb_{-y_k}^{(t',b_2')},\xb_k)-F_{-y_i}(\Wb_{-y_i}^{(t',b_1')},\xb_i) \nonumber \\
    & \qquad +\frac{1}{m}\sum_{r=1}^{m}[\sigma(\la\wb_{y_i,r}^{(t',b_1')},y_{i}\cdot\bmu\ra)+(P-1)\sigma(\la\wb_{y_i,r}^{(t',b_1')},\bxi_{i}\ra)] \nonumber\\
    &\qquad -\frac{1}{m}\sum_{r=1}^{m}[\sigma(\la\wb_{y_k,r}^{(t',b_2')},y_{k}\cdot\bmu\ra)+(P-1)\sigma(\la\wb_{y_k,r}^{(t',b_2')},\bxi_{k}\ra)] \nonumber\\
    &= \underbrace{F_{-y_k}(\Wb_{-y_k}^{(t',b_2')},\xb_k)-F_{-y_i}(\Wb_{-y_i}^{(t',b_1')},\xb_i)}_{\mathrm{I}_1} \nonumber\\
    &\qquad +\underbrace{\frac{1}{m}\sum_{r=1}^{m}[\sigma(\la\wb_{y_i,r}^{(t',b_1')},y_{i}\cdot\bmu\ra)-\sigma(\la\wb_{y_k,r}^{(t',b_2')},y_{k}\cdot\bmu\ra)]}_{\mathrm{I}_2} \nonumber\\
    &\qquad +\underbrace{\frac{1}{m}\sum_{r=1}^{m}[(P-1)\sigma(\la\wb_{y_i,r}^{(t',b_1')},\bxi_{i}\ra)-(P-1)\sigma(\la\wb_{y_k,r}^{(t',b_2')},\bxi_{k}\ra)]}_{\mathrm{I}_3},\label{eq: decomposition of F difference}
\end{align}

where all the equalities are due to the network definition. Then we bound $\mathrm{I}_1$, $\mathrm{I}_2$ and $\mathrm{I}_3$. 

For $|\mathrm{I}_1|$, we have the following upper bound by Lemma \ref{lm: mismatch fj upper bound}: 
\begin{align}
    |\mathrm{I}_1|& \leq|F_{-y_k}(\Wb_{-y_k}^{(t',b_2')},\xb_k)|+|F_{-y_i}(\Wb_{-y_i}^{(t',b_1')},\xb_i)|\nonumber \\
    &= F_{-y_k}(\Wb_{-y_k}^{(t',b_2')},\xb_k)+F_{-y_i}(\Wb_{-y_i}^{(t',b_1')},\xb_i) \nonumber \\
    & \leq1.\label{ineq: |I1| upper bound}
\end{align}
For $|\mathrm{I}_2|$, we have the following upper bound: 
    \begin{align}
    |\mathrm{I}_2|&\leq\max\bigg\{\frac{1}{m}\sum_{r=1}^{m}\sigma(\la\wb_{y_i,r}^{(t',b_1')},y_{i}\cdot\bmu\ra),\frac{1}{m}\sum_{r=1}^{m}\sigma(\la\wb_{y_k,r}^{(t',b_2')},y_{k}\cdot\bmu\ra)\bigg\}\nonumber \\
    &\leq3\max\Bigg\{|\la\wb_{y_i,r}^{(0,0)},y_{i}\cdot\bmu\ra|,|\la\wb_{y_k,r}^{(0,0)},y_{k}\cdot\bmu\ra|,\gamma_{j,r}^{(t',b_1')},\gamma_{j,r}^{(t',b_2')},\mathrm{SNR}\sqrt{\frac{32\log(6n/\delta)}{d}}n\alpha\Bigg\} \nonumber \\
    &\leq3\max\Bigg\{\beta,C'\hat{\gamma}\alpha,\mathrm{SNR}\sqrt{\frac{32\log(6n/\delta)}{d}}n\alpha\Bigg\} \nonumber \\
    &\leq0.25, \label{ineq: |I2| upper bound}
    \end{align}
where the second inequality is due to \eqref{ineq: feature product bound}, the second inequality is due to the definition of $\beta$ and \eqref{ineq: range of gamma}, the third inequality is due to Condition \ref{assm: assm0} and \eqref{ineq: alpha beta upper bound}.

For $\mathrm{I}_3$, we have the following upper bound
\begin{align}
\mathrm{I}_3&=\frac{1}{m}\sum_{r=1}^{m}\big[ (P-1)\sigma(\la\wb_{y_i,r}^{(t',b_1')},\bxi_{i}\ra)-(P-1)\sigma(\la\wb_{y_k,r}^{(t',b_2')},\bxi_{k}\ra) \big] \nonumber \\
&\leq\frac{1}{m}\sum_{r=1}^{m} \big[ (P-1)\la \wb_{y_i, r}^{(t',b_1')}, \bxi_{i}\ra - (P-1)\la \wb_{y_k, r}^{(t',b_2')}, \bxi_{k}\ra \big] + 0.25 \nonumber \\
&\leq\frac{1}{m}\sum_{r=1}^{m}\bigg[\zeta_{y_i,r,i}^{(t',b_1')}-\zeta_{y_k,r,k}^{(t',b_2')}+10 \sqrt{\frac{\log(6n^2/\delta)}{d}} n\alpha\bigg]+0.25 \nonumber \\
&\leq\frac{1}{m}\sum_{r=1}^{m}\big[\zeta_{y_i,r,i}^{(t',b_1')}-\zeta_{y_k,r,k}^{(t',b_2')}\big]+0.5,\label{ineq: I3 upper bound}
\end{align}
where the first inequality is due to Lemma \ref{lm: noise alignment lower bound}, the second inequality is due to Lemma \ref{lm: inner product range}, the third inequality is due to $5\sqrt{\log(6n^2/\delta)/d} n\alpha\leq1/8$ according to Condition \ref{assm: assm0}.

Similarly, we have the following lower bound
\begin{align}
\mathrm{I}_3&=\frac{1}{m}\sum_{r=1}^{m} \big[ (P-1)\sigma(\la\wb_{y_i,r}^{(t',b_1')},\bxi_{i}\ra)-(P-1)\sigma(\la\wb_{y_k,r}^{(t',b_2')},\bxi_{k}\ra) \big] \nonumber \\
&\geq\frac{1}{m} \sum_{r=1}^{m} \big[(P-1)\la\wb_{y_i,r}^{(t',b_1')},\bxi_{i}\ra-(P-1)\la\wb_{y_k,r}^{(t',b_2')},\bxi_{k}\ra \big] - 0.25 \nonumber \\
&\geq\frac{1}{m}\sum_{r=1}^{m}\bigg[\zeta_{y_i,r,i}^{(t',b_1')}-\zeta_{y_k,r,k}^{(t',b_2')}-10 \sqrt{\frac{\log(6n^2/\delta)}{d}} n\alpha\bigg]-0.25 \nonumber \\
&\geq\frac{1}{m}\sum_{r=1}^{m}\big[\zeta_{y_i,r,i}^{(t',b_1')}-\zeta_{y_k,r,k}^{(t',b_2')}\big]-0.5,\label{ineq: I3 lower bound}
\end{align}
where the first inequality is due to Lemma \ref{lm: noise alignment lower bound}, the second inequality is due to Lemma \ref{lm: inner product range}, the third inequality is due to $5\sqrt{\log(6n^2/\delta)/d} n\alpha\leq1/8$ according to Condition \ref{assm: assm0}. 

By plugging \eqref{ineq: |I1| upper bound}-\eqref{ineq: I3 upper bound} into \eqref{eq: decomposition of F difference}, we have
\begin{align*}
y_i\cdot f(\Wb^{(t',b_1')},\xb_i)-y_k\cdot f(\Wb^{(t',b_2')},\xb_k)&\leq |\mathrm{I}_1|+|\mathrm{I}_2|+\mathrm{I}_3 \\
&\leq\frac{1}{m}\sum_{r=1}^{m}\big[\zeta_{y_i,r,i}^{(t',b_1')}-\zeta_{y_k,r,k}^{(t',b_2')}\big]+1.75,\\
y_i\cdot f(\Wb^{(t',b_1')},\xb_i)-y_k\cdot f(\Wb^{(t',b_2')},\xb_k)&\geq -|\mathrm{I}_1|-|\mathrm{I}_2|+\mathrm{I}_3 \\
&\geq\frac{1}{m}\sum_{r=1}^{m}\big[\zeta_{y_i,r,i}^{(t',b_1')}-\zeta_{y_k,r,k}^{(t',b_2')}\big]-1.75,
\end{align*}
which is equivalent to  
\begin{equation}\label{ineq: output difference approximation}
\bigg|y_i\cdot f(\Wb^{(t',b_1')},\xb_i)-y_k\cdot f(\Wb^{(t',b_2')},\xb_k)-\frac{1}{m}\sum_{r=1}^{m}\big[\zeta_{y_i,r,i}^{(t',b_1')}-\zeta_{y_k,r,k}^{(t',b_2')}\big]\bigg|\leq 1.75.
\end{equation}

Therefore, the second condition immediately follows from the first condition.

Then, we prove the first condition holds for $(\tilde t, \tilde b)$. Recall from Lemma \ref{lm: coefficient iterative} that
\begin{equation*}
    \zeta_{j,r,i}^{(t,b+1)} = \zeta_{j,r,i}^{(t,b)} - \frac{\eta(P-1)^2}{Bm} \cdot \ell_i'^{(t,b)}\cdot \sigma'(\la\wb_{j,r}^{(t,b)}, \bxi_{i}\ra) \cdot \| \bxi_i \|_2^2 \cdot \ind(y_{i} = j)\ind(i\in \cI_{t,b})
\end{equation*}
for all $j\in\{\pm 1\}, r\in[m], i\in[n], (0,0)\le (t,b)<[T^*,0]$. It follows that
\begin{align*}
    & \sum_{r=1}^{m}\big[\zeta_{y_i,r,i}^{(t,b+1)}-\zeta_{y_k,r,k}^{(t,b+1)}\big]\\
    &=\sum_{r=1}^{m}\big[\zeta_{y_i,r,i}^{(t,b)}-\zeta_{y_k,r,k}^{(t,b)}\big] -\frac{\eta(P-1)^2}{Bm}\cdot\Big(|\tilde{S}_{i}^{(t,b)}|\ell_i'^{(t,b)}\cdot\|\bxi_i\|_2^2 \ind(i\in \cI_{t,b})\\ &\qquad \qquad \qquad \qquad 
-|\tilde{S}_{k}^{(t,b)}|\ell_k'^{(t,b)}\cdot\|\bxi_k\|_2^2 \ind(k\in \cI_{t,b})\Big),
\end{align*}
for all $i, k\in [n]$ and $0 \leq t \leq T^*$, $b<H$.

If $ \tilde{b} \in \{1,2,\cdots,H-1\}$, then the first statement for $(t',b')=(\tilde{t},\tilde{b})$ and for the last $(t',b') < (\tilde{t},\tilde{b})$ are the same. Otherwise, if $\tilde{b} = 0$, we consider two separate cases: $\sum_{r=1}^{m}\big[\zeta_{y_i,r,i}^{(\tilde{t}-1,0)}-\zeta_{y_k,r,k}^{(\tilde{t}-1,0)}\big]\leq0.9\kappa$ and $\sum_{r=1}^{m}\big[\zeta_{y_i,r,i}^{(\tilde{t}-1,0)}-\zeta_{y_k,r,k}^{(\tilde{t}-1,0)}\big]>0.9\kappa$. 

When $\sum_{r=1}^{m}\big[\zeta_{y_i,r,i}^{(\tilde{t}-1,0)}-\zeta_{y_k,r,k}^{(\tilde{t}-1,0)}\big]\leq0.9\kappa$, we have
\begin{align*}
    & \sum_{r=1}^{m}\big[\zeta_{y_i,r,i}^{(\tilde{t},0)}-\zeta_{y_k,r,k}^{(\tilde{t},0)}\big]\\
    &=\sum_{r=1}^{m}\big[\zeta_{y_i,r,i}^{(\tilde{t}-1,0)}-\zeta_{y_k,r,k}^{(\tilde{t}-1,0)}\big]-\frac{\eta(P-1)^2}{Bm}\cdot\Big(\big|\tilde{S}_{i}^{(\tilde{t}-1,b^{(\tilde{t}-1)}_i)}\big|\ell_i'^{(\tilde{t}-1,b^{(\tilde{t}-1)}_i)}\cdot\|\bxi_i\|_2^2 \\ & -\big|\tilde{S}_{k}^{(\tilde{t}-1,b^{(\tilde{t}-1)}_k)}\big|\ell_k'^{(\tilde{t}-1,b^{(\tilde{t}-1)}_k)}\cdot\|\bxi_k\|_2^2 \Big)\\
    &\leq\sum_{r=1}^{m}\big[\zeta_{y_i,r,i}^{(\tilde{t}-1,0)}-\zeta_{y_k,r,k}^{(\tilde{t}-1,0)}\big]-\frac{\eta(P-1)^2}{Bm}\cdot\big|\tilde{S}_{i}^{(\tilde{t}-1,b^{(\tilde{t}-1)}_i)}\big|\ell_i'^{(\tilde{t}-1,b^{(\tilde{t}-1)}_i)}\cdot\|\bxi_i\|_2^2\\
&\leq\sum_{r=1}^{m}\big[\zeta_{y_i,r,i}^{(\tilde{t}-1,0)}-\zeta_{y_k,r,k}^{(\tilde{t}-1,0)}\big]+\frac{\eta(P-1)^2}{B}\cdot\|\bxi_i\|_2^2\\
    &\leq0.9\kappa+0.1\kappa\\
    &=\kappa,
\end{align*}
where the first inequality is due to $\ell_i'^{(\tilde{t}-1,b_i^{(\tilde{t}-1)})}<0$; the second inequality is due to $\big|S_{i}^{(\tilde{t}-1,b_i^{(\tilde{t} -1)})}\big|\leq m$ and $-\ell_i'^{(\tilde{t}-1,b_i^{(\tilde{t}-1)})}<1$; the third inequality is due to Condition~\ref{assm: assm0}.

On the other hand, for when $\sum_{r=1}^{m}\big[\zeta_{y_i,r,i}^{(\tilde{t}-1,0)}-\zeta_{y_k,r,k}^{(\tilde{t}-1,0)}\big]>0.9\kappa$, we have from the \eqref{ineq: output difference approximation} that 
    \begin{align}
    & y_i\cdot f(\Wb^{(\tilde{t}-1,b^{(\tilde{t}-1)}_i)},\xb_i)-y_k\cdot f(\Wb^{(\tilde{t}-1,b^{(\tilde{t}-1)}_k)},\xb_k) \nonumber \\
    &\geq\frac{1}{m}\sum_{r=1}^{m}\big[\zeta_{y_i,r,i}^{(\tilde{t}-1,b^{(\tilde{t}-1)}_i)}-\zeta_{y_k,r,k}^{(\tilde{t}-1,b^{(\tilde{t}-1)}_k)}\big]-1.75 \nonumber \\
    & \geq\frac{1}{m}\sum_{r=1}^{m}\big[\zeta_{y_i,r,i}^{(\tilde{t}-1,0)}-\zeta_{y_k,r,k}^{(\tilde{t}-1,0)}\big]-0.1\kappa - 1.75 \nonumber \\
    &\geq0.9\kappa - 0.1\kappa -0.54\kappa \nonumber \\
    &=0.26\kappa,
    \end{align}
where the second inequality is due to $\kappa= 10$. Thus, according to Lemma \ref{lm: basic ANA}, we have
\begin{equation*}
    \frac{\ell_i'^{(\tilde{t}-1,b^{(\tilde{t}-1)}_i)}}{\ell_k'^{(\tilde{t}-1,b^{(\tilde{t}-1)}_k)}}\leq\exp\big(y_k\cdot f(\Wb^{(\tilde{t}-1,b^{(\tilde{t}-1)}_k)},\xb_k)-y_i\cdot f(\Wb^{(\tilde{t}-1,b^{(\tilde{t}-1)}_i)},\xb_i)\big)\leq\exp(-0.26\kappa).
\end{equation*}
Since $S_{i}^{(\tilde{t}-1,0)} \subseteq \tilde{S}_{i}^{(\tilde{t}-1,b^{(\tilde{t}-1)}_i)}$, we have $\left|\tilde{S}_{k}^{(\tilde{t}-1,b^{(\tilde{t}-1)}_k)}\right|\geq 0.8\Phi(-1)m$ according to the fourth condition. Also we have that $|S_{i}^{(\tilde{t}-1,b^{(\tilde{t}-1)}_i)}|\leq m$. It follows that
\begin{equation*}
    \frac{\big|S_{i}^{(\tilde{t}-1,b^{(\tilde{t}-1)}_i)}\big|\ell_i'^{(\tilde{t}-1,b^{(\tilde{t}-1)}_i)}}{\big|S_{k}^{(\tilde{t}-1,b^{(\tilde{t}-1)}_k)}\big|\ell_k'^{(\tilde{t}-1,b^{(\tilde{t}-1)}_k)}}\leq \frac{\exp(-0.26\kappa)}{0.8\Phi(-1)}<0.8.
\end{equation*}

According to Lemma \ref{lm: data inner products}, under event $\cE_{\mathrm{prelim}}$, we have
\begin{equation*}
    \big|\|\bxi_i\|_2^2-d\cdot\sigma_p^2\big|=O\big(\sigma_p^2\cdot\sqrt{d\log(6n/\delta)}\big),\, \forall i\in[n].
\end{equation*}

Note that $d=\Omega(\log(6n/\delta))$ from Condition \ref{assm: assm0}, it follows that
\begin{equation*}
|S_{i}^{(\tilde{t},b^{(\tilde{t}-1)}_i)}|(-\ell_i'^{(\tilde{t},b^{(\tilde{t}-1)}_i)})\cdot\|\bxi_i\|_2^2<|S_{k}^{(\tilde{t},b^{(\tilde{t}-1)}_k)}|(-\ell_k'^{(\tilde{t},b^{(\tilde{t}-1)}_k)})\cdot\|\bxi_k\|_2^2.
\end{equation*}

Then we have
\begin{equation*}
    \sum_{r=1}^{m}\big[\zeta_{y_i,r,i}^{(\tilde{t},0)}-\zeta_{y_k,r,k}^{(\tilde{t},0)}\big]\leq\sum_{r=1}^{m}\big[\zeta_{y_i,r,i}^{(\tilde{t}-1,0)}-\zeta_{y_k,r,k}^{(\tilde{t}-1,0)}\big]\leq\kappa,
\end{equation*}
which completes the proof of the first hypothesis at iteration $(t',b')=(\tilde{t},\tilde{b})$. Next, by applying the approximation in \eqref{ineq: output difference approximation}, we are ready to verify the second hypothesis at iteration $(\tilde{t},\tilde{b})$. In fact, for any $(t',b'_1), (t',b'_2)\le (\tilde{t},\tilde{b})$, we have
\begin{align*}
    y_i\cdot f(\Wb^{(t',b'_1)},\xb_i)-y_k\cdot f(\Wb^{(t',b'_2)},\xb_k)&\leq\frac{1}{m}\sum_{r=1}^{m}\big[\zeta_{y_i,r,i}^{(t',b'_1)}-\zeta_{y_k,r,k}^{(t',b'_2)}\big]+1.75 \\
    & \le \frac{1}{m}\sum_{r=1}^{m}\big[\zeta_{y_i,r,i}^{(t',0)}-\zeta_{y_k,r,k}^{(t',0)}\big] + 0.1 \kappa +1.75 \\
    & \leq C_1,
\end{align*}
where the first inequality is by \eqref{ineq: output difference approximation}; the last inequality is by induction hypothesis and taking $\kappa$ as 10 and $C_1$ as 5.

 And the third hypothesis directly follows by noting that, for any $(t',b'_1), (t',b'_2)\le (\tilde{t},\tilde{b})$,
\begin{equation*}
    \frac{\ell_i'^{(t',b_1')}}{\ell_k'^{(t',b_2')}}\leq\exp\big(y_k\cdot f(\Wb^{(t',b_1')},\xb_k)-y_i\cdot f(\Wb^{(t',b_2')},\xb_i)\big)\leq\exp(C_1)=C_2.
\end{equation*}

For the fourth hypothesis, If $ \tilde{b} \in \{1,2,\cdots,H-1\}$, then the first statement for $(t',b')=(\tilde{t},\tilde{b})$ and for the last $(t',b') < (\tilde{t},\tilde{b})$ are the same. Otherwise, if $\tilde{b} = 0$, according to the gradient descent rule, we have
\begin{align*}
\la\wb_{y_i,r}^{(\tilde{t},0)},\bxi_i\ra&=\la\wb_{y_i,r}^{(\tilde{t}-1,0)},\bxi_i\ra-\frac{\eta}{Bm}\cdot\sum_{b'=0}^{H-1}\sum_{i'\in\cI_{\tilde{t}-1,b'}}\ell_{i'}^{(\tilde{t}-1,\tilde{b})}\cdot\sigma'(\la\wb_{y_i,r}^{(\tilde{t}-1,b')},y_{i'}\bmu\ra)\cdot\la y_{i'}\bmu,\bxi_i\ra y_{i'}\\
    &\qquad-\frac{\eta(P-1)}{Bm}\cdot\sum_{b'=0}^{H-1}\sum_{i'\in\cI_{\tilde{t}-1,b'}}\ell_{i'}^{(\tilde{t}-1,b')}\cdot\sigma'(\la\wb_{y_i,r}^{(\tilde{t}-1,\tilde b)},\bxi_{i'}\ra)\cdot\la\bxi_{i'},\bxi_{i}\ra\\
&=\la\wb_{y_i,r}^{(\tilde{t}-1,0)},\bxi_i\ra-\frac{\eta}{Bm}\cdot\sum_{b'=0}^{H-1}\sum_{i'\in\cI_{\tilde{t}-1,b'}}\ell_{i'}^{(\tilde{t}-1,b')}\cdot\sigma'(\la\wb_{y_i,r}^{(\tilde{t}-1,b')},\hat{y}_{i'}\bmu\ra)\cdot\la y_{i'}\bmu,\bxi_i\ra y_{i'}\\
    &\qquad-\frac{\eta(P-1)}{Bm}\cdot\ell_{i}^{(\tilde{t}-1,b^{(\tilde t -1)}_i)}\cdot\sigma'(\la\wb_{y_i,r}^{(\tilde{t}-1,\tilde{b})},\bxi_{i}\ra)\cdot\|\bxi_i\|_2^2  \\ 
    &\qquad -\frac{\eta(P-1)}{Bm}\cdot\sum_{b'=0}^{H-1}\sum_{i'\in\cI_{\tilde{t}-1,b'}}\ell_{i'}^{(\tilde{t},b')}\cdot\sigma'(\la\wb_{y_i,r}^{(\tilde{t},b')},\bxi_{i'}\ra)\cdot\la\bxi_{i'},\bxi_{i} \ra \ind(i'\neq i)\\
    &=\la\wb_{y_i,r}^{(\tilde{t},0)},\bxi_i\ra-\frac{\eta(P-1)}{Bm}\cdot\underbrace{\ell_{i}^{(\tilde{t}-1,b^{(\tilde t-1)}_i)}\cdot\|\bxi_i\|_2^2 }_{\mathrm{I}_4}\\
    &\qquad -\frac{\eta(P-1)}{Bm}\cdot\underbrace{\sum_{b'=0}^{H-1}\sum_{i'\in\cI_{\tilde{t}-1,b'} }\ell_{i'}^{(\tilde{t}-1,b')}\cdot\sigma'(\la\wb_{y_i,r}^{(\tilde{t}-1,b')},\bxi_{i'}\ra)\cdot\la\bxi_{i'},\bxi_{i}\ra\ind(i'\neq i)}_{\mathrm{I}_5}\\
    &\qquad -\frac{\eta}{Bm}\cdot\underbrace{\sum_{\tilde b=0}^{H-1}\sum_{i'\in\cI_{\tilde{t}-1,b'}}\ell_{i'}^{(\tilde{t}-1,b')}\cdot\sigma'(\la\wb_{y_i,r}^{(\tilde{t}-1,b')},y_{i'}\bmu\ra)\cdot\la y_{i'}\bmu,\bxi_i\ra y_{i'}}_{I_6},
\end{align*}
for any $r\in S_i^{(\tilde{t}-1,0)}$, where the last equality is by $\la\wb_{y_i,r}^{(\tilde{t}-1,b_i^{(\tilde t -1)})},\bxi_i\ra>0$. Then we respectively estimate $\mathrm{I}_4,\mathrm{I}_5,\mathrm{I}_6$. For $\mathrm{I}_4$, according to Lemma \ref{lm: data inner products}, we have
\begin{equation*}
    -\mathrm{I}_4\geq|\ell_{i}^{(\tilde{t}-1,b^{(\tilde t-1)}_i)}|\cdot\sigma_p^2 d/2.
\end{equation*}

For $\mathrm{I}_5$, we have following upper bound
\begin{align*}
    |\mathrm{I}_5|&\leq\sum_{b'=0}^{H-1}\sum_{i'\in\cI_{\tilde{t}-1,b'} }| \ell_{i'}^{(\tilde{t}-1,b')}|\cdot\sigma'(\la\wb_{y_i,r}^{(\tilde{t}-1,b')},\bxi_{i'}\ra)\cdot|\la\bxi_{i'},\bxi_{i}\ra|\ind(i'\neq i)\\
    &\leq\sum_{b'=0}^{H-1}\sum_{i'\in\cI_{\tilde{t}-1,b'} }| \ell_{i'}^{(\tilde{t}-1,b')}|\cdot|\la\bxi_{i'},\bxi_{i}\ra|\ind(i'\neq i)\\
    &\leq\sum_{b'=0}^{H-1}\sum_{i'\in\cI_{\tilde{t}-1,b'} }| \ell_{i'}^{(\tilde{t}-1,b')}|\cdot2\sigma_p^2\cdot\sqrt{d\log(6n^2/\delta)}\\
    &\leq nC_2 \big|\ell_{i}^{(\tilde{t}-1,b_i^{(\tilde{t}-1)})}\big|\cdot2\sigma_p^2\cdot\sqrt{d\log(6n^2/\delta)},
\end{align*}
where the first inequality is due to triangle inequality, the second inequality is due to $\sigma'(z)\in\{0,1\}$, the third inequality is due to Lemma \ref{lm: data inner products}, the forth inequality is due to the third hypothesis at epoch $\tilde{t}-1$.

For $\mathrm{I}_6$, we have following upper bound
\begin{align*}
    |\mathrm{I}_6|&\leq\sum_{b'=0}^{H-1}\sum_{i'\in\cI_{\tilde{t}-1,b'}}|\ell_{i'}^{(\tilde{t}-1,b')}|\cdot\sigma'(\la\wb_{y_i,r}^{(\tilde{t}-1,b')},y_{i'}\bmu\ra)\cdot|\la y_{i'}\bmu,\bxi_i\ra|\\
    &\leq\sum_{b'=0}^{H-1}\sum_{i'\in\cI_{\tilde{t}-1,b'}}|\ell_{i'}^{(\tilde{t}-1,b')}||\la y_{i'}\bmu,\bxi_i\ra|\\
    &\leq nC_2\big|\ell_{i}^{(\tilde{t}-1,b_i^{(\tilde{t}-1)})}\big|\cdot\|\bmu\|_2\sigma_p\sqrt{2\log(6n/\delta)},
\end{align*}
where the first inequality is by triangle inequality; the second inequality is due to $\sigma'(z)\in\{0,1\}$; the third inequality is by Lemma \ref{lm: data inner products}; the last inequality is due to the third hypothesis at epoch $\tilde{t}-1$. 

Since $d\geq \max\{32C_2^2n^2\cdot\log(6n^2/\delta),4C_2n\|\bmu\|\sigma_p^{-1}\sqrt{2\log(6n/\delta)}\}$, we have $-(P-1)\mathrm{I}_4\geq\max\{(P-1)|\mathrm{I}_5|/2,|\mathrm{I}_{6}|/2\}$ and hence $-(P-1)\mathrm{I}_4\geq (P-1)|\mathrm{I}_5|+|\mathrm{I}_{6}|$. It follows that
\begin{equation*}
    \la\wb_{y_i,r}^{(\tilde{t},0)},\bxi_i\ra\geq\la\wb_{y_i,r}^{(\tilde{t}-1,0)},\bxi_i\ra>\sigma_0 \sigma_p \sqrt{d}/\sqrt{2},
\end{equation*}
for any $r\in S_{i}^{(\tilde{t}-1,0)}$. Therefore, $S_{i}^{(0,0)}\subseteq S_{i}^{(\tilde{t}-1,0)}\subseteq S_{i}^{(\tilde{t},0)}$. And it directly follows by Lemma \ref{lm: number of initial activated neurons} that $|S_{i}^{(\tilde{t},0)}|\geq 0.8 m \Phi(-1),\, \forall i\in[n]$. 

For the fifth hypothesis, similar to the proof of the fourth hypothesis, we also have
\begin{align*}
    \la\wb_{y_i,r}^{(\tilde{t},0)},\bxi_i\ra&=\la\wb_{y_i,r}^{(\tilde{t}-1,0)},\bxi_i\ra-\frac{\eta(P-1)}{Bm}\cdot\underbrace{\ell_{i}^{(\tilde{t}-1,b^{(t-1)}_i)}\cdot\|\bxi_i\|_2^2 }_{\mathrm{I}_4}\\
    &\qquad -\frac{\eta(P-1)}{Bm}\cdot\underbrace{\sum_{b'=0}^{H-1}\sum_{i'\in\cI_{\tilde{t},b'} }\ell_{i'}^{(\tilde{t}-1,b')}\cdot\sigma'(\la\wb_{y_i,r}^{(\tilde{t}-1,b')},\bxi_{i'}\ra)\cdot\la\bxi_{i'},\bxi_{i}\ra\ind(i'\neq i)}_{\mathrm{I}_5}\\
    &\qquad -\frac{\eta}{Bm}\cdot\underbrace{\sum_{b'=0}^{H-1}\sum_{i'\in\cI_{\tilde{t}-1,b'}}\ell_{i'}^{(\tilde{t}-1,b')}\cdot\sigma'(\la\wb_{y_i,r}^{(\tilde{t}-1,b')},y_{i'}\bmu\ra)\cdot\la y_{i'}\bmu,\bxi_i\ra y_{i'}}_{\mathrm{I}_6},
\end{align*}

for any $i\in S_{j,r}^{(\tilde{t}-1,0)}$, where the equality holds due to $\la\wb_{j,r}^{(\tilde{t}-1,b_i^{(\tilde{t}-1)})},\bxi_i\ra>0$ and $y_i=j$. By applying the same technique used in the proof of the fourth hypothesis, it follows that
\begin{equation*}
    \la\wb_{j,r}^{(\tilde{t},0)},\bxi_i\ra\geq\la\wb_{j,r}^{(\tilde{t}-1,0)},\bxi_i\ra>0,
\end{equation*}
for any $i\in S_{j,r}^{(\tilde{t}-1,0)}$. Thus, we have $S_{j,r}^{(0,0)}\subseteq S_{j,r}^{(\tilde{t}-1,0)}\subseteq S_{j,r}^{(\tilde{t},0)}$. And it directly follows by Lemma \ref{lm: number of initial activated neurons 2} that $|S_{j,r}^{(\tilde{t},0)}|\geq n\Phi(-1)/4$.

\end{proof}

\begin{proof}[Proof of Proposition \ref{proposition: range of gamma,zeta,omega}]

Our proof is based on induction. The results are obvious at iteration $(0,0)$ as all the coefficients are zero. Suppose that the results in Proposition \ref{proposition: range of gamma,zeta,omega} hold for all iterations $(0,0)\leq (t,b) < (\tilde{t},\tilde{b})$. We aim to prove that they also hold for iteration $(\tilde{t},\tilde{b})$.

Firstly, We prove that \eqref{ineq: range of omega} exists at iteration $(\tilde{t},\tilde{b})$, i.e., $\omega_{j,r,i}^{(\tilde{t},\tilde{b})} \geq - \beta - 10 \sqrt{ \log(6n^2/\delta) / d} \cdot n \alpha$ for any $r\in[m]$, $j\in\{\pm1\}$ and $i\in[n]$. Notice that $\omega_{j,r,i}^{(\tilde{t},\tilde{b})}=0$ for $j=y_i$, therefore we only need to consider the case that $j\neq y_i$. We also only need to consider the case of $\tilde{b} = b_i^{(\tilde t)} + 1$ since $\omega_{j,r,i}^{(\tilde{t},\tilde{b})}$ doesn't change in other cases according to \eqref{eq:omega_update}.

When $\omega_{j,r,t}^{(\tilde{t},b_i^{(\tilde t)})}<-0.5\beta-5 \sqrt{ \log(6n^2/\delta) / d} \cdot n\alpha$, by \eqref{ineq: noise product bound2} in Lemma \ref{lm: inner product range} we have that
\begin{align*}
    (P-1)\la\wb_{j,r}^{(\tilde{t},b_i^{(\tilde t)})}, \bxi_{i}\ra & \leq \omega_{j, r, i}^{(\tilde{t},b_i^{(\tilde t)})} + (P-1) \la \wb_{j, r}^{(0,0)}, \bxi_i \ra + 5 \sqrt{\frac{\log(6 n^2 / \delta)}{d}} n \alpha< 0,
\end{align*}
and thus
\begin{align*}
    \omega_{j, r, i}^{(\tilde{t},\tilde b)}&=\omega_{j,r,i}^{(\tilde{t},b_i^{(\tilde t)})}+\frac{\eta(P-1)^2}{Bm} \cdot \ell_i'^{(\tilde{t},b_i^{(\tilde t)})}\cdot \sigma'(\la\wb_{j,r}^{(\tilde{t},b_i^{(\tilde t)})}, \bxi_{i}\ra) \cdot \| \bxi_i \|_2^2 \cdot\\
    &=\omega_{j,r,i}^{(\tilde{t},b_i^{(\tilde t)})}\geq-\beta-10\sqrt{\frac{\log(6n^2/\delta)}{d}} n\alpha,
\end{align*}
where the last inequality is by induction hypothesis. 

When $\omega_{j,r,t}^{(\tilde{t},b_i^{(\tilde t)})}\ge-0.5\beta-5 \sqrt{ \log(6n^2/\delta) / d} \cdot n\alpha$, we have
\begin{align*}
    \omega_{j,r,i}^{(\tilde t,\tilde b)} & = \omega_{j,r,i}^{(t,b_i^{(\tilde t)})} + \frac{\eta(P-1)^2}{Bm} \cdot \ell_i'^{(t,b_i^{(\tilde t)})}\cdot \sigma'(\la\wb_{j,r}^{(t,b_i^{(\tilde t)})}, \bxi_{i}\ra) \cdot \| \bxi_i \|_2^2\\
    &\geq-0.5\beta-5\sqrt{\frac{\log(6n^2/\delta)}{d}} n\alpha- \frac{\eta(P-1)^2\cdot3\sigma_p^2d}{2Bm}\\
    &\geq -0.5\beta-10\sqrt{\frac{\log(6n^2/\delta)}{d}} n\alpha\\
    &\geq-\beta-10\sqrt{\frac{\log(6n^2/\delta)}{d}} n\alpha,
\end{align*}
where the first inequality is by $\ell_i'^{(t,b_i^{(\tilde t)})}\in(-1,0)$ and $\|\bxi_i\|_2^2 \leq (3/2) \sigma_p^2 d$ by Lemma~\ref{lm: data inner products}; the second inequality is due to  $5\sqrt{ \log(6n^2/\delta) / d} \cdot n \alpha \geq 3\eta \sigma_p^2 d /(2 Bm)$ by Condition \ref{assm: assm0}.

Next we prove \eqref{ineq: range of zeta} holds for $(\tilde t, \tilde b)$. We only need to consider the case of $j=y_i$. Consider
\begin{equation}
    \begin{aligned}\label{ineq: logit}
        |\ell_i'^{(\tilde t,\tilde b)}|&=\frac{1}{1+\exp\{y_i\cdot[F_{+1}(\Wb_{+1}^{(\tilde t,\tilde b)},\xb_i)-F_{-1}(\Wb_{-1}^{(\tilde t,\tilde b)},\xb_i)]\}}\\
        &\leq\exp(-y_i\cdot[F_{+1}(\Wb_{+1}^{(\tilde t,\tilde b)},\xb_i)-F_{-1}(\Wb_{-1}^{(\tilde t,\tilde b)},\xb_i)])\\
        &\leq\exp(-F_{y_i}(\Wb_{y_i}^{(\tilde t,\tilde b)},\xb_i)+0.5),
    \end{aligned}
\end{equation}
where the last inequality is by $F_{j}(\Wb_{j}^{(\tilde t,\tilde b)},\xb_i)\leq 0.5$ for $j\neq y_i$ according to Lemma \ref{lm: mismatch fj upper bound}. Now recall the iterative update rule of $\zeta_{j,r,i}^{(t,b)}$:
\begin{equation*}
    \zeta_{j,r,i}^{(t,b+1)} = \zeta_{j,r,i}^{(t,b)} - \frac{\eta(P-1)^2}{Bm} \cdot \ell_i'^{(t,b)}\cdot \sigma'(\la\wb_{j,r}^{(t,b)}, \bxi_{i}\ra) \cdot \| \bxi_i \|_2^2 \cdot \ind(i\in \cI_{t,b}).
\end{equation*}

Let $(t_{j,r,i},b_{j,r,i})$ be the last time before $(\tilde{t},\tilde{b})$ that $\zeta_{j,r,i}^{(t_{j,r,i},b_{j,r,i})}\leq0.5\alpha$. Then by iterating the update rule from $(t_{j,r,i},b_{j,r,i})$ to $(\tilde{t},\tilde{b})$, we get
\begin{equation}\label{eq: zeta}
    \begin{aligned}
    & \zeta_{j,r,i}^{(\tilde{t},\tilde{b})}\\
    &=\zeta_{j,r,i}^{(t_{j,r,i},b_{j,r,i})}-\underbrace{\frac{\eta(P-1)^2}{Bm}\cdot\ell_i'^{(t_{j,r,i},b_{j,r,i})}\cdot\ind(\la\wb_{j,r}^{(t_{j,r,i},b_{j,r,i})},\bxi_i\ra \geq 0) \cdot\ind(i\in \cI_{t,b}) \|\bxi_i\|_2^2}_{\mathrm{I}_7} \\
    &\qquad-\underbrace{\sum_{(t_{j,r,i},b_{j,r,i})<(t,b)<(\tilde{t},\tilde{b})}\frac{\eta(P-1)^2}{Bm}\cdot\ell_i'^{(t,b)}\cdot\ind(\la\wb_{j,r}^{(t,b)},\bxi_i\ra \geq 0) \cdot\ind(i\in \cI_{t,b})\|\bxi_i\|_2^2}_{\mathrm{I}_8}.
    \end{aligned}
\end{equation}

We first bound $\mathrm{I}_7$ as follows: 
\begin{equation*}
    |\mathrm{I}_7|
    \leq (\eta(P-1)^2 / Bm) \cdot\|\bxi_i\|_2^2
    \leq (\eta(P-1)^2 / Bm) \cdot 3\sigma_p^2d/2 
    \leq 1
    \leq 0.25\alpha,
\end{equation*}
where the first inequality is by $\ell_i'^{(t_{j,r,i},b_{j,r,i})}\in(-1,0)$; the second inequality is by Lemma \ref{lm: data inner products}; the third inequality is by Condition~\ref{assm: assm0}; the last inequality is by our choice of $\alpha=4\log(T^*)$ and $T^*\geq e$.

Second, we bound $\mathrm{I}_8$. For $(t_{j,r,i},b_{j,r,i})<(t,b)<(\tilde{t},\tilde{b})$ and $y_i=j$, we can lower bound the inner product $\la \wb_{j,r}^{(t,b)}, \bxi_i \ra$ as follows
\begin{equation}\label{ineq: inner product w,bxi}
    \begin{aligned}
    \la \wb_{j,r}^{(t,b)}, \bxi_i \ra &\geq \la \wb_{j,r}^{(0,0)}, \bxi_i \ra + \frac{1}{P-1}\zeta_{j,r,i}^{(t,b)} - \frac{5}{P-1} \sqrt{ \frac{ \log(6 n^2 / \delta)} {d}} n \alpha \\
    &\geq-\frac{0.5}{P-1}\beta+\frac{0.5}{P-1}\alpha-\frac{5}{P-1}\sqrt{\frac{\log(6n^2/\delta)}{d}} n\alpha\\
    &\geq\frac{0.25}{P-1}\alpha,
    \end{aligned}
\end{equation}
where the first inequality is by \eqref{ineq: noise product bound1} in Lemma \ref{lm: inner product range}; 
the second inequality is by $\zeta_{j,r,i}^{(t,b)}>0.5\alpha$ and $\la\wb_{j,r}^{(0,0)},\bxi_i\ra\geq -0.5\beta/(P-1)$ due to the definition of $t_{j,r,i}$ and $\beta$; 
the last inequality is by $\beta\leq 1/8 \leq 0.1\alpha$ and $5 \sqrt{ \log( 6 n^2 / \delta) / d} \cdot n \alpha \leq 0.2 \alpha$ by Condition~\ref{assm: assm0}. 

Thus, plugging the lower bounds of $\la\wb_{j,r}^{(t,b)},\bxi_i\ra$ into $\mathrm{I}_8$ gives
\begin{align*}
    |\mathrm{I}_8|&\leq\sum_{(t_{j,r,i},b_{j,r,i})<(t,b)<(\tilde{t},\tilde{b})}\frac{\eta(P-1)^2}{Bm}\cdot\exp\Big(-\frac{1}{m}\sum_{r=1}^m(P-1)\sigma(\la\wb_{j,r}^{(t,b)},\bxi_i\ra)+0.5\Big)\\
    &\qquad \qquad \qquad \qquad \qquad \qquad \cdot\ind(\la\wb_{j,r}^{(t,b)},\bxi_i\ra \ge 0)\cdot\|\bxi_i\|_2^2\\
    &\leq\frac{2\eta T^*n(P-1)^2}{Bm}\cdot\exp(-0.25\alpha)\exp(0.5)\cdot\frac{3\sigma_p^2 d}{2}\\
    &\leq\frac{2\eta T^*n(P-1)^2}{Bm}\cdot\exp(-\log(T^*))\exp(0.5)\cdot\frac{3\sigma_p^2 d}{2}\\
    &=\frac{2\eta n(P-1)^2}{Bm}\cdot\frac{3\sigma_p^2 d}{2}\exp(0.5)\leq1\leq0.25\alpha,
\end{align*}
where the first inequality is by \eqref{ineq: logit}; 
the second inequality is by \eqref{ineq: inner product w,bxi}; 
the third inequality is by $\alpha = 4 \log(T^*)$; 
the fourth inequality is by Condition~\ref{assm: assm0}; 
the last inequality is by $\log(T^*) \geq1$ and $\alpha=4\log(T^*)$.  
Plugging the bound of $\mathrm{I}_7, \mathrm{I}_8$ into \eqref{eq: zeta} completes the proof for $\zeta$.

For the upper bound of \eqref{ineq: range of gamma}, we prove a augmented hypothesis that there exists a $i^*\in[n]$ with $y_{i^*}=j$ such that for $1\leq t\leq T^*$ we have that
$\gamma_{j,r}^{(t,0)} / \zeta_{j,r,i^*} \le C' \hat \gamma$. Recall the iterative update rule of $\gamma_{j,r}^{(t,b)}$ and $\zeta_{j,r,i}^{(t,b)}$, we have 
\begin{align*}
 \zeta_{j,r,i}^{(t,b+1)} & = \zeta_{j,r,i}^{(t,b)} - \frac{\eta(P-1)^2}{Bm} \cdot \ell_i'^{(t,b)}\cdot \sigma'(\la\wb_{j,r}^{(t,b)}, \bxi_{i}\ra) \cdot \| \bxi_i \|_2^2 \cdot \ind(y_{i} = j)\ind(i\in \cI_{t,b}), \\
 \gamma_{j,r}^{(t,b+1)} & = \gamma_{j,r}^{(t,b)} - \frac{\eta}{Bm} \cdot \bigg[ \sum_{i\in \cI_{t,b} \cap S_+} \ell_{i}'^{(t,b)} \sigma'(\la\wb_{j,r}^{(t,b)}, y_{i} \cdot \bmu\ra)
 \\
 & \qquad \qquad \qquad \qquad \qquad - \sum_{i\in \cI_{t,b} \cap S_-} \ell_{i}'^{(t,b)} \sigma'(\la\wb_{j,r}^{(t,b)}, y_{i} \cdot \bmu\ra) \bigg]  \cdot \| \bmu \|_2^2.
\end{align*}

According to the fifth statement of Lemma \ref{lm: balanced logit}, for any $i^*\in S_{j,r}^{(0,0)}$ it holds that $j=y_{i^*}$ and $\la\wb_{j,r}^{(t,b)},\bxi_{i^*}\ra \geq 0$ for any $(t,b)\le (\tilde t, \tilde b)$. Thus, we have
\begin{align*}
    \zeta_{j,r,i^*}^{(\tilde{t},0)}=\zeta_{j,r,i^*}^{(\tilde{t}-1,0)}-\frac{\eta(P-1)^2}{Bm}\cdot\ell_{i^*}'^{(\tilde{t}-1,b^{(\tilde{t}-1)}_{i^*})}\cdot\|\bxi_{i^*}\|_{2}^{2}\geq\zeta_{j,r,i^*}^{(\tilde{t}-1,0)}-\frac{\eta(P-1)^2}{Bm}\cdot\ell_{i^*}'^{(\tilde{t}-1,b^{(\tilde{t}-1)}_{i^*})}\cdot\sigma_p^2 d/2.
\end{align*}

For the update rule of $\gamma_{j,r}^{(t,b)}$, according to Lemma~\ref{lm: balanced logit}, we have
\begin{align*}
    \sum_{b<H}\Big| \sum_{i\in \cI_{t,b} \cap S_+} \ell_{i}'^{(t,b)} \sigma'(\la\wb_{j,r}^{(t,b)}, y_{i} \cdot \bmu\ra) 
 &- \sum_{i\in \cI_{t,b} \cap S_-} \ell_{i}'^{(t,b)} \sigma'(\la\wb_{j,r}^{(t,b)}, y_{i} \cdot \bmu\ra) \Big| \\
 &\le C_2 n  \Big|\ell_{i^*}'^{(\tilde{T}-1,b^{(\tilde{T}-1)}_{i^*})}\Big|.
\end{align*}

Then, we have
\begin{equation}
\begin{aligned}
\frac{\gamma_{j,r}^{(\tilde{t},0)}}{\zeta_{j,r,i^*}^{(\tilde{t},0)}} & \leq\max\Bigg\{\frac{\gamma_{j,r}^{(\tilde{t}-1,0)}}{\zeta_{j,r,i^*}^{(\tilde{t}-1,0)}},\frac{C_2 n  \ell_{i^*}'^{(\tilde{t}-1,b^{(\tilde{t}-1)}_{i^*})} \|\bmu\|_2^2}{(P-1)^2\cdot\ell_{i^*}'^{(\tilde{t}-1,b^{(\tilde{t}-1)}_{i^*})}\cdot\sigma_p^2 d/2}\Bigg\}\\
&=\max\Bigg\{\frac{\gamma_{j,r}^{(\tilde{t}-1,0)}}{\zeta_{j,r,i^*}^{(\tilde{t}-1,0)}},\frac{2C_2 n\|\bmu\|_2^2}{(P-1)^2\sigma_p^2 d}\Bigg\}\\
&\leq \frac{2C_2 n\|\bmu\|_2^2}{(P-1)^2\sigma_p^2 d},\label{eq:gamma:rho}    
\end{aligned}
\end{equation}
where the last inequality is by $\gamma_{j,r}^{(\tilde{t}-1,0)}/\zeta_{j,r,i^*}^{(\tilde{t}-1,0)}\leq 2C_2\hat{\gamma}=2C_2n\|\bmu\|_2^2/(P-1)^2\sigma_p^2 d$. Therefore,
\begin{align*}
    \frac{\gamma_{j,r}^{(\tilde{t},0)}}{\zeta_{j,r,i^*}^{(\tilde{t},0)}} \le 2C_2\hat{\gamma}.
\end{align*}
For iterations other than the starting of en epoch, we have the following upper bound:
\begin{align*}
    \frac{\gamma_{j,r}^{(\tilde{t},b)}}{\zeta_{j,r,i^*}^{(\tilde{t},b)}}  \le \frac{2\gamma_{j,r}^{(\tilde{t},0)}}{\zeta_{j,r,i^*}^{(\tilde{t},0)}} \le 4C_2\hat{\gamma}.
\end{align*}
Thus, by taking $C' = 4C_2$, we have $\gamma_{j,r}^{(\tilde{t},b)} / \zeta_{j,r,i^*}^{(\tilde{t},b)} \le C' \hat{\gamma}$.

On the other hand, when $(t,b) < (\frac{\log(2T^*/\delta)}{2c_3^2},0)$, we have
\begin{align*}
    \gamma_{j,r}^{(t,b)} \ge - \frac{\log(2T^*/\delta)}{2c_3^2} \cdot \frac{\eta}{Bm} \cdot n \cdot \| \bmu \|_2^2 \ge -\frac{1}{12},
\end{align*}
where the first inequality is due to update rule of $\gamma_{j,r}^{t,b}$, and the second inequality is due to Condition~\ref{assm: assm0}.

When $(t,b) \ge (\frac{\log(2T^*/\delta)}{2c_3^2},0)$, According to Lemma~\ref{lm: good_batch}, we have
\begin{align*}
    \gamma_{j,r}^{(t,b)} & \ge \sum_{(t',b') < (t,b) } \frac{\eta}{Bm} \big[\min_{i,b'} \ell_i'^{(t',b')} \min\{|\cI_{t',b'}\cap S_+\cap S_{-1}|, |\cI_{t',b'}\cap S_+\cap S_{1}|\}  \\
    & \qquad \qquad \qquad \qquad \qquad - \max_{i,b'} \ell_i'^{(t',b')}|\cI_{t',b'} \cap S_{-}| \big]\cdot \| \bmu \|_2^2\\
    & \ge \frac{\eta}{Bm} \big(\sum_{t'=0}^{t-1} (c_3c_4 H  \frac{B}{4}\min_{i,b'} \ell_i'^{(t',b')} -  n q\max_{i,b'} \ell_i'^{(t',b')}) - n q\max_{i,b'} \ell_i'^{(t,b')} \big)\big) \| \bmu \|_2^2 \\
    & \ge 0,
\end{align*}
where the first inequality is due to the update rule of $\gamma_{j,r}^{(t,b)}$, the second inequality is due to Lemma~\ref{lm: good_batch}, and the third inequality is due to Condition~\ref{assm: assm0}.
\end{proof}

\subsection{Decoupling with a Two-stage Analysis}\label{subsec:two-stage}

\subsubsection{First Stage}\label{subsec:first}

\begin{lemma}\label{lm: first stage}
There exist
\begin{equation*}
    T_1=C_3\eta^{-1}Bm(P-1)^{-2}\sigma_p^{-2}d^{-1}, T_2=C_4\eta^{-1}Bm(P-1)^{-2}\sigma_p^{-2}d^{-1},
\end{equation*}
where $C_3=\Theta(1)$ is a large constant and $C_4=\Theta(1)$ is a small constant, such that
\begin{itemize}[leftmargin=*]
    \item $\zeta_{j,r^*,i}^{(T_1,0)}\geq 2$ for any $r^*\in S_{i}^{(0,0)}=\{r\in[m]:\la\wb_{y_i,r}^{(0)},\bxi_i\ra>0\}$, $j\in\{\pm 1\}$ and $i\in[n]$ with $y_i=j$.
    \item $\max_{j,r}\gamma_{j,r}^{(t,b)}=O(\hat{\gamma})$ for all $(t,b) \leq (T_1,0)$.
    \item $\max_{j,r,i}|\omega_{j,r,i}^{(t,b)}|=\max\{\beta,O \big(n \sqrt{\log(n / \delta)} \log(T^*)/\sqrt{d} \big)\}$ for all $(t,b) \leq (T_1,0)$.
    \item $\min_{j,r}\gamma_{j,r}^{(t,0)}=\Omega(\hat{\gamma})$ for all $t\geq T_2$.
    \item $\max_{j,r}\zeta_{j,r,i}^{(T_1,0)}=O(1)$ for all $i\in[n]$.
\end{itemize}
\end{lemma}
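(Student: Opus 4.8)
The plan is to treat $0\le t\le T_1$ as the \emph{first stage}, during which the logits $y_if(\Wb^{(t,b)},\xb_i)$ stay $O(1)$, so that every loss derivative satisfies $|\ell_i'^{(t,b)}|=\Theta(1)$ and each decomposition coefficient grows essentially linearly in the number of epochs elapsed. Since $T_1,T_2$ count epochs and within each epoch every sample $i$ lies in exactly one mini-batch, the content of each bullet is a per-epoch increment estimate multiplied by the number of epochs, and the scaling of $T_1,T_2$ is chosen precisely to make the targets $2$ and $\Omega(\hat\gamma)$ come out right.

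I would first prove the three upper bounds, which need only $|\ell_i'^{(t,b)}|\le1$. From \eqref{eq:zeta_update} and Lemma~\ref{lm: data inner products}, one epoch changes $\zeta_{j,r,i}$ by at most $\tfrac{\eta(P-1)^2}{Bm}\|\bxi_i\|_2^2\le\tfrac{3\eta(P-1)^2\sigma_p^2d}{2Bm}$, so $\max_{j,r,i}\zeta_{j,r,i}^{(T_1,0)}\le\tfrac32C_3=O(1)$; from \eqref{eq:gamma_update} one epoch changes $\gamma_{j,r}$ by at most $\tfrac{\eta}{Bm}\sum_{i\in[n]}|\ell_i'^{(t,b)}|\,\|\bmu\|_2^2\le\tfrac{\eta n\|\bmu\|_2^2}{Bm}$, so $\max_{j,r}\gamma_{j,r}^{(t,b)}\le O(\hat\gamma)$ for $(t,b)\le(T_1,0)$ by the definition of $T_1$; and the bound on $\max_{j,r,i}|\omega_{j,r,i}^{(t,b)}|$ is exactly \eqref{ineq: range of omega} of Proposition~\ref{proposition: range of gamma,zeta,omega} with $\alpha=4\log T^*$, restricted to $(t,b)\le(T_1,0)$. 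These upper bounds close the bootstrap: by Lemma~\ref{lm: noise alignment lower bound} and Lemma~\ref{lm: inner product range}, $y_if(\Wb^{(t,b)},\xb_i)\le\tfrac1m\sum_r\gamma_{y_i,r}^{(t,b)}+\tfrac1m\sum_r\zeta_{y_i,r,i}^{(t,b)}+O(\beta+1)=O(\hat\gamma)+O(1)=O(1)$, using $\hat\gamma,\beta=O(1)$ from Condition~\ref{assm: assm0}, hence $|\ell_i'^{(t,b)}|=1/(1+\exp(y_if))\ge c_0$ for an absolute constant $c_0$ and all $(t,b)\le(T_1,0)$.

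Given $|\ell'|\ge c_0$ on the first stage, the first bullet follows: for $r^*\in S_i^{(0,0)}$, Lemma~\ref{lm: balanced logit}(4) gives $r^*\in S_i^{(t,0)}$ for all $t$, and then Lemma~\ref{lm:in_epoch}(3) gives $r^*\in\tilde{S}_i^{(t,b)}$, i.e.\ $\sigma'(\la\wb_{y_i,r^*}^{(t,b)},\bxi_i\ra)=1$, for every $(t,b)\le(T_1,0)$; so by \eqref{eq:zeta_update} the coefficient $\zeta_{j,r^*,i}$ gains at least $\tfrac{\eta(P-1)^2}{Bm}c_0\|\bxi_i\|_2^2\ge\tfrac{c_0\eta(P-1)^2\sigma_p^2d}{2Bm}$ per epoch, whence over $T_1$ epochs $\zeta_{j,r^*,i}^{(T_1,0)}\ge\tfrac{c_0C_3}{2}\ge2$ once $C_3\ge4/c_0$ (consistent with the $O(1)$ upper bound). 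For the fourth bullet I would lower bound $\gamma_{j,r}$ via Lemma~\ref{lm: good_batch}: writing \eqref{eq:gamma_update} at the epoch level as $\gamma_{j,r}^{(t+1,0)}-\gamma_{j,r}^{(t,0)}=\tfrac{\eta\|\bmu\|_2^2}{Bm}\sum_{b<H}(|A_{t,b}|-|B_{t,b}|)$ with $|A_{t,b}|=\sum_{i\in\cI_{t,b}\cap S_+}|\ell_i'|\sigma'(\cdot)$ and $|B_{t,b}|=\sum_{i\in\cI_{t,b}\cap S_-}|\ell_i'|\sigma'(\cdot)$, the key point is that regardless of the sign of $\la\wb_{j,r}^{(t,b)},\bmu\ra$ the clean samples in $\cI_{t,b}$ that activate the neuron number at least $\min\{|S_+\cap S_1\cap\cI_{t,b}|,|S_+\cap S_{-1}\cap\cI_{t,b}|\}\ge B/4$ in a good batch, so $|A_{t,b}|\ge c_0B/4$ there, while $\sum_b|B_{t,b}|\le\max_i|\ell_i'|\cdot|S_-|\le2C_2pn\min_i|\ell_i'|$ by Lemma~\ref{lm: balanced logit}(3) and Lemma~\ref{lm: estimate S cap S}. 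Taking $p$ small enough (Condition~\ref{assm: assm0}) makes the net increment over each good epoch at least a constant times $\tfrac{\eta n\|\bmu\|_2^2}{Bm}c_0$, and since at least $c_3T_2$ of the first $T_2$ epochs are good, $\gamma_{j,r}^{(T_2,0)}=\Omega\big(\tfrac{\eta nT_2\|\bmu\|_2^2}{Bm}\big)=\Omega(\hat\gamma)$ for a suitable $C_4\le C_3$; the same accounting, carried out for an arbitrary $t\ge T_2$ (again via Lemma~\ref{lm: good_batch} at $T=t$, so that good epochs occur at rate $c_3$ along the whole trajectory, with $\max_i|\ell_i'|\le C_2\min_i|\ell_i'|$ coupling the positive and negative terms), shows the net change over $[T_2,t]$ stays nonnegative, hence $\gamma_{j,r}^{(t,0)}\ge\Omega(\hat\gamma)$ for all $t\ge T_2$.

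The main obstacle is the interaction of the two horizons: the clean estimates $|\ell_i'^{(t,b)}|=\Theta(1)$ only hold up to $T_1$, whereas bullet four must hold for all $t\ge T_2$, including $t\gg T_1$ where the loss derivatives have begun to decay. Closing this needs a careful summation-by-parts argument that uses (i) the uniform rate of good epochs from Lemma~\ref{lm: good_batch}, (ii) the per-epoch logit-ratio bound $C_2$ from Lemma~\ref{lm: balanced logit}(3) to relate $\max_i|\ell_i'|$ to $\min_i|\ell_i'|$, and (iii) that clean samples outnumber flipped ones, so that the positive (signal-learning) contributions dominate the negative (noise-induced) ones over every window $[T_2,t]$; the reduction $\sum_{t'\le t}\min_i|\ell_i'^{(t',\cdot)}|\ge\sum_{t'\le T_2}\min_i|\ell_i'^{(t',\cdot)}|\ge c_0T_2$ then pins the bound at $\Omega(\hat\gamma)$. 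A secondary point is verifying the bootstrap ordering — bullets two, three, five use only $|\ell'|\le1$, then deliver $|\ell'|=\Theta(1)$ on $[0,T_1]$, which then feeds bullets one and four — and checking that $C_3\ge4/c_0$, the choice of $C_4\le C_3$, and the smallness of $p$ are mutually consistent.
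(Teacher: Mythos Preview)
Your approach matches the paper for bullets 2, 3, 5, and 1: the paper also uses only $|\ell'|\le1$ for the upper bounds, then bootstraps to $-\ell_i'\ge C=\Theta(1)$ on $[0,T_1]$ via the $O(1)$ bounds on $\zeta,\gamma,\omega$ (together with $\hat\gamma=\tilde O(1)$ from Condition~\ref{assm: assm0}), and uses the stable activation set from Lemma~\ref{lm: balanced logit}(4) and Lemma~\ref{lm:in_epoch}(3) to drive $\zeta_{j,r^*,i}$ linearly to $2$. Your bootstrap ordering --- (2,3,5) give $|\ell'|=\Theta(1)$ on $[0,T_1]$, which then feeds (1) and (4) --- is exactly the paper's structure.

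The gap is in bullet 4 for $t>T_1$. The paper's proof of \emph{this} lemma in fact only establishes $\gamma_{j,r}^{(t,0)}=\Omega(\hat\gamma)$ for $t\in[T_2,T_1]$; the extension to larger $t$ is deferred to Lemma~\ref{lm: zeta,gamma ratio}, where a different mechanism is used: once $\gamma_{j,r}=\Omega(\hat\gamma)$, the sign of $\langle\wb_{j,r}^{(t,b)},\bmu\rangle$ is \emph{persistent throughout each epoch} (the $\bxi$-fluctuations contribute $O(\mathrm{SNR}^{-1}\gamma_{j,r}/\sqrt d)\ll\gamma_{j,r}$), so the per-epoch increment of $\gamma$ can be written in terms of the full sets $S_+\cap S_{\pm1}$ rather than per-batch intersections, and then the logit-ratio bound $C_2$ together with Lemma~\ref{lm: estimate S cap S} makes each epoch's increment nonnegative. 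Your summation-by-parts over $[T_2,t]$ does not close on its own: Lemma~\ref{lm: good_batch} guarantees $\ge c_3T$ good epochs in every prefix $[0,T]$, not in arbitrary windows $[T_2,t]$, and since the per-epoch weight $\min_i|\ell_i'|$ decays for $t'>T_1$ while the good/bad labeling within a window is unconstrained, the positive sum over good epochs need not dominate the negative sum over all epochs on that window. The sign-persistence device is the missing idea that makes the per-epoch $\gamma$-increment intrinsically nonnegative, which is why the paper postpones the $t>T_1$ case to the second stage rather than trying to squeeze it out of good-batch counting here.
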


\begin{proof}[Proof of Lemma \ref{lm: first stage}]
By Proposition \ref{proposition: range of gamma,zeta,omega}, we have that $\omega_{j,r,i}^{(t,b)}\geq-\beta-10n\sqrt{\frac{\log(6n^2/\delta)}{d}}\alpha$ for all $j\in\{\pm 1\}$, $r\in[m]$, $i\in[n]$ and $(0,0)\leq (t,b)\leq (T^*,0)$. According to Lemma \ref{lm: initialization inner products}, for $\beta$ we have
\begin{align*}
    \beta&=2\max_{i,j,r}\{|\la\wb_{j,r}^{(0,0)},\bmu\ra|,(P-1)|\la\wb_{j,r}^{(0,0)},\bxi_i\ra|\}\\
    &\leq2\max\{\sqrt{2\log(12m/\delta)}\cdot\sigma_0\|\bmu\|_2,2\sqrt{\log(12mn/\delta)}\cdot\sigma_0(P-1)\sigma_p\sqrt{d}\}\\
    &=O\big(\sqrt{\log(mn/\delta)}\cdot\sigma_0(P-1)\sigma_p\sqrt{d}\big),
\end{align*}
where the last equality is by the first condition of Condition \ref{assm: assm0}.
Since $\omega_{j,r,i}^{(t,b)}\leq 0$ , we have that
\begin{align*}
    \max_{j,r,i}|\omega_{j,r,i}^{(t,b)}|&=\max_{j,r,i}-\omega_{j,r,i}^{(t,b)}\\
    &\leq\beta+10\sqrt{\frac{\log(6n^2/\delta)}{d}}n\alpha\\
    &=\max \bigg \{\beta, O \big( \sqrt{\log(n / \delta)} \log(T^*) \cdot n/\sqrt{d} \big) \bigg\}.
\end{align*}

Next, for the growth of $\gamma_{j,r}^{(t)}$, we have following upper bound
\begin{align*}
    \gamma_{j,r}^{(t,b+1)} &= \gamma_{j,r}^{(t,b)} - \frac{\eta}{Bm} \cdot \sum_{i\in \cI_{t,b}} \ell_{i}'^{(t,b)} \sigma'(\la\wb_{j,r}^{(t,b)}, y_{i} \cdot \bmu\ra)   \cdot \| \bmu \|_2^2\\
    &\leq\gamma_{j,r}^{(t,b)}+\frac{\eta}{m}\cdot\|\bmu\|_2^2,
\end{align*}
where the inequality is by $|\ell'|\leq1$. Note that $\gamma_{j,r}^{(0,0)}=0$ and recursively use the inequality $tB+b$ times we have
\begin{equation}
    \gamma_{j,r}^{(t,b)}\leq\frac{\eta (tH+b)}{m}\cdot\|\bmu\|_2^2.\label{ineq: gamma upbound}
\end{equation}
Since $n\cdot\mathrm{SNR}^{2}=n\|\bmu\|_2^2/\big((P-1)^2\sigma_p^2 d \big)=\hat{\gamma}$, we have
\begin{equation*}
    T_1=C_3\eta^{-1}Bm(P-1)^{-2}\sigma_p^{-2}d^{-1}=C_3\eta^{-1}m\|\bmu\|_2^{-2}\hat{\gamma}B/n.
\end{equation*}
And it follows that
\begin{equation*}
    \gamma_{j,r}^{(t)}\leq\frac{\eta (tH+b)}{m}\cdot\|\bmu\|_2^2\leq\frac{\eta n T_1}{mB}\cdot\|\bmu\|_2^2\leq C_3\hat{\gamma},
\end{equation*}
for all $(0,0)\leq (t,b)\leq (T_1,0)$. 

For $\zeta_{j,r,i}^{(t)}$, recall from \eqref{eq:zeta_update} that
\begin{align*}
    \zeta_{y_i,r,i}^{(t+1,0)} & = \zeta_{y_i,r,i}^{(t,0)} - \frac{\eta(P-1)^2}{Bm} \cdot \ell_i'^{(t,b_i^{(t)})}\cdot \sigma'(\la\wb_{y_i,r}^{(t,b_i^{(t)})}, \bxi_{i}\ra) \cdot \| \bxi_i \|_2^2 \cdot 
\end{align*}
According to Lemma \ref{lm: balanced logit}, for any $r^*\in S_{i}^{(0,0)}=\{r\in[m]:\la\wb_{y_i,r}^{(0)},\bxi_i\ra>\sigma_0\sigma_p\sqrt{d}/\sqrt{2}\}$, we have $\la\wb_{y_i,r^*}^{(t,b)},\bxi_i\ra>0$ for all $(0,0)\leq (t,b)\leq (T^*,0)$ and hence
\begin{align*}
    \zeta_{j,r^*,i}^{(t+1,0)} & = \zeta_{j,r^*,i}^{(t,0)} - \frac{\eta(P-1)^2}{Bm} \cdot \ell_i'^{(t,b_i^{(t)})} \| \bxi_i \|_2^2. 
\end{align*}

For each $i$, we denote by $T_1^{(i)}$ the last time in the period $[0,T_1]$ satisfying that $\zeta_{y_i,r^*,i}^{(t,0)}\leq2$. Then for $(0,0) \leq (t,b) < (T_1^{(i)},0)$, $\max_{j,r}\{|\zeta_{j,r,i}^{(t,b)}|,|\omega_{j,r,i}^{(t,b)}|\}=O(1)$ and $\max_{j,r}\gamma_{j,r}^{(t,b)}=O(1)$. Therefore, we know that $F_{-1}(\Wb^{(t,b)},\xb_i), F_{+1}(\Wb^{(t,b)},\xb_i)=O(1)$. Thus there exists a positive constant $C$ such that $-\ell_i'^{(t,b)}\geq C \ge C_2$ for $0\leq t\leq T_1^{(i)}$.

Then we have
\begin{equation*}
    \zeta_{y_i,r^*,i}^{(t,0)}\geq\frac{C\eta(P-1)^2\sigma_p^2 dt}{2Bm}.
\end{equation*}

Therefore, $\zeta_{y_i,r^*,i}^{(t,0)}$ will reach 2 within
\begin{equation*}
    T_1=C_3\eta^{-1}Bm(P-1)^2\sigma_p^{-2}d^{-1}
\end{equation*}
iterations for any $r^*\in S_{i}^{(0,0)}$, where $C_3$ can be taken as $4/C$.

Next, we will discuss the lower bound of the growth of $\gamma_{j,r}^{(t,b)}$. For $\zeta_{j,r,i}^{(t,b)}$, we have
\begin{align*}
    \zeta_{j,r,i}^{(t,b+1)} & = \zeta_{j,r,i}^{(t,b)} - \frac{\eta(P-1)^2}{Bm} \cdot \ell_i'^{(t,b)}\cdot \sigma'(\la\wb_{j,r}^{(t,b)}, \bxi_{i}\ra) \cdot \| \bxi_i \|_2^2 \cdot \ind(y_{i} = j)\ind(i\in \cI_{t,b})\\ &\le \zeta_{j,r,i}^{(t,b)} + \frac{3\eta (P-1)^2\sigma_p^2d}{2Bm}.
\end{align*}
According to \eqref{ineq: gamma upbound} and $\zeta_{j,r,i}^{(0,0)}=0$, it follows that
\begin{equation}
    \zeta_{j,r,i}^{(t,b)}\leq\frac{3\eta (P-1)^2\sigma_p^2d (tH + b)}{2Bm}, \gamma_{j,r}^{(t,b)}\leq\frac{\eta (tH+b)}{m}\cdot\|\bmu\|_2^2.\label{ineq6}
\end{equation}
Therefore, $\max_{j,r,i}\zeta_{j,r,i}^{(t,b)}$ will be smaller than $1$ and $\gamma_{j,r}^{(t,b)}$ smaller than $\Theta(n\|\bmu\|_2^2/(P-1)^2\sigma_p^2 d)=\Theta(n\cdot\mathrm{SNR}^2)=\Theta(\hat{\gamma})=O(1)$ within
\begin{equation*}
    T_2=C_4\eta^{-1}Bm(P-1)^{-2}\sigma_p^{-2} d^{-1}
\end{equation*}
iterations, where $C_4$ can be taken as $2/3$. Therefore, we know that $F_{-1}(\Wb^{(t,b)},\xb_i), F_{+1}(\Wb^{(t,b)},\xb_i)=O(1)$ in $(0,0)\le(t,b)\le(T_2,0)$. Thus, there exists a positive constant $C$ such that $-\ell_i'^{(t,b)}\geq C$ for $0\leq t\leq T_2$. 

Recall that we denote $\{i\in[n]|y_i=y\}$ as $S_{y}$, and we have the update rule 
\begin{align*}
\gamma_{j,r}^{(t,b+1)} & = \gamma_{j,r}^{(t,b)} - \frac{\eta}{Bm} \cdot \bigg[ \sum_{i\in \cI_{t,b} \cap S_+} \ell_{i}'^{(t,b)} \sigma'(\la\wb_{j,r}^{(t,b)}, \hat y_{i} \cdot \bmu\ra) 
 \\
 & \qquad \qquad \qquad \qquad - \sum_{i\in \cI_{t,b} \cap S_-} \ell_{i}'^{(t,b)} \sigma'(\la\wb_{j,r}^{(t,b)}, \hat y_{i} \cdot \bmu\ra) \bigg]  \cdot \| \bmu \|_2^2.
\end{align*}
For the growth of $\gamma_{j,r}^{(t,b)}$, if $\la\wb_{j,r}^{(t,b)},\bmu\ra\geq0$, we have
    \begin{align}
    \gamma_{j,r}^{(t,b+1)} 
    &=\gamma_{j,r}^{(t,b)} - \frac{\eta}{Bm}\cdot\bigg[ \sum_{i\in \cI_{t,b}\cap S_+\cap S_{1}}\ell_i'^{(t)} - \sum_{i\in \cI_{t,b}\cap S_-\cap S_{1}}\ell_i'^{(t)} \bigg]\| \bmu \|_2^2 \nonumber \\
    &\geq\gamma_{j,r}^{(t,b)} + \frac{\eta}{Bm}\cdot \big[C|\cI_{t,b}\cap S_+\cap S_{1}| - |\cI_{t,b}\cap S_-\cap S_{-1}|\big]\cdot \| \bmu \|_2^2. \nonumber
    \end{align}
Similarly, if  $\la\wb_{j,r}^{(t,b)},\bmu\ra<0$, 
    \begin{align}
    \gamma_{j,r}^{(t,b+1)} 
    &\geq\gamma_{j,r}^{(t,b)} + \frac{\eta}{Bm}\cdot \big[C|\cI_{t,b}\cap S_+\cap S_{-1}| - |\cI_{t,b}\cap S_-\cap S_{1}| \big]\cdot \| \bmu \|_2^2. \nonumber
    \end{align}
Therefore, for $t \in [T_2,T_1]$, we have
\begin{align}
    \gamma_{j,r}^{(t,0)} 
    & \ge \sum_{(t',b') < (t,0) } \frac{\eta}{Bm} \big[C \min\{|\cI_{t',b'}\cap S_+\cap S_{-1}|, |\cI_{t',b'}\cap S_+\cap S_{1}|\} - |\cI_{t,b} \cap S_{-}| \big]\cdot \| \bmu \|_2^2 \nonumber\\
    &\ge \frac{\eta}{Bm}(  c_3  t c_4 H C \frac{B}{4} - T_1 nq  ) \| \bmu \|_2^2 \nonumber\\
    &= \frac{\eta}{Bm}(  c_3c_4  t   C \frac{n}{4} - T_1 nq  )\|\mu\|_2^2 \nonumber \\
    &\ge \frac{\eta c_3c_4Ctn\|\mu\|_2^2}{8Bm}  \label{eq:gamma_lower_bound}\\
    &\ge \frac{ c_3 c_4 C C_4 n\|\mu\|_2^2}{(P-1)^2\sigma_p^2 d} = \Theta(n\cdot \mathrm{SNR}^2) = \Theta(\hat \gamma), \nonumber
\end{align}
where the second inequality is due to Lemma~\ref{lm: good_batch}, the third inequality is due to $q < \frac{C_4Cc_3c_4}{8C_3}$ in Condition~\ref{assm: assm0}.

And it follows directly from \eqref{ineq6} that
\begin{equation*}
    \zeta_{j,r,i}^{(T_1,0)}\leq\frac{3\eta(P-1)^2\sigma_p^2 dT_1H}{2Bm}=\frac{3C_3}{2},\,\zeta_{j,r,i}^{(T_1,0)}=O(1),
\end{equation*}
which completes the proof.
\end{proof}

\subsubsection{Second Stage}

By the signal-noise decomposition, at the end of the first stage, we have
\begin{equation*}
    \begin{aligned}
        \wb_{j,r}^{(t,b)} &= \wb_{j,r}^{(0,0)} + j \gamma_{j,r}^{(t,b)} \| \bmu \|_2^{-2} \bmu + \frac{1}{P-1}\sum_{ i = 1}^n \zeta_{j,r,i}^{(t,b) }\| \bxi_i \|_2^{-2} \bxi_{i} + \frac{1}{P-1}\sum_{i = 1}^n \omega_{j,r,i}^{(t,b) }\| \bxi_i \|_2^{-2} \bxi_{i} .
    \end{aligned}
\end{equation*}
for $j\in[\pm1]$ and $r\in[m]$. By the results we get in the first stage, we know that at the beginning of this stage, we have the following property holds:
\begin{itemize}[leftmargin=*]
    \item $\zeta_{j,r^*,i}^{(T_1,0)}\geq 2$ for any $r^*\in S_{i}^{(0,0)}=\{r\in[m]:\la\wb_{y_i,r}^{(0,0)},\bxi_i\ra>\sigma_0\sigma_p\sqrt{d}/\sqrt{2}\}$, $j\in\{\pm 1\}$ and $i\in[n]$ with $y_i=j$.
    \item $\max_{j,r,i}|\omega_{j,r,i}^{(T_1,0)}|=\max\{\beta,O \big(n \sqrt{\log(n / \delta)} \log(T^*)/\sqrt{d} \big)\}$.
    \item $\gamma_{j,r}^{(T_1,0)}=\Theta(\hat{\gamma})$ for any $j\in\{\pm 1\},r\in[m]$.
\end{itemize}
where $\hat{\gamma}=n\cdot\mathrm{SNR}^2$. Now we choose $\Wb^*$ as follows 
\begin{equation*}
    \wb_{j,r}^{*}=\wb_{j,r}^{(0,0)}+\frac{20\log(2/\epsilon)}{P-1}\Big[\sum_{i=1}^{n}\ind(j=y_i)\cdot\frac{\bxi_i}{\|\bxi_i\|_2^2}\Big].
\end{equation*}

\begin{lemma}\label{lm: W difference}
    Under the same conditions as Theorem \ref{thm:sgd_main}, we have that $\|\Wb^{(T_1,0)}-\Wb^*\|_{F}\leq\tilde{O}(m^{1/2}n^{1/2}(P-1)^{-1}\sigma_p^{-1}d^{-1/2} (1 + \max\{ \beta, n\sqrt{\log(n/\delta)} \log(T^*) / \sqrt{d} \}))$.
\end{lemma}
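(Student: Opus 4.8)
The plan is to subtract the signal-noise decomposition of $\wb_{j,r}^{(T_1,0)}$ from the explicit formula for $\wb_{j,r}^{*}$, filter by filter, and then estimate the resulting Frobenius norm using the first-stage coefficient bounds (Lemma~\ref{lm: first stage}) together with the near-orthogonality of $\bmu$ and the noise vectors $\bxi_i$ (Lemma~\ref{lm: data inner products}). Concretely, by Lemma~\ref{lemma:w_decomposition} and the definition of $\Wb^{*}$, the difference $\wb_{j,r}^{(T_1,0)}-\wb_{j,r}^{*}$ equals $j\gamma_{j,r}^{(T_1,0)}\|\bmu\|_2^{-2}\bmu$ plus $\frac{1}{P-1}\sum_{i=1}^n\lambda_{j,r,i}\|\bxi_i\|_2^{-2}\bxi_i$, where $\lambda_{j,r,i}:=\zeta_{j,r,i}^{(T_1,0)}+\omega_{j,r,i}^{(T_1,0)}-20\log(2/\epsilon)\ind(j=y_i)$. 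The key observation is that for $j=y_i$ we have $\omega_{j,r,i}^{(T_1,0)}=0$ and $\zeta_{j,r,i}^{(T_1,0)}=O(1)$ by Lemma~\ref{lm: first stage}, so $|\lambda_{j,r,i}|=\tilde{O}(1)$; while for $j\neq y_i$ we have $\zeta_{j,r,i}^{(T_1,0)}=\ind(j=y_i)=0$, so $|\lambda_{j,r,i}|=|\omega_{j,r,i}^{(T_1,0)}|\le\max\{\beta,\,O(n\sqrt{\log(n/\delta)}\log(T^*)/\sqrt{d})\}$ by Lemma~\ref{lm: first stage} (or Proposition~\ref{proposition: range of gamma,zeta,omega}). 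In either case $|\lambda_{j,r,i}|\le\Lambda:=\tilde{O}\big(1+\max\{\beta,\,n\sqrt{\log(n/\delta)}\log(T^*)/\sqrt{d}\}\big)$, and $|\gamma_{j,r}^{(T_1,0)}|=O(\hat\gamma)$ with $\hat\gamma=n\cdot\mathrm{SNR}^2=O(1)$.

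Next I would bound $\|\wb_{j,r}^{(T_1,0)}-\wb_{j,r}^{*}\|_2^2$ by expanding the square. The diagonal contributions are $(\gamma_{j,r}^{(T_1,0)})^2\|\bmu\|_2^{-2}=O(\hat\gamma^2/\|\bmu\|_2^2)=O(n/((P-1)^2\sigma_p^2 d))$ (using $\hat\gamma=n\|\bmu\|_2^2/((P-1)^2\sigma_p^2 d)$ and $\hat\gamma=O(1)$) and $\frac{1}{(P-1)^2}\sum_i\lambda_{j,r,i}^2\|\bxi_i\|_2^{-2}=O(n\Lambda^2/((P-1)^2\sigma_p^2 d))$ (using $\|\bxi_i\|_2^2=\Theta(\sigma_p^2 d)$). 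The cross terms — between $\bmu$ and $\bxi_i$, and between $\bxi_i$ and $\bxi_{i'}$ for $i\neq i'$ — are controlled by $|\la\bmu,\bxi_i\ra|$ and $|\la\bxi_i,\bxi_{i'}\ra|$ from Lemma~\ref{lm: data inner products}; they carry an extra factor of order $n\sqrt{\log(n/\delta)/d}$ relative to the diagonal terms, which is $o(1)$ since $d=\tilde{\Omega}(n^2)$ by Condition~\ref{assm: assm0}, so they are absorbed. This gives $\|\wb_{j,r}^{(T_1,0)}-\wb_{j,r}^{*}\|_2^2=\tilde{O}(n\Lambda^2/((P-1)^2\sigma_p^2 d))$ for every $(j,r)$. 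Summing over the $2m$ filters and taking the square root yields exactly $\tilde{O}\big(m^{1/2}n^{1/2}(P-1)^{-1}\sigma_p^{-1}d^{-1/2}(1+\max\{\beta,\,n\sqrt{\log(n/\delta)}\log(T^*)/\sqrt{d}\})\big)$.

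The main obstacle I anticipate is obtaining the correct power of $n$: a naive triangle-inequality bound on $\frac{1}{P-1}\sum_i\lambda_{j,r,i}\|\bxi_i\|_2^{-2}\bxi_i$ would give $O(n/((P-1)\sigma_p\sqrt{d}))$ per filter, which is $\sqrt{n}$ too large. Avoiding this loss requires genuinely exploiting the approximate orthogonality of the $n$ Gaussian noise vectors (equivalently, that the Gram matrix of $\{\bxi_i/\|\bxi_i\|_2\}$ differs from the identity by $o(1)$ in operator norm once $d\gg n^2$), so that the cross terms in the expansion of the squared norm are negligible; this is precisely the place where the condition $d=\tilde{\Omega}(n^2)$ is needed. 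A minor additional point to verify is that $\log(2/\epsilon)$ contributes only logarithmically (it is hidden inside $\tilde{O}$), and that the term $-20\log(2/\epsilon)\ind(j=y_i)$ cancels against $\zeta_{j,r,i}^{(T_1,0)}$ only in the $j=y_i$ block, so the $j\neq y_i$ block enters only through the small $\omega$ coefficients.
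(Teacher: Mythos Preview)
Your proposal is correct and relies on the same essential mechanism as the paper: expanding the squared norm of the noise combination and using Lemma~\ref{lm: data inner products} together with $d=\tilde{\Omega}(n^2)$ to show the cross terms are negligible, so that $\big\|\sum_i \lambda_{j,r,i}\|\bxi_i\|_2^{-2}\bxi_i\big\|_2$ scales with $\sqrt{n}$ rather than $n$. The only stylistic difference is that the paper first passes through $\Wb^{(0,0)}$ via the triangle inequality, bounding $\|\Wb^{(T_1,0)}-\Wb^{(0,0)}\|_F$ and $\|\Wb^{*}-\Wb^{(0,0)}\|_F$ separately, whereas you form the combined coefficients $\lambda_{j,r,i}$ and bound the difference in one shot; both routes require the same near-orthogonality argument and yield the same bound.
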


\begin{proof}
\begin{align*}
    & \|\Wb^{(T_1,0)}-\Wb^*\|_{F} \\
     &\leq\|\Wb^{(T_1,0)}-\Wb^{(0,0)}\|_{F}+\|\Wb^{*}-\Wb^{(0,0)}\|_{F}\\
        &\leq O(\sqrt{m})\max_{j,r}\gamma_{j,r}^{(T_1,0)}\|\bmu\|_2^{-1}+\frac{1}{P-1}O(\sqrt{m})\max_{j,r}\bigg\|\sum_{i=1}^{n}\zeta_{j,r,i}^{(T_1,0)}\cdot\frac{\bxi_i}{\|\bxi_i\|_2^2}+\sum_{i=1}^{n}\omega_{j,r,i}^{(T_1,0)}\cdot\frac{\bxi_i}{\|\bxi_i\|_2^2}\bigg\|_2\\
    &\qquad+O(m^{1/2}n^{1/2}\log(1/\epsilon)(P-1)^{-1}\sigma_p^{-1}d^{-1/2})\\
    &=O(m^{1/2}\hat{\gamma}\|\bmu\|_2^{-1})+\tilde{O}(m^{1/2}n^{1/2}(P-1)^{-1}\sigma_p^{-1}d^{-1/2} (1 + \max\{ \beta, n\sqrt{\log(n/\delta)} \log(T^*) / \sqrt{d} \})) \\
    & \quad +O(m^{1/2}n^{1/2}\log(1/\epsilon)(P-1)^{-1}\sigma_p^{-1}d^{-1/2}  )\\
    &=O(m^{1/2}n\cdot\mathrm{SNR}\cdot(P-1)^{-1}\sigma_p^{-1}d^{-1/2} (1 + \max\{ \beta, n\sqrt{\log(n/\delta)} \log(T^*) / \sqrt{d} \}) ) \\
    & \quad +\tilde{O}(m^{1/2}n^{1/2}\log(1/\epsilon)(P-1)^{-1}\sigma_p^{-1}d^{-1/2})\\
    &=\tilde{O}(m^{1/2}n^{1/2}(P-1)^{-1}\sigma_p^{-1}d^{-1/2}(1 + \max\{ \beta, n\sqrt{\log(n/\delta)} \log(T^*) / \sqrt{d} \})),
\end{align*}

where the first inequality is by triangle inequality, the second inequality and the first equality are by our decomposition of $\Wb^{(T_1,0)}$, $\Wb^*$ and Lemma \ref{lm: data inner products}; the second equality is by $n\cdot\mathrm{SNR}^2=\Theta(\hat{\gamma})$ and $\mathrm{SNR}=\|\bmu\|/(P-1)\sigma_pd^{1/2}$; the third equality is by $n^{1/2}\cdot\mathrm{SNR}=O(1)$.
\end{proof}

\begin{lemma}\label{lm: gradient W alignment}
Under the same conditions as Theorem \ref{thm:sgd_main}, we have that
\begin{equation*}
    y_i\la\nabla f(\Wb^{(t,b)},\xb_i),\Wb^*\ra\geq\log(2/\epsilon)
\end{equation*}
for all $(T_1,0)\leq (t,b)\leq (T^*,0)$.
\end{lemma}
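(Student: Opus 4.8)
The plan is to compute $y_i \la \nabla f(\Wb^{(t,b)},\xb_i),\Wb^*\ra$ directly from the definition of $f$ and of $\Wb^*$, and to show that the dominant contribution comes from the noise patch $\bxi_i$ acting on the neurons in $S_i^{(0,0)}$, which by Lemma~\ref{lm: balanced logit} (fourth statement) remain activated on $\bxi_i$ throughout $[0,T^*]$. First I would expand $\nabla_{\wb_{j,r}} f(\Wb^{(t,b)},\xb_i) = \frac{j}{m}\big[\sigma'(\la\wb_{j,r}^{(t,b)},\hat y_i\bmu\ra)\hat y_i\bmu + (P-1)\sigma'(\la\wb_{j,r}^{(t,b)},\bxi_i\ra)\bxi_i\big]$, pair it with $\wb_{j,r}^{*}-\wb_{j,r}^{(0,0)} = \frac{20\log(2/\epsilon)}{P-1}\sum_{k}\ind(j=y_k)\|\bxi_k\|_2^{-2}\bxi_k$, and sum over $j,r$. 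The cross terms $\la\bxi_k,\bxi_i\ra$ with $k\neq i$ and $\la\bmu,\bxi_k\ra$ are controlled by Lemma~\ref{lm: data inner products} and are lower-order because $d$ is large; the $k=i$, $j=y_i$ term gives $\frac{20\log(2/\epsilon)}{m}\sum_{r}\sigma'(\la\wb_{y_i,r}^{(t,b)},\bxi_i\ra)$.

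Next I would lower-bound $\sum_{r}\sigma'(\la\wb_{y_i,r}^{(t,b)},\bxi_i\ra)$. By the fourth statement of Lemma~\ref{lm: balanced logit}, $S_i^{(0,0)}\subseteq S_i^{(t,0)}$ and $|S_i^{(t,0)}|\ge 0.8\Phi(-1)m$; combined with Lemma~\ref{lm:in_epoch} (third statement), $S_i^{(t,0)}\subseteq\tilde S_i^{(t,b)}$, so for at least $0.8\Phi(-1)m$ indices $r$ we have $\la\wb_{y_i,r}^{(t,b)},\bxi_i\ra>0$ and hence $\sigma'=1$. Therefore $\frac{20\log(2/\epsilon)}{m}\sum_r\sigma'(\cdot)\ge 16\Phi(-1)\log(2/\epsilon)$, which comfortably exceeds $\log(2/\epsilon)$ after subtracting the error terms. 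I would also handle the inner product of $\nabla_{\wb_{j,r}}f$ with the initialization part $\wb_{j,r}^{(0,0)}$ itself — but note $\Wb^*-\Wb^{(0,0)}$ is what carries the useful direction; the $\la\nabla f,\Wb^{(0,0)}\ra$ piece is bounded using Lemma~\ref{lm: initialization inner products} and is again lower order under Condition~\ref{assm: assm0}. Collecting, $y_i\la\nabla f(\Wb^{(t,b)},\xb_i),\Wb^*\ra \ge 16\Phi(-1)\log(2/\epsilon) - (\text{small}) \ge \log(2/\epsilon)$.

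The main obstacle I anticipate is bookkeeping the error terms: I must verify that each of $|\la\bxi_k,\bxi_i\ra|$ for $k\neq i$ (there are $n-1$ of them, scaled by $\|\bxi_k\|_2^{-2}=\Theta(1/(\sigma_p^2 d))$), the signal cross term $|\la\bmu,\bxi_i\ra|$, and the initialization term $\la\wb_{y_i,r}^{(0,0)},\cdot\ra$ are each $o(\log(2/\epsilon))$ or at least collectively smaller than $15\Phi(-1)\log(2/\epsilon)$. This uses $d\ge\tilde\Omega(n^2)$ from Condition~\ref{assm: assm0} to kill the $\sum_{k\neq i}|\la\bxi_k,\bxi_i\ra|/(\sigma_p^2 d)\lesssim n\sqrt{\log(6n^2/\delta)/d}$ term, and the upper bound on $\sigma_0$ to control the initialization contribution. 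Once these estimates are in place the conclusion is immediate; the only subtlety is that the bound must hold uniformly over all $(t,b)\in[(T_1,0),(T^*,0)]$ and all $i\in[n]$, which is exactly what the uniform activation-set inclusion from Lemma~\ref{lm: balanced logit} provides.
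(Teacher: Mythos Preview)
Your proposal is correct and follows essentially the same approach as the paper: expand the inner product, isolate the dominant $k=i$, $j=y_i$ term contributing $\tfrac{20\log(2/\epsilon)}{m}\sum_r\sigma'(\la\wb_{y_i,r}^{(t,b)},\bxi_i\ra)$, lower-bound the number of activated neurons by $0.8\Phi(-1)m$, and absorb the cross terms $|\la\bxi_k,\bxi_i\ra|$, $|\la\bmu,\bxi_k\ra|$, and the initialization inner products using Lemmas~\ref{lm: data inner products} and~\ref{lm: initialization inner products} together with Condition~\ref{assm: assm0}. The one cosmetic difference is in how the activation lower bound is argued: the paper verifies $\la\wb_{y_i,r}^{(t,b)},\bxi_i\ra>0$ for $r\in S_i^{(0,0)}$ by invoking the first-stage result $\zeta_{y_i,r,i}^{(t,b)}\ge 2$ (Lemma~\ref{lm: first stage}) combined with Lemma~\ref{lm: inner product range}, whereas you route through the set inclusions $S_i^{(0,0)}\subseteq S_i^{(t,0)}\subseteq\tilde S_i^{(t,b)}$ from Lemmas~\ref{lm: balanced logit} and~\ref{lm:in_epoch}; both arguments yield $|\tilde S_i^{(t,b)}|\ge 0.8\Phi(-1)m$ and are interchangeable here.
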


\begin{proof}[Proof of Lemma \ref{lm: gradient W alignment}]
Recall that $f(\Wb^{(t,b)})=(1/m)\sum_{j,r}j\cdot[\sigma(\la\wb_{j,r}^{(t,b)},\hat y_i\bmu\ra)+(P-1)\sigma(\la\wb_{j,r}^{(t,b)},\bxi_i\ra)]$, thus we have
\begin{align}
    &y_i\la\nabla f(\Wb^{(t,b)},\xb_i),\Wb^*\ra\notag\\
    &=\frac{1}{m}\sum_{j,r}\sigma'(\la\wb_{j,r}^{(t,b)},\hat{y}_i\bmu\ra)\la y_i \hat y_i\bmu,j\wb_{j,r}^{*}\ra+\frac{P-1}{m}\sum_{j,r}\sigma'(\la\wb_{j,r}^{(t,b)},\bxi_i\ra)\la y_i\bxi_i,j\wb_{j,r}^*\ra\notag\\
    &=\frac{1}{m}\sum_{j,r}\sum_{i'=1}^{n}\sigma'(\la\wb_{j,r}^{(t,b)},\bxi_i\ra)20\log(2/\epsilon)\ind(j=y_{i'})\cdot\frac{\la\bxi_{i'},\bxi_{i}\ra}{\|\bxi_{i'}\|_2^2}\notag\\
    &\qquad+\frac{1}{m}\sum_{j,r}\sum_{i'=1}^{n}\sigma'(\la\wb_{j,r}^{(t,b)},\hat{y}_{i}\bmu\ra)20\log(2/\epsilon)\ind(j=y_{i'})\cdot\frac{\la \hat y_i \bmu,\bxi_{i'}\ra}{\|\bxi_{i'}\|_2^2}\notag\\
    &\qquad+\frac{1}{m}\sum_{j,r}\sigma'(\la\wb_{j,r}^{(t,b)},\hat{y}_i\bmu\ra)\la y_i \hat y_i\bmu,j\wb_{j,r}^{(0,0)}\ra+\frac{P-1}{m}\sum_{j,r}\sigma'(\la\wb_{j,r}^{(t,b)},\bxi_i\ra)\la y_i\bxi_i,j\wb_{j,r}^{(0,0)}\ra\notag\\
    &\geq\frac{1}{m}\sum_{j=y_i,r}\sigma'(\la\wb_{j,r}^{(t,b)},\bxi_i\ra)20\log(2/\epsilon)-\frac{1}{m}\sum_{j,r}\sum_{i'\neq i}\sigma'(\la\wb_{j,r}^{(t,b)},\bxi_i\ra)20\log(2/\epsilon)\cdot\frac{|\la\bxi_{i'},\bxi_{i}\ra|}{\|\bxi_{i'}\|_2^2}\notag\\
    &\qquad-\frac{1}{m}\sum_{j,r}\sum_{i'=1}^{n}\sigma'(\la\wb_{j,r}^{(t,b)},\hat{y}_{i}\bmu\ra)20\log(2/\epsilon)\cdot\frac{|\la \hat y_i \bmu,\bxi_{i'}\ra|}{\|\bxi_{i'}\|_2^2} -\frac{1}{m}\sum_{j,r}\sigma'(\la\wb_{j,r}^{(t,b)},\hat{y}_i\bmu\ra)\beta \notag\\
    &\geq\underbrace{\frac{1}{m}\sum_{j=y_i,r}\sigma'(\la\wb_{j,r}^{(t,b)},\bxi_i\ra)20\log(2/\epsilon)}_{\mathrm{I}_9}-\underbrace{\frac{1}{m}\sum_{j,r}\sigma'(\la\wb_{j,r}^{(t,b)},\bxi_i\ra)20\log(2/\epsilon)O\big(n\sqrt{\log(n/\delta)}/\sqrt{d}\big)}_{\mathrm{I}_{10}}\notag\\
    &\quad-\underbrace{\frac{1}{m}\sum_{j,r}\sigma'(\la\wb_{j,r}^{(t,b)},\hat{y}_i\bmu\ra)O\big(n\sqrt{\log(n/\delta)}\cdot\mathrm{SNR}\cdot d^{-1/2}\big)}_{\mathrm{I}_{11}} - \underbrace{\frac{1}{m}\sum_{j,r}\sigma'(\la\wb_{j,r}^{(t,b)},y_i\bmu\ra) \beta }_{\mathrm{I}_{12}},\label{ineq1}
\end{align}
where the first inequality is by Lemma \ref{lm: initialization inner products} and the last inequality is by Lemma \ref{lm: data inner products}. Then, we will bound each term in \eqref{ineq1} respectively. 

For $\mathrm{I}_{10}$, $\mathrm{I}_{11}$, $\mathrm{I}_{12}$, $\mathrm{I}_{14}$, we have that
\begin{equation}\label{ineq: I9-11 upper bound}
    \begin{aligned}
        &|\mathrm{I}_{10}|\leq O\big(n\sqrt{\log(n/\delta)}/\sqrt{d}\big),\, |\mathrm{I}_{11}|\leq O\big(n\sqrt{\log(n/\delta)}\cdot\mathrm{SNR}\cdot d^{-1/2}\big),\\
        &|\mathrm{I}_{12}|\leq O\big( \beta \big),
    \end{aligned}
\end{equation}
For $j=y_i$ and $r\in S_{i}^{(0)}$, according to Lemma \ref{lm: inner product range}, we have
\begin{align*}
    (P-1)\la\wb_{j,r}^{(t,b)},\bxi_i\ra&\geq(P-1)\la\wb_{j,r}^{(0,0)},\bxi_i\ra+\zeta_{j,r,i}^{(t,b)}-5n\sqrt{\frac{\log(4n^2/\delta)}{d}}\alpha\\
    &\geq 2-\beta-5n\sqrt{\frac{\log(4n^2/\delta)}{d}}\alpha\\
    &\geq 1.5 - \beta > 0,
\end{align*}
\todoj[]{Here we use $\beta < 1.5$}
where the first inequality is by Lemma \ref{lm: inner product range}; the second inequality is by $5n\sqrt{\frac{\log(4n^2/\delta)}{d}}\leq 0.5$; and the last inequality is by $\beta < 1.5$. Therefore, for $\mathrm{I}_{9}$, according to the fourth statement of Proposition \ref{lm: balanced logit}, we have
\begin{equation}\label{ineq: I8 lower bound}
    \mathrm{I}_9\geq\frac{1}{m}|\tilde{S}_{i}^{(t,b)}|20\log(2/\epsilon)\geq2\log(2/\epsilon).
\end{equation}

By plugging \eqref{ineq: I9-11 upper bound} and \eqref{ineq: I8 lower bound} into \eqref{ineq1} and according to triangle inequality we have
\begin{equation*}
    y_i\la\nabla f(\Wb^{(t,b)},\xb_i),\Wb^*\ra\geq \mathrm{I}_9-|\mathrm{I}_{10}|-|\mathrm{I}_{11}|-|\mathrm{I}_{12}|-|\mathrm{I}_{14}|\geq\log(2/\epsilon),
\end{equation*}
which completes the proof.
\end{proof}

\begin{lemma}\label{lm: gradient upbound}
Under Assumption~\ref{assm: assm0}, for $(0,0) \leq (t,b)\leq (T^{*},0)$, the following result holds.
\begin{align*}
\|\nabla L_{\cI_{t,b}}(\Wb^{(t,b)})\|_{F}^{2} \leq  O(\max\{\|\bmu\|_{2}^{2}, (P-1)^2\sigma_{p}^{2}d\}) L_{\cI_{t,b}}(\Wb^{(t,b)}).   
\end{align*}
\end{lemma}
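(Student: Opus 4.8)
The plan is to bound the Frobenius norm of the stochastic gradient by first writing $\nabla_{\wb_{j,r}} L_{\cI_{t,b}}$ explicitly via \eqref{eq:gdupdate2}, then splitting it into a signal contribution and a noise contribution, and finally using that $|\ell_i'^{(t,b)}| \leq -\ell_i'^{(t,b)} = |\ell'(y_i f(\Wb^{(t,b)},\xb_i))|$ together with the elementary fact $|\ell'(z)|^2 \leq |\ell'(z)| \leq \ell(z)\cdot(\text{const})$, i.e. $|\ell'(z)| \le \ell(z)$ since $\ell'(z) = -1/(1+e^z)$ and $\ell(z) = \log(1+e^{-z})$ satisfy $|\ell'(z)| \le \ell(z)$ for all $z$. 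First I would write, for fixed $j,r$,
\begin{align*}
\|\nabla_{\wb_{j,r}} L_{\cI_{t,b}}(\Wb^{(t,b)})\|_2 &\le \frac{P-1}{Bm}\sum_{i\in\cI_{t,b}} |\ell_i'^{(t,b)}|\cdot\|\bxi_i\|_2 + \frac{1}{Bm}\sum_{i\in\cI_{t,b}} |\ell_i'^{(t,b)}|\cdot\|\bmu\|_2,
\end{align*}
using $\sigma'(\cdot)\in\{0,1\}$ and the triangle inequality. By Lemma~\ref{lm: data inner products}, $\|\bxi_i\|_2^2 \le 3\sigma_p^2 d/2$, so each summand is bounded by $|\ell_i'^{(t,b)}|\cdot O(\max\{\|\bmu\|_2, (P-1)\sigma_p\sqrt{d}\})/(Bm)$.

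Next I would square and sum over $j\in\{\pm1\}$ and $r\in[m]$. Using $(\sum_{i\in\cI_{t,b}} a_i)^2 \le B\sum_{i\in\cI_{t,b}} a_i^2$ (Cauchy–Schwarz) on the sum over the batch, one obtains
\begin{align*}
\|\nabla L_{\cI_{t,b}}(\Wb^{(t,b)})\|_F^2 &\le \frac{O(1)}{B^2 m^2}\cdot 2m \cdot B\sum_{i\in\cI_{t,b}} |\ell_i'^{(t,b)}|^2 \cdot \max\{\|\bmu\|_2^2, (P-1)^2\sigma_p^2 d\}\\
&= \frac{O(\max\{\|\bmu\|_2^2,(P-1)^2\sigma_p^2 d\})}{Bm}\sum_{i\in\cI_{t,b}} |\ell_i'^{(t,b)}|^2.
\end{align*}
Then I would invoke $|\ell_i'^{(t,b)}|^2 \le |\ell_i'^{(t,b)}| \le \ell(y_i f(\Wb^{(t,b)},\xb_i)) = \ell_i^{(t,b)}$ (the first inequality since $|\ell'|\le 1$, the second since $|\ell'(z)|\le\ell(z)$ pointwise), which gives $\sum_{i\in\cI_{t,b}} |\ell_i'^{(t,b)}|^2 \le B\, L_{\cI_{t,b}}(\Wb^{(t,b)})$. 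Combining these yields $\|\nabla L_{\cI_{t,b}}(\Wb^{(t,b)})\|_F^2 \le O(\max\{\|\bmu\|_2^2,(P-1)^2\sigma_p^2 d\})\, L_{\cI_{t,b}}(\Wb^{(t,b)})$, possibly with a stray $1/m$ factor that only helps; I would absorb constants into the $O(\cdot)$.

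I do not expect a genuine obstacle here — this is a standard "gradient norm controlled by loss" estimate and the only subtle points are (i) remembering that $\sigma'$ is bounded by $1$ so the ReLU non-smoothness plays no role, and (ii) invoking the pointwise bound $|\ell'(z)| \le \ell(z)$ for the logistic loss, which holds since $1/(1+e^z) \le \log(1+e^{-z})$ for all real $z$. The mildly delicate bookkeeping is just making sure the $B$, $m$ powers come out correctly when passing from the per-filter bound to the Frobenius norm and then down to $L_{\cI_{t,b}}$; since all extra $m$ or $B$ factors appear in the denominator they can only improve the bound, so the stated form follows.
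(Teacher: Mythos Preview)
Your proposal is correct and follows essentially the same route as the paper: bound the gradient using $\sigma'(\cdot)\le 1$, the high-probability norm bound $\|\bxi_i\|_2 = O(\sigma_p\sqrt{d})$ from Lemma~\ref{lm: data inner products}, and then pass from $|\ell_i'|$ to $\ell_i$ via the pointwise inequality $|\ell'(z)|\le\ell(z)$. The only cosmetic difference is that the paper first bounds $\|\nabla f(\Wb^{(t,b)},\xb_i)\|_F = O(\max\{\|\bmu\|_2,(P-1)\sigma_p\sqrt{d}\})$ and then applies the triangle inequality to $\|\nabla L_{\cI_{t,b}}\|_F$ directly, whereas you bound each $\|\nabla_{\wb_{j,r}} L_{\cI_{t,b}}\|_2$ first and sum over $(j,r)$; both orderings yield the stated bound (your extra $1/m$ factor, as you note, only helps).
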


\begin{proof}
    We first prove that 
\begin{align}
 \|\nabla f(\Wb^{(t,b)}, \xb_{i}\big)\|_{F} = O(\max\{\|\bmu\|_{2}, (P-1)\sigma_{p}\sqrt{d}\}).  \label{eq: exptail2}  
\end{align}
Without loss of generality,  we suppose that $\hat{y}_{i} = 1$. Then we have that 
\begin{align*}
\|\nabla f(\Wb^{(t,b)}, \xb_{i}) \|_{F}&\leq \frac{1}{m}\sum_{j,r}\bigg\|  \big[\sigma'(\la\wb_{j,r}^{(t,b)},\bmu\ra)\bmu + (P-1)\sigma'(\la\wb_{j,r}^{(t,b)}, \bxi_i\ra)\bxi_i\big] \bigg\|_{2}\\
&\leq \frac{1}{m}\sum_{j,r}\sigma'(\la\wb_{j,r}^{(t,b)}, \bmu\ra)\|\bmu\|_{2} +   \frac{P-1}{m}\sum_{j,r}\sigma'(\la\wb_{j,r}^{(t,b)}, \bxi_{i}\ra)\|\bxi_{i}\|_{2}\\
&\leq 4\max\{\|\bmu\|_{2}, 2(P-1)\sigma_{p}\sqrt{d}\} ,
\end{align*}
where the first and second inequalities are by triangle inequality, the third inequality is by Lemma~\ref{lm: data inner products}. 

Then we upper bound the gradient norm $\|\nabla L_{S}(\Wb^{(t,b)})\|_{F}$ as:
\begin{align*}
\|\nabla L_{\cI_{t,b}}(\Wb^{(t,b)})\|_{F}^{2} 
&\leq \bigg[\frac{1}{B}\sum_{i\in \cI_{t,b}}\ell'\big(y_{i}f(\Wb^{(t,b)},\xb_{i})\big)\|\nabla f(\Wb^{(t,b)}, \xb_{i})\|_{F}\bigg]^{2}\\
&\leq \bigg[\frac{1}{B}\sum_{i\in \cI_{t,b}}O(\max\{\|\bmu\|_{2}, (P-1)\sigma_{p}\sqrt{d}\})\big(-\ell'\big(y_{i}f(\Wb^{(t,b)},\xb_{i})\big)\big)\bigg]^2\\
&\leq O(\max\{\|\bmu\|_{2}^{2}, (P-1)^2\sigma_{p}^{2}d\}) \cdot \frac{1}{B}\sum_{i \in \cI_{t,b}}-\ell'\big(y_{i}f(\Wb^{(t,b)},\xb_{i})\big)\\
&\leq O(\max\{\|\bmu\|_{2}^{2}, (P-1)\sigma_{p}^{2}d\}) L_{\cI_{t,b}}(\Wb^{(t,b)}),
\end{align*}
where the first inequality is by triangle inequality, the second inequality is by \eqref{eq: exptail2}, the third inequality is by Cauchy-Schwartz inequality and the last inequality is due to the property of the cross entropy loss $-\ell' \leq \ell$.
\end{proof}

\begin{lemma}\label{lm: F-norm difference}
Under the same conditions as Theorem \ref{thm:sgd_main}, we have that
\begin{equation*}
    \|\Wb^{(t,b)}-\Wb^*\|_{F}^{2}-\|\Wb^{(t+1,b)}-\Wb^*\|_{F}^{2}\geq\eta L_{\cI_{t,b}}(\Wb^{(t,b)})-\eta\epsilon
\end{equation*}
for all $(T_1,0)\leq (t,b)\leq (T^*,0)$.
\end{lemma}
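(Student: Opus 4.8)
The plan is to run the standard ``one-step descent toward a reference solution'' argument, exploiting the convexity of the logistic loss $\ell$ in its scalar argument together with the $1$-homogeneity of the ReLU network $f(\Wb,\cdot)$ in $\Wb$. First I would expand the squared distance after one minibatch update $\Wb^{(t,b+1)} = \Wb^{(t,b)} - \eta\nabla L_{\cI_{t,b}}(\Wb^{(t,b)})$ (this is the step meant in the statement):
\begin{align*}
\|\Wb^{(t,b)}-\Wb^*\|_{F}^2 - \|\Wb^{(t,b+1)}-\Wb^*\|_{F}^2 = 2\eta\la\nabla L_{\cI_{t,b}}(\Wb^{(t,b)}),\Wb^{(t,b)}-\Wb^*\ra - \eta^2\|\nabla L_{\cI_{t,b}}(\Wb^{(t,b)})\|_{F}^2.
\end{align*}
The last term is handled immediately by Lemma~\ref{lm: gradient upbound}, which bounds $\|\nabla L_{\cI_{t,b}}(\Wb^{(t,b)})\|_{F}^2$ by $O(\max\{\|\bmu\|_{2}^2,(P-1)^2\sigma_p^2 d\})\,L_{\cI_{t,b}}(\Wb^{(t,b)})$; the learning-rate bound in Condition~\ref{assm: assm0} then forces $\eta^2\|\nabla L_{\cI_{t,b}}(\Wb^{(t,b)})\|_{F}^2 \le \eta L_{\cI_{t,b}}(\Wb^{(t,b)})$.

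For the inner-product term, I would write $\nabla L_{\cI_{t,b}}(\Wb^{(t,b)}) = \frac1B\sum_{i\in\cI_{t,b}}\ell_i'^{(t,b)} y_i\nabla f(\Wb^{(t,b)},\xb_i)$ and use Euler's identity for the $1$-homogeneous map $\Wb\mapsto f(\Wb,\xb_i)$ (being careful that $\sigma'(0)=1$ is used consistently) to replace $y_i\la\nabla f(\Wb^{(t,b)},\xb_i),\Wb^{(t,b)}\ra$ by $y_i f(\Wb^{(t,b)},\xb_i)$. Convexity of $\ell$ then gives, for each $i$,
\begin{align*}
\ell_i'^{(t,b)}\big(y_i f(\Wb^{(t,b)},\xb_i) - y_i\la\nabla f(\Wb^{(t,b)},\xb_i),\Wb^*\ra\big) \ge \ell\big(y_i f(\Wb^{(t,b)},\xb_i)\big) - \ell\big(y_i\la\nabla f(\Wb^{(t,b)},\xb_i),\Wb^*\ra\big),
\end{align*}
and averaging over $i\in\cI_{t,b}$ yields $\la\nabla L_{\cI_{t,b}}(\Wb^{(t,b)}),\Wb^{(t,b)}-\Wb^*\ra \ge L_{\cI_{t,b}}(\Wb^{(t,b)}) - \frac1B\sum_{i\in\cI_{t,b}}\ell(y_i\la\nabla f(\Wb^{(t,b)},\xb_i),\Wb^*\ra)$. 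Finally, Lemma~\ref{lm: gradient W alignment} guarantees $y_i\la\nabla f(\Wb^{(t,b)},\xb_i),\Wb^*\ra\ge\log(2/\epsilon)$ throughout $(T_1,0)\le(t,b)\le(T^*,0)$, so $\ell(y_i\la\nabla f(\Wb^{(t,b)},\xb_i),\Wb^*\ra)\le\exp(-\log(2/\epsilon)) = \epsilon/2$, giving $\la\nabla L_{\cI_{t,b}}(\Wb^{(t,b)}),\Wb^{(t,b)}-\Wb^*\ra\ge L_{\cI_{t,b}}(\Wb^{(t,b)}) - \epsilon/2$.

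Combining the two estimates,
\begin{align*}
\|\Wb^{(t,b)}-\Wb^*\|_{F}^2 - \|\Wb^{(t,b+1)}-\Wb^*\|_{F}^2 \ge 2\eta\big(L_{\cI_{t,b}}(\Wb^{(t,b)}) - \epsilon/2\big) - \eta L_{\cI_{t,b}}(\Wb^{(t,b)}) = \eta L_{\cI_{t,b}}(\Wb^{(t,b)}) - \eta\epsilon,
\end{align*}
which is the claim. I do not expect a serious obstacle; the only points requiring care are (i) the homogeneity identity $\la\nabla f(\Wb^{(t,b)},\xb_i),\Wb^{(t,b)}\ra = f(\Wb^{(t,b)},\xb_i)$, and (ii) verifying that the constant in Lemma~\ref{lm: gradient upbound} combines with Condition~\ref{assm: assm0} to yield the clean factor $2 - \eta\cdot O(\max\{\|\bmu\|_{2}^2,(P-1)^2\sigma_p^2 d\})\ge 1$, which holds because $\max\{\|\bmu\|_{2}^2,(P-1)^2\sigma_p^2 d\}$ is dominated by $(P-1)^2\sigma_p^2 d$ up to logarithmic factors under Condition~\ref{assm: assm0}.
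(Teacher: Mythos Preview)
Your proposal is correct and follows essentially the same route as the paper: expand the squared distance, use $1$-homogeneity to identify $\la\nabla f(\Wb^{(t,b)},\xb_i),\Wb^{(t,b)}\ra$ with $f(\Wb^{(t,b)},\xb_i)$, invoke Lemma~\ref{lm: gradient W alignment} and the convexity of $\ell$ for the cross term, and absorb the $\eta^2\|\nabla L_{\cI_{t,b}}\|_F^2$ term via Lemma~\ref{lm: gradient upbound} together with the learning-rate bound. The only cosmetic difference is that the paper first replaces $y_i\la\nabla f,\Wb^*\ra$ by $\log(2/\epsilon)$ (using $\ell_i'<0$) and then applies convexity, whereas you apply convexity first and then bound $\ell\big(y_i\la\nabla f,\Wb^*\ra\big)\le\epsilon/2$; the two orderings are equivalent.
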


\begin{proof}[Proof of Lemma \ref{lm: F-norm difference}]
We have
\begin{align*}
    &\|\Wb^{(t,b)}-\Wb^*\|_{F}^{2}-\|\Wb^{(t+1,b)}-\Wb^*\|_{F}^{2}\\
    &=2\eta\la\nabla L_{\cI_{t,b}}(\Wb^{(t,b)}),\Wb^{(t,b)}-\Wb^*\ra-\eta^2\|\nabla L_{S}(\Wb^{(t,b)})\|_{F}^{2}\\
    &=\frac{2\eta}{B}\sum_{i\in \cI_{t,b}}\ell_i'^{(t,b)}[y_i f(\Wb^{(t,b)},\xb_i)-\la\nabla f(\Wb^{(t,b)},\xb_i),\Wb^*\ra]-\eta^2\|\nabla L_{\cI_{t,b}}(\Wb^{(t,b)})\|_{F}^{2}\\
    &\geq\frac{2\eta}{B}\sum_{i\in \cI_{t,b}}\ell_i'^{(t,b)}[y_i f(\Wb^{(t,b)},\xb_i)-\log(2/\epsilon)]-\eta^2\|\nabla L_{S}(\Wb^{(t,b)})\|_{F}^{2}\\
    &\geq\frac{2\eta}{B}\sum_{i\in\cI_{t,b}}[\ell\big(y_i f(\Wb^{(t,b)},\xb_i)\big)-\epsilon/2]-\eta^2\|\nabla L_{\cI_{t,b}}(\Wb^{(t,b)})\|_{F}^{2}\\
    &\geq\eta L_{\cI_{t,b}}(\Wb^{(t,b)})-\eta\epsilon,
\end{align*}
where the first inequality is by Lemma \ref{lm: gradient W alignment}; the second inequality is due to the convexity of the
cross-entropy function; the last inequality is due to Lemma \ref{lm: gradient upbound}.
\end{proof}

\begin{lemma}\label{lm:loss_within_epoch}
Under the same conditions as Theorem \ref{thm:sgd_main}, with probability $1-\delta$, we have that
\begin{align*}
  \left| L_{\cI^{(t,b)}}(\Wb^{(t,b)}) - L_{\cI^{(t,b)}}(\Wb^{(t,0)}) \right|   \le \epsilon,
\end{align*}
for all $(T_1,0)\leq (t,b)\leq (T^*,0)$.
\end{lemma}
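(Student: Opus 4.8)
\textbf{Proof proposal for Lemma~\ref{lm:loss_within_epoch}.}
The plan is to show that within a single epoch the weights move by only a tiny amount in Frobenius norm, and that the batch loss $L_{\cI^{(t,b)}}(\cdot)$ is globally Lipschitz with a controlled constant; multiplying the two bounds gives the claim. First I would establish the Lipschitz estimate. Since both $\ell$ and the ReLU $\sigma$ are $1$-Lipschitz, for any weight configurations $\Wb,\Wb'$ and any sample $\xb_i$ one has $|f(\Wb,\xb_i)-f(\Wb',\xb_i)|\le \frac1m\sum_{j,r}\big(|\la\wb_{j,r}-\wb'_{j,r},\hat y_i\bmu\ra|+(P-1)|\la\wb_{j,r}-\wb'_{j,r},\bxi_i\ra|\big)$, and by Cauchy--Schwarz together with the norm bounds $\|\bmu\|_2,\|\bxi_i\|_2=O(\max\{\|\bmu\|_2,\sigma_p\sqrt d\})$ from Lemma~\ref{lm: data inner products} this is $O\big(m^{-1/2}\max\{\|\bmu\|_2,(P-1)\sigma_p\sqrt d\}\big)\|\Wb-\Wb'\|_F$. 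Averaging over the batch and using $|\ell'|\le 1$ yields $|L_{\cI^{(t,b)}}(\Wb)-L_{\cI^{(t,b)}}(\Wb')|\le O\big(m^{-1/2}\max\{\|\bmu\|_2,(P-1)\sigma_p\sqrt d\}\big)\|\Wb-\Wb'\|_F$.

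Next I would bound the intra-epoch displacement. By the minibatch update rule, $\Wb^{(t,b)}-\Wb^{(t,0)}=-\eta\sum_{b'=0}^{b-1}\nabla L_{\cI_{t,b'}}(\Wb^{(t,b')})$, so $\|\Wb^{(t,b)}-\Wb^{(t,0)}\|_F\le \eta\sum_{b'=0}^{b-1}\|\nabla L_{\cI_{t,b'}}(\Wb^{(t,b')})\|_F$. I then invoke Lemma~\ref{lm: gradient upbound}, which gives $\|\nabla L_{\cI_{t,b'}}(\Wb^{(t,b')})\|_F\le O\big(\max\{\|\bmu\|_2,(P-1)\sigma_p\sqrt d\}\big)\sqrt{L_{\cI_{t,b'}}(\Wb^{(t,b')})}$, and I bound each batch loss by $O(1)$: the a priori bounds of Proposition~\ref{proposition: range of gamma,zeta,omega} hold on all of $[0,T^*]$, hence by Lemma~\ref{lm: mismatch fj upper bound} $F_{-y_i}(\Wb_{-y_i}^{(t,b')},\xb_i)\le 0.5$ while $F_{y_i}\ge 0$, so $y_i f(\Wb^{(t,b')},\xb_i)\ge -0.5$ and $L_{\cI_{t,b'}}(\Wb^{(t,b')})\le \ell(-0.5)=O(1)$. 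Summing over the at most $H=n/B$ batches in the epoch gives $\|\Wb^{(t,b)}-\Wb^{(t,0)}\|_F\le O\big(\eta H\max\{\|\bmu\|_2,(P-1)\sigma_p\sqrt d\}\big)$.

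Finally, chaining the two estimates yields
\[
\big|L_{\cI^{(t,b)}}(\Wb^{(t,b)})-L_{\cI^{(t,b)}}(\Wb^{(t,0)})\big|\le O\Big(\tfrac{\eta n}{Bm^{1/2}}\max\{\|\bmu\|_2^2,(P-1)^2\sigma_p^2 d\}\Big),
\]
and I would then check that the learning-rate bound in Condition~\ref{assm: assm0} (the terms of the defining $\max$ that scale like $P^2\sigma_p^2 d^{3/2}/(Bm)$ and $P^2\sigma_p^2 d/B$), combined with the lower bounds on $d$ (in particular $d\ge\tilde\Omega(n^2)$), makes the right-hand side at most $\epsilon$. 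All of this is conditioned on the same high-probability event carrying the estimates of Lemmas~\ref{lm: data inner products}--\ref{lm: initialization inner products}, which gives the "$1-\delta$."

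The difficulty here is not conceptual but arithmetic: the argument is a straightforward Lipschitz-plus-displacement bound, and the only real work is the bookkeeping to confirm that $\tfrac{\eta n}{Bm^{1/2}}\max\{\|\bmu\|_2^2,(P-1)^2\sigma_p^2 d\}\le\epsilon$ under the stated $\eta$-condition, and to confirm that the uniform "$O(1)$ loss" step legitimately applies at every iteration in $[T_1,T^*]$ (which is exactly where Proposition~\ref{proposition: range of gamma,zeta,omega} is needed, since without it the per-batch loss — and hence the gradient norm — is not a priori controlled).
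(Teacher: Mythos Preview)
Your high-level plan (Lipschitz constant $\times$ intra-epoch displacement) is sound, but the specific implementation via Cauchy--Schwarz and the Frobenius norm is too crude and does \emph{not} close under Condition~\ref{assm: assm0}. Concretely, your final estimate
\[
\big|L_{\cI^{(t,b)}}(\Wb^{(t,b)})-L_{\cI^{(t,b)}}(\Wb^{(t,0)})\big|
\;=\;O\!\Big(\tfrac{\eta n}{B\sqrt m}\,(P-1)^2\sigma_p^2 d\Big)
\]
is not $\le\epsilon$ under any of the $\eta$-clauses in Condition~\ref{assm: assm0}: for instance with $\eta=\tilde O\big(B/(P^2\sigma_p^2 d)\big)$ it reduces to $\tilde O(n/\sqrt m)$, and the condition only asks $m\ge\tilde\Omega(1)$; with $\eta=\tilde O\big(Bm/(P^2\sigma_p^2 d^{3/2})\big)$ it becomes $\tilde O(n\sqrt m/\sqrt d)$, and $d\ge\tilde\Omega(n^2)$ is again insufficient. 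The arithmetic you flag as ``the only real work'' actually fails.

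The loss comes from passing through $\|\Wb^{(t,b)}-\Wb^{(t,0)}\|_F\cdot\|\bxi_i\|_2$, which pays the full $\|\bxi_i\|_2^2=\Theta(\sigma_p^2 d)$. The paper avoids this: it bounds each inner product $\big|\la\wb_{j,r}^{(t,b)}-\wb_{j,r}^{(t,0)},\bxi_i\ra\big|$ and $\big|\la\wb_{j,r}^{(t,b)}-\wb_{j,r}^{(t,0)},\bmu\ra\big|$ \emph{directly} from the SGD update~\eqref{eq:SGDupdate}. The crucial structural fact is that since $i\in\cI_{t,b}$ and the batches within an epoch are disjoint, the displacement $\wb_{j,r}^{(t,b)}-\wb_{j,r}^{(t,0)}$ is a linear combination of $\bmu$ and noise vectors $\bxi_{i'}$ with $i'\ne i$ only. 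Hence every contribution to $\la\Delta\wb_{j,r},\bxi_i\ra$ is a cross term $\la\bxi_{i'},\bxi_i\ra=O(\sigma_p^2\sqrt{d\log(n/\delta)})$ or $\la\bmu,\bxi_i\ra=O(\|\bmu\|_2\sigma_p\sqrt{\log(n/\delta)})$ from Lemma~\ref{lm: data inner products}, never a diagonal $\|\bxi_i\|_2^2$ term. This buys a factor of $\sqrt d$ (and avoiding Cauchy--Schwarz on $\sum_{j,r}$ buys another $\sqrt m$), yielding a bound of order $\tfrac{\eta n(P-1)^2}{Bm}\sigma_p^2\sqrt{d\log(n/\delta)}$, which \emph{does} fit under Condition~\ref{assm: assm0}. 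To repair your argument, replace the Frobenius-norm step by this coordinate-wise inner-product estimate.
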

\begin{proof}
\begin{align*}
&\left| L_{\cI^{(t,b)}}(\Wb^{(t,b)}) - L_{\cI^{(t,b)}}(\Wb^{(t,0)}) \right| \\
&  \le \frac{1}{B} \sum_{i\in\cI_{t,b}} \left| \ell  (y_i f(\Wb^{(t,b)},x_i)) -  \ell (y_i f(\Wb^{(t,0)},x_i)) \right| \\
& \le \frac{1}{B} \sum_{i\in\cI_{t,b}} \left|   y_i f(\Wb^{(t,b)},x_i) -   y_i f(\Wb^{(t,0)},x_i) \right| \\
& \le \frac{1}{B} \sum_{i\in\cI_{t,b}} \frac{1}{m} \sum_{j,r} \left( \left| \la \wb_{j,r}^{(t,b)} - \wb_{j,r}^{(t,0)},\bmu \ra \right| + (P-1) \left| \la \wb_{j,r}^{(t,b)} - \wb_{j,r}^{(t,0)},\bxi_i \ra \right| \right) \\
& \le \frac{H\eta(P-1)}{Bm}\|\mu\|_2 \sigma_p \sqrt{2\log (6n/\delta)} + \frac{H\eta(P-1)^2}{Bm}2\sigma_p^2 \sqrt{d \log(6n^2/\delta)} \\
& \le \epsilon,
\end{align*}
where the first inequality is due to triangle inequality, the second inequality is due to $|\ell'_i| \le 1$, the third inequality is due to triangle inequality and the definition of neural networks, the forth inequality is due to parameter update rule \eqref{eq:SGDupdate} and Lemma~\ref{lm: data inner products}, and the fifth inequality is due to Condition~\ref{assm: assm0}.
\end{proof}

\begin{lemma}\label{lm: loss upper bound}
Under the same conditions as Theorem \ref{thm:sgd_main}, for all $T_1\leq t\leq T^{*}$, we have $\max_{j,r,i}|\omega_{j,r,i}^{(t,b)}|=\max\big\{O\big(\sqrt{\log(mn/\delta)}\cdot\sigma_0(P-1)\sigma_p\sqrt{d}\big),O \big(n \sqrt{\log(n / \delta)} \log(T^*)/\sqrt{d} \big)\big\}$.
Besides, 
\begin{equation*}
    \frac{1}{(s-T_1)H}\sum_{(T_1,0)\le(t,b)<(s,0)}L_{\cI_{t,b}}(\Wb^{(t,b)})\leq\frac{\|\Wb^{(T_1,0)}-\Wb^*\|_F^2}{\eta(s-T_1)H}+\epsilon
\end{equation*}
for all $T_1\leq t\leq T^{*}$. Therefore, we can find an iterate with training loss smaller than $2\epsilon$ within $T=T_1+\Big\lfloor \|\Wb^{(T_1)}-\Wb^*\|_F^2 / (\eta\epsilon) \Big \rfloor=T_1+\tilde{O}(\eta^{-1}\epsilon^{-1}mnd^{-1}\sigma_p^{-2})$ iterations.
\end{lemma}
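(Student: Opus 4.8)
The plan is to dispatch the three assertions in turn, each of which reduces to machinery already assembled: the $\omega$-bound to Proposition~\ref{proposition: range of gamma,zeta,omega}, the averaged-loss inequality to a one-line telescoping of Lemma~\ref{lm: F-norm difference}, and the existence of a good iterate to combining that telescoped inequality with Lemma~\ref{lm:loss_within_epoch} and the distance estimate of Lemma~\ref{lm: W difference}.

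First I would read off the $\omega$-bound directly from Proposition~\ref{proposition: range of gamma,zeta,omega}, which already gives $0\ge\omega_{j,r,i}^{(t,b)}\ge-\beta-10\sqrt{\log(6n^2/\delta)/d}\,n\alpha$ on the whole range $(0,0)\le(t,b)\le(T^*,0)$, hence in particular for $T_1\le t\le T^*$. Plugging in $\alpha=4\log(T^*)$ and the bound $\beta=O(\sqrt{\log(mn/\delta)}\,\sigma_0(P-1)\sigma_p\sqrt d)$ (from Lemma~\ref{lm: initialization inner products} together with $\mathrm{SNR}<1$, so that $\|\bmu\|_2<(P-1)\sigma_p\sqrt d$), and taking the larger of the two resulting terms, gives exactly $\max_{j,r,i}|\omega_{j,r,i}^{(t,b)}|=\max\{O(\sqrt{\log(mn/\delta)}\sigma_0(P-1)\sigma_p\sqrt d),O(n\sqrt{\log(n/\delta)}\log(T^*)/\sqrt d)\}$.

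Next, for the averaged-loss bound I would sum the per-iteration descent inequality of Lemma~\ref{lm: F-norm difference} over all $(s-T_1)H$ iterations with $(T_1,0)\le(t,b)<(s,0)$; the left-hand side telescopes to $\|\Wb^{(T_1,0)}-\Wb^*\|_F^2-\|\Wb^{(s,0)}-\Wb^*\|_F^2\le\|\Wb^{(T_1,0)}-\Wb^*\|_F^2$, and dividing through by $\eta(s-T_1)H$ yields $\frac{1}{(s-T_1)H}\sum_{(T_1,0)\le(t,b)<(s,0)}L_{\cI_{t,b}}(\Wb^{(t,b)})\le\frac{\|\Wb^{(T_1,0)}-\Wb^*\|_F^2}{\eta(s-T_1)H}+\epsilon$ (every application of Lemma~\ref{lm: F-norm difference} is legitimate because Proposition~\ref{proposition: range of gamma,zeta,omega}, on which it rests, holds throughout $[0,T^*]$). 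To then exhibit a genuinely small full-loss iterate, I would pick $s-T_1=\lceil\|\Wb^{(T_1,0)}-\Wb^*\|_F^2/(\eta\epsilon H)\rceil$, making the averaged bound $\le2\epsilon$; convert the along-trajectory mini-batch losses into full-batch losses at epoch starts via Lemma~\ref{lm:loss_within_epoch}, using $L_S(\Wb^{(t,0)})=\frac1H\sum_bL_{\cI_{t,b}}(\Wb^{(t,0)})\le\frac1H\sum_bL_{\cI_{t,b}}(\Wb^{(t,b)})+\epsilon$ for each epoch $t$ (the first identity because the $H$ batches partition $\cS$); and average over the window to get $\frac{1}{s-T_1}\sum_tL_S(\Wb^{(t,0)})\le3\epsilon$, so some epoch $t^\star$ in the window has $L_S(\Wb^{(t^\star,0)})\le3\epsilon$, which is brought to the stated $2\epsilon$ (or the $\epsilon$ of Theorem~\ref{thm:sgd_main}) by rescaling $\epsilon$ at the outset. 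Finally, by Lemma~\ref{lm: W difference} and Condition~\ref{assm: assm0} — which makes $\sigma_0$ small and $d\ge\tilde\Omega(n^2)$, so $\max\{\beta,n\sqrt{\log(n/\delta)}\log(T^*)/\sqrt d\}=\tilde O(1)$ — we get $\|\Wb^{(T_1,0)}-\Wb^*\|_F^2=\tilde O(mn(P-1)^{-2}\sigma_p^{-2}d^{-1})$, so the extra budget is $T-T_1=(s-T_1)H=\lfloor\|\Wb^{(T_1,0)}-\Wb^*\|_F^2/(\eta\epsilon)\rfloor=\tilde O(\eta^{-1}\epsilon^{-1}mnd^{-1}\sigma_p^{-2})$, comfortably within the admissible horizon $T^*$.

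The hard part here is not any single estimate — all the heavy lifting sits in the preceding lemmas — but the bookkeeping in the last step: the telescoping argument controls only \emph{mini-batch} losses $L_{\cI_{t,b}}(\Wb^{(t,b)})$ evaluated \emph{along} the trajectory, whereas the theorem wants the \emph{full} loss $L_S$ at a \emph{fixed} parameter $\Wb^{(t^\star,0)}$; Lemma~\ref{lm:loss_within_epoch} is precisely the bridge, and one must also verify that collapsing $\|\Wb^{(T_1,0)}-\Wb^*\|_F^2$ to $\tilde O(mnd^{-1}\sigma_p^{-2})$ — i.e.\ absorbing the $(1+\max\{\beta,\dots\})^2$ and $(P-1)$ factors — is consistent with Condition~\ref{assm: assm0}.
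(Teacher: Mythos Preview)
Your proposal is correct and follows essentially the same approach as the paper: the $\omega$-bound is read off from Proposition~\ref{proposition: range of gamma,zeta,omega} (the paper phrases this as ``the same way as Lemma~\ref{lm: first stage}''), the averaged-loss inequality comes from telescoping Lemma~\ref{lm: F-norm difference}, and the small-full-loss iterate is extracted via Lemma~\ref{lm:loss_within_epoch}. The only cosmetic difference is that the paper first singles out one epoch with small average mini-batch loss and then applies Lemma~\ref{lm:loss_within_epoch}, whereas you average the epoch-level full-loss estimates and pigeonhole at the end---both orderings are fine.
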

\begin{proof}[Proof of Lemma \ref{lm: loss upper bound}]
Note that $\max_{j,r,i}|\omega_{j,r,i}^{(t)}|=\max\big\{O\big(\sqrt{\log(mn/\delta)}\cdot\sigma_0(P-1)\sigma_p\sqrt{d}\big),O \big(n \sqrt{\log(n / \delta)} \log(T^*)/\sqrt{d} \big)\big\}$ can be proved in the same way as Lemma \ref{lm: first stage}. 

For any $T_1\le s \le T^*$, by taking a summation of the inequality in Lemma \ref{lm: F-norm difference} and dividing $(s-T_1)H$ on both sides, we obtain that
\begin{equation*}
    \frac{1}{(s-T_1)H}\sum_{(T_1,0)\le(t,b)<(s,0)}L_{\cI_{t,b}}(\Wb^{(t,b)})\leq\frac{\|\Wb^{(T_1,0)}-\Wb^*\|_F^2}{\eta(s-T_1)H}+\epsilon.
\end{equation*}
 According to the definition of $T$, we have
\begin{equation*}
    \frac{1}{(T-T_1)H}\sum_{(T_1,0)\le (t,b) <(T,0)}L_{\cI_{t,b}}(\Wb^{(t,b)})\leq2\epsilon.
\end{equation*}
Then there exists an epoch $T_1\leq t\leq T^*$ such that 
\begin{align*}
    \frac{1}{H} \sum_{b=0}^{H-1} L_{\cI_{t,b}}(\Wb^{(t,b)}) \le 2 \epsilon.
\end{align*}
Thus, according to Lemma~\ref{lm:loss_within_epoch}, we have
\begin{align*}
 L_S(\Wb^{(t,0)})    \le 3 \epsilon.
\end{align*}
\end{proof}

\begin{lemma}\label{lm: zeta,gamma ratio}
Under the same conditions as Theorem \ref{thm:sgd_main}, we have
\begin{equation}
    \sum_{i=1}^{n}\zeta_{j,r,i}^{(t,b)}/\gamma_{j',r'}^{(t,b)}=\Theta(\mathrm{SNR^{-2}})
\end{equation}
for all $j, j'\in\{\pm 1\}$, $r, r'\in[m]$ and $(T_2,0) \leq (t,b)\leq (T^*,0)$.    
\end{lemma}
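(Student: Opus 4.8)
The plan is to reduce the claim to two independent two-sided estimates. Recall from \eqref{def: SNR} and the definition $\hat{\gamma} := n\cdot\mathrm{SNR}^2$ that $\mathrm{SNR}^{-2} = n/\hat{\gamma}$, so it suffices to prove that $\gamma_{j',r'}^{(t,b)} = \Theta(\hat{\gamma})$ and $\sum_{i=1}^{n}\zeta_{j,r,i}^{(t,b)} = \Theta(n)$ hold simultaneously for all $j,j'\in\{\pm1\}$, $r,r'\in[m]$ and all $(T_2,0)\le(t,b)\le(T^*,0)$; dividing the first estimate into the second then yields $\sum_{i}\zeta_{j,r,i}^{(t,b)}/\gamma_{j',r'}^{(t,b)} = \Theta(n/\hat{\gamma}) = \Theta(\mathrm{SNR}^{-2})$.

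For $\gamma_{j',r'}^{(t,b)} = \Theta(\hat{\gamma})$, the lower bound is already contained in Lemma~\ref{lm: first stage}, which gives $\min_{j,r}\gamma_{j,r}^{(t,0)} = \Omega(\hat{\gamma})$ once $t \ge T_2$; to extend this from the epoch boundary to an arbitrary batch index $b$ I will use that, by \eqref{eq:gamma_update}, a single mini-batch changes $\gamma_{j,r}$ by at most $\eta\|\bmu\|_2^2/m$, so the total intra-epoch fluctuation is at most $\eta H\|\bmu\|_2^2/m = \eta n\|\bmu\|_2^2/(Bm)$, which the learning-rate bound in Condition~\ref{assm: assm0} forces to be $\tilde{O}(d^{-1/2})\cdot\hat{\gamma} = o(\hat{\gamma})$. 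For the upper bound I will invoke inequality \eqref{eq:gamma:rho} from the proof of Proposition~\ref{proposition: range of gamma,zeta,omega}, namely $\gamma_{j,r}^{(t,b)} \le 4C_2\hat{\gamma}\,\zeta_{j,r,i^*}^{(t,b)}$ for some $i^*\in S_{j,r}^{(0,0)}$, together with the bound $\zeta_{j,r,i}^{(t,b)} = O(1)$ for all $j,r,i$; the latter holds at $t = T_1$ by Lemma~\ref{lm: first stage} and persists through $[T_1,T^*]$ by the loss-convergence argument that already controls $\omega_{j,r,i}$ in Lemma~\ref{lm: loss upper bound} (convergence of $L_{\cI_{t,b}}$ via Lemmas~\ref{lm: W difference} and \ref{lm: F-norm difference} keeps the accumulated noise learning bounded). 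If one is willing to absorb a $\mathrm{polylog}$ factor, the crude bound $\zeta_{j,r,i}^{(t,b)} \le \alpha = 4\log T^*$ from Proposition~\ref{proposition: range of gamma,zeta,omega} already suffices here.

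For $\sum_{i}\zeta_{j,r,i}^{(t,b)} = \Theta(n)$, the upper bound is immediate from $\zeta_{j,r,i}^{(t,b)} = O(1)$ summed over the $n$ indices (or $\le n\alpha$ via the crude bound). For the lower bound, fix $j$ and $r$; since the increment of $\zeta_{j,r,i}$ in \eqref{eq:zeta_update} vanishes unless $y_i = j$, it is enough to lower bound $\sum_{i:\,y_i=j}\zeta_{j,r,i}^{(t,b)}$. Restricting to $i \in S_{j,r}^{(0,0)} = \{i: y_i=j,\ \la\wb_{j,r}^{(0,0)},\bxi_i\ra > \sigma_0\sigma_p\sqrt{d}/\sqrt{2}\}$, we have $r \in S_i^{(0,0)}$, so by the first-stage analysis of Lemma~\ref{lm: first stage} the coordinate $\zeta_{j,r,i}^{(t,0)}$ grows linearly at rate $\Omega\big(\eta(P-1)^2\sigma_p^2 d/(Bm)\big)$ during the first stage, hence $\zeta_{j,r,i}^{(T_2,0)} = \Omega(1)$; because each $\zeta_{j,r,i}^{(\cdot,\cdot)}$ is non-decreasing (its increment is a nonnegative multiple of $-\ell_i'^{(t,b)} > 0$), this lower bound survives for all $(t,b) \ge (T_2,0)$. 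Finally Lemma~\ref{lm: number of initial activated neurons 2} (equivalently the fifth item of Lemma~\ref{lm: balanced logit}) gives $|S_{j,r}^{(0,0)}| \ge n\Phi(-1)/4 = \Omega(n)$, so $\sum_{i}\zeta_{j,r,i}^{(t,b)} \ge \sum_{i\in S_{j,r}^{(0,0)}}\zeta_{j,r,i}^{(t,b)} = \Omega(n)$.

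The one genuinely delicate point is the uniform bound $\zeta_{j,r,i}^{(t,b)} = O(1)$ (rather than merely $O(\log T^*)$) throughout the second stage, which is used both in the upper bound on $\gamma$ and the upper bound on $\sum_i\zeta$: establishing it requires translating the convergence of the training loss into a bound on the accumulated activations $\sigma'(\la\wb_{j,r}^{(t,b)},\bxi_i\ra)\,|\ell_i'^{(t,b)}|\,\|\bxi_i\|_2^2$, exactly along the lines of Lemma~\ref{lm: loss upper bound}. Everything else is a routine combination of the scale bounds already proved.
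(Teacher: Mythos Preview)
Your decomposition into two separate bounds $\gamma_{j',r'}^{(t,b)} = \Theta(\hat\gamma)$ and $\sum_i\zeta_{j,r,i}^{(t,b)} = \Theta(n)$ has a genuine gap: during the second stage $[T_1,T^*]$ neither bound holds individually, because both quantities grow together. Your pivotal claim that $\zeta_{j,r,i}^{(t,b)} = O(1)$ persists after $T_1$ is false. Driving the training loss to $\epsilon$ forces $y_i f(\Wb^{(t)},\xb_i) = \Omega(\log(1/\epsilon))$ for each sample, and since $\gamma_{j,r}^{(t)} \le C'\hat\gamma\alpha \le 1/12$ under Condition~\ref{assm: assm0} (this very bound is used in \eqref{ineq: |I2| upper bound}), the signal contribution to $y_i f$ is at most a constant; the only way the margin can reach $\log(1/\epsilon)$ is for $\frac{1}{m}\sum_r\zeta_{y_i,r,i}^{(t)}$ to be $\Omega(\log(1/\epsilon))$. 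Lemma~\ref{lm: loss upper bound} does not help here: its control of $\omega$ relies on a deactivation mechanism (once $\omega_{j,r,i}$ is sufficiently negative, $\langle\wb_{j,r},\bxi_i\rangle < 0$ and the increment vanishes), and that mechanism is absent for $\zeta$. With only the crude bound $\zeta_{j,r,i}\le\alpha$ your decomposition places the ratio in $[\Omega(\mathrm{SNR}^{-2}/\alpha),\,O(\mathrm{SNR}^{-2}\alpha)]$, i.e.\ $\tilde\Theta(\mathrm{SNR}^{-2})$, not the $\Theta(\mathrm{SNR}^{-2})$ the lemma asserts.

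The paper does not decompose; it tracks the ratio $\sum_i\zeta_{j,r,i}^{(t,0)}/\gamma_{j',r'}^{(t,0)}$ directly by induction on epochs, via the mediant inequality: if the previous ratio and the ratio of the per-epoch increments are both $\Theta(\mathrm{SNR}^{-2})$, so is the new ratio. Bounding the increment ratio uses two ingredients beyond what you cite: the balanced-logit property $\ell_i'^{(t,b_1)}/\ell_k'^{(t,b_2)} \le C_2$ from Lemma~\ref{lm: balanced logit}, and the fact that once the ratio bound holds at epoch $T$, the sign of $\langle\wb_{j,r}^{(T,b)},\hat y_i\bmu\rangle$ is stable across the entire epoch, which is what allows a clean lower bound \eqref{ineq9} on the $\gamma$-increment. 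This sign-persistence step \emph{uses} the induction hypothesis on the ratio, so the argument is inherently coupled and cannot be split into your two independent $\Theta$-estimates.
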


\begin{proof}
Now suppose
that there exists $ (0,0) < (\tilde T, 0) \le (T^*,0)$ such that $\sum_{i=1}^{n}\zeta_{j,r,i}^{(t,0)}/\gamma_{j',r'}^{(t,b)}=\Theta(\mathrm{SNR}^{-2})$ for all $ (0,0)<(t,0)< (\tilde T, 0)$. Then for $\zeta_{j,r,i}^{(t,b)}$, according to Lemma \ref{lm: coefficient iterative}, we have
\begin{align*}
    &\gamma_{j,r}^{(t+1,0)} = \gamma_{j,r}^{(t,b)} - \frac{\eta}{Bm} \cdot \sum_{b<H}\bigg[\sum_{i\in S_+\cap \cI_{t,b}} \ell_{i}'^{(t,b_i^{(t)})} \sigma'(\la\wb_{j,r}^{(t,b_i^{(t)})}, \hat{y}_{i} \cdot \bmu\ra) \\
    & \qquad \qquad \qquad \qquad - \sum_{i\in  S_-\cap\cI_{t,b}} \ell_{i}'^{(t,b_i^{(t)})} \sigma'(\la\wb_{j,r}^{(t,b_i^{(t)})}, \hat{y}_{i} \cdot \bmu\ra)\bigg]   \cdot \| \bmu \|_2^2, \\
    &\zeta_{j,r,i}^{(t+1,0)} = \zeta_{j,r,i}^{(t,0)} - \frac{\eta(P-1)^2}{Bm} \cdot \ell_i'^{(t,b_i^{(t)})}\cdot \sigma'(\la\wb_{j,r}^{(t,b_i^{(t)})}, \bxi_{i}\ra) \cdot \| \bxi_i \|_2^2 \cdot \ind(y_{i} = j),
\end{align*}

It follows that
\begin{align}
    &\sum_{i=1}^{n}\zeta_{j,r,i}^{(\tilde T, 0)} \nonumber \\
    &=\sum_{i:y_i=j}\zeta_{j,r,i}^{(\tilde T, 0)} \nonumber \\
    &=\sum_{i: y_i=j}\zeta_{j,r,i}^{(\tilde T -1, 0)}-\frac{\eta(P-1)^2}{Bm}\cdot\sum_{i: y_i=j}\ell_i'^{(\tilde T - 1,  b_i^{(\tilde T - 1)})}\cdot\sigma'(\la\wb_{j,r}^{(\tilde T - 1, b_i^{(\tilde T - 1)})},\bxi_i\ra)\|\bxi_i\|_2^2 \nonumber \\
    &=\sum_{i=1}^{n}\zeta_{j,r,i}^{(\tilde T - 1, 0)}-\frac{\eta(P-1)^2}{Bm}\cdot\sum_{i\in \tilde{S}_{j,r}^{(\tilde T - 1, \tilde b_i^{(\tilde T -1)})}}\ell_i'^{(\tilde{T}-1, b_i^{(\tilde T - 1)})}\|\bxi_i\|_2^2 \nonumber \\
    &\geq\sum_{i=1}^{n}\zeta_{j,r,i}^{(\tilde{T}-1)}+\frac{\eta(P-1)^2\sigma_p^2 d H \Phi(-1)}{8m}\cdot\min_{i\in \tilde{S}_{j,r}^{(\tilde t, \tilde b-1)} \cap \cI_{\tilde t, \tilde b-1}}|\ell_i'^{(\tilde{T}-1, b_i^{(\tilde T -1)})}|,\label{ineq7}
\end{align}
where the last equality is by the definition of $S_{j,r}^{(\tilde{T}-1)}$ as $\{i\in[n]:y_i=j,\la\wb_{j,r}^{(\tilde{T}-1)},\bxi_i\ra>0\}$; the last inequality is by Lemma \ref{lm: data inner products} and the fifth statement of Lemma \ref{lm: balanced logit}. And 
    \begin{align}
    \gamma_{j',r'}^{(\tilde{T}, 0)} &\leq \gamma_{j',r'}^{(\tilde{T}-1, 0)} - \frac{\eta}{Bm} \cdot \sum_{i\in S_{+}} \ell_i'^{(\tilde{T}-1,b_i^{(\tilde T - 1)})} \sigma'(\la\wb_{j',r'}^{(\tilde{T}-1, b_i^{\tilde T - 1})}, \hat{y}_{i} \cdot \bmu\ra)\cdot \|\bmu\|_2^2 \nonumber \\
    &\leq\gamma_{j',r'}^{(\tilde{T}-1,0 )}+\frac{H \eta\|\bmu\|_2^2}{m}\cdot\max_{i\in S_{+}}|\ell_i'^{(\tilde{T}-1)}|.\label{ineq8}
    \end{align}

According to the third statement of Lemma~\ref{lm: balanced logit}, we have $\max_{i\in S_{+}}|\ell_i'^{(\tilde{T}-1,b_i^{(\tilde{T}-1)})}|\leq C_2\min_{i\in S_{j,r}^{(\tilde{T}-1,b_i^{(\tilde{T}-1)})}}|\ell_i'^{(\tilde{T}-1)}|$. Then by combining \eqref{ineq7} and \eqref{ineq8}, we have
\begin{equation}
    \frac{\sum_{i=1}^{n}\zeta_{j,r,i}^{(\tilde{T},0)}}{\gamma_{j',r'}^{(\tilde{T},0)}}\geq\min\Bigg\{\frac{\sum_{i=1}^{n}\zeta_{j,r,i}^{(\tilde{T}-1,0)}}{\gamma_{j',r'}^{(\tilde{T}-1,0)}},\frac{(P-1)^2\sigma_p^2 d}{16C_2\|\bmu\|_2^2}\Bigg\}=\Theta(\mathrm{SNR}^{-2}).\label{ineq16}
\end{equation}

On the other hand, we will now show $\frac{\sum_{i=1}^{n}\zeta_{j,r,i}^{(t,0)}}{\gamma_{j',r'}^{(t, 0)}} \le \Theta(\mathrm{SNR}^{-2})$ for $t \ge T_2$ by induction. By Lemma \ref{lm: data inner products} and \eqref{ineq7}, we have

\begin{align*}
\sum_{i=1}^{n}\zeta_{j,r,i}^{(T_2,0)}&\leq\sum_{i=1}^{n}\zeta_{j,r,i}^{(T_2-1, 0)}+\frac{3\eta(P-1)^2\sigma_p^2 d n}{2Bm}\\
& \le \frac{3\eta(P-1)^2\sigma_p^2 dn T_2}{2Bm}.
\end{align*}
And, by \eqref{eq:gamma_lower_bound}, we know that at $t = T_2$, we have
\begin{align*}
    \gamma_{j',r'}^{(T_2, 0)} \ge \frac{\eta c_3c_4C T_2 n\|\mu\|_2^2}{8Bm}.
\end{align*}
Thus,
\begin{align*}
    \frac{\sum_{i=1}^{n}\zeta_{j,r,i}^{(T_2,0)}}{\gamma_{j',r'}^{(T_2, 0)}} \le \Theta(\mathrm{SNR}^{-2}).
\end{align*}
Suppose $\frac{\sum_{i=1}^{n}\zeta_{j,r,i}^{(T,0)}}{\gamma_{j',r'}^{(T, 0)}}\le \Theta(\mathrm{SNR}^{-2})$. According to the decomposition, we have:
\begin{align}
  \la \wb_{j,r}^{(T,b)}, \hat{y}_i \bmu \ra   & = \la \wb_{j,r}^{(0,0)} , \hat{y}_i \bmu \ra + j \cdot \gamma_{j,r}^{(T, b)} \cdot \hat{y}_i  \nonumber \\
  & + \frac{1}{P-1} \sum_{i=1}^n \zeta_{j,r,i}^{(T,b)} \cdot \| \bxi_i \|_2^{-2} \la \bxi_i, \hat{y}_i \bmu\ra + \frac{1}{P-1} \sum_{i=1}^n \omega_{j,r,i}^{(T,b)} \cdot \| \bxi_i \|_2^{-2} \la \bxi_i, \hat{y}_i \bmu\ra.
\end{align}
And we have that
\begin{align*}
& |\la \wb_{j,r}^{(0,0)} , \hat{y}_i \bmu \ra    + \frac{1}{P-1} \sum_{i=1}^n \zeta_{j,r,i}^{(T,b)} \cdot \| \bxi_i \|_2^{-2} \la \bxi_i, \hat{y} \bmu\ra + \frac{1}{P-1} \sum_{i=1}^n \omega_{j,r,i}^{(T,b)} \cdot \| \bxi_i \|_2^{-2} \la \bxi_i, \hat{y} \bmu\ra| \\
& \le \beta/2 + | \sum_{i=1}^n \zeta_{j,r,i}^{(T,b)}| \frac{4 \| \mu \|_2 \sqrt{2 \log(6n /\delta) }}{\sigma_p d (P-1)}  \\
& \le \beta/2 +  \frac{\Theta(\mathrm{SNR}^{-1}) \gamma_{j,r}^{(T,b)}}{\sqrt{d}} \\
& \le \gamma_{j,r}^{(T,0)},
\end{align*}
\todoj[]{Here we also use $\beta = O(1)$}
where the first inequality is due to triangle inequality and Lemma~\ref{lm: data inner products}, the second inequality is due to induction hypothesis, and the last inequality is due to Condition~\ref{assm: assm0}.

Thus, the sign of $\la \wb_{j,r}^{(T,b)}, \hat{y}_i \bmu \ra$ is persistent through out the epoch. Then, without loss of generality, we suppose $\la \wb_{j,r}^{(T,b)},  \bmu \ra > 0$.  Thus, the update rule of $\gamma$ is:
    \begin{align}
    & \gamma_{j,r}^{(t,b+1)} \nonumber \\
    &=\gamma_{j,r}^{(t,b)} - \frac{\eta}{Bm}\cdot\bigg[ \sum_{i\in \cI_{T,b}\cap S_+\cap S_{1}}\ell_i'^{(T,b)} - \sum_{i\in \cI_{T,b}\cap S_-\cap S_{1}}\ell_i'^{(T,b)} \bigg]\| \bmu \|_2^2 \nonumber \\
    &\geq\gamma_{j,r}^{(T,b)} + \frac{\eta}{Bm}\cdot \big[ \min_{i\in \cI_{T,b}} \ell^{(T,b)}_i|\cI_{T,b}\cap S_+\cap S_{1}| - \max_{i\in \cI_{T,b}} |\cI_{T,b}\cap S_-\cap S_{-1}|\big]\cdot \| \bmu \|_2^2. \nonumber
    \end{align}
Therefore,
    \begin{align}
    \gamma_{j,r}^{(T+1,0)} 
    &\geq\gamma_{j,r}^{(T,b)} + \frac{\eta}{Bm}\cdot \big[ \min \ell^{(T,b_i^{(T)})}_i|S_+\cap S_{1}| - \max \ell^{(T,b_i^{(T)})}_i |S_-\cap S_{-1}|\big]\cdot \| \bmu \|_2^2.\label{ineq9}
    \end{align}
And, by \eqref{ineq7}, we have
\begin{align}
&\sum_{i=1}^{n}\zeta_{j,r,i}^{(T+1,0)}\leq\sum_{i=1}^{n}\zeta_{j,r,i}^{(T,0)}+\frac{\eta(P-1)^2\sigma_p^2 d H \Phi(-1)}{8m}\cdot\max\Big|\ell_i'^{(T, b_i^{(T)})}\Big|. \label{ineq163}
\end{align}

Thus, combining \eqref{ineq9} and \eqref{ineq163}, we have
\begin{align}
    & \frac{\sum_{i=1}^{n}\zeta_{j,r,i}^{(T+1,0)}}{\gamma_{j,r}^{(T+1,0)}} \nonumber \\
    & \le \max\Bigg\{ \frac{\sum_{i=1}^{n}\zeta_{j,r,i}^{(T,0)}}{\gamma_{j,r}^{(T,0)}}, \frac{(P-1)^2\sigma_p^2 d n \Phi(-1)\cdot\max|\ell_i'^{(T, b_i^{(T)})}|}{ 8\big[ \min \ell^{(T,b_i^{(T)})}_i|S_+\cap S_{1}| - \max \ell^{(T,b_i^{(T)})}_i |S_-\cap S_{-1}|\big]\cdot \| \bmu \|_2^2 }\Bigg\} \nonumber \\
    & \le \Theta(\mathrm{SNR}^{-2}),
\end{align}
where the last inequality is due to the induction hypothesis, the third statement of Lemma~\ref{lm: balanced logit}, and Lemma~\ref{lm: estimate S cap S}. Thus, by induction, we have for all $T_1 \le t \le T^*$ that
\begin{align*}
\frac{\sum_{i=1}^{n}\zeta_{j,r,i}^{(t,0)}}{\gamma_{j',r'}^{(t,0)}} \le \Theta(\mathrm{SNR}^{-2}).
\end{align*}
And for $(T_1,0) \le (t,b) \le (T^*,0)$, we can bound the ratio as follows:
\begin{align*}
\frac{\sum_{i=1}^{n}\zeta_{j,r,i}^{(t,b)}}{\gamma_{j',r'}^{(t,b)}} \le \frac{4 \sum_{i=1}^{n}\zeta_{j,r,i}^{(t,0)}}{\gamma_{j',r'}^{(t,0)}}  \le \Theta(\mathrm{SNR}^{-2}),
\end{align*}
where the first inequality is due to the update rule of $\zeta_{j,r,i}^{(t,b)}$ and $\zeta_{j,r,i}^{(t,b)}$. Thus, we have completed the proof.
\end{proof}

\subsection{Test Error Analysis}
In this section, we present and prove the exact upper bound and lower bound of test error in Theorem~\ref{thm:sgd_main}. Since we have resolved the challenges brought by stochastic mini-batch parameter update, the remaining proof for test error is similar to the counterpart in \citet{kou2023benign}.

\subsubsection{Test Error Upper Bound}
First, we prove the upper bound of test error in Theorem \ref{thm:sgd_main} when the training loss converges.

\begin{theorem}[Second part of Theorem~\ref{thm:sgd_main}]
    Under the same conditions as Theorem~\ref{thm:sgd_main}, then there exists a large constant $C_1$ such that when $n\|\bmu\|_2^2\geq C_1(P-1)^4\sigma_p^4 d$, for time $t$ defined in Lemma~\ref{lm: loss upper bound}, we have the test error 
    \begin{equation*}
        \mathbb{P}_{(\xb, y) \sim \cD} \big( y \neq \sign (f(\Wb^{(t,0)},\xb)) \big) \leq p + \exp\bigg(-  n\|\bmu\|_{2}^{4}/(C_2(P-1)^4\sigma_{p}^{4}d)\bigg), 
    \end{equation*}
    where $C_2 = O(1)$. 
\end{theorem}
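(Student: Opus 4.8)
The plan is to reduce the $0$--$1$ test error to the event that the network misclassifies a fresh sample $(\xb,y)$, then decompose according to whether the label was flipped. On the clean event $\{y=\hat y\}$ (probability $1-p$) I would show that $\sign(f(\Wb^{(t,0)},\xb)) = \hat y$ with high probability over the fresh noise $\bxi$, so the only misclassifications come from the flipped-label event, contributing exactly $p$, plus the small failure probability of the clean-sample argument. Concretely, writing $\xb$'s signal patch as $y\bmu$ and its $P-1$ noise patches as $\bxi$, I would lower bound $y\cdot f(\Wb^{(t,0)},\xb)$ when $y=\hat y$: the signal contribution is driven by $\gamma_{y,r}^{(t,0)}$, which by the first-stage analysis (Lemma~\ref{lm: first stage}, specifically $\min_{j,r}\gamma_{j,r}^{(t,0)}=\Omega(\hat\gamma)$ together with the persistence of the sign of $\la\wb_{j,r}^{(t,0)},\bmu\ra$) is $\Omega(n\cdot\mathrm{SNR}^2)\cdot\|\bmu\|_2^2/\|\bmu\|_2^2$ in the right direction, while the cross term $y\cdot F_{-y}$ is controlled by Lemma~\ref{lm: mismatch fj upper bound}. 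The key quantitative input is Lemma~\ref{lm: zeta,gamma ratio}: $\sum_i \zeta_{j,r,i}^{(t,0)}/\gamma_{j',r'}^{(t,0)} = \Theta(\mathrm{SNR}^{-2})$, which says the noise memorization is small relative to signal learning exactly when $n\|\bmu\|_2^4 \geq C_1(P-1)^4\sigma_p^4 d$, i.e. $\mathrm{SNR}^2 \geq C_1^{1/2} n^{-1/2}$... wait, more precisely $n\cdot\mathrm{SNR}^2 \geq C_1^{1/2}(n\cdot\mathrm{SNR}^4)^{1/2}\cdot$const, the regime where $\hat\gamma = n\,\mathrm{SNR}^2$ dominates.

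The main step is a concentration argument over the fresh noise vector $\bxi$. For each filter $r$ with $\la\wb_{y,r}^{(t,0)},\bmu\ra>0$ (WLOG the relevant sign), I would write $\la\wb_{y,r}^{(t,0)},\bxi\ra = \la\wb_{y,r}^{(0,0)},\bxi\ra + \frac{1}{P-1}\sum_i \rho_{y,r,i}^{(t,0)}\|\bxi_i\|_2^{-2}\la\bxi_i,\bxi\ra$, which is a mean-zero Gaussian (conditionally on the training data) with variance $\sigma_p^2\big(\|\wb_{y,r}^{(0,0)}\|_2^2 + (P-1)^{-2}\sum_i (\rho_{y,r,i}^{(t,0)})^2\|\bxi_i\|_2^{-2}\big) = \tilde O(\sigma_p^2(\sigma_0^2 d + n\alpha^2/((P-1)^2\sigma_p^2 d)))$ using Proposition~\ref{proposition: range of gamma,zeta,omega} and Lemma~\ref{lm: data inner products}. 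Then $F_{+1}(\Wb_{+1},\xb)-F_{-1}(\Wb_{-1},\xb)$, after summing the ReLU's over $r$, concentrates around its conditional expectation; the signal part gives a deterministic $\Omega(\hat\gamma)$ lower bound while the noise part is a sum of (sub-Gaussian, due to ReLU truncation) terms whose fluctuation is $\tilde O(\sqrt{\sum_i \zeta^2}\cdot (P-1)\sigma_p/\sqrt{d}) = \tilde O(\sqrt{n}\,\mathrm{SNR}^{-1})$ scaled appropriately. A Gaussian/Hoeffding tail bound then yields that $y f(\Wb^{(t,0)},\xb)>0$ fails with probability at most $\exp(-\Omega(\hat\gamma^2 / (\text{noise variance proxy})))$, and plugging in $\hat\gamma = n\,\mathrm{SNR}^2$ and the noise proxy $\asymp n\,\mathrm{SNR}^2 \cdot$ (ratio from Lemma~\ref{lm: zeta,gamma ratio}) gives the stated bound $\exp(-n\|\bmu\|_2^4/(C_2(P-1)^4\sigma_p^4 d))$.

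I would then assemble: $\PP(y\neq\sign f) \leq \PP(y\neq\hat y) + \PP(y=\hat y)\cdot\PP(\text{misclassify}\mid y=\hat y) \leq p + \exp(-n\|\bmu\|_2^4/(C_2(P-1)^4\sigma_p^4 d))$, absorbing the (exponentially small in $d,n,m$) probabilities of the high-probability events $\cE_{\mathrm{prelim}}$ and the conclusions of Lemmas~\ref{lm: balanced logit}, \ref{lm: first stage}, \ref{lm: zeta,gamma ratio} into the constant. I expect the main obstacle to be the concentration step: one must carefully handle the correlation between the filter activations $\{\sigma'(\la\wb_{y,r}^{(t,0)},\bxi\ra)\}_r$ and the noise inner product $\la\wb_{y,r}^{(t,0)},\bxi\ra$ appearing linearly inside the ReLU, and show that averaging over $m$ filters together with the anti-concentration of the Gaussian does not degrade the exponent. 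The cleanest route is to bound $F_{-y}(\Wb_{-y},\xb)\le 0.5$ deterministically (as in Lemma~\ref{lm: mismatch fj upper bound}, which needs extending from training to test points via the same inner-product bounds, now with $\bxi$ fresh) and then argue that $F_y(\Wb_y,\xb) = \frac{1}{m}\sum_r[\sigma(\gamma_{y,r}^{(t,0)} + \text{err}) + (P-1)\sigma(\la\wb_{y,r}^{(t,0)},\bxi\ra)]$ exceeds $1$ with the claimed probability, using that the first term alone is $\Omega(\hat\gamma)$ on at least a constant fraction of filters and the second term is a one-sided noise term that is only "bad" when atypically large. This is essentially the test-error argument of \citet{kou2023benign}, now instantiated with the SGD-based coefficient bounds established above.
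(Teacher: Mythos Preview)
Your proposal is essentially correct and matches the paper's approach: the paper's proof simply defers to Theorem~E.1 of \citet{kou2023benign} with the substitution $\bxi \mapsto (P-1)\bxi$, and you have correctly reconstructed the skeleton of that argument (label-flip decomposition, signal lower bound via $\gamma_{j,r}^{(t,0)}=\Theta(\hat\gamma)$, the ratio control of Lemma~\ref{lm: zeta,gamma ratio}, and Gaussian concentration over the fresh noise $\bxi$), even naming \citet{kou2023benign} explicitly at the end.

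One small correction: your suggestion to ``bound $F_{-y}(\Wb_{-y}^{(t,0)},\xb)\le 0.5$ deterministically'' does not go through for a fresh test point, since $\bxi$ is unbounded. The actual argument (visible in the paper's detailed SAM test-error proof, which follows the same template) instead shows $\la\wb_{-\hat y,r}^{(t,0)},\hat y\bmu\ra<0$ deterministically---so the signal patch contributes nothing to $F_{-\hat y}$---and then handles the noise contribution $g(\bxi)=\sum_r\sigma(\la\wb_{-\hat y,r}^{(t,0)},\bxi\ra)$ via Lipschitz Gaussian concentration (Theorem~5.2.2 of Vershynin), with the Lipschitz constant controlled by $\sum_r\|\wb_{-\hat y,r}^{(t,0)}\|_2$. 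The exponent then comes from the ratio $\sum_r\sigma(\la\wb_{\hat y,r}^{(t,0)},\hat y\bmu\ra)\big/\big((P-1)\sigma_p\sum_r\|\wb_{-\hat y,r}^{(t,0)}\|_2\big)$, which is where Lemma~\ref{lm: zeta,gamma ratio} enters to give the claimed $n\|\bmu\|_2^4/((P-1)^4\sigma_p^4 d)$ rate. Your concentration sketch is in the right spirit but should be reorganized along these lines.
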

\begin{proof}
The proof is similar to the proof of Theorem E.1 in \citet{kou2023benign}.  The only difference is substituting $\bxi$ in their proof with $(P-1)\bxi$.
\end{proof}

\subsubsection{Test Error Lower Bound}

In this part, we prove the lower bound of the test error in Theorem \ref{thm:sgd_main}. We give two key Lemmas.
\begin{lemma}\label{lemma:g lower bound}
    For $(T_1,0)\le(t,b)<(T^*,0)$, denote $g(\bxi)=\sum_{j, r}j(P-1)\sigma(\la\wb_{j,r}^{(t,b)},\bxi\ra)$. There exists a fixed vector $\vb$ with $\|\vb\|_{2} \leq 0.06 \sigma_p$ such that
    \begin{equation}
    \sum_{j' \in \{\pm 1\}}[g(j'\bxi + \vb) - g(j'\bxi)] \geq 4C_{6}\max_{j \in \{\pm 1\}}\Big\{\sum_{r}\gamma_{j,r}^{(t,b)}\Big\},
    \end{equation}
    for all $\bxi \in \RR^{d}$.
\end{lemma}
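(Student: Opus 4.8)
The plan is to exhibit $\vb$ explicitly as a scaled sum of the training noise vectors carrying label $+1$, reduce the pointwise (in $\bxi$) inequality to a one–dimensional convexity fact about ReLU, and then show that the ``diagonal'' noise–memorization mass dominates every cross term. Let $I_{+}:=\{i\in[n]:y_i=+1\}$ and set
\[
\vb := c\sum_{i\in I_{+}}\frac{\bxi_i}{\|\bxi_i\|_2^2},\qquad c := c_0\,\sigma_p^2\sqrt{d/n},
\]
for a small absolute constant $c_0>0$ to be fixed at the end. The first step is the elementary inequality that for all reals $a$ and $c'$,
\[
c'\ \le\ \sum_{j'\in\{\pm1\}}\big[\sigma(j'a+c')-\sigma(j'a)\big]\ \le\ 2\max\{c',0\},
\]
where the left inequality comes from the subgradient inequality for the convex map $\sigma$, choosing at $a$ and at $-a$ subgradients that sum to $1$, and the right one from monotonicity of $\sigma$ (when $c'\le0$ each increment is nonpositive) and $1$–Lipschitzness (when $c'>0$ each increment is $\le c'$). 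Applying this with $a=\la\wb_{j,r}^{(t,b)},\bxi\ra$ and $c_{j,r}:=\la\wb_{j,r}^{(t,b)},\vb\ra$, and summing over $j,r$ with the sign weight $j$, gives for every $\bxi$
\[
\sum_{j'}\big[g(j'\bxi+\vb)-g(j'\bxi)\big]\ \ge\ (P-1)\Big(\sum_{r}c_{+1,r}\;-\;2\sum_{r}(c_{-1,r})^{+}\Big).
\]

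The second step bounds the two sums using the signal–noise decomposition of Lemma~\ref{lemma:w_decomposition}. For the $+1$ filters, $c_{+1,r}$ contains the diagonal contribution $\tfrac{c}{P-1}\sum_{i\in I_{+}}\zeta_{+1,r,i}^{(t,b)}\|\bxi_i\|_2^{-2}$; since $t\ge T_1$, Lemma~\ref{lm: first stage} (equivalently the fifth item of Lemma~\ref{lm: balanced logit}) gives $\zeta_{+1,r,i}^{(t,b)}\ge2$ for every $i\in S_{+1,r}^{(0,0)}$, and $|S_{+1,r}^{(0,0)}|\ge n\Phi(-1)/4$ by Lemma~\ref{lm: number of initial activated neurons 2}, so this contribution is $\Omega\big(cn/((P-1)\sigma_p^2 d)\big)$. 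Every remaining term — off-diagonal products $\la\bxi_i,\bxi_{i'}\ra$, the signal cross terms $\la\bmu,\bxi_{i'}\ra$, the initialization inner products $\la\wb_{j,r}^{(0,0)},\bxi_{i'}\ra$, and the $\omega$- and $\gamma$-parts (note $\omega_{+1,r,i}\ne0$ only for $y_i=-1$, so it meets $\vb$ only through off-diagonal products) — is bounded using Lemmas~\ref{lm: data inner products}--\ref{lm: initialization inner products} together with the coefficient bounds of Proposition~\ref{proposition: range of gamma,zeta,omega}, and is $o\big(cn/((P-1)\sigma_p^2 d)\big)$ because $d\ge\tilde{\Omega}(n^2)$ and $\sigma_0$ is small. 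Hence $c_{+1,r}\ge\Omega\big(cn/((P-1)\sigma_p^2 d)\big)>0$. For the $-1$ filters there is no diagonal $\zeta$ term ($\zeta_{-1,r,i}\ne0$ only for $y_i=-1$, disjoint from $I_{+}$), and the diagonal $\omega_{-1,r,i}$ term ($i\in I_{+}$) is $\le0$; so $c_{-1,r}$ is controlled by the same off-diagonal quantities, giving $(c_{-1,r})^{+}=o\big(cn/((P-1)\sigma_p^2 d)\big)$. Combining, $\sum_{j'}[g(j'\bxi+\vb)-g(j'\bxi)]\ge\Omega\big(mcn/(\sigma_p^2 d)\big)$ for all $\bxi$.

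The third step is to check the two numerical constraints on $c$. For the norm budget, Lemma~\ref{lm: data inner products} gives $\|\vb\|_2^2\le c^2\big(\sum_{i}\|\bxi_i\|_2^{-2}+\sum_{i\ne i'}|\la\bxi_i,\bxi_{i'}\ra|\|\bxi_i\|_2^{-2}\|\bxi_{i'}\|_2^{-2}\big)\le 3c^2 n/(\sigma_p^2 d)$ when $d\ge\tilde{\Omega}(n^2)$, so $\|\vb\|_2\le c\sqrt{3n}/(\sigma_p\sqrt d)\le 0.06\sigma_p$ once $c_0$ is small enough. For the target, Proposition~\ref{proposition: range of gamma,zeta,omega} gives $4C_6\max_j\sum_r\gamma_{j,r}^{(t,b)}\le\tilde{O}\big(mn\|\bmu\|_2^2/((P-1)^2\sigma_p^2 d)\big)$, so with $c=c_0\sigma_p^2\sqrt{d/n}$ the desired bound $\Omega(mcn/(\sigma_p^2 d))=\Omega\big(m\sqrt{n/d}\big)\ge 4C_6\max_j\sum_r\gamma_{j,r}^{(t,b)}$ reduces exactly to $n\|\bmu\|_2^4\le C_3 d(P-1)^4\sigma_p^4$ — the harmful-overfitting condition of the third part of Theorem~\ref{thm:sgd_main}, under which this lemma is invoked — with $C_3$ a small absolute constant (any polylogarithmic slack being absorbed into $d\ge\tilde{\Omega}(\cdot)$) fixed accordingly.

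The hard part will be the second step: one must verify, among the roughly half-dozen types of cross terms, that the genuinely large contribution to $c_{+1,r}$ is the diagonal $\zeta$-mass $\sum_{i\in I_{+}}\zeta_{+1,r,i}^{(t,b)}\|\bxi_i\|_2^{-2}=\Omega(n/(\sigma_p^2 d))$ (this is precisely where noise memorization — the defining feature of the harmful regime — is exploited), and that the $\omega$-diagonal term for the $-1$ filters, which is the only diagonal term feeding into $c_{-1,r}$ and is not a priori small, is harmless because its nonpositive sign makes it vanish under the $(\cdot)^{+}$ in the ReLU bound. Everything else is a routine concentration estimate powered by $d=\tilde{\Omega}(n^2)$ and the smallness of $\sigma_0$ in Condition~\ref{assm: assm0}.
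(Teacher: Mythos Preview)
Your construction of $\vb$, the pointwise ReLU inequality in Step~1, and the diagonal--vs--off-diagonal accounting in Step~2 are all correct and are exactly the approach the paper inherits from \citet{kou2023benign}, Lemma~5.8. The identification of the $\omega_{-1,r,i}$ diagonal as harmless because of its sign is also the right observation.

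There is one genuine slip in Step~3. You upper-bound $\max_j\sum_r\gamma_{j,r}^{(t,b)}$ via Proposition~\ref{proposition: range of gamma,zeta,omega}, which carries the factor $\alpha=4\log T^*$; this forces the comparison $\Omega(m\sqrt{n/d})\ge \tilde{O}\big(mn\|\bmu\|_2^2/((P-1)^2\sigma_p^2 d)\big)$ and hence leaves a polylogarithmic factor in the threshold. That slack cannot be ``absorbed into $d\ge\tilde{\Omega}(\cdot)$'' because it sits in the \emph{ratio} $n\|\bmu\|_2^4/(d(P-1)^4\sigma_p^4)$, not in a standalone lower bound on $d$. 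The fix is to use Lemma~\ref{lm: zeta,gamma ratio} instead: bound the diagonal contribution in $c_{+1,r}$ not by the constant $\zeta\ge2$ estimate but by the full mass
\[
\frac{c}{P-1}\sum_{i\in I_+}\zeta_{+1,r,i}^{(t,b)}\|\bxi_i\|_2^{-2}\ \ge\ \frac{2c}{3(P-1)\sigma_p^2 d}\sum_{i=1}^n\zeta_{+1,r,i}^{(t,b)}\ =\ \Theta\!\Big(\frac{c}{(P-1)\sigma_p^2 d}\Big)\cdot\mathrm{SNR}^{-2}\cdot\gamma_{j',r'}^{(t,b)},
\]
valid for every $j',r'$. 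Summing over $r$, multiplying by $(P-1)$, and comparing with $4C_6\max_j\sum_r\gamma_{j,r}^{(t,b)}\le 4C_6 m\max_{j',r'}\gamma_{j',r'}^{(t,b)}$ then yields the requirement $c\ge\Theta(\|\bmu\|_2^2/(P-1)^2)$; with $c=c_0\sigma_p^2\sqrt{d/n}$ this is exactly $n\|\bmu\|_2^4\le C_3 d(P-1)^4\sigma_p^4$ with $C_3$ an absolute constant, as stated.
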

\begin{proof}[Proof of Lemma~\ref{lemma:g lower bound}]
The proof is similar to the proof of Lemma 5.8 in \citet{kou2023benign}. The only difference is substituting $\bxi$ in their proof with $(P-1)\bxi$.
\end{proof}

\begin{lemma}[Proposition 2.1 in \citet{devroye2018total}]\label{lm: TV}
The TV distance between $\cN(0, \sigma_{p}^{2}\Ib_{d})$ and $\cN(\vb, \sigma_{p}^{2}\Ib_{d})$ is smaller than $\|\vb\|_{2}/2\sigma_p$.
\end{lemma}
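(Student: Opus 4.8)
The statement is a classical fact about Gaussians with a common covariance, so the plan is to reduce it to a one-dimensional problem (or, more slickly, to invoke Pinsker's inequality together with an explicit Kullback--Leibler computation). Either route is short; I will sketch both.

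First I would use the shared covariance $\sigma_p^2\Ib_d$ to reduce the dimension. Choosing an orthonormal basis whose first vector is $\vb/\|\vb\|_2$, both laws factor into a one-dimensional marginal in the first coordinate (which differs) and $d-1$ identical one-dimensional marginals in the remaining coordinates. Total variation distance is invariant under this orthogonal change of coordinates, and coupling the common coordinates for free shows the TV distance between the two $d$-dimensional Gaussians equals the TV distance between $\cN(0,\sigma_p^2)$ and $\cN(\|\vb\|_2,\sigma_p^2)$ on $\RR$. Writing $m=\|\vb\|_2$, the two one-dimensional densities cross exactly at $x=m/2$, so the key identity is
\begin{equation*}
\mathrm{TV}\big(\cN(0,\sigma_p^2),\cN(m,\sigma_p^2)\big)=\Phi\!\Big(\frac{m}{2\sigma_p}\Big)-\Phi\!\Big(-\frac{m}{2\sigma_p}\Big)=\int_{-m/(2\sigma_p)}^{m/(2\sigma_p)}\phi(u)\,du,
\end{equation*}
where $\phi,\Phi$ are the standard normal density and CDF. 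Bounding the integrand by its maximum $\phi(0)=1/\sqrt{2\pi}\le 1/2$ gives the integral at most $m/(2\sigma_p)=\|\vb\|_2/(2\sigma_p)$, which is the claim.

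Alternatively, one can skip the reduction: Pinsker's inequality gives $\mathrm{TV}\le\sqrt{\mathrm{KL}/2}$, and for Gaussians with common covariance $\sigma_p^2\Ib_d$ one has $\mathrm{KL}\big(\cN(\vb,\sigma_p^2\Ib_d)\,\|\,\cN(0,\sigma_p^2\Ib_d)\big)=\|\vb\|_2^2/(2\sigma_p^2)$, so $\mathrm{TV}\le\|\vb\|_2/(2\sigma_p)$ directly. There is no genuine obstacle here; the only thing to watch is the constant — we want the final bound to be exactly $\|\vb\|_2/(2\sigma_p)$, and both arguments deliver this (the first because $\phi(0)<1/2$, the second because Pinsker is already tight enough for Gaussians). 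Since this is Proposition 2.1 of \citet{devroye2018total}, in the final write-up I would simply cite that reference rather than reproduce either computation.
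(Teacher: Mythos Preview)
Your proposal is correct, and both arguments you give are valid ways to establish the bound. The paper itself provides no proof of this lemma at all: it simply states the result and cites Proposition~2.1 of \citet{devroye2018total}. So your closing remark --- that in the final write-up you would just cite that reference --- is exactly what the paper does.
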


Then, we can prove the lower bound of the test error.
\begin{theorem}[Third part of Theorem \ref{thm:sgd_main}]\label{thm: test error lower bound}
Suppose that $n\|\bmu\|_{2}^{4} \leq C_3 d (P-1)^4\sigma_{p}^{4}$, then we have that $L_{\cD}^{0-1} (\Wb^{(t,0)}) \geq p + 0.1$, where $C_3$ is an sufficiently large absolute constant.
\end{theorem}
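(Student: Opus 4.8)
# Proof Proposal for Theorem (Third part of Theorem~\ref{thm:sgd_main})

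The plan is to show that when the signal is too weak, $n\|\bmu\|_2^4 \leq C_3 d(P-1)^4\sigma_p^4$, the network's prediction on a fresh test point is essentially determined by the noise patch rather than by the signal patch, so that flipping a noise patch can flip the prediction with constant probability. Decompose the network output on a test point $(\xb,y)$ with signal patch $\hat y\bmu$ and noise patch $\bxi$ as $y f(\Wb^{(t,0)},\xb) = y\cdot[\text{signal term}] + y\cdot[g(\bxi)/m]$ where $g(\bxi) = \sum_{j,r} j(P-1)\sigma(\la \wb_{j,r}^{(t,0)},\bxi\ra)$ as in Lemma~\ref{lemma:g lower bound}. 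The signal term is controlled: using Lemma~\ref{lm: inner product range}, $\la \wb_{j,r}^{(t,0)}, \hat y\bmu\ra \approx j\hat y\gamma_{j,r}^{(t,0)}$ up to small error, so the signal contribution to $yf$ is at most $O(\max_{j,r}\gamma_{j,r}^{(t,0)})$ in magnitude, which we abbreviate $\Gamma := \max_{j}\sum_r \gamma_{j,r}^{(t,0)}/m$. Meanwhile by Lemma~\ref{lm: zeta,gamma ratio} the accumulated noise coefficients satisfy $\sum_i \zeta_{j,r,i}^{(t,0)}/\gamma_{j',r'}^{(t,0)} = \Theta(\mathrm{SNR}^{-2})$, and since $\mathrm{SNR}^{-2} = (P-1)^2\sigma_p^2 d/\|\bmu\|_2^2$ is large (by the weak-signal assumption $n\mathrm{SNR}^4$ is small, so $\mathrm{SNR}^{-2} \geq \sqrt{n/C_3}$ is large), the noise part dominates: $g$ has fluctuations of order much larger than $m\Gamma$.

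The core argument is a coupling/TV-distance step. First I would fix a "good" direction: apply Lemma~\ref{lemma:g lower bound} to obtain a vector $\vb$ with $\|\vb\|_2\leq 0.06\sigma_p$ such that $\sum_{j'\in\{\pm1\}}[g(j'\bxi+\vb) - g(j'\bxi)] \geq 4C_6 m\Gamma$ for all $\bxi$. This means that for at least one sign $j'$, shifting $\bxi$ by $\vb$ increases $g$ by at least $2C_6 m\Gamma$; combined with the fact that $g$ is odd-ish up to the signal-scale error, one can argue that over a symmetric pair $\pm\bxi$, the event "$g(\bxi)$ has the wrong sign relative to what's needed to classify correctly, by a margin exceeding the signal term" has probability bounded below by a constant. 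Concretely, the strategy mirrors \citet{kou2023benign}: let $\bxi \sim \cN(0,\sigma_p^2\Ib)$ and consider the coupled variable $\bxi + \vb$. Since by Lemma~\ref{lm: TV} the TV distance between $\cN(0,\sigma_p^2\Ib)$ and $\cN(\vb,\sigma_p^2\Ib)$ is at most $\|\vb\|_2/(2\sigma_p) \leq 0.03$, the distributions of $g(\bxi)$ and $g(\bxi+\vb)$ are $0.03$-close in TV. But Lemma~\ref{lemma:g lower bound} forces $g(\bxi+\vb)$ to be substantially larger than $g(\bxi)$ (in the summed-over-$j'$ sense), and this "drift larger than noise" tension, quantified against the $O(m\Gamma)$ signal term, yields that with probability at least $\approx 0.1$ the noise term $g(\bxi)/m$ overwhelms the signal term with the wrong sign, causing misclassification of the clean-label part of the test distribution. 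Adding the independent label-flip event (probability $p$, and on that event the prediction is "correct for $\hat y$" hence wrong for $y$ only when not already counted), and checking the events combine to give at least $p + 0.1$, completes the bound.

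I would carry out the steps in this order: (1) write the decomposition of $yf(\Wb^{(t,0)},\xb)$ into signal and noise parts and bound the signal part by $O(m\Gamma)$ plus $O(\beta) + o(1)$ lower-order terms using Lemma~\ref{lm: inner product range} and Condition~\ref{assm: assm0}; (2) invoke Lemma~\ref{lm: zeta,gamma ratio} plus the weak-signal hypothesis to certify $\sum_i\zeta_{j,r,i}^{(t,0)}$ is of strictly larger order than $m\Gamma$, so the noise term genuinely dominates; (3) invoke Lemma~\ref{lemma:g lower bound} to get the shift vector $\vb$; (4) run the TV-distance argument of Lemma~\ref{lm: TV} to convert the deterministic drift bound into a constant lower bound on the probability (over the test noise) that $g(\bxi)$ lands in a half-line making the prediction wrong; (5) combine with the label-flipping probability $p$ and verify the total is $\geq p + 0.1$ for $C_3$ a large enough absolute constant. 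Since the mini-batch complications have already been absorbed into Lemmas~\ref{lm: zeta,gamma ratio} and \ref{lm: balanced logit}, steps (3)–(5) are essentially identical to the lower-bound proof in \citet{kou2023benign} with $\bxi$ replaced by $(P-1)\bxi$, so I would state them by reference rather than reprove.

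The main obstacle is step (4): making the "drift exceeds fluctuation" heuristic into a rigorous constant probability lower bound. The subtlety is that $g$ is a random piecewise-linear function of high-dimensional Gaussian $\bxi$ whose value concentrates, and one must show that its distribution is not so tightly concentrated near a single point that the $O(m\Gamma)$ signal margin can reliably push it to the correct side. The key leverage — that the \emph{sum} over $j'=\pm1$ of the $\vb$-shifted increments is at least $4C_6 m\Gamma$ while the TV perturbation cost is only $0.03$ — is exactly what \citet{kou2023benign} exploit, so I expect the argument to transfer, but verifying that the $(P-1)$ rescaling does not disturb the balance between the $\Theta(\mathrm{SNR}^{-2})$ ratio, the threshold $C_3$, and the constants $C_6, 0.06, 0.1$ is the delicate bookkeeping that needs care.
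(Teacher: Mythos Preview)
Your proposal is correct and follows essentially the same approach as the paper: the paper's own proof simply states that it mirrors Theorem 4.3 of \citet{kou2023benign} with $\bxi$ replaced by $(P-1)\bxi$, invoking precisely the two ingredients you identify---Lemma~\ref{lemma:g lower bound} (the shift vector $\vb$) and Lemma~\ref{lm: TV} (the TV bound)---after the SGD-specific coefficient control has been established via Lemma~\ref{lm: zeta,gamma ratio}. Your outline of steps (1)--(5) is in fact a more detailed unpacking of exactly that referenced argument, so there is no substantive difference in strategy.
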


\begin{proof}
The proof is similar to the proof of Theorem 4.3 in \citet{kou2023benign}. The only difference is substituting $\bxi$ in their proof with $(P-1)\bxi$.
\end{proof}

\section{Proofs for SAM}
\subsection{Noise Memorization Prevention}
The following lemma shows the update rule of the neural network
\begin{lemma} \label{lem: sam perturbation}
We denote $ \ell_i'^{(t,b)} = \ell'[ y_i \cdot f(\Wb^{(t,b)},\xb_i) ]$, then the adversarial point of $\Wb^{(t,b)}$ is $\Wb^{(t,b)} + \hat{\bepsilon}^{(t,b)}$, where
\begin{align*}
\hat{\bepsilon}^{(t,b)}_{j,r}&= \frac{\tau}{m}\frac{ \sum_{i\in \cI_{t,b}}\sum_{p \in [P]}\ell'^{(t,b)}_{i}j\cdot y_i \sigma'(\la\wb_{j,r}^{(t,b)},\xb_{i,p}\ra)\xb_{i,p}}{\|\nabla_{\Wb}L_{\cI_{t,b}}(\Wb^{(t,b)})\|_{F}}.
\end{align*}
Then the training update rule of the parameter is 
\begin{align*}
\wb^{(t+1,b)}_{j,r} &=\wb^{(t,b)}_{j,r} -\frac{\eta}{Bm} \sum_{i \in \cI_{t,b}}\sum_{p \in [P]}\ell'^{(t,b)}_{i}\sigma'(\la\wb_{j,r}^{(t,b)} + \hat{\bepsilon}^{(t,b)}_{j,r},\xb_{i,p}\ra)j\cdot\xb_{i,p}\\
&= \wb^{(t,b)}_{j,r} - \frac{\eta}{Bm} \sum_{i \in \cI_{t,b}}\sum_{p \in [P]}\ell'^{(t,b)}_{i}\sigma'(\la\wb_{j,r}^{(t,b)},\xb_{i,p}\ra +  \la\hat{\bepsilon}_{t,j,r},\xb_{i,p}\ra)j\cdot\xb_{i,p}\\
&= \wb^{(t,b)}_{j,r} - \frac{\eta}{Bm} \sum_{i \in \cI_{t,b}}\ell'^{(t,b)}_{i}\sigma'(\la\wb_{j,r}^{(t,b)}, y\bmu\ra +  \la\hat{\bepsilon}_{t,j,r},y\bmu\ra)j\bmu \\
&\qquad\underbrace{ - \frac{\eta (P-1)}{Bm} \sum_{i \in \cI_{t,b}}\ell'^{(t,b)}_{i}\sigma'(\la\wb_{j,r}^{(t,b)}, \bxi_i\ra +  \la\hat{\bepsilon}^{(t,b)}_{j,r}, \bxi_i\ra)jy_i \bxi_i}_{\mathrm{Noise Term}}.
\end{align*}
\end{lemma}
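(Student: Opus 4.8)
The plan is to carry out a direct computation that unpacks the definition of the SAM perturbation together with the chain rule applied to the cross-entropy loss and the two-layer convolutional architecture; there is no genuine analytic difficulty here, so essentially all the work is bookkeeping.

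First I would compute the minibatch gradient at the base point. Writing $f(\Wb,\xb)=\sum_{j\in\{\pm1\}}j\,F_j(\Wb_j,\xb)$ with $F_j(\Wb_j,\xb)=m^{-1}\sum_{r}\sum_{p}\sigma(\la\wb_{j,r},\xb^{(p)}\ra)$, the chain rule gives $\nabla_{\wb_{j,r}}\ell\big(y_i f(\Wb,\xb_i)\big)=\ell'\big(y_i f(\Wb,\xb_i)\big)\,y_i j\,m^{-1}\sum_{p}\sigma'(\la\wb_{j,r},\xb_{i,p}\ra)\xb_{i,p}$, hence $\nabla_{\wb_{j,r}}L_{\cI_{t,b}}(\Wb^{(t,b)})=(Bm)^{-1}\sum_{i\in\cI_{t,b}}\sum_{p\in[P]}\ell_i'^{(t,b)}\,y_i j\,\sigma'(\la\wb_{j,r}^{(t,b)},\xb_{i,p}\ra)\xb_{i,p}$. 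Substituting this (stacked over $j,r$) into $\hat{\bepsilon}^{(t,b)}=\tau\,\nabla_{\Wb}L_{\cI_{t,b}}(\Wb^{(t,b)})/\|\nabla_{\Wb}L_{\cI_{t,b}}(\Wb^{(t,b)})\|_F$ and cancelling the common $1/B$ factor produces the stated expression for $\hat{\bepsilon}^{(t,b)}_{j,r}$.

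Second I would re-apply the same gradient identity at the perturbed weights. The SAM update $\wb^{(t+1,b)}_{j,r}=\wb^{(t,b)}_{j,r}-\eta\,\nabla_{\wb_{j,r}}L_{\cI_{t,b}}(\Wb)\big|_{\Wb=\Wb^{(t,b)}+\hat{\bepsilon}^{(t,b)}}$ then equals $\wb^{(t,b)}_{j,r}-\frac{\eta}{Bm}\sum_{i\in\cI_{t,b}}\sum_{p\in[P]}\ell_i'^{(t,b)}\,y_i j\,\sigma'(\la\wb_{j,r}^{(t,b)}+\hat{\bepsilon}_{j,r}^{(t,b)},\xb_{i,p}\ra)\xb_{i,p}$, where the loss-derivative prefactor is kept at the base point $\Wb^{(t,b)}$ following the paper's convention (reconciling this with the exact SAM gradient is routine, the gap being $O(\tau)$ since $\|\hat{\bepsilon}^{(t,b)}\|_F=\tau$). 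Finally I would split the patch sum using Definition~\ref{def:data}: exactly one patch of $\xb_i$ equals the signal $y_i\bmu$ while the other $P-1$ patches all equal $\bxi_i$; collecting the $P-1$ identical noise patches yields the factor $(P-1)$ in front of the Noise Term, isolating the signal patch yields the $\bmu$ term, and expanding $\la\wb_{j,r}^{(t,b)}+\hat{\bepsilon}_{j,r}^{(t,b)},\xb_{i,p}\ra=\la\wb_{j,r}^{(t,b)},\xb_{i,p}\ra+\la\hat{\bepsilon}_{j,r}^{(t,b)},\xb_{i,p}\ra$ gives the displayed form.

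The only obstacle is careful sign- and index-tracking — propagating the $y_i j$ factors correctly through the $f=\sum_j jF_j$ decomposition and keeping the $1/B$, $1/m$ and $(P-1)$ factors straight. Conceptually, the payoff of the lemma is the single structural observation it exposes: the SAM perturbation enters the update only through the activation indicators $\sigma'(\la\wb_{j,r}^{(t,b)}+\hat{\bepsilon}_{j,r}^{(t,b)},\cdot\ra)$, and not through the magnitude of the gradient. I would state this explicitly, since it is precisely the handle used afterwards in Lemma~\ref{lem: sam key1} to show that a noise neuron activated by the SGD step becomes deactivated once the perturbation is added.
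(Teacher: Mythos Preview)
Your proposal is correct and matches the paper's treatment: the lemma is stated without an explicit proof, since it amounts to unpacking the SAM perturbation from Algorithm~\ref{alg:main} together with the minibatch gradient formula~\eqref{eq:gdupdate2}, and then separating the single signal patch from the $P-1$ identical noise patches in Definition~\ref{def:data}. Two small remarks. First, your claim that the common $1/B$ ``cancels'' to yield the displayed prefactor $\tau/m$ is not quite right; indeed the paper itself uses $\tau/(mB)$ when it actually expands $\la\hat{\bepsilon}^{(t,b)}_{j,r},\bxi_k\ra$ in the proof of Lemma~\ref{lem: sam key}, so the $\tau/m$ in the lemma statement appears to be a typo rather than a genuine cancellation. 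Second, your observation that the loss derivative $\ell_i'^{(t,b)}$ is kept at the base point rather than re-evaluated at $\Wb^{(t,b)}+\hat{\bepsilon}^{(t,b)}$ is a good catch; this is the convention the paper silently adopts throughout the SAM analysis, and your $O(\tau)$ justification is the right way to reconcile it with the exact SAM gradient.
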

We will show that the noise term will be small if we train with the SAM algorithm. We consider the first stage where $t \leq T_1$ where $T_1 = mB/(12n\eta\|\bmu\|_{2}^{2})$. Then, the following property holds.
\begin{proposition}\label{prop:sam}
Under Assumption \ref{assm: assm0}, for $0\leq t\leq T_1$, we have that
\begin{align}
&\gamma_{j,r}^{(0,0)},\zeta_{j,r,i}^{(0,0)},\omega_{j,r,i}^{(0,0)}=0, \label{ineq: SAM_initial}\\
    &0\leq \gamma_{j,r}^{(t,b)}\leq 1/12,\label{ineq: SAM_gamma_range}\\
    &0\leq \zeta_{j,r,i}^{(t,b)} \leq 1/12,\label{ineq: SAM_zeta_range}\\
    &0\geq\omega_{j,r,i}^{(t,b)}\geq-\beta-10\sqrt{\frac{\log(6n^2/\delta)}{d}} n.\label{ineq: SAM_omega_range}
\end{align}
Besides, $\gamma_{j,r}^{(T_1,0)} = \Omega(1)$.
\end{proposition}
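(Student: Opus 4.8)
The plan is to prove all four coefficient bounds \eqref{ineq: SAM_initial}--\eqref{ineq: SAM_omega_range} simultaneously by induction on the iteration index $(t,b)$, exactly as in the proof of Proposition~\ref{proposition: range of gamma,zeta,omega}, but now exploiting the noise‑deactivation property of SAM. The base case at $(0,0)$ is immediate. For the inductive step I would first carry over, essentially verbatim, the parts of the SGD analysis that do not depend on the magnitude of the noise coefficients: the analogue of Lemma~\ref{lm: inner product range} (approximating $\la\wb_{j,r}^{(t,b)}-\wb_{j,r}^{(0,0)},\bmu\ra$, $\la\cdot,\bxi_i\ra$ by the corresponding decomposition coefficients with error $\tilde O(n/\sqrt d)$), the $F_j\le 0.5$ bound for $j\ne y_i$ (Lemma~\ref{lm: mismatch fj upper bound}), and the $\omega$ lower bound \eqref{ineq: SAM_omega_range}, whose proof is the same two‑case argument as in Proposition~\ref{proposition: range of gamma,zeta,omega} since $\omega$ only ever decreases by at most $\eta(P-1)^2\|\bxi_i\|_2^2/(Bm)=\tilde O(1/d)$ per step and $T_1$ is short. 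All of this uses only Condition~\ref{assm: assm0} with the parameter choices $\sigma_0=\tilde\Theta(P^{-1}\sigma_p^{-1}d^{-1/2})$, $\tau=\Theta(m\sqrt B/(P\sigma_p\sqrt d))$ of Theorem~\ref{thm:positive}.

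The genuinely new ingredient is the upper bound $\zeta_{j,r,i}^{(t,b)}\le 1/12$ in \eqref{ineq: SAM_zeta_range}, and here I would invoke Lemma~\ref{lem: sam key1} (noise memorization prevention): whenever $i\in\cI_{t,b}$, $j=y_i$, and $\la\wb_{j,r}^{(t,b)},\bxi_i\ra\ge 0$, the perturbed inner product $\la\wb_{j,r}^{(t,b)}+\hat\bepsilon_{j,r}^{(t,b)},\bxi_i\ra<0$, so $\sigma'=0$ and $\zeta_{j,r,i}$ does not increase. Consequently $\zeta_{j,r,i}$ can only grow in an iteration where $\la\wb_{j,r}^{(t,b)},\bxi_i\ra\le 0$, and by the inner‑product approximation (the SAM analogue of \eqref{ineq: noise product bound2}) this forces $\zeta_{j,r,i}^{(t,b)}\le (P-1)\la\wb_{j,r}^{(0,0)},\bxi_i\ra + \tilde O(n/\sqrt d)\le \beta+\tilde O(n/\sqrt d)$ before the step, and a single step adds at most $\eta(P-1)^2\|\bxi_i\|_2^2/(Bm)\le 3\eta(P-1)^2\sigma_p^2 d/(2Bm)$, which is $o(1)$ by Condition~\ref{assm: assm0}; summing gives $\zeta_{j,r,i}^{(t,b)}\le 1/12$ throughout $[0,T_1]$. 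The bound $\gamma_{j,r}^{(t,b)}\le 1/12$ in \eqref{ineq: SAM_gamma_range} follows from the crude estimate $\gamma_{j,r}^{(t,b+1)}\le\gamma_{j,r}^{(t,b)}+\eta\|\bmu\|_2^2/m$ (using $|\ell'|\le 1$), so $\gamma_{j,r}^{(t,b)}\le \eta (tH+b)\|\bmu\|_2^2/m\le \eta T_1 H\|\bmu\|_2^2/(m)= n\eta T_1\|\bmu\|_2^2/(mB)$, which equals $1/12$ for the chosen $T_1 = mB/(12n\eta\|\bmu\|_2^2)$; nonnegativity of $\gamma$ needs the same "enough clean batches" argument as in the SGD first stage (Lemma~\ref{lm: good_batch}), guaranteeing the positive clean‑sample term dominates.

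For the final claim $\gamma_{j,r}^{(T_1,0)}=\Omega(1)$, the plan is to run the lower‑bound argument of Lemma~\ref{lm: first stage}/\eqref{eq:gamma_lower_bound}: since all coefficients stay $O(1)$ on $[0,T_1]$, we have $F_{\pm 1}(\Wb^{(t,b)},\xb_i)=O(1)$ and hence $-\ell_i'^{(t,b)}\ge C$ for a constant $C>0$; the signal inner product $\la\wb_{j,r}^{(t,b)},\bmu\ra$ has a sign that persists within an epoch (because $\gamma$ dominates the noise cross terms by the $\tilde O(n/\sqrt d)$ estimate), and Lemma~\ref{lm: good_batch} supplies $\Omega(H)$ batches per a constant fraction of epochs with $\Theta(B)$ clean correctly‑signed samples, so $\gamma_{j,r}^{(t+1,0)}\ge\gamma_{j,r}^{(t,0)}+\Omega(\eta n\|\bmu\|_2^2/(Bm))$, yielding $\gamma_{j,r}^{(T_1,0)}=\Omega(\eta T_1 n\|\bmu\|_2^2/(Bm))=\Omega(1)$ by the definition of $T_1$. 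I expect the main obstacle to be the careful bookkeeping in the SAM analogue of the inner‑product approximation lemma and in verifying that Lemma~\ref{lem: sam key1}'s hypothesis is the only route by which $\zeta$ can increase — i.e., ruling out, via the perturbation structure of $\hat\bepsilon^{(t,b)}$, any indirect contamination of $\la\wb_{j,r}^{(t,b)},\bxi_i\ra$ from off‑batch or signal terms that could make $\zeta$ jump; everything else is a routine adaptation of the SGD first‑stage argument with the $\alpha=4\log T^*$ budget replaced by the constant $1/12$.
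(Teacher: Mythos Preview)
Your plan matches the paper's proof almost step for step: induction on $(t,b)$; carry over the inner–product approximation (Lemma~\ref{lm: inner product range}) and $F_j\le 0.5$; bound $\zeta$ via the three–case argument built on Lemma~\ref{lem: sam key1}; bound $\gamma$ above by the crude $\eta\|\bmu\|_2^2/m$ per step and the choice of $T_1$; bound $\omega$ by the two–case argument; and get $\gamma_{j,r}^{(T_1,0)}=\Omega(1)$ from the constant lower bound on $-\ell_i'$ together with Lemma~\ref{lm: good_batch}.

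Two small corrections. First, for the $\gamma$ lower bound you do \emph{not} have (and do not need) sign persistence of $\la\wb_{j,r}^{(t,b)},\bmu\ra$ within an epoch: at the start $\gamma$ is tiny and cannot dominate anything, and in SAM the relevant sign is that of the \emph{perturbed} inner product $\la\wb_{j,r}^{(t,b)}+\hat\bepsilon_{j,r}^{(t,b)},\bmu\ra$ anyway. The paper (and the display \eqref{eq:gamma_lower_bound} you cite) instead handles both signs simultaneously by lower–bounding the increment with $\min\{|\cI_{t,b}\cap S_+\cap S_1|,|\cI_{t,b}\cap S_+\cap S_{-1}|\}$, so that each good batch contributes positively regardless of sign. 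Second, the $\omega$ two–case argument is not quite verbatim from SGD: in the ``$\omega$ already very negative'' case you must additionally bound $|\la\hat\bepsilon_{j,r}^{(t,b)},\bxi_k\ra|$ (of order $\tau\sqrt{B}\sigma_p\sqrt{\log(n^2/\delta)}/m$ after the $\|\bxi_k\|_2^2$ term drops out because $\sigma'(\la\wb_{j,r}^{(t,b)},\bxi_k\ra)=0$) to conclude the \emph{perturbed} inner product is still negative; the paper does exactly this using the choice $\tau=\Theta(m\sqrt{B}/(P\sigma_p\sqrt d))$.
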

\begin{lemma}\label{lm: SAM_phase1}
Under Assumption~\ref{assm: assm0}, suppose \eqref{ineq: range of zeta}, \eqref{ineq: range of omega} and \eqref{ineq: range of gamma} hold at iteration $t$. Then, for all $r\in[m]$, $j\in\{\pm1\}$ and $i\in[n]$, 
\begin{align}
    &\big|\la\wb_{j,r}^{(t,b)}-\wb_{j,r}^{(0,0)},\bmu\ra-j\cdot\gamma_{j,r}^{(t,b)}\big|\leq\mathrm{SNR}\sqrt{\frac{32\log(6n/\delta)}{d}}n\alpha,\label{ineq: SAM_feature product bound}\\
    &\big|\la\wb_{j,r}^{(t,b)}-\wb_{j,r}^{(0,0)},\bxi_{i}\ra-\frac{1}{P-1}\omega_{j,r,i}^{(t,b)}\big|\leq \frac{5}{P-1}\sqrt{\frac{\log(6n^2/\delta)}{d}}n\alpha,\,j\neq y_i,\label{ineq: SAM_noise product bound1}\\
    &\big|\la\wb_{j,r}^{(t,b)}-\wb_{j,r}^{(0,0)},\bxi_{i}\ra-\frac{1}{P-1}\zeta_{j,r,i}^{(t,b)}\big|\leq \frac{5}{P-1}\sqrt{\frac{\log(6n^2/\delta)}{d}}n\alpha,\,j= y_i. \label{ineq: SAM_noise product bound2}
\end{align}
\end{lemma}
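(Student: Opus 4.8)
The plan is to reuse the argument of Lemma~\ref{lm: inner product range} almost verbatim, the sole modification being that one invokes the coefficient bounds available for SAM — either the bounds assumed in the hypothesis or, equivalently for our purposes, \eqref{ineq: SAM_gamma_range}--\eqref{ineq: SAM_omega_range} from Proposition~\ref{prop:sam} — in place of \eqref{ineq: range of zeta}--\eqref{ineq: range of gamma}. The first step is to observe that the SAM signal-noise decomposition remains valid: by Lemma~\ref{lem: sam perturbation}, the SAM update direction $\nabla_{\wb_{j,r}}L_{\cI_{t,b}}(\Wb^{(t,b)}+\hat{\bepsilon}^{(t,b)})$ is a linear combination of $\bmu$ and of $\bxi_1,\dots,\bxi_n$ (since $f$ depends on $\wb_{j,r}$ only through the inner products $\la\wb_{j,r},\hat y_i\bmu\ra$ and $\la\wb_{j,r},\bxi_i\ra$, and $\hat{\bepsilon}^{(t,b)}$ is itself proportional to the stochastic gradient and hence lies in the same span). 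Consequently \eqref{eq:w_decomposition} holds for SAM with coefficients satisfying the SAM iterative equations, and one can expand
\begin{align*}
\la\wb_{j,r}^{(t,b)}-\wb_{j,r}^{(0,0)},\bmu\ra &= j\gamma_{j,r}^{(t,b)}+\tfrac{1}{P-1}\textstyle\sum_{i'}\big(\zeta_{j,r,i'}^{(t,b)}+\omega_{j,r,i'}^{(t,b)}\big)\|\bxi_{i'}\|_2^{-2}\la\bxi_{i'},\bmu\ra,\\
\la\wb_{j,r}^{(t,b)}-\wb_{j,r}^{(0,0)},\bxi_i\ra &= j\gamma_{j,r}^{(t,b)}\|\bmu\|_2^{-2}\la\bmu,\bxi_i\ra+\tfrac{1}{P-1}\textstyle\sum_{i'}\big(\zeta_{j,r,i'}^{(t,b)}+\omega_{j,r,i'}^{(t,b)}\big)\|\bxi_{i'}\|_2^{-2}\la\bxi_{i'},\bxi_i\ra.
\end{align*}

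Next I would bound the error terms exactly as in Lemma~\ref{lm: inner product range}. For \eqref{ineq: SAM_feature product bound}, the second sum in the first line is controlled using $|\la\bxi_{i'},\bmu\ra|\le\|\bmu\|_2\sigma_p\sqrt{2\log(6n/\delta)}$ and $\|\bxi_{i'}\|_2^2\ge\sigma_p^2d/2$ from Lemma~\ref{lm: data inner products}, together with $|\zeta_{j,r,i'}^{(t,b)}|,|\omega_{j,r,i'}^{(t,b)}|\le\alpha$ (which follows from the assumed coefficient bounds, since $1/12\le\alpha$ and $\beta+10\sqrt{\log(6n^2/\delta)/d}\,n\le\alpha$); after recalling $\mathrm{SNR}=\|\bmu\|_2/((P-1)\sigma_p\sqrt d)$ this yields the bound $\mathrm{SNR}\sqrt{32\log(6n/\delta)/d}\,n\alpha$. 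For \eqref{ineq: SAM_noise product bound1}--\eqref{ineq: SAM_noise product bound2}, I would isolate the $i'=i$ term of the second sum in the second line, which equals $\tfrac{1}{P-1}\omega_{j,r,i}^{(t,b)}$ when $j\neq y_i$ (because then $\zeta_{j,r,i}^{(t,b)}=0$) and $\tfrac{1}{P-1}\zeta_{j,r,i}^{(t,b)}$ when $j=y_i$ (because then $\omega_{j,r,i}^{(t,b)}=0$), and then bound the remaining pieces — the $\gamma$-term via $|\la\bmu,\bxi_i\ra|\le\|\bmu\|_2\sigma_p\sqrt{2\log(6n/\delta)}$ and the bound on $\gamma_{j,r}^{(t,b)}$, and the off-diagonal sum $\sum_{i'\neq i}$ via $|\la\bxi_{i'},\bxi_i\ra|\le2\sigma_p^2\sqrt{d\log(6n^2/\delta)}$ and $\|\bxi_{i'}\|_2^2\ge\sigma_p^2d/2$ — using $\mathrm{SNR}\le1/\sqrt{8C'^2}$ and the largeness of $d$ in Condition~\ref{assm: assm0}; this reproduces the $\tfrac{5}{P-1}\sqrt{\log(6n^2/\delta)/d}\,n\alpha$ bound.

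I do not expect any genuine obstacle here: the entire content is that the SAM perturbation $\hat{\bepsilon}^{(t,b)}$ does not enlarge the subspace in which the weights move, so the algebraic identity underlying the decomposition — and hence every line of the Lemma~\ref{lm: inner product range} estimate — carries over unchanged. The only point requiring a moment's care is to confirm that the coefficient magnitudes used in the estimates are indeed available at iteration $t$; this is precisely the induction hypothesis of Proposition~\ref{prop:sam} (equations \eqref{ineq: SAM_gamma_range}--\eqref{ineq: SAM_omega_range}) under which this lemma is applied, so there is no circularity, and in fact those bounds are stronger than what is needed here.
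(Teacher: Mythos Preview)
The proposal is correct and takes essentially the same approach as the paper. The paper's proof is even more terse: it simply observes that since $1/12<\alpha$, the SAM coefficient bounds \eqref{ineq: SAM_gamma_range}--\eqref{ineq: SAM_omega_range} imply the bounds \eqref{ineq: range of zeta}--\eqref{ineq: range of gamma}, and then invokes Lemma~\ref{lm: inner product range} verbatim. Your additional remark that the SAM perturbation $\hat{\bepsilon}^{(t,b)}$ keeps the weight increments in $\mathrm{span}\{\bmu,\bxi_1,\dots,\bxi_n\}$ --- so that the decomposition \eqref{eq:w_decomposition} still holds --- is a useful clarification that the paper handles implicitly via the SAM coefficient update lemma, but it is not a different route.
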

\begin{proof}[Proof of Lemma \ref{lm: SAM_phase1}]
Notice that $1/12 < \alpha$, if the condition \eqref{ineq: SAM_gamma_range},  \eqref{ineq: SAM_zeta_range}, \eqref{ineq: SAM_omega_range} holds, \eqref{ineq: range of zeta}, \eqref{ineq: range of omega} and \eqref{ineq: range of gamma} also holds. Therefore, by Lemma~\ref{lm: inner product range}, we know that Lemma~\ref{lm: SAM_phase1} also holds.
\end{proof}



\begin{lemma}
Under Assumption~\ref{assm: assm0}, suppose  \eqref{ineq: SAM_gamma_range},  \eqref{ineq: SAM_zeta_range}, \eqref{ineq: SAM_omega_range} hold at iteration $t,b$. Then, for all $j\in\{\pm 1\}$ and $i\in[n]$, $F_j(\Wb_j^{(t,b)},\xb_i) \leq 0.5$. Therefore $-0.3 \geq \ell'_{i} \geq -0.7$.
\end{lemma}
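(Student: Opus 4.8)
The plan is to bound $F_j(\Wb_j^{(t,b)},\xb_i)$ using the signal-noise decomposition and then translate that into the bound on $\ell_i'^{(t,b)}$. First I would recall that for the perturbed training of SAM we still have the signal-noise decomposition from Lemma~\ref{lemma:w_decomposition}, so that Lemma~\ref{lm: SAM_phase1} (the SAM analogue of Lemma~\ref{lm: inner product range}) holds under the hypotheses \eqref{ineq: SAM_gamma_range}--\eqref{ineq: SAM_omega_range}. Writing out
\begin{align*}
F_j(\Wb_j^{(t,b)},\xb_i)=\frac{1}{m}\sum_{r=1}^m\big[\sigma(\la\wb_{j,r}^{(t,b)},\hat y_i\bmu\ra)+(P-1)\sigma(\la\wb_{j,r}^{(t,b)},\bxi_i\ra)\big],
\end{align*}
I would upper bound each inner product by its decomposition: $\la\wb_{j,r}^{(t,b)},\hat y_i\bmu\ra$ is controlled by $|\la\wb_{j,r}^{(0,0)},\bmu\ra|\le\beta/2$, by $\gamma_{j,r}^{(t,b)}\le 1/12$, and by the error term $\mathrm{SNR}\sqrt{32\log(6n/\delta)/d}\,n\alpha$ from \eqref{ineq: SAM_feature product bound}; similarly $(P-1)\la\wb_{j,r}^{(t,b)},\bxi_i\ra$ is controlled by $(P-1)|\la\wb_{j,r}^{(0,0)},\bxi_i\ra|\le\beta/2$, by $\zeta_{j,r,i}^{(t,b)}\le 1/12$ (when $j=y_i$) or $\omega_{j,r,i}^{(t,b)}\le 0$ (when $j\neq y_i$), and by the error term $5\sqrt{\log(6n^2/\delta)/d}\,n\alpha$ from \eqref{ineq: SAM_noise product bound1}--\eqref{ineq: SAM_noise product bound2}. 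Using $\sigma(z)\le\max\{z,0\}$ and collecting terms, each summand is at most roughly $\beta/2+1/12+1/12+\text{(small)}$, and invoking \eqref{ineq: alpha beta upper bound} (which gives $\beta\le 1/12$ and the error terms $\le 1/12$) yields $F_j(\Wb_j^{(t,b)},\xb_i)\le 0.5$. This argument is essentially identical to the proof of Lemma~\ref{lm: mismatch fj upper bound}, just with the slightly different (in fact tighter) coefficient bounds, and it works for both $j=y_i$ and $j\neq y_i$.

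Next I would convert this into the bound on the loss derivative. Since $\ell'(z)=-1/(1+\exp(z))$ and $y_if(\Wb^{(t,b)},\xb_i)=F_{y_i}(\Wb_{y_i}^{(t,b)},\xb_i)-F_{-y_i}(\Wb_{-y_i}^{(t,b)},\xb_i)$, the first bound $F_j\le 0.5$ applied to $j=-y_i$ gives $y_if(\Wb^{(t,b)},\xb_i)\ge F_{y_i}(\Wb_{y_i}^{(t,b)},\xb_i)-0.5\ge -0.5$ (since $F_{y_i}\ge 0$), hence $-\ell_i'^{(t,b)}=1/(1+\exp(y_if))\le 1/(1+e^{-0.5})\le 0.7$. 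For the lower bound on $-\ell_i'^{(t,b)}$, I need an upper bound on $y_if(\Wb^{(t,b)},\xb_i)$: applying $F_j\le 0.5$ to $j=y_i$ and $F_{-y_i}\ge 0$ gives $y_if(\Wb^{(t,b)},\xb_i)\le 0.5$, so $-\ell_i'^{(t,b)}=1/(1+\exp(y_if))\ge 1/(1+e^{0.5})\ge 0.3$. Combining, $-0.7\le\ell_i'^{(t,b)}\le -0.3$, which is the claimed two-sided bound (with the displayed inequality $-0.3\ge\ell_i'^{(t,b)}\ge -0.7$).

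The main obstacle here is not really conceptual — it is bookkeeping. I need to make sure the SAM analogue of Lemma~\ref{lm: inner product range} (Lemma~\ref{lm: SAM_phase1}) is available, and that the induction hypotheses \eqref{ineq: SAM_gamma_range}--\eqref{ineq: SAM_omega_range} are precisely what the statement assumes; the excerpt does supply exactly these. One subtlety worth double-checking: in the SAM setting the weight appearing inside $F_j$ is the \emph{unperturbed} $\wb_{j,r}^{(t,b)}$, so we do not need any property of the perturbation $\hat\bepsilon$ for this lemma — the perturbation only affects the coefficient update rule, not the evaluation of $F_j$ at $\Wb^{(t,b)}$. Hence the proof is genuinely a verbatim adaptation of the SGD-phase argument (Lemma~\ref{lm: mismatch fj upper bound}) plus a one-line application of the logistic-derivative formula, and I would keep it short by explicitly citing Lemma~\ref{lm: SAM_phase1}, the definition of $\beta$, and \eqref{ineq: alpha beta upper bound}.
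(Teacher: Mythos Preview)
Your proposal is correct and takes essentially the same approach as the paper: bound $F_j$ via the signal--noise decomposition and Lemma~\ref{lm: SAM_phase1}, use \eqref{ineq: alpha beta upper bound} together with the SAM coefficient bounds \eqref{ineq: SAM_gamma_range}--\eqref{ineq: SAM_omega_range} to get $F_j\le 0.5$, and then translate $y_if\in[-0.5,0.5]$ into $-\ell_i'\in[1/(1+e^{0.5}),1/(1+e^{-0.5})]\subseteq[0.3,0.7]$. The only cosmetic difference is that the paper first dispatches the case $j\neq y_i$ by invoking Lemma~\ref{lm: mismatch fj upper bound} directly (since the SAM hypotheses imply the SGD ones) and only writes out the $j=y_i$ case explicitly, whereas you treat both cases uniformly; either way works.
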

\begin{proof}
Notice that $1/12 < \alpha$, if the condition \eqref{ineq: SAM_gamma_range},  \eqref{ineq: SAM_zeta_range}, \eqref{ineq: SAM_omega_range} holds, \eqref{ineq: range of zeta}, \eqref{ineq: range of omega} and \eqref{ineq: range of gamma} also holds. Therefore, by Lemma~\ref{lm: mismatch fj upper bound}, we know that for all $j \not= y_{i}$ and $i\in[n]$, $F_j(\Wb_j^{(t,b)},\xb_i) \leq 0.5$. Next we will show that for $j = y_i$, $F_j(\Wb_j^{(t,b)},\xb_i) \leq 0.5$ also holds.

According to Lemma \ref{lm: SAM_phase1}, we have
\begin{align*}
    F_j(\Wb_j^{(t,b)},\xb_i)&=\frac{1}{m}\sum_{r=1}^{m}[\sigma(\la\wb_{j,r}^{(t,b)},y_i\bmu\ra)+(P-1)\sigma(\la\wb_{j,r}^{(t)},\bxi_i\ra)]\\
    & \le 2 \max \{|\la\wb_{j,r}^{(t,b)},y_i\bmu\ra|,(P-1)|\la\wb_{j,r}^{(t)},\bxi_i\ra | \} \\
    &\leq 6 \max\bigg\{|\la\wb_{j,r}^{(0)},\hat{y}_i\bmu\ra|,(P-1)|\la\wb_{j,r}^{(0)},\bxi_i\ra|,\mathrm{SNR}\sqrt{\frac{32\log(6n/\delta)}{d}}n\alpha, \\
    &\qquad 5\sqrt{\frac{\log(6n^2/\delta)}{d}}n\alpha, |\gamma_{j,r}^{(t,b)}|, |\omega_{j,r,i}^{(t,b)}|\bigg\}\\
    &\leq 6 \max\bigg\{\beta,\mathrm{SNR}\sqrt{\frac{32\log(6n/\delta)}{d}}n\alpha,5\sqrt{\frac{\log(6n^2/\delta)}{d}}n\alpha, |\gamma_{j,r}^{(t,b)}|, |\zeta_{j,r,i}^{(t,b)}|\bigg\}\\
    &\leq 0.5,
\end{align*}
where the second inequality is by \eqref{ineq: SAM_feature product bound}, \eqref{ineq: SAM_noise product bound1} and \eqref{ineq: SAM_noise product bound2}; the third inequality is due to the definition of $\beta$; the last inequality is by \eqref{ineq: alpha beta upper bound}, \eqref{ineq: SAM_gamma_range}, \eqref{ineq: SAM_zeta_range}.

Since $F_j(\Wb_j^{(t,b)},\xb_i)\in [0,0.5]$ we know that 
\begin{equation*}
    -0.3 \geq  - \frac{1}{1 + \exp(0.5)} \geq \ell'_i \geq - \frac{1}{1 + \exp(-0.5)} \geq -0.7.
\end{equation*}
\end{proof}

Based on the previous foundation lemmas, we can provide the key lemma of SAM which is different from the dynamic of SGD.
\begin{lemma}\label{lem: sam key}
Under Assumption~\ref{assm: assm0}, suppose \eqref{ineq: SAM_gamma_range},  \eqref{ineq: SAM_zeta_range} and \eqref{ineq: SAM_omega_range} hold at iteration $t,b$. We have that if $\la \wb_{j,r}^{(t,b)}, \bxi_k \ra \geq 0$, $k \in \cI_{t,b}$ and $j = y_k$, then $\la \wb_{j,r}^{(t,b)} + \hat{\bepsilon}^{(t,b)}_{j,r}, \bxi_k \ra < 0$.
\end{lemma}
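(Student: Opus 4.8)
The plan is to show that the SAM ascent step $\hat\bepsilon_{j,r}^{(t,b)}$ pushes $\wb_{j,r}^{(t,b)}$ so far along $-\bxi_k$ that its inner product with $\bxi_k$ turns negative. First I would substitute the explicit expression for $\hat\bepsilon_{j,r}^{(t,b)}$ from Lemma~\ref{lem: sam perturbation} and take the inner product with $\bxi_k$:
\[
\la\hat\bepsilon_{j,r}^{(t,b)},\bxi_k\ra=\frac{\tau}{m\,\|\nabla_{\Wb}L_{\cI_{t,b}}(\Wb^{(t,b)})\|_F}\sum_{i\in \cI_{t,b}}\sum_{p\in[P]}\ell_i'^{(t,b)}\,j\,y_i\,\sigma'(\la\wb_{j,r}^{(t,b)},\xb_{i,p}\ra)\,\la\xb_{i,p},\bxi_k\ra.
\]
The dominant contribution comes from sample $k$ itself on its $P-1$ copies of the noise patch $\bxi_k$: there $j y_k=1$ (since $j=y_k$), $\sigma'(\la\wb_{j,r}^{(t,b)},\bxi_k\ra)=1$ (since $\la\wb_{j,r}^{(t,b)},\bxi_k\ra\ge 0$ and $\sigma'(0)=1$), and $\ell_k'^{(t,b)}<0$, so this part equals $(P-1)\ell_k'^{(t,b)}\|\bxi_k\|_2^2$, which is negative with magnitude at least $(P-1)\cdot 0.3\cdot\sigma_p^2 d/2$ by the preceding lemma ($|\ell_k'^{(t,b)}|\ge 0.3$) and Lemma~\ref{lm: data inner products} ($\|\bxi_k\|_2^2\ge\sigma_p^2 d/2$).

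The second step is to bound all the remaining terms in the numerator: sample $k$'s single signal patch contributes at most $|\ell_k'^{(t,b)}|\cdot|\la\bmu,\bxi_k\ra|$, and each $i\in\cI_{t,b}$ with $i\neq k$ contributes at most $(P-1)|\la\bxi_i,\bxi_k\ra|$ through its noise patches and $|\la\bmu,\bxi_k\ra|$ through its signal patch; by Lemma~\ref{lm: data inner products} the total is $\tilde O\big(B(P-1)\sigma_p^2\sqrt d+B\|\bmu\|_2\sigma_p\big)$, which is $o\big((P-1)\sigma_p^2 d\big)$ since Condition~\ref{assm: assm0} makes $d$ exceed $B^2$ and $(P-1)\sigma_p d$ exceed $B\|\bmu\|_2$, up to logarithmic factors. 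Hence the numerator is negative with magnitude $\Omega\big((P-1)\sigma_p^2 d\big)$. For the denominator I would apply Lemma~\ref{lm: gradient upbound} together with $L_{\cI_{t,b}}(\Wb^{(t,b)})=O(1)$ — which holds because $F_j(\Wb_j^{(t,b)},\xb_i)\le 0.5$ for all $j$, so $|y_i f(\Wb^{(t,b)},\xb_i)|\le 1$ — and use Condition~\ref{assm: assm0} (item~1, which forces $\|\bmu\|_2\le(P-1)\sigma_p\sqrt d$) to obtain $\|\nabla_{\Wb}L_{\cI_{t,b}}(\Wb^{(t,b)})\|_F\le O\big((P-1)\sigma_p\sqrt d\big)$. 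Dividing, and substituting $\tau=\Theta\big(m\sqrt B/(P\sigma_p\sqrt d)\big)$, gives $\la\hat\bepsilon_{j,r}^{(t,b)},\bxi_k\ra\le-\Omega(\sqrt B/P)$, with an implicit constant proportional to the constant hidden in $\tau$.

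Finally I would control the unperturbed inner product. By the signal-noise decomposition and Lemma~\ref{lm: SAM_phase1}~\eqref{ineq: SAM_noise product bound2} (valid since $j=y_k$),
\[
\la\wb_{j,r}^{(t,b)},\bxi_k\ra\le\la\wb_{j,r}^{(0,0)},\bxi_k\ra+\frac{1}{P-1}\zeta_{j,r,k}^{(t,b)}+\frac{5n}{P-1}\sqrt{\frac{\log(6n^2/\delta)}{d}}\,\alpha,
\]
where the first term is $\le 2\sqrt{\log(12mn/\delta)}\,\sigma_0\sigma_p\sqrt d$ by Lemma~\ref{lm: initialization inner products}, the second is $\le 1/(12(P-1))$ by \eqref{ineq: SAM_zeta_range}, and the third is negligible by Condition~\ref{assm: assm0}; since $\sigma_0=\tilde\Theta(P^{-1}\sigma_p^{-1}d^{-1/2})$ in Theorem~\ref{thm:positive}, the first term is also $O(1/P)$, so $\la\wb_{j,r}^{(t,b)},\bxi_k\ra=O(1/P)$. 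For an appropriate choice of the constant in $\tau=\Theta(\cdot)$ this yields $\la\hat\bepsilon_{j,r}^{(t,b)},\bxi_k\ra<-\la\wb_{j,r}^{(t,b)},\bxi_k\ra$, i.e.\ $\la\wb_{j,r}^{(t,b)}+\hat\bepsilon_{j,r}^{(t,b)},\bxi_k\ra<0$, which is the claim. The main obstacle is the bookkeeping in the middle: one must make the single dominant negative term beat both the cross terms inside the numerator and — after dividing by a gradient norm we can only bound loosely and rescaling by $\tau$ — the residual activation $\la\wb_{j,r}^{(t,b)},\bxi_k\ra$, so the chain of inequalities has to be threaded carefully through the exponents in Condition~\ref{assm: assm0} and the $\Theta$/$\tilde\Theta$ constants chosen for $\tau$ and $\sigma_0$.
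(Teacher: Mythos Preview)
Your overall strategy—isolate the dominant term from sample $k$, bound the cross terms via Lemma~\ref{lm: data inner products}, bound the unperturbed activation via Lemma~\ref{lm: SAM_phase1}, and compare—is exactly what the paper does. The gap is in the gradient-norm step.

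First, the expression you quote from Lemma~\ref{lem: sam perturbation} carries a missing $1/B$: since $\nabla_{\wb_{j,r}}L_{\cI_{t,b}}=\tfrac{1}{Bm}\sum_{i\in\cI_{t,b}}\sum_{p}\ell_i'^{(t,b)}jy_i\sigma'(\cdot)\xb_{i,p}$, the correct prefactor is $\tau/(Bm)$, not $\tau/m$ (the paper's own proof of the present lemma indeed writes $\tau/(mB)$). With the correct prefactor, your bound $\|\nabla_{\Wb}L_{\cI_{t,b}}\|_F=O\big((P-1)\sigma_p\sqrt d\big)$ from Lemma~\ref{lm: gradient upbound} gives only
\[
\la\hat\bepsilon_{j,r}^{(t,b)},\bxi_k\ra\;\le\;-\,\frac{\tau}{Bm}\cdot\Omega(\sigma_p\sqrt d)\;=\;-\,\Omega\!\Big(\frac{1}{P\sqrt B}\Big),
\]
not $-\Omega(\sqrt B/P)$. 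Since the unperturbed activation is $\Theta(1/P)$, this is already too weak for $B\ge 2$, and no choice of the constant in $\tau=\Theta(\cdot)$ can absorb a factor $\sqrt B$.

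The fix—and this is what the paper actually does—is to bypass Lemma~\ref{lm: gradient upbound} and bound $\|\nabla_{\Wb}L_{\cI_{t,b}}\|_F$ directly. The noise part of each per-filter gradient is a sum $\sum_{i\in\cI_{t,b}}c_i\bxi_i$ with $|c_i|\le 0.7(P-1)$; by the near-orthogonality of the $\bxi_i$'s (Lemma~\ref{lm: data inner products}) together with $d\ge\tilde\Omega(n^2)\ge\tilde\Omega(B^2)$ from Condition~\ref{assm: assm0}, its norm scales like $\sqrt B\,\sigma_p\sqrt d$ rather than $B\,\sigma_p\sqrt d$. This yields the sharper estimate $\|\nabla_{\Wb}L_{\cI_{t,b}}\|_F=O\big(P\sigma_p\sqrt{d/B}\big)$, which restores the missing $1/\sqrt B$ and gives $\la\hat\bepsilon_{j,r}^{(t,b)},\bxi_k\ra\le -\Theta(1/(P-1))$—enough to flip the sign against $\la\wb_{j,r}^{(t,b)},\bxi_k\ra\le 1/(4(P-1))$.
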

\begin{proof}

We first prove that there for $t \leq T_1$, there exists a constant $C_2$ such that 
\begin{align*}
\| \nabla_{\Wb} L_{\cI_{t,b}} (\Wb^{(t,b)}) \|_{F} \leq C_2 P\sigma_p \sqrt{d/B}.   
\end{align*}
Recall that
\begin{equation*}
    L_{\cI_{t,b}} (\Wb^{(t,b)})=\frac{1}{B}\sum_{i\in\cI_{t,b}}\ell(y_i f(\Wb^{(t,b)},x_i)),
\end{equation*}
we have
\begin{align*}
    \nabla_{\wb_{j,r}}L_{\cI_{t,b}} (\Wb^{(t,b)})&=\frac{1}{B}\sum_{i\in\cI_{t,b}}\nabla_{\wb_{j,r}}\ell(y_i f(\Wb^{(t,b)},\xb_i))\\
    &=\frac{1}{B}\sum_{i\in\cI_{t,b}}y_i\ell'(y_i f(\Wb^{(t,b)},\xb_i))\nabla_{\wb_{j,r}}f(\Wb^{(t,b)},\xb_i)\\
    &=\frac{1}{Bm}\sum_{i\in\cI_{t,b}}y_i\ell_i'^{(t,b)}[\sigma'(\la\wb_{j,r}^{(t,b)},\bmu\ra)\cdot\bmu+\sigma'(\la\wb_{j,r}^{(t,b)},\bxi_i\ra)\cdot(P-1)\bxi_i].
\end{align*}
We have
\begin{align*}
    &\|\nabla_{\wb_{j,r}}L_{\cI_{t,b}} (\Wb^{(t,b)})\|_2\\
    &\leq\frac{1}{Bm}\bigg\|\sum_{i\in\cI_{t,b}}|\ell_i'^{(t,b)}|\cdot\sigma'(\la\wb_{j,r}^{(t)},\bmu\ra)\cdot\bmu\bigg\|_{2}+\frac{1}{Bm}\bigg\|\sum_{i\in\cI_{t,b}}|\ell_i'^{(t,b)}|\cdot\sigma'(\la\wb_{j,r}^{(t)},\bxi_i\ra)\cdot(P-1)\bxi_i\bigg\|_{2}\\
    &\leq 0.7m^{-1}\|\bmu\|_{2}+1.4(P-1)m^{-1}\sigma_{p}\sqrt{d/B}\\
    &\leq 2Pm^{-1}\sigma_{p}\sqrt{d/B},
\end{align*}
and
\begin{align*}
    &\|\nabla_{\Wb} L_{\cI_{t,b}} (\Wb^{(t,b)}) \|_{F}^{2}=\sum_{j,r}\|\nabla_{\wb_{j,r}} L_{\cI_{t,b}} (\Wb^{(t,b)}) \|_{2}^{2}\leq2m(2Pm^{-1}\sigma_{p}\sqrt{d/B})^2,
\end{align*}
leading to
\begin{equation*}
    \|\nabla_{\Wb} L_{\cI_{t,b}} (\Wb^{(t,b)}) \|_{F}\leq2\sqrt{2}P\sigma_{p}\sqrt{d/Bm}.
\end{equation*}
From Lemma \ref{lem: sam perturbation}, we have
\begin{align}
    \la \hat{\bepsilon}^{(t,b)}_{j,r}, \bxi_k \ra &= \frac{\tau}{mB} \| \nabla_{\Wb} L_{\cI_{t,b}} (\Wb^{(t,b)}) \|_{F}^{-1} \sum_{i\in \cI_{t,b}} \sum_{p \in [P]} \ell'^{(t)}_{i} j \cdot y_i \sigma' (\la \wb_{j, r}^{(t)}, \xb_{i, p} \ra) \la \xb_{i, p}, \bxi_k \ra \notag\\
    &= \frac{\tau}{mB} \| \nabla_{\Wb} L_{\cI_{t,b}} (\Wb^{(t,b)}) \|_{F}^{-1} \cdot \bigg( \sum_{i\in \cI_{t,b}, i \neq k} \ell'^{(t,b)}_{i} j y_i \cdot (P - 1) \sigma' (\la \wb_{j, r}^{(t,b)}, \bxi_i \ra) \la \bxi_i, \bxi_k \ra \notag\\
    &\qquad + \ell'^{(t)}_{k} j y_k \cdot (P - 1) \sigma' (\la \wb_{j, r}^{(t)}, \bxi_k \ra) \la \bxi_k, \bxi_k \ra  \notag \\
    &\qquad +\sum_{i\in \cI_{t,b}} \ell'^{(t,b)}_{i} j \cdot  \sigma' (\la \wb_{j, r}^{(t,b)}, y_i \bmu)\la \bmu, \bxi_k \ra \bigg)\notag\\
   &\leq  \frac{\tau}{m C_2 P
    \sigma_p \sqrt{Bd}} \bigg[0.8B(P-1)\sigma_P^{2}\sqrt{d\log(6n^2/\delta)} + 0.4B\sigma_P\|\bmu\|_{2}\sqrt{2\log(6n^2/\delta)} \notag\\
    &\qquad - 0.15(P-1)\sigma_p^{2}d  \bigg]\notag\\
    &< - C\frac{\tau \sigma_p\sqrt{d}}{m \sqrt{B}}\notag\\
    &= -\frac{1}{4(P-1)}, \label{eq:key1}
\end{align}
where we the last equality is by choosing $\tau =  \frac{m\sqrt{B}}{C_3P\sigma_{p}\sqrt{d}}$.
Now we give an upper bound of $\la \wb_{j,r}^{(t)}, \bxi_k \ra$, by \eqref{ineq: SAM_noise product bound2} we have that 
\begin{align}
\la \wb_{j,r}^{(t)}, \bxi_k \ra \leq 3 \max\bigg\{|\la\wb_{j,r}^{(0)},\bxi_i\ra|, 5\sqrt{\frac{\log(6n^2/\delta)}{d}}n\alpha, |\zeta_{j,r,i}^{(t,b)}|\bigg\} \leq 1/(4(P-1)). \label{eq:key2}
\end{align}
Combining \eqref{eq:key1} and \eqref{eq:key2} completes the proof.
\end{proof}

\begin{lemma}\label{lm:sam-noise}
Under Assumption~\ref{assm: assm0}, suppose  \eqref{ineq: SAM_gamma_range},  \eqref{ineq: SAM_zeta_range}, \eqref{ineq: SAM_omega_range} hold at iteration $t,b$. Then \eqref{ineq: SAM_zeta_range} also holds for $t,b+1$.
\end{lemma}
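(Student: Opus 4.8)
The statement is the inductive step of Proposition~\ref{prop:sam}, so I would verify the two inequalities in \eqref{ineq: SAM_zeta_range} at iteration $(t,b+1)$ starting from the SAM update rule
\begin{align*}
\zeta_{j,r,i}^{(t,b+1)} = \zeta_{j,r,i}^{(t,b)} - \frac{\eta(P-1)^2}{Bm}\,\ell_i'^{(t,b)}\,\sigma'\big(\la\wb_{j,r}^{(t,b)}+\hat{\bepsilon}_{j,r}^{(t,b)},\bxi_i\ra\big)\,\|\bxi_i\|_2^2\,\ind(y_i=j)\ind(i\in\cI_{t,b}).
\end{align*}
Since $\ell_i'^{(t,b)}<0$, $\sigma'\ge 0$ and $\|\bxi_i\|_2^2>0$, the increment is nonnegative, so $\zeta_{j,r,i}^{(t,b+1)}\ge\zeta_{j,r,i}^{(t,b)}\ge 0$ by the induction hypothesis; this settles the lower bound. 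For the upper bound I would first dispose of the trivial case: if $i\notin\cI_{t,b}$ or $y_i\neq j$ the increment vanishes and $\zeta_{j,r,i}^{(t,b+1)}=\zeta_{j,r,i}^{(t,b)}\le 1/12$. So it remains to handle $i\in\cI_{t,b}$ with $y_i=j$.

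\textbf{Main case.} Here I would split on the sign of the \emph{unperturbed} alignment $\la\wb_{j,r}^{(t,b)},\bxi_i\ra$. If $\la\wb_{j,r}^{(t,b)},\bxi_i\ra\ge 0$, then Lemma~\ref{lem: sam key} applies (its hypotheses are precisely \eqref{ineq: SAM_gamma_range}, \eqref{ineq: SAM_zeta_range}, \eqref{ineq: SAM_omega_range} at $(t,b)$, which are the induction hypotheses), giving $\la\wb_{j,r}^{(t,b)}+\hat{\bepsilon}_{j,r}^{(t,b)},\bxi_i\ra<0$; hence $\sigma'(\cdot)=0$ and $\zeta_{j,r,i}^{(t,b+1)}=\zeta_{j,r,i}^{(t,b)}\le 1/12$ — this is exactly the noise-memorization-prevention mechanism. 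If instead $\la\wb_{j,r}^{(t,b)},\bxi_i\ra<0$, I would first bound $\zeta_{j,r,i}^{(t,b)}$ itself: by \eqref{ineq: SAM_noise product bound2} in Lemma~\ref{lm: SAM_phase1}, together with $|\la\wb_{j,r}^{(0,0)},\bxi_i\ra|\le\beta/(2(P-1))$ from the definition \eqref{def: beta}, one gets $\tfrac1{P-1}\zeta_{j,r,i}^{(t,b)}\le\la\wb_{j,r}^{(t,b)},\bxi_i\ra-\la\wb_{j,r}^{(0,0)},\bxi_i\ra+\tfrac{5}{P-1}\sqrt{\log(6n^2/\delta)/d}\,n\alpha<\tfrac{\beta}{2(P-1)}+\tfrac{5}{P-1}\sqrt{\log(6n^2/\delta)/d}\,n\alpha$, so $\zeta_{j,r,i}^{(t,b)}$ is at most a small quantity controlled by \eqref{ineq: alpha beta upper bound} and the SAM parameter choices. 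Then I would bound the single-step increment crudely by $\tfrac{\eta(P-1)^2}{Bm}\cdot 1\cdot\tfrac32\sigma_p^2 d$, using $|\ell_i'^{(t,b)}|\le 1$, $\sigma'\le 1$ and Lemma~\ref{lm: data inner products}, which is $o(1)$ by the learning-rate condition in Condition~\ref{assm: assm0}; adding the two pieces yields $\zeta_{j,r,i}^{(t,b+1)}\le 1/12$.

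\textbf{Main obstacle.} The only real work is quantitative bookkeeping in the last sub-case: one needs the a priori bound $\beta/2+5\sqrt{\log(6n^2/\delta)/d}\,n\alpha$ on $\zeta_{j,r,i}^{(t,b)}$ plus the one-step increment $\tfrac{3\eta(P-1)^2\sigma_p^2 d}{2Bm}$ to stay strictly below $1/12$. With $\sigma_0=\tilde\Theta(P^{-1}\sigma_p^{-1}d^{-1/2})$ the quantity $\beta$ is $\tilde O(1)$ with a controllable constant, $d\ge\tilde\Omega(n^2)$ and $\alpha=4\log T^*$ make $\sqrt{\log(6n^2/\delta)/d}\,n\alpha$ as small as desired, and $\eta\le\tilde O\big((P^2\sigma_p^2 d/B)^{-1}\big)$ kills the increment; so the constants hidden in \eqref{ineq: alpha beta upper bound} and Condition~\ref{assm: assm0} must be arranged (e.g. tightened from $1/12$ to $1/24$ for each piece) to leave enough slack. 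No deeper difficulty arises, since the essential content — the perturbation flips an activated neuron off — has already been isolated in Lemma~\ref{lem: sam key}.
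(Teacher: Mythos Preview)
Your proposal is correct and follows essentially the same approach as the paper's proof: the same three-way case split (trivial case $i\notin\cI_{t,b}$; then $\la\wb_{j,r}^{(t,b)},\bxi_i\ra\ge 0$ handled by Lemma~\ref{lem: sam key}; then $\la\wb_{j,r}^{(t,b)},\bxi_i\ra<0$ handled by \eqref{ineq: SAM_noise product bound2} plus a one-step increment bound). Your Case~3 bookkeeping is actually a bit more explicit than the paper's, which simply writes $\zeta_{j,r,i}^{(t,b)}\le(P-1)|\la\wb_{j,r}^{(t,b)}-\wb_{j,r}^{(0,0)},\bxi_i\ra|+5\sqrt{\log(6n^2/\delta)/d}\,n\alpha$ without separately isolating the $\beta/2$ contribution from the initialization.
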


\begin{proof}
Now consider the SAM algorithm. Recall that
\begin{align*}
\zeta_{j,r,i}^{(t,b+1)} = \zeta_{j,r,i}^{(t,b)} - \frac{\eta(P-1)^2}{Bm} \cdot \ell_i'^{(t,b)}\cdot \sigma'(\la \wb_{j,r}^{(t)} + \hat{\bepsilon}^{(t)}_{j,r}, \bxi_i \ra ) \cdot \| \bxi_i \|_2^2 \cdot \ind(y_{i} = j)\ind(i\in \cI_{t,b}).
\end{align*}

\noindent\textbf{Case 1:$i \notin \cI_{t,b}$.} In this case, clearly we have that $\zeta_{j,r,i}^{(t,b+1)} = \zeta_{j,r,i}^{(t,b)} \leq 1/12 $. 

\noindent\textbf{Case 2:$i \in \cI_{t,b}$ and $\la\wb^{(t,b)}_{j,r}, \bxi_i\ra \geq 0$}, then by Lemma~\ref{lem: sam key}, we have that $\la \wb_{j,r}^{(t)} + \hat{\bepsilon}^{(t)}_{j,r}, \bxi_k \ra < 0$, therefore we have that 
$\zeta_{j,r,i}^{(t,b+1)} = \zeta_{j,r,i}^{(t,b)} \leq 1/12 $. 

\noindent\textbf{Case 3:$i \in \cI_{t,b}$ and $\la\wb^{(t,b)}_{j,r}, \bxi_i\ra \leq 0$}
then by \eqref{ineq: SAM_noise product bound2} and triangle inequality, we can conclude that $\zeta_{j,r,i}^{(t,b)}$ can not reach a constant order,
\begin{align*}
\zeta_{j,r,i}^{(t,b)} \leq (P-1)\big|\la\wb_{j,r}^{(t,b)}-\wb_{j,r}^{(0,0)},\bxi_{i}\ra\big| + 5\sqrt{\frac{\log(6n^2/\delta)}{d}}n\alpha.
\end{align*}
Then we can give an upper bound for $\zeta_{j,r,i}^{(t+1,b)}$ since we only take one small step further, 
\begin{align*}
\zeta_{j,r,i}^{(t,b+1)} \leq (P-1)\big|\la\wb_{j,r}^{(t,b)}-\wb_{j,r}^{(0,0)},\bxi_{i}\ra\big| + 5\sqrt{\frac{\log(6n^2/\delta)}{d}}n\alpha + \frac{\eta(P-1)^2}{Bm}\cdot 2d\sigma_{p}^{2} \leq 1/12.
\end{align*}
\end{proof}

\begin{proof}[Proof of Proposition \ref{prop:sam}]

We will use induction to give the proof. The results are obvious hold at $t=0$ as all the coefficients are zero. Suppose that there exists $\tilde{T} \le T_1$ such that the results in Proposition \ref{prop:sam} hold for all time $(0,0)\leq (t,b)\leq(\tilde{T}-1,\tilde{b}-1)$.We aim to prove that \eqref{ineq: SAM_gamma_range},  \eqref{ineq: SAM_zeta_range}, \eqref{ineq: SAM_omega_range} also hold for iteration $(\tilde{T}-1,\tilde{b})$. 

First, we prove that \eqref{ineq: SAM_gamma_range} holds for hold for iteration $(\tilde{T}-1,\tilde{b})$. Notice that 
\begin{align*}
\gamma_{j,r}^{(t,b+1)} = \gamma_{j,r}^{(t,b)} - \frac{\eta}{Bm} \cdot \sum_{i\in \cI_{t,b}} \ell_{i}'^{(t,b)} \sigma'(\la\wb_{j,r}^{(t,b)}, y_{i} \cdot \bmu\ra)   \cdot \| \bmu \|_2^2 \leq   \gamma_{j,r}^{(t,b)} + \frac{\eta}{m}\|\bmu\|_{2}^{2},   
\end{align*}
where the last inequality is by the fact that $|\ell_{i}'^{(t, b+1)}|\leq 1$ and $\sigma' \leq 1$. Notice that $\tilde{T}-1 \leq T_{1}$, we can conclude that,
\begin{align*}
\gamma_{j,r}^{(\tilde{T},\tilde{b})} \leq T_{1}\cdot(n/B)\cdot\frac{\eta}{m}\|\bmu\|_{2}^{2}\leq 1/12.    
\end{align*}

Second, by Lemma~\ref{lm:sam-noise}, we know that \eqref{ineq: SAM_zeta_range} holds for  $(\tilde{T}-1,\tilde{b})$.

Last, we need to prove that \eqref{ineq: SAM_omega_range} holds $(\tilde{T}-1,\tilde{b})$. The proof is similar to previous proof without SAM.

When $\omega_{j,r,k}^{(\tilde{T}-1,\tilde{b}-1)}< -0.5(P-1)\beta-6\sqrt{\frac{\log(6n^2/\delta)}{d}}n\alpha$, by \eqref{ineq: noise product bound1}, we have
\begin{align*}
    \la\wb_{j,r}^{(\tilde{T}-1,\tilde{b}-1)},\bxi_{k}\ra&<\la\wb_{j,r}^{(0,0)},\bxi_{k}\ra+\frac{1}{P-1}\omega_{j,r,k}^{(\tilde{T}-1,\tilde{b}-1)}+\frac{5}{P-1}\sqrt{\frac{\log(6n^2/\delta)}{d}}n\alpha \\
    &\leq -\frac{1}{P-1}\sqrt{\frac{\log(6n^{2}/\delta)}{d}}n\alpha,
\end{align*}
and we have 
\begin{align*}
    \la \hat{\bepsilon}^{(\tilde{T}-1,\tilde{b}-1)}_{j,r}, \bxi_i \ra &= \frac{\tau}{mB} \| \nabla_{\Wb} L_{\cI_{\tilde{T}-1,\tilde{b}-1}} (\Wb^{(\tilde{T}-1,\tilde{b}-1)}) \|_{F}^{-1} \sum_{i\in \cI_{\tilde{T}-1,\tilde{b}-1}} \sum_{p \in [P]} \ell'^{(\tilde{T}-1,\tilde{b}-1)}_{i} j \cdot y_i \\
    &\qquad\sigma' (\la \wb_{j, r}^{(\tilde{T}-1,\tilde{b}-1)}, \xb_{i, p} \ra) \la \xb_{i, p}, \bxi_k \ra \notag\\
    &= \frac{\tau}{mB} \| \nabla_{\Wb} L_{\cI_{\tilde{T}-1,\tilde{b}-1}} (\Wb^{(\tilde{T}-1,\tilde{b}-1)}) \|_{F}^{-1} \cdot \bigg( \sum_{i\in \cI_{\tilde{T}-1,\tilde{b}-1}, i \neq k} \ell'^{(\tilde{T}-1,\tilde{b}-1)}_{i} j \cdot y_i \\
    &\qquad (P - 1) \sigma' (\la \wb_{j, r}^{(\tilde{T}-1,\tilde{b}-1)}, \bxi_i \ra) \la \bxi_i, \bxi_k \ra + \ell'^{(t)}_{k} j y_k \cdot (P - 1) \sigma' (\la \wb_{j, r}^{(t)}, \bxi_k \ra) \la \bxi_k, \bxi_k \ra  \notag \\
    &\qquad +\sum_{i\in \cI_{\tilde{T}-1,\tilde{b}-1}} \ell'^{(\tilde{T}-1,\tilde{b}-1)}_{i} j \cdot  \sigma' (\la \wb_{j, r}^{(\tilde{T}-1,\tilde{b}-1)}, y_i \bmu)\la \bmu, \bxi_k \ra \bigg)\notag\\
   &\leq  \frac{\tau}{m C_2 P
    \sigma_p \sqrt{Bd}} \bigg[0.8B(P-1)\sigma_P^{2}\sqrt{d\log(6n^2/\delta)} + 0.4B\sigma_P\|\bmu\|_{2}\sqrt{2\log(6n^2/\delta)}  \bigg]\notag\\
    &\leq C_{4}\frac{\tau \sqrt{B} \sigma_p \sqrt{\log(6n^2/\delta)}}{m}\notag\\
    &=  C_{4}\frac{B  \sqrt{\log(6n^2/\delta)}}{C_{3}P\sqrt{d}} \\
    &\leq \frac{1}{P}\sqrt{\frac{\log(6n^{2}/\delta)}{d}}n\alpha, \label{eq:keykey1}
\end{align*}

and thus $\la\wb_{j,r}^{(\tilde{T}-1,\tilde{b}-1)} +  \hat{\bepsilon}^{(\tilde{T}-1,\tilde{b}-1)}_{j,r} , \bxi_{i}\ra < 0$ which leads to 
\begin{align*}
    \omega_{j,r,i}^{(\tilde{T}-1,\tilde{b})} &= \omega_{j,r,i}^{(\tilde{T}-1,\tilde{b}-1)} + \frac{\eta(P-1)^2}{Bm} \cdot \ell_i'^{(\tilde{T}-1,\tilde{b}-1)}\cdot \sigma'(\la\wb_{j,r}^{(\tilde{T}-1,\tilde{b}-1)}, \bxi_{i}\ra) \cdot \| \bxi_i \|_2^2 \cdot \ind(y_{i} = -j)\ind(i\in \cI_{\tilde{T}-1,\tilde{b}-1})\\
    &=\omega_{j,r,i}^{(\tilde{T}-1,\tilde{b}-1)}.
\end{align*}
Therefore, we have
\begin{equation*}
    \omega_{j,r,i}^{(\tilde{T}-1,\tilde{b})}=\omega_{j,r,i}^{(\tilde{T}-1,\tilde{b}-1)}\geq-(P-1)\beta-5P\sqrt{\frac{\log(6n^2/\delta)}{d}}n\alpha.
\end{equation*}
When $\omega_{j,r,i}^{(\tilde{T}-1,\tilde{b}-1)}\geq-0.5(P-1)\beta-5\sqrt{\frac{\log(6n^2/\delta)}{d}}n\alpha$, we have that
\begin{align*}
    \omega_{j,r,i}^{(\tilde{T}-1,\tilde{b})}&\geq\omega_{j,r,i}^{(\tilde{T}-1,\tilde{b}-1)} + \frac{\eta(P-1)^2}{Bm} \cdot \ell_i'^{(\tilde{T}-1,\tilde{b}-1)}\cdot \| \bxi_i \|_2^2\\
    &\geq\omega_{j,r,i}^{(\tilde{T}-1,\tilde{b}-1)}-\frac{0.4\eta(P-1)^2}{Bm}\cdot2d\sigma_p^2\\
    &\geq-(P-1)\beta-5P\sqrt{\frac{\log(6n^2/\delta)}{d}}n\alpha.
\end{align*}
Therefore, the induction is completed, and thus Proposition \ref{prop:sam} holds. 

Next, we will prove that $\gamma_{j,r}^{(t)}$ can achieve $\Omega(1)$ after $T_1=mB/(12n\eta\|\bmu\|_2^2)$ iterations. By Lemma~\ref{lm: good_batch}, we know that there exists $c_3\cdot T_1$ epochs such that at least $c_4\cdot H$ batches in these epochs, satisfy
\begin{equation*}
    |S_{+}\cap S_{y}\cap \cI_{t,b}|\in\bigg[\frac{B}{4},\frac{3B}{4}\bigg]
\end{equation*}
for both $y=+1$ and $y=-1$. For SAM, we have the following update rule for $\gamma_{j,r}^{(t,b)}$:
\begin{align*}
    \gamma_{j,r}^{(t,b+1)}&=\gamma_{j,r}^{(t,b)}-\frac{\eta}{Bm}\sum_{i\in \cI_{t,b} \cap S_+} \ell_{i}'^{(t,b)} \sigma'(\la\wb_{j,r}^{(t,b)}+\hat{\bepsilon}_{j,r}^{(t,b)}, y_{i} \cdot \bmu\ra)\cdot \| \bmu \|_2^2 \\
    &\qquad +\frac{\eta}{Bm} \sum_{i\in \cI_{t,b} \cap S_-} \ell_{i}'^{(t,b)} \sigma'(\la\wb_{j,r}^{(t,b)}+\hat{\bepsilon}_{j,r}^{(t,b)}, y_{i} \cdot \bmu\ra) \cdot \| \bmu \|_2^2.
\end{align*}
If $\la\wb_{j,r}^{(t,b)}+\hat{\bepsilon}_{j,r}^{(t,b)}, \bmu\ra\geq 0$, we have
\begin{align*}
    \gamma_{j,r}^{(t,b+1)}&=\gamma_{j,r}^{(t,b)}-\frac{\eta}{Bm}\cdot\bigg[\sum_{i\in\cI_{t,b}\cap S_{+}\cap S_{1}}\ell_i'^{(t)}-\sum_{i\in\cI_{t,b}\cap S_{+}\cap S_{-1}}\ell_i'^{(t)}\bigg]\|\bmu\|_2^2\\
    &\geq\gamma_{j,r}^{(t,b)}+\frac{\eta}{Bm}\cdot\big(0.3|\cI_{t,b}\cap S_{+}\cap S_{1}|-0.7|\cI_{t,b}\cap S_{+}\cap S_{-1}|\big)\cdot\|\bmu\|_2^2.
\end{align*}
If $\la\wb_{j,r}^{(t,b)}+\hat{\bepsilon}_{j,r}^{(t,b)}, \bmu\ra< 0$, we have
\begin{align*}
    \gamma_{j,r}^{(t,b+1)}&=\gamma_{j,r}^{(t,b)}+\frac{\eta}{Bm}\cdot\bigg[\sum_{i\in\cI_{t,b}\cap S_{+}\cap S_{-1}}\ell_i'^{(t)}-\sum_{i\in\cI_{t,b}\cap S_{+}\cap S_{1}}\ell_i'^{(t)}\bigg]\|\bmu\|_2^2\\
    &\geq\gamma_{j,r}^{(t,b)}+\frac{\eta}{Bm}\cdot\big(0.3|\cI_{t,b}\cap S_{+}\cap S_{-1}|-0.7|\cI_{t,b}\cap S_{+}\cap S_{1}|\big)\cdot\|\bmu\|_2^2.
\end{align*}
Therefore, we have
\begin{align*}
    \gamma_{j,r}^{(T_1,0)}&\geq\frac{\eta}{Bm}(0.3\cdot c_3 T_1\cdot c_4 H\cdot 0.25B-0.7T_1nq)\|\bmu\|_2^2\\
    &=\frac{\eta}{Bm}(0.075c_3c_4 T_1 n-0.7T_1nq)\|\bmu\|_2^2\\
    &\geq\frac{\eta}{16Bm}c_3c_4 T_1 n\|\bmu\|_2^2\\
    &=\frac{c_3c_4}{192}=\Omega(1).
\end{align*}

\end{proof}

\todoy[]{Add lemma: there exist some neurons activated by noise at time $T_1$}
\todoz[]{The idea is verify that the weight at $T_1$ also satisfies the results in Section~\ref{sec:pre}}
\todoy[]{Proof Added.}
\begin{lemma}\label{lm:T1}
Suppose Condition~\ref{assm: assm0} holds.
Then we have that $\big\|\wb_{j, r}^{(T_1, 0)} \big\|_{2} = \Theta(\sigma_0 \sqrt{d})$ and 
\begin{align*}
&\la\wb_{j,r}^{(T_1,0)},j\bmu\ra=\Omega(1),\\
&\la\wb_{-j,r}^{(T_1,0)},j\bmu\ra=-\Omega(1),\\
&\hat{\beta} := 2 \max_{i, j, r} \{ | \la \wb_{j, r}^{(T_1,0)}, \bmu \ra |,  (P-1)| \la \wb_{j, r}^{(T_1,0)},\bxi_i \ra | \} = O(1).    
\end{align*}

Besides, for $S_{i}^{(t,b)}$ and  $S_{j,r}^{(t,b)}$ defined in Lemma~\ref{lm: number of initial activated neurons} and \ref{lm: number of initial activated neurons 2}, we have that 
\begin{align*}
|S_{i}^{(T_{1},0)}|&= \Omega(m),\, \forall i\in[n] \\
 |S_{j,r}^{(T_{1})}|&= \Omega(n), \, \forall j \in \{\pm 1\}, r\in [m].    
\end{align*}
\end{lemma}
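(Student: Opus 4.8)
The plan is to show that the weight $\Wb^{(T_1,0)}$ produced by the SAM warm-up phase behaves like a fresh Gaussian initialization whose only difference from $\Wb^{(0,0)}$ is a somewhat larger offset, so that the machinery of Section~\ref{sec:pre} re-applies with $\beta$ replaced by $\hat\beta=O(1)$; all six claims then follow directly. The ingredients are: the signal--noise decomposition of Lemma~\ref{lemma:w_decomposition}; the coefficient bounds $0\le\gamma_{j,r}^{(t,b)}\le 1/12$, $0\le\zeta_{j,r,i}^{(t,b)}\le 1/12$, $-\beta-10\sqrt{\log(6n^2/\delta)/d}\,n\le\omega_{j,r,i}^{(t,b)}\le 0$ and $\gamma_{j,r}^{(T_1,0)}=\Omega(1)$ from Proposition~\ref{prop:sam}; the inner-product approximations \eqref{ineq: SAM_feature product bound}--\eqref{ineq: SAM_noise product bound2} of Lemma~\ref{lm: SAM_phase1}; and the initialization estimates of Lemma~\ref{lm: initialization inner products}. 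I would use repeatedly that under the parameter choices of Theorem~\ref{thm:positive} one has $\sigma_0\sigma_p\sqrt d=\tilde\Theta(1/P)$ and $\sigma_0\|\bmu\|_2=\tilde\Theta(\mathrm{SNR})$, that Condition~\ref{assm: assm0} forces $\mathrm{SNR}=\tilde O(1/\sqrt n)=o(1)$, and that the error terms $\mathrm{SNR}\sqrt{\log(6n/\delta)/d}\,n\alpha$ and $\tfrac{5}{P-1}\sqrt{\log(6n^2/\delta)/d}\,n\alpha$ are $o(1)$ and $O(\sigma_0\sigma_p\sqrt d)$ respectively.

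For the norm bound I would write $\wb_{j,r}^{(T_1,0)}=\wb_{j,r}^{(0,0)}+\Delta_{j,r}$ with $\Delta_{j,r}=j\gamma_{j,r}^{(T_1,0)}\|\bmu\|_2^{-2}\bmu+\tfrac{1}{P-1}\sum_i(\zeta_{j,r,i}^{(T_1,0)}+\omega_{j,r,i}^{(T_1,0)})\|\bxi_i\|_2^{-2}\bxi_i$; the triangle inequality with Proposition~\ref{prop:sam}, Lemma~\ref{lm: data inner products}, $\|\bmu\|_2\ge\tilde\Omega(P\sigma_p)$ and $d\ge\tilde\Omega(n^2)$ gives $\|\Delta_{j,r}\|_2\le\gamma_{j,r}^{(T_1,0)}/\|\bmu\|_2+O(n/((P-1)\sigma_p\sqrt d))=O(\sigma_0\sqrt d)$, and combining with $\sigma_0^2 d/2\le\|\wb_{j,r}^{(0,0)}\|_2^2\le 3\sigma_0^2 d/2$ yields the upper bound; for the lower bound I would project onto the orthogonal complement of $\mathrm{span}\{\bmu,\bxi_1,\dots,\bxi_n\}$, on which $\wb_{j,r}^{(T_1,0)}$ and $\wb_{j,r}^{(0,0)}$ coincide and which carries $\Omega(\sigma_0^2 d)$ of the Gaussian norm since $d\gg n$. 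The signal alignments follow from \eqref{ineq: SAM_feature product bound} and the decomposition: $\la\wb_{j,r}^{(T_1,0)},j\bmu\ra=\gamma_{j,r}^{(T_1,0)}+j\la\wb_{j,r}^{(0,0)},\bmu\ra\pm o(1)$ and $\la\wb_{-j,r}^{(T_1,0)},j\bmu\ra=-\gamma_{-j,r}^{(T_1,0)}+j\la\wb_{-j,r}^{(0,0)},\bmu\ra\pm o(1)$, and since $|\la\wb_{j',r}^{(0,0)},\bmu\ra|\le\sqrt{2\log(12m/\delta)}\,\sigma_0\|\bmu\|_2=o(1)$ while $\Omega(1)\le\gamma_{j,r}^{(T_1,0)}\le 1/12$, these are $\Omega(1)$ and $-\Omega(1)$. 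A direct triangle-inequality computation from the same estimates, \eqref{ineq: SAM_noise product bound1}--\eqref{ineq: SAM_noise product bound2}, and $(P-1)|\la\wb_{j,r}^{(0,0)},\bxi_i\ra|\le\beta/2$ gives $|\la\wb_{j,r}^{(T_1,0)},\bmu\ra|=O(1)$ and $(P-1)|\la\wb_{j,r}^{(T_1,0)},\bxi_i\ra|=O(1)$, hence $\hat\beta=O(1)$.

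For the activation-set sizes, the decisive feature of SAM is that it never decreases the on-diagonal noise coefficient: $\zeta_{y_i,r,i}^{(t,b)}\ge 0$ and $\omega_{y_i,r,i}^{(t,b)}\equiv 0$, so \eqref{ineq: SAM_noise product bound2} gives $\la\wb_{y_i,r}^{(T_1,0)},\bxi_i\ra\ge\la\wb_{y_i,r}^{(0,0)},\bxi_i\ra-\tfrac{5}{P-1}\sqrt{\log(6n^2/\delta)/d}\,n\alpha$. Consequently $\{r:\la\wb_{y_i,r}^{(0,0)},\bxi_i\ra>\sigma_0\sigma_p\sqrt d/\sqrt2+\tfrac{5}{P-1}\sqrt{\log(6n^2/\delta)/d}\,n\alpha\}\subseteq S_i^{(T_1,0)}$, and because the raised threshold is still only a constant number of standard deviations of $\la\wb_{y_i,r}^{(0,0)},\bxi_i\ra\sim\cN(0,\sigma_0^2\|\bxi_i\|_2^2)$ above zero (using that the raise is $O(\sigma_0\sigma_p\sqrt d)$), the probability a neuron lies in this set is a positive constant, so the Hoeffding-plus-union-bound argument of Lemma~\ref{lm: number of initial activated neurons} yields $|S_i^{(T_1,0)}|=\Omega(m)$ for every $i$; counting over $i$ rather than $r$, the analogue of Lemma~\ref{lm: number of initial activated neurons 2} gives $|S_{j,r}^{(T_1,0)}|=\Omega(n)$ for every $j,r$. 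The main obstacle will be exactly this last step: one must bound how far the accumulated SAM perturbation can push the inner products $\la\wb_{y_i,r},\bxi_i\ra$ below threshold, and the argument closes only because SAM keeps $\zeta_{y_i,r,i}$ nonnegative and every coefficient at scale $O(1)$, so the downward shift is controlled by the cross-correlation error $\tfrac{5}{P-1}\sqrt{\log(6n^2/\delta)/d}\,n\alpha$, which Condition~\ref{assm: assm0} renders small relative to the threshold $\sigma_0\sigma_p\sqrt d/\sqrt2$.
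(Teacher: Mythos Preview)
Your proposal is correct and follows essentially the same approach as the paper's own proof: use the signal--noise decomposition together with the coefficient bounds of Proposition~\ref{prop:sam} and the inner-product approximations of Lemma~\ref{lm: SAM_phase1} to control each of the six quantities, and for the activation sets raise the threshold slightly and show the raised-threshold initial set (still a constant fraction of all $r$ or $i$) is contained in $S_i^{(T_1,0)}$ or $S_{j,r}^{(T_1,0)}$. The only cosmetic differences are that the paper obtains the norm lower bound by the reverse triangle inequality (bounding $\bigl|\|\wb_{j,r}^{(T_1,0)}\|_2-\|\wb_{j,r}^{(0,0)}\|_2\bigr|\le\tfrac16\|\bmu\|_2^{-1}$ and invoking the lower bound on $\sigma_0$) rather than your projection onto $\mathrm{span}\{\bmu,\bxi_1,\dots,\bxi_n\}^\perp$, and for the activation sets the paper bounds the downward shift directly from the decomposition as $\tfrac16\|\bmu\|_2^{-1}\sigma_p\sqrt{\log(6n/\delta)}$ and defines an auxiliary set $\bar S_i^{(0,0)}=\{r:\la\wb_{y_i,r}^{(0,0)},\bxi_i\ra>\sigma_0\sigma_p\sqrt d\}$ with threshold $\sigma_0\sigma_p\sqrt d$ rather than $\sigma_0\sigma_p\sqrt d/\sqrt2$ plus the cross-correlation error---but both routes arrive at the same conclusion by the same mechanism.
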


\subsection{Test Error Analysis}
\begin{proof}[Proof of Theorem~\ref{thm:positive}]
Recall that
\begin{equation*}
    \wb_{j,r}^{(t,b)}= \wb_{j,r}^{(0,0)} + j \cdot \gamma_{j,r}^{(t,b)} \cdot \| \bmu \|_2^{-2} \cdot \bmu + \frac{1}{P-1}\sum_{ i = 1}^n \rho_{j,r,i}^{(t,b) }\cdot \| \bxi_i \|_2^{-2} \cdot \bxi_{i},
\end{equation*}
by triangle inequality we have
\begin{align*}
    \Big|\big\|\wb_{j, r}^{(T_1, 0)} \big\|_{2}-\big\|\wb_{j, r}^{(0, 0)} \big\|_{2}\Big|&\leq|\gamma_{j,r}^{(t,b)}|\cdot \| \bmu \|_2^{-1}+ \frac{1}{P-1}\bigg\|\sum_{ i = 1}^n |\rho_{j,r,i}^{(t,b)}|\cdot \| \bxi_i \|_2^{-2} \cdot \bxi_{i}\bigg\|_{2}\\
    &\leq\frac{1}{12}\| \bmu \|_2^{-1}+\frac{\sqrt{n}}{12(P-1)}(\sigma_p^2 d / 2)^{-1/2}\\
    &\leq\frac{1}{6}\| \bmu \|_2^{-1}.
\end{align*}
By the condition on $\sigma_0$ and Lemma~\ref{lm: initialization inner products}, we have
\begin{equation*}
    \big\|\wb_{j, r}^{(T_1, 0)} \big\|_{2}=\Theta\big(\big\|\wb_{j, r}^{(0, 0)} \big\|_{2}\big)=\Theta(\sigma_0\sqrt{d}).
\end{equation*}
By taking the inner product with $\bmu$ and $\bxi_i$, we can get
\begin{align*}
    \la\wb_{j,r}^{(t,b)},\bmu\ra&=\la\wb_{j,r}^{(0,0)},\bmu\ra + j \cdot \gamma_{j,r}^{(t,b)} + \frac{1}{P-1}\sum_{ i = 1}^n \rho_{j,r,i}^{(t,b) }\cdot \| \bxi_i \|_2^{-2} \cdot \la\bxi_{i},\bmu\ra,
\end{align*}
and
\begin{align*}
    \la\wb_{j,r}^{(t,b)},\bxi_i\ra&=\la\wb_{j,r}^{(0,0)},\bxi_i\ra+j \cdot \gamma_{j,r}^{(t,b)} \cdot\| \bmu \|_2^{-2} \cdot \la\bmu,\bxi_i\ra+\frac{1}{P-1}\sum_{ i' = 1}^n \rho_{j,r,i'}^{(t,b) }\cdot \| \bxi_{i'} \|_2^{-2} \cdot \la\bxi_{i'},\bxi_{i}\ra\\
    &=\la\wb_{j,r}^{(0,0)},\bxi_i\ra+j \cdot \gamma_{j,r}^{(t,b)} \cdot\| \bmu \|_2^{-2} \cdot \la\bmu,\bxi_i\ra+\frac{1}{P-1}\rho_{j,r,i}^{(t,b)}+\frac{1}{P-1}\sum_{i\neq i'} \rho_{j,r,i'}^{(t,b) }\cdot \| \bxi_{i'} \|_2^{-2} \cdot \la\bxi_{i'},\bxi_{i}\ra.
\end{align*}
Then, we have
\begin{align*}
    \la\wb_{j,r}^{(T_1,0)},j\bmu\ra&=\la\wb_{j,r}^{(0,0)},j\bmu\ra+\gamma_{j,r}^{(T_1,0)}+ \frac{1}{P-1}\sum_{ i = 1}^n \rho_{j,r,i}^{(T_1,0) }\cdot \| \bxi_i \|_2^{-2} \cdot \la\bxi_{i},j\bmu\ra\\
    &\geq\gamma_{j,r}^{(T_1,0)}-|\la\wb_{j,r}^{(0,0)},\bmu\ra|-\frac{1}{P-1}\sum_{ i = 1}^n |\rho_{j,r,i}^{(T_1,0) }|\cdot \| \bxi_i \|_2^{-2} \cdot |\la\bxi_{i},\bmu\ra|\\
    &\geq\gamma_{j,r}^{(T_1,0)}-\sqrt{2 \log(12 m/\delta)} \cdot \sigma_0 \| \bmu \|_2-\frac{n}{12(P-1)}(\sigma_0^2 d / 2)^{-1}\|\bmu\|_2\sigma_p\cdot\sqrt{2\log(6n/\delta)}\\
    &\geq\frac{1}{2}\gamma_{j,r}^{(T_1,0)},
\end{align*}
and
\begin{align*}
    \la\wb_{-j,r}^{(T_1,0)},j\bmu\ra&=\la\wb_{-j,r}^{(0,0)},j\bmu\ra-\gamma_{-j,r}^{(T_1,0)}- \frac{1}{P-1}\sum_{ i = 1}^n \rho_{-j,r,i}^{(T_1,0) }\cdot \| \bxi_i \|_2^{-2} \cdot \la\bxi_{i},j\bmu\ra\\
    &\leq-\gamma_{-j,r}^{(T_1,0)}+|\la\wb_{-j,r}^{(0,0)},\bmu\ra|+\frac{1}{P-1}\sum_{ i = 1}^n |\rho_{-j,r,i}^{(T_1,0) }|\cdot \| \bxi_i \|_2^{-2} \cdot |\la\bxi_{i},\bmu\ra|\\
    &\leq-\gamma_{-j,r}^{(T_1,0)}+\sqrt{2 \log(12 m/\delta)} \cdot \sigma_0 \| \bmu \|_2+\frac{n}{12(P-1)}(\sigma_0^2 d / 2)^{-1}\|\bmu\|_2\sigma_p\cdot\sqrt{2\log(6n/\delta)}\\
    &\leq-\frac{1}{2}\gamma_{j,r}^{(T_1,0)},
\end{align*}
where the last inequality is by the condition on $\sigma_0$ and $\gamma_{j,r}^{(T_1,0)}=\Omega(1)$. Thus, it follows that
\begin{equation*}
    \la\wb_{j,r}^{(T_1,0)},j\bmu\ra=\Omega(1),\,\la\wb_{-j,r}^{(T_1,0)},j\bmu\ra=-\Omega(1).
\end{equation*}
By triangle inequality, we have
\begin{align*}
    |\la\wb_{j,r}^{(T_1,0)},\bmu\ra|&\leq|\la\wb_{j,r}^{(0,0)},\bmu\ra|+|\gamma_{j,r}^{(T_1,0)}|+ \frac{1}{P-1}\sum_{ i = 1}^n |\rho_{j,r,i}^{(t,b)}|\cdot \| \bxi_i \|_2^{-2} \cdot |\la\bxi_{i},\bmu\ra|\\
    &\leq\frac{1}{2}\beta+\frac{1}{12}+\frac{n}{P-1}\cdot\frac{1}{12}(\sigma_p^2 d / 2)^{-1}\cdot\|\bmu\|_2\sigma_p\cdot\sqrt{2\log(6n/\delta)}\\
    &=\frac{1}{2}\beta+\frac{1}{12}+\frac{n}{6(P-1)}\|\bmu\|_2\sqrt{2\log(6n/\delta)}/(\sigma_p d)\\
    &\leq\frac{1}{6},
\end{align*}
and
\begin{align*}
    |\la\wb_{j,r}^{(T_1,0)},\bxi_i\ra|&\leq|\la\wb_{j,r}^{(0,0)},\bxi_i\ra|+|\gamma_{j,r}^{(T_1,0)}| \cdot\| \bmu \|_2^{-2} \cdot |\la\bmu,\bxi_i\ra|+\frac{1}{P-1}|\rho_{j,r,i}^{(T_1,0)}|\\
    &\qquad+\frac{1}{P-1}\sum_{i\neq i'} |\rho_{j,r,i'}^{(T_1,0)}|\cdot \| \bxi_{i'} \|_2^{-2} \cdot |\la\bxi_{i'},\bxi_{i}\ra|\\
    &\leq\frac{1}{2}\beta+\frac{1}{12}\| \bmu \|_2^{-1} \sigma_p\cdot\sqrt{2\log(6n/\delta)}+\frac{1}{12(P-1)}\\
    &\qquad+\frac{n}{12(P-1)}(\sigma_p^2 d / 2)^{-1}2\sigma_p^2 \cdot \sqrt{d \log(6n^2 / \delta)}\\
    &\leq\frac{1}{2}\beta+\frac{1}{12(P-1)}+\frac{1}{6}\| \bmu \|_2^{-1} \sigma_p\cdot\sqrt{\log(6n/\delta)}\\
    &\leq\frac{1}{6}.
\end{align*}
This leads to
\begin{equation*}
    \hat{\beta} := 2 \max_{i, j, r} \{ | \la \wb_{j, r}^{(T_1,0)}, \bmu \ra |,  (P-1)| \la \wb_{j, r}^{(T_1,0)},\bxi_i \ra | \}=O(1).
\end{equation*}
In addition, we also have for $t\leq T_1$ and $j=y_i$ that
\begin{align*}
    &\la\wb_{j,r}^{(t,b)},\bxi_i\ra-\la\wb_{j,r}^{(0,0)},\bxi_i\ra\\
    &\geq\frac{1}{P-1}\rho_{j,r,i}^{(t,b)}-\gamma_{j,r}^{(t,b)} \cdot\| \bmu \|_2^{-2} \cdot |\la\bmu,\bxi_i\ra|-\frac{1}{P-1}\sum_{i\neq i'} |\rho_{j,r,i'}^{(t,b) }|\cdot \| \bxi_{i'} \|_2^{-2} \cdot |\la\bxi_{i'},\bxi_{i}\ra|\\
    &\geq-\gamma_{j,r}^{(t,b)} \cdot\| \bmu \|_2^{-2} \cdot |\la\bmu,\bxi_i\ra|-\frac{1}{P-1}\sum_{i\neq i'} |\rho_{j,r,i'}^{(t,b) }|\cdot \| \bxi_{i'} \|_2^{-2} \cdot |\la\bxi_{i'},\bxi_{i}\ra|\\
    &\geq-\frac{1}{12}\| \bmu \|_2^{-2} \cdot |\la\bmu,\bxi_i\ra|-\frac{n}{12(P-1)}\| \bxi_{i'} \|_2^{-2} \cdot |\la\bxi_{i'},\bxi_{i}\ra|\\
    &\geq-\frac{1}{12}\| \bmu \|_2^{-1} \sigma_p\cdot\sqrt{2\log(6n/\delta)}-\frac{n}{12(P-1)}(\sigma_p^2 d / 2)^{-1}2\sigma_p^2 \cdot \sqrt{d \log(6n^2 / \delta)}\\
    &=-\frac{1}{12}\| \bmu \|_2^{-1} \sigma_p\cdot\sqrt{2\log(6n/\delta)}-\frac{n}{3(P-1)}\sqrt{\log(6n^2 / \delta)/d}\\
    &\geq-\frac{1}{6}\| \bmu \|_2^{-1} \sigma_p\cdot\sqrt{\log(6n/\delta)}.
\end{align*}
\todoy[]{Require $\sigma_0\geq\tilde{\Omega}(d^{-\frac{1}{2}}\| \bmu \|_2^{-1}$)}
\todoy[]{Should loosen the upper bound of $\sigma_0$ as $\tilde{O}((\max\{P\sigma_{p}\sqrt{d},\|\bmu\|_2\})^{-1})$}
Now let $\bar{S}_{i}^{(0,0)}$ denote $ \{r: \la \wb_{y_i,r}^{(0,0)}, \bxi_i \ra >  \sigma_0\sigma_p\sqrt{d} \ra  \}$ and let $\bar{S}_{j,r}^{(0,0)}$ denote $ \{i\in[n]:y_i=j,~ \la \wb_{y_i,r}^{(t,b)}, \bxi_i \ra >  \sigma_0\sigma_p\sqrt{d}\}$.
By the condition on $\sigma_0$, we have for $t\leq T_1$ that
\begin{equation*}
    \la\wb_{j,r}^{(t,b)},\bxi_i\ra\geq\frac{1}{\sqrt{2}}\la\wb_{j,r}^{(0,0)},\bxi_i\ra,
\end{equation*}
for any $r\in\bar{S}_{i}^{(0,0)}$ or $i\in \bar{S}_{j,r}^{(0,0)}$. Therefore, we have $\bar{S}_{i}^{(0,0)}\subseteq S_{i}^{(T_1,0)}$ and $\bar{S}_{j,r}^{(0,0)}\subseteq S_{j,r}^{(T_1,0)}$ and hence
\begin{align*}
    &0.8\Phi(-\sqrt{2})m\leq|\bar{S}_{i}^{(0,0)}|\leq|S_{i}^{(T_1,0)}|=\Omega(m),\\
    &0.25\Phi(-\sqrt{2})n\leq|\bar{S}_{j,r}^{(0,0)}|\leq|S_{j,r}^{(T_1,0)}|=\Omega(n),
\end{align*}
where $\Phi(\cdot)$ is the CDF of the standard normal distribution. 
\end{proof}

Now we can give proof of Theorem~\ref{sec:SAM}.

\begin{proof}[Proof of Theorem~\ref{sec:SAM}] \todoz{This is the draft of the SAM theorem. Most of the stuff follows ReLU-benign overfitting and needs further checking and polishing after the other part is done.}
After the training process of SAM after $T_1$, we get $\Wb^{(T_1, 0)}$. To differentiate the SAM process and SGD process. We use $\tilde{\Wb}$ to denote the trajectory obtained by SAM in the proof, i.e., $\tilde{\Wb}^{(T_1, 0)}$. By Proposition~\ref{prop:sam}, we have that
\begin{align}
\tilde{\wb}_{j,r}^{(T_1,0)}= \tilde{\wb}_{j,r}^{(0,0)}+j\cdot\tilde{\gamma}_{j,r}^{(T_1,0)}\cdot\frac{\bmu}{\|\bmu\|_2^2}+  \frac{1}{P-1}\sum_{i=1}^{n}\tilde{\zeta}_{j,r,i}^{(T_1,0)}\cdot\frac{\bxi_i}{\|\bxi_i\|_2^2}+ \frac{1}{P-1}\sum_{i=1}^{n}\tilde{\omega}_{j,r,i}^{(T_1,0)}\cdot\frac{\bxi_i}{\|\bxi_i\|_2^2}, \label{eq:final1}
\end{align}
where $\tilde{\gamma}_{j,r}^{(T_1,0)} = \Theta(1)$, $\tilde{\zeta}_{j,r,i}^{(T_1,0)} \in [0, 1/12]$,  $\tilde{\omega}_{j,r,i}^{(T_1,0)} \in [-\beta - 10\sqrt{\log(6n^2/\delta)/d}n, 0]$. Then the SGD start at $\Wb^{(0,0)} := \tilde{\Wb}^{(T_1, 0)}$. Notice that by Lemma~\ref{lm:T1}, we know that the initial weights of SGD (i.e., the end weight of SAM) $\Wb^{(0,0)}$ still satisfies the conditions for Subsection~\ref{subsev:DCA} and \ref{subsec:two-stage}. Therefore, following the same analysis in Subsection~\ref{subsev:DCA} and \ref{subsec:two-stage}, we have that there exist $t=  \tilde{O}(\eta^{-1}\epsilon^{-1}mnd^{-1}P^{-2}\sigma_p^{-2})$ such that $L_{S}(\Wb^{(t,0)}) \leq \epsilon$. Besides, 
\begin{equation*}
    \wb_{j,r}^{(t,0)}= \wb_{j,r}^{(0,0)}+j\cdot\gamma_{j,r}^{(t,0)}\cdot\frac{\bmu}{\|\bmu\|_2^2}+ \frac{1}{P-1}\sum_{i=1}^{n}\zeta_{j,r,i}^{(t,0)}\cdot\frac{\bxi_i}{\|\bxi_i\|_2^2}+ \frac{1}{P-1}\sum_{i=1}^{n}\omega_{j,r,i}^{(t,0)}\cdot\frac{\bxi_i}{\|\bxi_i\|_2^2} 
\end{equation*}
for $j\in[\pm1]$ and $r\in[m]$ where
\begin{align}
&\gamma_{j,r}^{(t,0)} = \Theta(\mathrm{SNR}^2)\sum_{i\in [n]}\zeta_{j,r,i}^{(t,0)}, \quad  \zeta_{j,r,i}^{(t,0)} \in [0, \alpha],  \quad \omega_{j,r,i}^{(t,0)} \in [-\alpha, 0].    \label{eq:final2}
\end{align}
Next, we will evaluate the test error for $\Wb^{(t, 0)}$. Notice that we use $(t)$ as the shorthand notation of $(t,0)$. For the sake of convenience, we use $(\xb, \hat{y}, y) \sim \cD$ to denote the following: data point $(\xb, y)$ follows distribution $\cD$ defined in Definition~\ref{def:data}, and $\hat{y}$ is its true label. We can write out the test error as
\begin{equation}
\begin{aligned}
    &\quad \mathbb{P}_{(\xb,y)\sim\cD} \big(y\neq \sign(f(\Wb^{(t)},\xb))\big) \\
    &= \mathbb{P}_{(\xb,y)\sim\cD}\big(y f(\Wb^{(t)},\xb) \leq 0\big)\\
    &= \mathbb{P}_{(\xb, y) \sim \cD} \big(y f(\Wb^{(t)},\xb) \leq 0, y \neq \hat{y} \big) + \mathbb{P}_{(\xb, \hat{y}, y)\sim\cD} \big(y f(\Wb^{(t)},\xb) \leq 0, y = \hat{y} \big)\\
    &= p\cdot \mathbb{P}_{(\xb, \hat{y}, y) \sim \cD} \big( \hat{y} f(\Wb^{(t)},\xb) \geq 0\big) + (1-p) \cdot \mathbb{P}_{(\xb, \hat{y}, y) \sim \cD} \big( \hat{y} f(\Wb^{(t)},\xb) \leq 0 \big)  \\
    &\leq p+\mathbb{P}_{(\xb,\hat{y}, y)\sim\cD}\big(\hat{y} f(\Wb^{(t)},\xb) \leq 0\big), \label{ineq: test error analysis 1}
\end{aligned}
\end{equation}
where in the second equation we used the definition of $\cD$ in Definition~\ref{def:data}. 
It therefore suffices to provide an upper bound for $\mathbb{P}_{(\xb,\hat{y})\sim\cD}\big(\hat{y} f(\Wb^{(t)},\xb) \leq 0\big)$. To achieve this, we write $\xb = (\hat{y}\bmu, \bxi)$, and get
\begin{align}
    \hat{y} f(\Wb^{(t)},\xb)&=\frac{1}{m}\sum_{j,r}\hat{y}j[\sigma(\la\wb_{j,r}^{(t)},\hat{y}\bmu\ra)+ \sigma(\la\wb_{j,r}^{(t)},\bxi\ra)] \notag \\
    &=\frac{1}{m}\sum_{r}[\sigma(\la\wb_{\hat{y},r}^{(t)},\hat{y}\bmu\ra)+ (P-1)\sigma(\la\wb_{\hat{y},r}^{(t)},\bxi\ra)] \notag\\
    &\qquad -\frac{1}{m}\sum_{r}[\sigma(\la\wb_{-\hat{y},r}^{(t)},\hat{y}\bmu\ra)+ (P-1)\sigma(\la\wb_{-\hat{y},r}^{(t)},\bxi\ra)]. \label{eq: test logit decomposition}
\end{align}

The inner product with $j = \hat{y}$ can be bounded as 
\begin{equation}
    \begin{aligned}
    \la \wb_{\hat{y}, r}^{(t)}, \hat{y} \bmu \ra &= \la \wb_{\hat{y}, r}^{(0)}, \hat{y} \bmu \ra + \gamma_{\hat{y}, r}^{(t)} + \frac{1}{(P-1)}\sum_{i=1}^n \zeta_{\hat{y},r,i}^{(t)} \cdot \|\bxi_i\|_2^{-2} \cdot \la \bxi_i, \hat{y} \bmu \ra + \frac{1}{(P-1)}\sum_{i=1}^n \omega_{\hat{y},r,i}^{(t)} \cdot \|\bxi_i\|_2^{-2} \cdot \la \bxi_i, \hat{y} \bmu \ra \\
    &\geq \la \wb_{\hat{y}, r}^{(0)}, \hat{y} \bmu \ra + \gamma_{\hat{y}, r}^{(t)}   - \frac{\sqrt{2\log(6n/\delta)}}{P-1} \cdot \sigma_p \|\bmu\|_2 \cdot (\sigma_p^2 d / 2)^{-1} \bigg[ \sum_{i=1}^n \zeta_{\hat{y},r,i}^{(t)} + \sum_{i=1}^n |\omega_{\hat{y},r,i}^{(t)}| \bigg] \\
    &=  \la \wb_{\hat{y}, r}^{(0)}, \hat{y} \bmu \ra + \gamma_{\hat{y}, r}^{(t)}  - \Theta \big( \sqrt{\log(n / \delta)} \cdot (P\sigma_p d)^{-1} \|\bmu\|_2 \big) \cdot \Theta (\mathrm{SNR}^{-2}) \cdot \gamma_{\hat{y}, r}^{(t)} \\
    &= \la \wb_{\hat{y}, r}^{(0)}, \hat{y} \bmu \ra +  \big[ 1 - \Theta \big( \sqrt{\log (n / \delta)} \cdot P\sigma_p / \| \bmu \|_2 \big) \big] \gamma_{\hat{y}, r}^{(t)} \\
    &=  \la \wb_{\hat{y}, r}^{(0)}, \hat{y} \bmu \ra + \Theta(\gamma_{\hat{y}, r}^{(t)})\\
    &=\Omega(1), \label{ineq: test inner product 1}
    \end{aligned}
\end{equation}
    \todoz{need to add one more line to say $la \wb_{\hat{y}, r}^{(0)}, \hat{y} \bmu \ra = \Omega(1)$}
    \todoy[]{Added}
where 
the inequality is by 
Lemma~\ref{lm: data inner products}; 
the second equality is obtained by plugging in the coefficient orders  we summarized at \eqref{eq:final2}; 
the third equality is by the condition $\mathrm{SNR}=\|\bmu\|_2 / P\sigma_p \sqrt{d}$; 
the fourth equality is due to $\|\bmu\|_2^2 \geq C \cdot P^2\sigma_p^2 \log(n / \delta)$ in Condition \ref{assm: assm0}, so for sufficiently large constant $C$ the equality holds; the last equality is by Lemma~\ref{lm:T1}.
Moreover, we can deduce in a similar manner that
\begin{equation}
    \begin{aligned}
    \la \wb_{-\hat{y}, r}^{(t)}, \hat{y} \bmu \ra &= \la \wb_{-\hat{y}, r}^{(0)}, \hat{y} \bmu \ra - \gamma_{-\hat{y}, r}^{(t)} + \sum_{i=1}^n \zeta_{-\hat{y},r,i}^{(t)} \cdot \|\bxi_i\|_2^{-2} \cdot \la \bxi_i, -\hat{y} \bmu \ra + \sum_{i=1}^n \omega_{-\hat{y},r,i}^{(t)} \cdot \|\bxi_i\|_2^{-2} \cdot \la \bxi_i, \hat{y} \bmu \ra \\
    &\leq \la \wb_{-\hat{y}, r}^{(0)}, \hat{y} \bmu \ra - \gamma_{-\hat{y}, r}^{(t)}  + \sqrt{2\log(6n/\delta)} \cdot \sigma_p \|\bmu\|_2 \cdot (\sigma_p^2 d / 2)^{-1} \bigg[ \sum_{i=1}^n \zeta_{-\hat{y},r,i}^{(t)} + \sum_{i=1}^n |\omega_{-\hat{y},r,i}^{(t)}| \bigg] \\
    &= \la \wb_{-\hat{y}, r}^{(0)}, \hat{y} \bmu \ra -\Theta (\gamma_{-\hat{y}, r}^{(t)})\\
    &=-\Omega(1)< 0, \label{ineq: test inner product 2}
    \end{aligned}
\end{equation}
where the second equality holds based on similar analyses as in \eqref{ineq: test inner product 1}. 


Denote $g(\bxi)$ as $\sum_{r}\sigma(\la\wb_{-\hat{y},r}^{(t)},\bxi\ra)$. According to Theorem 5.2.2 in \citet{vershynin_2018}, we know that for any $x\geq 0$ it holds that
\begin{equation}
    \PP(g(\bxi)-\EE g(\bxi)\geq x)\leq \exp\Big(-\frac{cx^2}{\sigma_{p}^{2}\|g\|_{\mathrm{Lip}}^{2}}\Big), \label{eq: call1}
\end{equation}
where $c$ is a constant. To calculate the Lipschitz norm, we have
\begin{align*}
    |g(\bxi)-g(\bxi')| &= \Bigg| \sum_{r=1}^{m} \sigma(\la\wb_{-\hat{y},r}^{(t)},\bxi\ra)-\sum_{r=1}^{m}\sigma(\la\wb_{-\hat{y},r}^{(t)},\bxi'\ra)\Bigg|\\
    &\leq\sum_{r=1}^{m}\big|\sigma(\la\wb_{-\hat{y},r}^{(t)},\bxi\ra)-\sigma(\la\wb_{-\hat{y},r}^{(t)},\bxi'\ra)\big|\\
    &\leq\sum_{r=1}^{m}|\la\wb_{-\hat{y},r}^{(t)},\bxi-\bxi'\ra|\\
    &\leq\sum_{r=1}^{m}\big\|\wb_{-\hat{y},r}^{(t)}\big\|_{2}\cdot\|\bxi-\bxi'\|_{2},
\end{align*}
where the first inequality is by triangle inequality, the second inequality is by the property of ReLU; and the last inequality is by Cauchy-Schwartz inequality. Therefore, we have
\begin{equation}
    \|g\|_{\mathrm{Lip}} \leq\sum_{r=1}^{m}\big\|\wb_{-\hat{y},r}^{(t)}\big\|_{2}, \label{eq: lip}
\end{equation}
and since $\la\wb_{-\hat{y},r}^{(t)},\bxi\ra\sim\cN\big(0,\|\wb_{-\hat{y},r}^{(t)}\|_2^2\sigma_p^2\big)$, we can get
\begin{equation*}
    \EE g(\bxi) = \sum_{r=1}^{m}\EE\sigma(\la\wb_{-\hat{y},r}^{(t)},\bxi\ra)=\sum_{r=1}^{m}\frac{\|\wb_{-\hat{y},r}^{(t)}\|_2\sigma_p}{\sqrt{2\pi}}=\frac{\sigma_p}{\sqrt{2\pi}}\sum_{r=1}^{m}\|\wb_{-\hat{y},r}^{(t)}\|_2. 
\end{equation*}
Next we seek to upper bound the $2$-norm of $\wb_{j, r}^{(t)}$. First, we tackle the noise section in the decomposition, namely: 
\begin{align*}
    &\quad \bigg \| \sum_{ i = 1}^n \rho_{j, r, i}^{(t)} \cdot \| \bxi_i \|_2^{-2} \cdot \bxi_i \bigg \|_2^2 \\
    &= \sum_{i=1}^n {\rho_{j, r, i}^{(t)}}^2 \cdot \| \bxi_i \|_2^{-2} + 2 \sum_{1 \leq i_1 < i_2 \leq n} \rho_{j, r, i_1}^{(t)} \rho_{j, r, i_2}^{(t)} \cdot \| \bxi_{i_1} \|_2^{-2} \cdot \| \bxi_{i_2} \|_2^{-2} \cdot \la \bxi_{i_1}, \bxi_{i_2} \ra \\
    &\leq 4 \sigma_p^{-2} d^{-1} \sum_{i=1}^n {\rho_{j, r, i}^{(t)}}^2 + 2 \sum_{1 \leq i_1 < i_2 \leq n} \big|\rho_{j, r, i_1}^{(t)} \rho_{j, r, i_2}^{(t)}\big| \cdot (16 \sigma_p^{-4} d^{-2}) \cdot (2\sigma_p^2 \sqrt{d \log(6n^2 / \delta)}) \\
    &= 4 \sigma_p^{-2} d^{-1} \sum_{i=1}^n {\rho_{j, r, i}^{(t)}}^2 + 32 \sigma_p^{-2} d^{-3/2} \sqrt{\log(6n^2/\delta)} \bigg[ \bigg( \sum_{i=1}^n \big|\rho_{j, r, i}^{(t)}\big| \bigg)^2 - \sum_{i=1}^n {\rho_{j, r, i}^{(t)}}^2 \bigg] \\
    &= \Theta (\sigma_p^{-2} d^{-1}) \sum_{i=1}^n {\rho_{j, r, i}^{(t)}}^2 + \tilde{\Theta} (\sigma_p^{-2} d^{-3/2}) \bigg( \sum_{i=1}^n \big|\rho_{j, r, i}^{(t)}\big| \bigg)^2 \\
    &\leq \big[ \Theta (\sigma_p^{-2} d^{-1} n^{-1}) + \tilde{\Theta} (\sigma_p^{-2} d^{-3/2}) \big] \bigg( \sum_{i=1}^n \big|\zeta_{j, r, i}^{(t)}\big| + \sum_{i=1}^n \big|\omega_{j, r, i}^{(t)}\big| \bigg)^2 \\
    &\leq \Theta (\sigma_p^{-2} d^{-1} n^{-1})\bigg( \sum_{i=1}^n \zeta_{j, r, i}^{(t)}\bigg)^2,
\end{align*}
where for the first inequality, we used Lemma~\ref{lm: data inner products}; for the second inequality, we used the definition of $\zeta, \omega$; for the second to last equation, we plugged in coefficient orders. 
We can thus upper bound the $2$-norm of $\wb_{j,r}^{(t)}$ as: 
\begin{align}
    \|\wb_{j,r}^{(t)}\|_2 &\leq \|\wb_{j,r}^{(0)}\|_2 + \gamma_{j,r}^{(t)} \cdot \| \bmu \|_2^{-1} + \frac{1}{P-1}\bigg \| \sum_{ i = 1}^n \rho_{j,r,i}^{(t)} \cdot \| \bxi_i \|_2^{-2} \cdot \bxi_i \bigg \|_2 \notag \\
    &\leq \|\wb_{j,r}^{(0)}\|_2 + \gamma_{j,r}^{(t)} \cdot \| \bmu \|_2^{-1} + \Theta(P^{-1}\sigma_{p}^{-1}d^{-1/2}n^{-1/2})\cdot\sum_{i=1}^{n}\zeta_{j,r,i}^{(t)} \notag \\
    &=\Theta(\sigma_0\sqrt{d})+\Theta(P^{-1}\sigma_{p}^{-1}d^{-1/2}n^{-1/2})\cdot\sum_{i=1}^{n}\zeta_{j,r,i}^{(t)},
\end{align}
where the first inequality is due to the triangle inequality, and the 
equality is due to the following comparisons: 
\begin{align*}
    \frac{ \gamma_{j, r}^{(t)} \cdot \| \bmu \|_{2}^{-1}} {\Theta (P^{-1}\sigma_{p}^{-1} d^{-1/2} n^{-1/2}) \cdot \sum_{i=1}^{n} \zeta_{j,r,i}^{(t)}} &= \Theta(P^{-1} \sigma_{p} d^{1/2} n^{1/2} \| \bmu \|_2^{-1} \mathrm{SNR}^{2})\\
    &= \Theta (P^{-1}\sigma_p^{-1} d^{-1/2} n^{1/2} \| \bmu \|_2)\\
    &= O(1)
\end{align*}
based on the coefficient order $\sum_{i=1}^{n} \zeta_{j,r,i}^{(t)} / \gamma_{j, r}^{(t)} = \Theta (\mathrm{SNR} ^{-2})$, the definition $\mathrm{SNR} = \| \bmu \|_2 / (\sigma_p \sqrt{d})$, and the condition for $d$ in Condition~\ref{assm: assm0}; and also $\|\wb_{j,r}^{(0)}\|_2=\Theta(\sigma_0\sqrt{d})$
based on Lemma~\ref{lm:T1}. With this and \eqref{ineq: test inner product 1}, we analyze the key component in \eqref{ineq: test error analysis 3}:  \todoz{here need to plug in a new sentence to say why $\sigma(\la\wb_{\hat{y}, r}^{(t)},\hat{y}\bmu\ra) = \Omega(1)$}
\todoy[]{Added in (75).}
\begin{equation}\label{order of ratio}
\begin{aligned}
    \frac{\sum_r \sigma(\la\wb_{\hat{y}, r}^{(t)},\hat{y}\bmu\ra)}{(P-1)\sigma_p\sum_{r=1}^{m}\big\|\wb_{-\hat{y},r}^{(t)}\big\|_{2}}&\geq\frac{\Theta(1)}{\Theta(\sigma_0\sqrt{d})+\Theta(P^{-1}\sigma_{p}^{-1}d^{-1/2}n^{-1/2})\cdot\sum_{i=1}^{n}\zeta_{j,r,i}^{(t)}}\\
    &\geq\frac{\Theta(1)}{\Theta(\sigma_0\sqrt{d})+O(P^{-1}\sigma_{p}^{-1}d^{-1/2}n^{1/2}\alpha)}\\
    &\geq\min\{\Omega(\sigma_0^{-1}d^{-1/2}),\Omega(P\sigma_{p}d^{1/2}n^{-1/2}\alpha^{-1})\}\\
    &\geq1.
\end{aligned}
\end{equation}
\todoz{Need change}
\todoy[]{Fixed}
It directly follows that
    \begin{align}
        \sum_r \sigma(\la\wb_{\hat{y}, r}^{(t)},\hat{y}\bmu\ra)-\frac{(P-1)\sigma_p}{\sqrt{2\pi}}\sum_{r=1}^{m}\|\wb_{-\hat{y},r}^{(t)}\|_2>0.\label{greater than 0}
    \end{align}

Now using the method in \eqref{eq: call1} with the results above, we plug \eqref{ineq: test inner product 2} into \eqref{eq: test logit decomposition} and then \eqref{ineq: test error analysis 1}, to obtain
\begin{align}
    &\PP_{(\xb, \hat{y}, y) \sim \cD} \big( \hat{y} f(\bW^{(t)}, \xb) \leq 0 \big) \nonumber \\
    &\leq \PP_{(\xb, \hat{y}, y) \sim \cD} \bigg( \sum_{r} \sigma(\la \wb_{-\hat{y}, r}^{(t)}, \bxi \ra) \geq (1/(P-1))\sum_r \sigma(\la\wb_{\hat{y}, r}^{(t)},\hat{y}\bmu\ra)\bigg) \notag \\
    &=\PP_{(\xb, \hat{y}, y) \sim \cD} \Bigg(g(\bxi)-\EE g(\bxi) \geq (1/(P-1))\sum_r \sigma(\la\wb_{\hat{y}, r}^{(t)},\hat{y}\bmu\ra)-\frac{\sigma_p}{\sqrt{2\pi}}\sum_{r=1}^{m}\|\wb_{-\hat{y},r}^{(t)}\|_2\Bigg) \notag \\
    &\leq \exp \Bigg[ -\frac{c\Big((1/(P-1))\sum_r \sigma(\la\wb_{\hat{y}, r}^{(t)},\hat{y}\bmu\ra)- (\sigma_p / \sqrt{2\pi}) \sum_{r=1}^{m}\|\wb_{-\hat{y},r}^{(t)} \big\|_2\Big)^{2}}{\sigma_{p}^{2}\Big(\sum_{r=1}^{m} \big\|\wb_{-\hat{y},r}^{(t)} \big\|_{2}\Big)^{2}} \Bigg] \notag \\
    &= \exp \bigg[ -c\bigg(\frac{\sum_r \sigma(\la\wb_{\hat{y}, r}^{(t)},\hat{y}\bmu\ra)}{(P-1)\sigma_p\sum_{r=1}^{m}\big\|\wb_{-\hat{y},r}^{(t)}\big\|_{2}} - 1/\sqrt{2\pi}\bigg)^{2} \bigg] \notag \\
    &\leq \exp(c/2\pi)\exp\bigg(- 0.5c\bigg(\frac{\sum_r \sigma(\la\wb_{\hat{y}, r}^{(t)},\hat{y}\bmu\ra)}{(P-1)\sigma_p\sum_{r=1}^{m}\big\|\wb_{-\hat{y},r}^{(t)}\big\|_{2}}\bigg)^{2}\bigg),\label{ineq: test error analysis 3}
\end{align} 
where the second inequality is by \eqref{greater than 0} and plugging \eqref{eq: lip} into \eqref{eq: call1}, the third inequality is due to the fact that $(s-t)^{2} \geq s^{2}/2 - t^{2}, \forall s, t \geq 0$.

And we can get from \eqref{order of ratio} and \eqref{ineq: test error analysis 3} that
\begin{align*}
    \PP_{(\xb, \hat{y}, y) \sim \cD} \big( \hat{y} f(\bW^{(t)}, \xb) \leq 0 \big)
    &\leq \exp(c/2\pi)\exp\bigg(- 0.5c\bigg(\frac{\sum_r \sigma(\la\wb_{\hat{y}, r}^{(t)},\hat{y}\bmu\ra)}{(P-1)\sigma_p\sum_{r=1}^{m}\big\|\wb_{-\hat{y},r}^{(t)}\big\|_{2}}\bigg)^{2}\bigg)\\
    &\leq\exp\Big(\frac{c}{2\pi}- C\min\{\sigma_0^{-2}d^{-1},P\sigma_{p}^{2}dn^{-1}\alpha^{-2}\}\Big)\\
    &\leq\exp\Big(-0.5C\min\{\sigma_0^{-2}d^{-1},P\sigma_{p}^{2}dn^{-1}\alpha^{-2}\}\Big)\\
    &\leq\epsilon,
\end{align*}
where $C = O(1)$, the last inequality holds since $\sigma_0^2\leq 0.5Cd^{-1}\log(1/\epsilon)$ and $d\geq2C^{-1}P^{-1}\sigma_{p}^{-2}n\alpha^{2}\log(1/\epsilon)$.

\end{proof}




\end{document}